\definecolor{bred}{rgb}{0.8,0,0}
\definecolor{Gray}{gray}{0.85}
\setlist[enumerate]{label={\upshape(\roman*)}}
\newcommand{\ind}{\text{ind}}
\DeclareMathOperator{\argmin}{arg\,min}
\DeclareMathOperator{\argmax}{arg\,max}
\DeclareMathOperator*{\trace}{tr}
\newcommand{\Pol}{\textup{Pol}}
\DeclareMathOperator{\id}{id}
\DeclareMathOperator*{\linspan}{span}
\DeclareMathOperator{\supp}{supp}
\DeclareMathOperator{\conv}{conv}
\DeclareMathOperator*{\dom}{dom}
\newtheorem{theorem}{Theorem}[section]
\newtheorem{proposition}[theorem]{Proposition}
\newtheorem{lemma}[theorem]{Lemma}
\newtheorem{remark}[theorem]{Remark}
\newtheorem{definition}[theorem]{Definition}
\newtheorem{example}[theorem]{Example}
\newtheorem{assumption}[theorem]{Assumption}
\newtheorem*{standass}{Standing Assumption}
\newcolumntype{L}[1]{>{\raggedright\let\newline\\\arraybackslash\hspace{0pt}}m{#1}}
\newcolumntype{C}[1]{>{\centering\let\newline\\\arraybackslash\hspace{0pt}}m{#1}}
\newcolumntype{R}[1]{>{\raggedleft\let\newline\\\arraybackslash\hspace{0pt}}m{#1}}
\newcommand\reallywidehat[1]{%
	\savestack{\tmpbox}{\stretchto{%
			\scaleto{%
				\scalerel*[\widthof{\ensuremath{#1}}]{\kern.1pt\mathchar"0362\kern.1pt}%
				{\rule{0ex}{\textheight}}
			}{\textheight}%
		}{2.4ex}}%
	\stackon[-6.9pt]{#1}{\tmpbox}%
}
\newcommand\reallywidetilde[1]{%
	\savestack{\tmpbox}{\stretchto{%
			\scaleto{%
				\scalerel*[\widthof{\ensuremath{#1}}]{\kern.1pt\mathchar"0366\kern.1pt}%
				{\rule{0ex}{\textheight}}
			}{\textheight}%
		}{2.4ex}}%
	\stackon[-6.9pt]{#1}{\tmpbox}%
}
\newcommand{\labeltext}[3][]{%
	\@bsphack%
	\csname phantomsection\endcsname
	\def\tst{#1}%
	\def\labelmarkup{\emph}
	\def\refmarkup{}%
	\ifx\tst\empty\def\@currentlabel{\refmarkup{#2}}{\label{#3}}%
	\else\def\@currentlabel{\refmarkup{#1}}{\label{#3}}\fi%
	\@esphack%
	\labelmarkup{#2}
}
\newcommand\footnoteref[1]{\protected@xdef\@thefnmark{\ref{#1}}\@footnotemark}
\begin{document}

\title[]{Solving stochastic partial differential equations\\using neural networks in the Wiener chaos expansion}

\author[]{Ariel Neufeld}

\address{Nanyang Technological University, Division of Mathematical Sciences, 21 Nanyang Link, Singapore}
\email{ariel.neufeld@ntu.edu.sg}

\author[]{Philipp Schmocker}

\address{ETH Zurich, Department of Mathematics, R\"amistrasse 101, Zurich, Switzerland}
\email{philipp.schmocker@math.ethz.ch}

\date{\today}
\keywords{Stochastic partial differential equations, stochastic evolution equations, Wiener chaos, Wick polynomials, Cameron-Martin, Malliavin calculus, neural networks, random neural networks, universal approximation, approximation rates, machine learning, stochastic heat equation, Heath-Jarrow-Merton equation, Zakai equation}

\begin{abstract}
	In this paper, we solve stochastic partial differential equations (SPDEs) numerically by using (possibly random) neural networks in the truncated Wiener chaos expansion of their corresponding solution. Moreover, we provide some approximation rates for learning the solution of SPDEs with additive and/or multiplicative noise. Finally, we apply our results in numerical examples to approximate the solution of three SPDEs: the stochastic heat equation, the Heath-Jarrow-Morton equation, and the Zakai equation.
\end{abstract}

\maketitle

\vspace{-0.7cm}

\section{Introduction}
\label{SecIntro}

Given $T > 0$ and a probability space $(\Omega,\mathcal{F},\mathbb{P})$, we consider the numerical approximation of stochastic partial differential equations (i.e.~semilinear stochastic Cauchy problems) of the form
\begin{equation}
	\tag{SPDE}
	\label{EqDefSPDE}
	\begin{cases}
		dX_t & = \left( A X_t + F(t,\cdot,X_t) \right) dt + B(t,\cdot,X_t) dW_t, \quad\quad t \in [0,T], \\
		X_0 & = \chi_0 \in H.
	\end{cases}
\end{equation}
Hereby, the solution of \eqref{EqDefSPDE} is a stochastic process $X: [0,T] \times \Omega \rightarrow H$ with values in a separable Hilbert space $(H,\langle \cdot, \cdot \rangle_H)$, where the initial value $\chi_0 \in H$ is assumed to be deterministic. Moreover, the randomness is induced by a $Q$-Brownian motion $W := (W_t)_{t \in [0,T]}: [0,T] \times \Omega \rightarrow Z$ with values in a (possibly different) separable Hilbert space $(Z,\langle \cdot, \cdot \rangle_Z)$, where the operator $Q \in L_1(Z;Z)$ has finite trace. In addition, the operator $A: \dom(A) \subseteq H \rightarrow H$ is the generator of a $C_0$-semigroup $(S_t)_{t \in [0,T]}$ on $H$, and the maps $F: [0,T] \times \Omega \times H \rightarrow H$ as well as $B: [0,T] \times \Omega \times H \rightarrow L_2(Z_0;H)$ have suitable Lipschitz properties. For more details on the mathematical background, we refer to Section~\ref{SecSPDE}.

In general, stochastic partial differential equations (SPDEs) are a powerful mathematical framework to model complex phenomena influenced by both deterministic dynamics and random fluctuations. These equations extend partial differential equations (PDEs) by incorporating stochastic processes, in the same way ordinary stochastic differential equations (SDEs) generalize ordinary differential equations (ODEs). SPDEs find various applications in a wide range of fields, including surface growth models (see \cite{hairer13}), Euclidean quantum field theories (see \cite{mourrat17}), fluid dynamics (see \cite{birnir13,flandoli23}), stochastic evolution models of biological or chemical quantities (see \cite{kallianpur94,kouritzin02}), interest-rate models (see \cite{filipovic10,harms18}), and stochastic filtering (see \cite{bain09,kushner64,zakai69}). For a more detailed introduction to the theoretical background of SPDEs, we refer to the lecture notes and textbooks \cite{daprato14,dalang09,hairer09,harms17,holden10,jentzen16,vanneerven08,walsh86,lototsky17}.

However, most SPDEs cannot be solved explicitly and therefore require a numerical method to approximate the solution, which entails all the challenges encountered in the numerical approximation of both PDEs and SDEs. Typical numerical approximations consist of temporal discretizations based on Euler type or higher order methods as well as spatial discretizations based on finite difference, finite element, or Galerkin methods (see the references in the overview articles \cite{gyongy02,jentzen09} and the monographs \cite{jentzen11,kruse14}). Recently, machine learning techniques have been used to solve SPDEs, for example by using physics-informed neural networks (see \cite{zhang20}), Fourier neural operators for parametric PDEs (see \cite{li20}), deep neural networks (see \cite{beck20,bagmark23,zhang22,teng21,yao21}), and neural SPDEs (see \cite{salvi22}). Some of these results are in turn inspired by the successful neural network applications for learning PDEs (see e.g.~\cite{beck21,becker21,han18}).

In this paper, we solve \eqref{EqDefSPDE} by first truncating the Wiener chaos expansion of its solution and then replacing the propagators (i.e.~the coefficients of the Wiener chaos expansion) by neural networks. To this end, we use the universal approximation property of neural networks (first proven in \cite{cybenko89,hornik89} and extended in \cite{leshno93,chen95,pinkus99,cuchiero23,neufeld24}) to obtain a universal approximation result for SPDEs. Moreover, we also consider \emph{random} neural networks defined as single-hidden-layer neural networks whose weights and biases inside the activation function are randomly initialized (see \cite{huang06,rahimi07,rahimi08,rahimi08b} and in particular \cite{gonon20,neufeld23}). Hence, only the linear readout needs to be trained, which significantly reduces the computational complexity compared to \emph{deterministic} (i.e.~fully trained) neural networks while maintaining comparable accuracy.

Furthermore, we provide some approximation rates for learning \eqref{EqDefSPDE} with coefficients of affine form. To this end, we derive the Malliavin regularity of the solution to \eqref{EqDefSPDE} (see also \cite{malliavin78,nualart06}), apply the Stroock-Taylor formula in \cite{stroock87} to certain Hilbert space-valued random variables, and use the approximation rates for deterministic/random neural networks in \cite{neufeld23,neufeld24}.

Finally, we provide three numerical experiments to learn the solution of \eqref{EqDefSPDE} with (possibly random) neural networks in the Wiener chaos expansion, which includes the stochastic heat equation, the Heath-Jarrow-Morton equation, and the Zakai equation. This contributes to other successful (random) neural network applications in scientific computation (see e.g.~\cite{dong21,dwivedi20,wang23,yang18,gonon21,herrera21,zuric23,neufeld22,wu24,krach23}).

\subsection{Outline}

In Section~\ref{SecSPDE}, we recall the mathematical framework of \eqref{EqDefSPDE} and introduce the Wiener chaos expansion. In Section~\ref{SecUAT}, we use (possibly random) neural networks in the chaos expansion of \eqref{EqDefSPDE} to obtain universal approximation results. In Section~\ref{SecAR}, we provide some approximation rates to learn \eqref{EqDefSPDE}. In Section~\ref{SecNumerics}, we present three numerical examples, while all proofs are given in Section~\ref{SecProofs}.

\subsection{Notation}
\label{SecNotation}

As usual, $\mathbb{N} := \lbrace 1, 2, 3, ... \rbrace$ and $\mathbb{N}_0 := \mathbb{N} \cup \lbrace 0 \rbrace$ denote the sets of natural numbers, $\mathbb{Z}$ represents the set of integers, and $\mathbb{R}$ as well as $\mathbb{C}$ are the sets of real and complex numbers (with imaginary unit $\mathbbm{i} := \sqrt{-1} \in \mathbb{C}$), respectively. For $s,t \in \mathbb{R}$, we define $s \wedge t := \min(s,t)$. In addition, for $m \in \mathbb{N}$, we denote by $\mathbb{R}^m$ (and $\mathbb{C}^m$) the (complex) Euclidean space equipped with $\Vert u \Vert = \big( \sum_{i=1}^m \vert u_i \vert^2 \big)^{1/2}$.

Moreover, a Hilbert space $(H,\langle \cdot, \cdot \rangle_H)$ is a (possibly infinite dimensional) vector space $H$ together with an inner product $\langle \cdot, \cdot \rangle_H$ such that $H$ is complete under the norm $\Vert x \Vert_H := \sqrt{\langle x, x \rangle_H}$. Hereby, $\mathcal{B}(H)$ represents the Borel $\sigma$-algebra of $H$. In addition, we denote by $L(H;H)$ the vector space of bounded linear operators $\Xi: H \rightarrow H$, e.g.~the identity $\id_H \in L(H;H)$, which is a Banach space under the norm $\Vert \Xi \Vert_{L(H;H)} := \sup_{x \in H, \, \Vert x \Vert_H \leq 1} \Vert \Xi x \Vert_H$. Moreover, a (possibly unbounded) operator $A: \dom(A) \subseteq H \rightarrow H$ is a linear operator that is only defined on a vector subspace $\dom(A) \subseteq H$ called the domain of $A$. Furthermore, for $T > 0$, we denote by $C^0([0,T];H)$ the vector space of continuous paths $f: [0,T] \rightarrow H$, which is a Banach space under the norm $\Vert f \Vert_{C^0([0,T];H)} := \sup_{t \in [0,T]} \Vert f(t) \Vert_H$. 

In addition, for $k \in \mathbb{N}_0$, $m,d \in \mathbb{N}$ and $U \subseteq \mathbb{R}^m$ (open, if $k \geq 1$), we denote by $C^k_b(U;\mathbb{R}^d)$ the vector space of bounded and $k$-times continuously differentiable functions $f: U \rightarrow \mathbb{R}^d$ such that for every $\beta \in \mathbb{N}^m_{0,k} := \lbrace \beta := (\beta_1,...,\beta_m) \in \mathbb{N}_0^m: \vert\beta\vert := \beta_1 + ... + \beta_m \leq k \rbrace$ the partial derivative $U \ni u \mapsto \partial_\beta f(u) := \frac{\partial^{\vert\beta\vert} f}{\partial u_1^{\beta_1} \cdots \partial u_m^{\beta_m}}(u) \in \mathbb{R}^d$ is bounded and continuous, which is a Banach space under the norm $\Vert f \Vert_{C^k_b(U;\mathbb{R}^d)} := \max_{\beta \in \mathbb{N}^m_{0,k}} \sup_{u \in U} \Vert \partial_\beta f(u) \Vert$. If $m = 1$, we write $f^{(j)} := \frac{\partial^j f}{\partial u^j}: \mathbb{R} \rightarrow \mathbb{R}^d$, $j = 0,...,k$. Moreover, for $\gamma \in [0,\infty)$, we denote by $C^k_{pol,\gamma}(U;\mathbb{R}^d)$ the vector space of $k$-times continuously differentiable functions $f: U \rightarrow \mathbb{R}^d$ such that $\Vert f \Vert_{C^k_{pol,\gamma}(U;\mathbb{R}^d)} := \max_{\beta \in \mathbb{N}^m_{0,k}} \sup_{u \in U} \frac{\Vert \partial_\beta f(u) \Vert}{(1+\Vert u \Vert)^\gamma} < \infty$. In addition, we define $\overline{C^k_b(\mathbb{R}^m;\mathbb{R}^d)}^\gamma$ as the closure of $C^k_b(\mathbb{R}^m;\mathbb{R}^d)$ with respect to $\Vert \cdot \Vert_{C^k_{pol,\gamma}(\mathbb{R}^m;\mathbb{R}^d)}$, which is a Banach space under $\Vert \cdot \Vert_{C^k_{pol,\gamma}(\mathbb{R}^m;\mathbb{R}^d)}$. Then, $f \in \overline{C^k_b(\mathbb{R}^m;\mathbb{R}^d)}^\gamma$ if and only if $f: \mathbb{R}^m \rightarrow \mathbb{R}^d$ is $k$-times continuously differentiable and $\lim_{r \rightarrow \infty} \max_{\alpha \in \mathbb{N}^m_{0,k}} \sup_{u \in \mathbb{R}^m, \, \Vert u \Vert \geq r} \frac{\Vert \partial_\alpha f(u) \Vert}{(1+\Vert u \Vert)^\gamma} = 0$ (see \cite[Notation~(v)]{neufeld24}).

Furthermore for $p \in [1,\infty)$, a measure space $(S,\Sigma,\mu)$, and a Banach space $(Y,\Vert \cdot \Vert_Y)$, we denote by $L^p(S,\Sigma,\mu;Y)$ the Bochner $L^p$-space of (equivalence classes of) strongly $\mu$-measurable maps $f: S \rightarrow Y$ such that $\Vert f \Vert_{L^p(S,\Sigma,\mu;Y)} := \left( \int_S \Vert f(s) \Vert_Y^p \mu(ds) \right)^{1/p} < \infty$, where the latter turns $L^p(S,\Sigma,\mu;Y)$ into a Banach space (see \cite[Section~1.2.b]{hytoenen16} for more details). For finite dimensional $(Y,\Vert \cdot \Vert_Y)$, we obtain the usual $L^p$-space of (equivalence classes of) $\Sigma/\mathcal{B}(Y)$-measurable functions $f \in L^p(S,\Sigma,\mu;Y)$. In addition, $\mathcal{L}(U)$ represents the $\sigma$-algebra of Lebesgue-measurable subsets of $U \in \mathcal{B}(\mathbb{R}^m)$.

Moreover, we define the (multi-dimensional) Fourier transform of $f \in L^1(\mathbb{R}^m,\mathcal{L}(\mathbb{R}^m),du;\mathbb{C}^d)$ as
\begin{equation}
	\label{EqDefFourier}
	\mathbb{R}^m \ni \zeta \quad \mapsto \quad \widehat{f}(\zeta) := \int_{\mathbb{R}^m} e^{-\mathbbm{i} \zeta^\top u} f(u) du \in \mathbb{C}^d.
\end{equation}
\vspace{-0.1cm}
In addition, we abbreviate the real-valued function spaces by $C^k_b(U) := C^k_b(U;\mathbb{R})$, $L^1(U,\mathcal{L}(U),du) := L^1(U,\mathcal{L}(U),du;\mathbb{R})$, etc. Furthermore, we define the complex-valued function spaces as $C^k_b(U;\mathbb{C}^d) \cong C^k_b(U;\mathbb{R}^{2d})$, $L^1(U,\mathcal{L}(U),du;\mathbb{C}^d) := L^1(U,\mathcal{L}(U),du;\mathbb{R}^{2d})$, etc.~by using the identification $\mathbb{C}^d \cong \mathbb{R}^{2d}$.

\section{Stochastic partial differential equations (SPDEs)}
\label{SecSPDE}

In this section, we provide some mathematical background on existence and uniqueness of solutions to \eqref{EqDefSPDE} and consider the Wiener chaos expansion. To this end, we fix throughout the paper some $T > 0$, a probability space $(\Omega,\mathcal{F},\mathbb{P})$, and two separable Hilbert spaces $(H,\langle \cdot, \cdot \rangle_H)$ and $(Z,\langle \cdot, \cdot \rangle_Z)$.

\subsection{Existence and uniqueness of mild solutions}

In order to recall existence and uniqueness results of mild solutions to \eqref{EqDefSPDE}, we follow the textbook \cite{daprato14}. To this end, we denote by $L_1(Z;Z) \subseteq L(Z;Z)$ the vector subspace of non-negative self-adjoint nuclear\footnote{An operator $Q \in L(Z;Z)$ is called \emph{non-negative} if $\langle Qz, z \rangle_Z \geq 0$ for all $z \in Z$, is called \emph{self-adjoint} if $\langle Qy, z \rangle_Z = \langle y, Qz \rangle_Z$ for all $y,z \in Z$, and is called \emph{nuclear} if there exist two sequences $(y_i)_{i \in \mathbb{N}},(z_i)_{i \in \mathbb{N}} \subseteq Z$ with $\sum_{i=1}^\infty \Vert y_i \Vert_Z \Vert z_i \Vert_Z < \infty$ such that for every $z \in Z$ it holds that $Qz = \sum_{i=1}^\infty \langle z, y_i \rangle_Z z_i$ (see \cite[Appendix~C]{daprato14}).} operators $Q \in L(Z;Z)$. Moreover, a $Z$-valued random variable $G: \Omega \rightarrow Z$ is called \emph{centered Gaussian (with covariance operator $Q \in L_1(Z;Z)$)} if for every $z \in Z$ the Fourier transform of $G$ satisfies $\mathbb{E}\big[ e^{-\mathbf{i} \xi \langle z, G \rangle_Z} \big] = e^{-\frac{1}{2} \langle Qz, z \rangle_Z}$.

\begin{definition}
	\label{DefBM}
	For $Q \in L_1(Z;Z)$, a process $W := (W_t)_{t \in [0,T]}: [0,T] \times \Omega \rightarrow Z$ is called a \emph{$Q$-Brownian motion} if
	\begin{enumerate}
		\item $W_0 = 0 \in H$,
		\item $W$ has continuous sample paths, i.e.~$[0,T] \ni t \mapsto W_t(\omega) \in Z$ is continuous for all $\omega \in \Omega$, and
		\item for every $0 \leq s < t \leq T$ the random variable $W_t-W_s$ is centered Gaussian with covariance operator $(t-s) Q \in L_1(Z;Z)$.
	\end{enumerate}
\end{definition}

\begin{standass}
	For some $Q \in L_1(Z;Z)$, we fix a $Q$-Brownian motion $W := (W_t)_{t \in [0,T]}: [0,T] \times \Omega \rightarrow Z$. Moreover, $\mathbb{F} := (\mathcal{F}_t)_{t \in [0,T]}$ denotes the usual $\mathbb{P}$-augmented filtration\footnote{The usual $\mathbb{P}$-augmented filtration generated by $W$ is defined as the smallest filtration $\mathbb{F} = (\mathcal{F}_t)_{t \in [0,T]}$ such that $\mathbb{F}$ is complete with respect to $(\Omega,\mathcal{F},\mathbb{P})$, $\mathbb{F}$ is right-continuous on $[0,T)$, and $W$ is $\mathbb{F}$-adapted (see \cite[p.~45]{revuz99}).} generated by $W$.
\end{standass}

Then, for every $y \in Z$, the process $[0,T] \ni t \mapsto \langle W_t, y \rangle_Z \in \mathbb{R}$ is a real-valued Brownian motion, which implies for every $s,t \in [0,T]$ and $y,z \in Z$ that
\begin{equation*}
	\mathbb{E}\left[ \langle W_s, y \rangle_Z \, \langle W_t, z \rangle_Z \right] = (s \wedge t) \, \langle Qy, z \rangle_Z.
\end{equation*}
Note that the Hilbert space\footnote{Every non-negative self-adjoint operator $Q \in L(Z;Z)$ has a unique non-negative self-adjoint \emph{square root} $Q^{1/2} \in L(Z;Z)$ satisfying $Q^{1/2} Q^{1/2} = Q \in L(Z;Z)$ (see \cite[Problem~39.C.1]{brezis11}). Moreover, the \emph{pseudo-inverse} $Q^{-1/2}: Z \rightarrow Z$ of $Q^{1/2} \in L(Z;Z)$ is defined as $Q^{-1/2} y := \argmin_{\lbrace z \in Z: Q^{1/2} z = y \rbrace} \Vert z \Vert_Z$ for $y \in Q^{1/2} Z$ (see \cite[Appendix~B.2]{daprato14}).} $Z_0 := Q^{1/2} Z \subseteq Z$ with inner product $\langle y,z \rangle_{Z_0} := \langle Q^{-1/2} y, Q^{-1/2} z \rangle_Z$, $y,z \in Z_0$, is a reproducing kernel Hilbert space (RKHS) of $W$ in the sense that $(Z_0,\langle \cdot, \cdot \rangle_{Z_0})$ is continuously embedded into $(Z,\langle \cdot,\cdot \rangle_Z)$ and for every $z \in Z$ the real-valued random variable $\langle W_t, z \rangle_Z$ is centered Gaussian with covariance $\langle z, z \rangle_{Z_0}$.

In addition, since $Q \in L_1(Z;Z)$ is a non-negative nuclear operator, there exists a complete orthonormal basis $(e_i)_{i \in \mathbb{N}} \subseteq Z$ of $(Z,\langle \cdot, \cdot \rangle_Z$) and a sequence $(\lambda_i)_{i \in \mathbb{N}} \subseteq [0,\infty)$ with $\sum_{i=1}^\infty \lambda_i < \infty$ such that for every $i \in \mathbb{N}$ it holds that $Q e_i = \lambda_i e_i$ (see \cite[Appendix~C]{daprato14}). Then, we define for every $i \in \mathbb{N}$ the process
\begin{equation}
	\label{EqDefBMi}
	[0,T] \ni t \quad \mapsto \quad W^{(i)}_t := \frac{1}{\sqrt{\lambda_i}} \langle W_t, e_i \rangle_Z \in \mathbb{R},
\end{equation}
which are pairwise independent real-valued Brownian motions that are able to recover the Hilbert space-valued Brownian motion $W: [0,T] \times \Omega \rightarrow Z$ in the following sense.

\begin{lemma}[{\cite[Proposition~4.3]{daprato14}}]
	\label{LemmaBM}
	The processes $(W^{(i)})_{i \in \mathbb{N}}$ defined in \eqref{EqDefBMi} are pairwise independent real-valued Brownian motions such that for every $t \in [0,T]$ it holds that
	\begin{equation}
		\label{EqLemmaBM1}
		W_t = \sum_{i=1}^\infty \sqrt{\lambda_i} W^{(i)}_t e_i,
	\end{equation}
	where the sum converges with respect to $\Vert \cdot \Vert_{L^2(\Omega,\mathcal{F},\mathbb{P};Z)}$.
\end{lemma}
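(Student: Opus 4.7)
The plan is to verify the three defining properties of real-valued Brownian motion for each $W^{(i)}$, then establish pairwise independence via a covariance computation that uses the Gaussian structure, and finally prove the $L^2(\Omega;Z)$-convergence of the series by a Parseval-type argument combined with the trace condition $\sum_i \lambda_i < \infty$.

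First I would check that each $W^{(i)}$ is a Brownian motion (assuming $\lambda_i > 0$; when $\lambda_i = 0$, the corresponding term is simply dropped from the series). Item (i) in Definition~\ref{DefBM} gives $W^{(i)}_0 = 0$, and continuity of $[0,T] \ni t \mapsto W^{(i)}_t$ follows from item (ii) together with continuity of $z \mapsto \langle z, e_i \rangle_Z$. For the increments, observe that $W_t - W_s$ is centered Gaussian with covariance operator $(t-s)Q$, so $W^{(i)}_t - W^{(i)}_s = \lambda_i^{-1/2} \langle W_t - W_s, e_i \rangle_Z$ is real-valued centered Gaussian with variance $\lambda_i^{-1} (t-s) \langle Q e_i, e_i \rangle_Z = t-s$. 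The independence of increments of $W^{(i)}$ over disjoint intervals follows from the analogous independence of increments of $W$, which is implicit in Definition~\ref{DefBM} (the joint law of $(W_{t_1} - W_{t_0}, \dots, W_{t_n} - W_{t_{n-1}})$ is a product of centered Gaussians because $W$ is a $Q$-Brownian motion).

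Next, for pairwise independence, I would use the general fact that uncorrelated jointly Gaussian processes are independent. For any $i \neq j$ and $s,t \in [0,T]$, the identity stated just above \eqref{EqDefBMi} gives
\begin{equation*}
\mathbb{E}\left[ W^{(i)}_s W^{(j)}_t \right] = \frac{1}{\sqrt{\lambda_i \lambda_j}} \mathbb{E}\left[ \langle W_s, e_i \rangle_Z \langle W_t, e_j \rangle_Z \right] = \frac{s \wedge t}{\sqrt{\lambda_i \lambda_j}} \langle Q e_i, e_j \rangle_Z = \frac{(s \wedge t) \lambda_i}{\sqrt{\lambda_i \lambda_j}} \langle e_i, e_j \rangle_Z = 0,
\end{equation*}
since $(e_i)_{i \in \mathbb{N}}$ is orthonormal. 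Joint Gaussianity of finite collections $(W^{(i_1)}_{t_1}, \dots, W^{(i_n)}_{t_n})$ follows from linearity combined with the fact that $(\langle W_{t_1}, z_1 \rangle_Z, \dots, \langle W_{t_n}, z_n \rangle_Z)$ is a Gaussian vector for every choice of $z_k \in Z$ (an immediate consequence of $Q$-Brownian structure). Uncorrelatedness across indices therefore yields pairwise independence.

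Finally, for the series representation \eqref{EqLemmaBM1}, I would fix $t \in [0,T]$ and let $S_N := \sum_{i=1}^N \sqrt{\lambda_i} W^{(i)}_t e_i = \sum_{i=1}^N \langle W_t, e_i \rangle_Z e_i$. Since $(e_i)_{i \in \mathbb{N}}$ is a complete orthonormal basis of $Z$, Parseval's identity gives $\Vert W_t - S_N \Vert_Z^2 = \sum_{i=N+1}^\infty \vert \langle W_t, e_i \rangle_Z \vert^2$ almost surely, and taking expectations together with Tonelli yields
\begin{equation*}
\mathbb{E}\left[ \Vert W_t - S_N \Vert_Z^2 \right] = \sum_{i=N+1}^\infty \mathbb{E}\left[ \vert \langle W_t, e_i \rangle_Z \vert^2 \right] = \sum_{i=N+1}^\infty t \lambda_i \xrightarrow[N \to \infty]{} 0,
\end{equation*}
by the trace condition $\sum_{i=1}^\infty \lambda_i < \infty$. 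The main (minor) obstacle is ensuring the covariance identity $\mathbb{E}[\langle W_s, y \rangle_Z \langle W_t, z \rangle_Z] = (s \wedge t) \langle Qy, z \rangle_Z$ is used consistently and that the Gaussian/uncorrelated-implies-independent step is justified jointly across finitely many $(i,t)$-pairs rather than only pairwise in $t$; this is handled by the linearity of $z \mapsto \langle W_t, z \rangle_Z$ on $Z$.
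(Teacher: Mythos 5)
Your proposal is correct, and it is worth noting that the paper itself contains no proof of Lemma~\ref{LemmaBM}: the statement is quoted from \cite[Proposition~4.3]{daprato14}, and your argument is essentially a reconstruction of that textbook proof (project onto the eigenbasis to get Gaussian increments of the right variance, use uncorrelated-plus-jointly-Gaussian for pairwise independence, and use Parseval together with the trace condition $\sum_i \lambda_i < \infty$ for the $L^2(\Omega,\mathcal{F},\mathbb{P};Z)$-convergence of \eqref{EqLemmaBM1}). One point deserves care: both the independence of increments of each $W^{(i)}$ and the joint Gaussianity of vectors such as $(\langle W_{t_1},z_1\rangle_Z,\dots,\langle W_{t_n},z_n\rangle_Z)$ across \emph{distinct} times rely on the independent-increments property of the $Q$-Brownian motion. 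Definition~\ref{DefBM} as printed only prescribes the marginal law of each increment, so this property is not a consequence of item (iii) as your parenthetical suggests; it is an axiom of the standard definition in the cited reference, and the paper tacitly assumes it (e.g.\ when asserting that $t \mapsto \langle W_t, y\rangle_Z$ is a real-valued Brownian motion and when stating the covariance identity $\mathbb{E}[\langle W_s,y\rangle_Z\langle W_t,z\rangle_Z]=(s\wedge t)\langle Qy,z\rangle_Z$ that you invoke). Reading the definition in that standard way, your proof goes through; your handling of the indices with $\lambda_i=0$ (dropping them, since $\langle W_t,e_i\rangle_Z$ then vanishes almost surely) and the Tonelli/Parseval step for the series are both fine.
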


Next, we recall the concept of a mild solution to \eqref{EqDefSPDE} (see also \cite[p.~187]{daprato14}). To this end, we denote by $L_2(Z_0;H) \subseteq L(Z_0;H)$ the vector subspace of Hilbert–Schmidt operators\footnote{An operator $\Xi \in L(Z_0;H)$ is called \emph{Hilbert-Schmidt} if $\sum_{i=1}^\infty \Vert \Xi \widetilde{e}_i \Vert_H^2 < \infty$ (see \cite[Appendix~C]{daprato14}).} $\Xi \in L(Z_0;H)$, equipped with the inner product $\langle \Xi_1, \Xi_2 \rangle_{L_2(Z_0;H)} := \sum_{i=1}^\infty \langle \Xi_1 \widetilde{e}_i, \Xi_2 \widetilde{e}_i \rangle_H$ for $\Xi_1,\Xi_2 \in L_2(Z_0;H)$, where $(\widetilde{e}_i) := \big( Q^{1/2} e_i \big)_{i \in \mathbb{N}}$ is an orthonormal basis of $(Z_0,\langle\cdot,\cdot\rangle_{Z_0})$.

We impose the following Lipschitz and linear growth conditions on the coefficients of \eqref{EqDefSPDE}, where we denote by $\mathcal{P}_T$ the $\mathbb{F}$-predictable $\sigma$-algebra on $[0,T] \times \Omega$ (see \cite[p.~71+72]{daprato14}).

\begin{assumption}
	\label{AssSPDE}
	Let the following hold true:
	\begin{enumerate}
		\item\label{AssSPDE1} The map $A: \dom(A) \subseteq H \rightarrow H$ is the generator\footnote{A family of operators $(S_t)_{t \in [0,T]} \subseteq L(H;H)$ is called a \emph{$C_0$-semigroup} if $S_0 = \id_H \in L(H;H)$, $S_s S_t = S_{s+t}$ for all $s,t \in [0,T]$ with $s+t \in [0,T]$, and $\lim_{t \rightarrow 0} \Vert S_t x - x \Vert_H = 0$ for all $x \in H$. Moreover, its generator $A: \dom(A) \subseteq H \rightarrow H$ is defined as $Ax := \lim_{t \rightarrow 0} \frac{S_t x - x}{t}$ for $x \in \dom(A) := \big\lbrace x \in H: \lim_{t \rightarrow 0} \frac{S_t x - x}{t} \in H \text{ exists} \big\rbrace$ (see \cite[Appendix~A.1+A.2]{daprato14}). Furthermore, the constant $C_S := \sup_{t \in [0,T]} \Vert S_t \Vert_{L(H;H)}$ is finite by the Banach-Steinhaus theorem (see \cite[Theorem~2.2]{brezis11}).} of a $C_0$-semigroup $(S_t)_{t \in [0,T]}$.
		\item\label{AssSPDE2} The map $F: [0,T] \times \Omega \times H \rightarrow H$ is $(\mathcal{P}_T \otimes \mathcal{B}(H))/\mathcal{B}(H)$-measurable.
		\item\label{AssSPDE3} The map $B: [0,T] \times \Omega \times H \rightarrow L_2(Z_0;H)$ is $(\mathcal{P}_T \otimes \mathcal{B}(H))/\mathcal{B}(L_2(Z_0;H))$-measurable.
		\item\label{AssSPDE4} There exist some $C_{F,B} > 0$ such that for every $t \in [0,T]$, $\omega \in \Omega$, and $x,y \in H$ it holds that
		\begin{equation*}
			\begin{aligned}
				\quad\quad \Vert F(t,\omega,x) - F(t,\omega,y) \Vert_H + \Vert B(t,\omega,x) - B(t,\omega,y) \Vert_{L_2(Z_0;H)} & \leq C_{F,B} \Vert x-y \Vert_H, \quad\quad \text{and} \\
				\Vert F(t,\omega,x) \Vert_H^2 + \Vert B(t,\omega,x) \Vert_{L_2(Z_0;H)}^2 & \leq C_{F,B}^2 \left( 1 + \Vert x \Vert_H^2 \right).
			\end{aligned}
		\end{equation*}
		\item\label{AssSPDE5} The initial condition $\chi_0 \in H$ is deterministic.
	\end{enumerate}
\end{assumption}

In addition, we recall that the stochastic integral of an $\mathbb{F}$-predictable integrand with values in $L_2(Z_0;H)$ is defined as the $L^2$-limit of stochastic integrals of elementary processes (see \cite[Section~4.2]{daprato14}).

\begin{definition}
	Let Assumption~\ref{AssSPDE}~\ref{AssSPDE1}-\ref{AssSPDE3} hold. Then, an $\mathbb{F}$-predictable process $X: [0,T] \times \Omega \rightarrow H$ is called \emph{a mild solution} of \eqref{EqDefSPDE} if $\mathbb{P}\big[ \int_0^T \Vert X_t \Vert_H^2 dt < \infty \big] = 1$ and for every $t \in [0,T]$ it holds that
	\begin{equation}
		\label{EqDefMildSolution}
		X_t = S_t \chi_0 + \int_0^t S_{t-s} F(s,\cdot,X_s) ds + \int_0^t S_{t-s} B(s,\cdot,X_s) dW_s, \quad\quad \mathbb{P}\text{-a.s.}
	\end{equation}
\end{definition}

Under Assumption~\ref{AssSPDE}, one can then show that \eqref{EqDefSPDE} admits a mild solution. The proof is based on \cite[Theorem~7.2~(i)+(iii)]{daprato14} with a slight modification for $p \in [1,2]$, see Section~\ref{SecAuxProofsSPDE}.

\begin{proposition}[Existence and uniqueness of mild solutions to \eqref{EqDefSPDE}]
	\label{PropSPDE}
	Let $p \in [1,\infty)$ and let Assumption~\ref{AssSPDE} hold. Then, there exists a unique\footnote{Unique up to indistinguishability among the $\mathbb{F}$-predictable processes $\widetilde{X}: [0,T] \times \Omega \rightarrow H$ with $\mathbb{P}\big[ \int_0^T \Vert \widetilde{X}_t \Vert_H^2 dt < \infty \big] = 1$.} mild solution $X: [0,T] \times \Omega \rightarrow H$ of \eqref{EqDefSPDE} and a constant $C^{(p)}_{F,B,S,T} \geq 1$ (depending only on $p \in [1,\infty)$, $C_{F,B} > 0$, $C_S := \sup_{t \in [0,T]} \Vert S_t \Vert_{L(H;H)} < \infty$, and $T > 0$) such that
	\begin{equation}
		\label{EqPropSPDE1}
		\mathbb{E}\left[ \sup_{t \in [0,T]} \Vert X_t \Vert_H^p \right] \leq C^{(p)}_{F,B,S,T} \left( 1 + \Vert \chi_0 \Vert_H^p \right) < \infty.
	\end{equation}
	In addition, the process $X: [0,T] \times \Omega \rightarrow H$ admits a continuous modification.
\end{proposition}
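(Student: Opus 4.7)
The plan is to reduce the statement to \cite[Theorem~7.2~(i)+(iii)]{daprato14}, which directly handles every exponent $p' \in [2,\infty)$, and then to extend the moment bound to the range $p \in [1,2)$ by a Lyapunov interpolation.

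First, I would verify that Assumption~\ref{AssSPDE} matches Hypothesis~7.1 of \cite{daprato14}: item~\ref{AssSPDE1} supplies the $C_0$-semigroup generator; items~\ref{AssSPDE2}-\ref{AssSPDE3} provide the required $\mathcal{P}_T \otimes \mathcal{B}(H)$-measurability of $F$ and $B$; item~\ref{AssSPDE4} is the uniform Lipschitz and linear-growth condition in the $H$- and $L_2(Z_0;H)$-norms; and item~\ref{AssSPDE5} makes $\chi_0$ trivially $\mathcal{F}_0$-measurable and in $L^{p'}(\Omega;H)$ for every $p' \in [2,\infty)$. Then \cite[Theorem~7.2~(i)]{daprato14} applied with $p'=2$ yields existence and uniqueness (up to indistinguishability in the announced class) of an $\mathbb{F}$-predictable mild solution $X$ of \eqref{EqDefSPDE} satisfying \eqref{EqDefMildSolution}. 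Applying the same theorem with $p' := \max(p,2)$ delivers the moment estimate
\begin{equation*}
	\E\left[\sup_{t \in [0,T]} \Vert X_t \Vert_H^{p'} \right] \leq C^{(p')}_{F,B,S,T} \left( 1 + \Vert \chi_0 \Vert_H^{p'} \right),
\end{equation*}
with a constant depending only on $p'$, $C_{F,B}$, $C_S$, and $T$ (this dependence is transparent from the Gronwall and Burkholder--Davis--Gundy steps in the proof of Theorem~7.2). Part~(iii) of the same theorem furnishes the continuous modification.

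The only new point is the extension to $p \in [1,2)$. Applying Jensen's (Lyapunov's) inequality to the concave map $s \mapsto s^{p/2}$ on $[0,\infty)$ to the $p'=2$ bound gives
\begin{equation*}
	\E\left[\sup_{t \in [0,T]} \Vert X_t \Vert_H^{p}\right] \leq \Big( \E\big[\sup_{t \in [0,T]} \Vert X_t \Vert_H^{2}\big] \Big)^{p/2} \leq \big(C^{(2)}_{F,B,S,T}\big)^{p/2} \big( 1 + \Vert \chi_0 \Vert_H^{2} \big)^{p/2}.
\end{equation*}
Subadditivity of $s \mapsto s^{p/2}$ on $[0,\infty)$ for $p \in [1,2]$ yields $(1 + \Vert \chi_0 \Vert_H^{2})^{p/2} \leq 1 + \Vert \chi_0 \Vert_H^{p}$, so the choice $C^{(p)}_{F,B,S,T} := \max\big( 1, (C^{(2)}_{F,B,S,T})^{p/2} \big)$ settles the low-moment regime.

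The main obstacle is essentially bookkeeping: checking that the hypotheses of the Da Prato--Zabczyk theorem are literally those of Assumption~\ref{AssSPDE}, and tracking that the constant supplied by their proof depends only on the quantities $p, C_{F,B}, C_S, T$ listed in \eqref{EqPropSPDE1} (not on further SPDE-specific data). The $p \in [1,2)$ extension is then a one-line interpolation, and continuity of a modification is part of the cited result.
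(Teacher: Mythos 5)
Your overall strategy is exactly the paper's: reduce everything to \cite[Theorem~7.2~(i)+(iii)]{daprato14} and then handle small exponents by a Jensen/Lyapunov interpolation combined with subadditivity of $s \mapsto s^{p/q}$. Existence, uniqueness in the stated class, the continuous modification, and the case $p \in (2,\infty)$ are handled the same way in the paper, and your interpolation mechanics (Jensen plus $(1+a^q)^{p/q} \leq 1 + a^p$, and taking a maximum with $1$ to get $C^{(p)}_{F,B,S,T} \geq 1$) are fine.

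The one step that does not go through as written is the base case of your interpolation: you invoke \cite[Theorem~7.2]{daprato14} with $p' = 2$ (and, via $p' := \max(p,2)$, also for $p = 2$ itself) to get $\mathbb{E}\big[\sup_{t \in [0,T]} \Vert X_t \Vert_H^{2}\big] \leq C^{(2)}_{F,B,S,T}(1+\Vert \chi_0 \Vert_H^2)$. The maximal-moment estimate in part~(iii) of that theorem is only stated (and proved, via the factorization method, which needs an $\alpha$ with $1/p' < \alpha < 1/2$) for exponents strictly greater than $2$; for a general $C_0$-semigroup there is no off-the-shelf maximal inequality for the stochastic convolution at exponent exactly $2$, which is precisely why the paper announces ``a slight modification for $p \in [1,2]$'' and interpolates from exponent $3$ rather than $2$: for $p \in [1,2]$ it bounds $\mathbb{E}[\sup_t \Vert X_t\Vert_H^p] \leq \mathbb{E}[\sup_t \Vert X_t\Vert_H^3]^{p/3}$, applies Theorem~7.2~(iii) with exponent $3$, and uses $(1+\Vert\chi_0\Vert_H^3)^{p/3} \leq 1+\Vert\chi_0\Vert_H^p$. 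The repair of your argument is therefore trivial — replace your base exponent $2$ by any exponent strictly above $2$ (e.g.\ $3$) and run the identical Jensen/subadditivity computation, and also route the case $p=2$ through this interpolation instead of citing the theorem at $p'=2$ — but as written the citation does not cover the step you need.
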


By a slight abuse of notation, we denote the continuous modification also by $X: [0,T] \times \Omega \rightarrow H$. Then, for every $p \in [1,\infty)$, Proposition~\ref{PropSPDE} shows that the map $\Omega \ni \omega \mapsto (t \mapsto X_t(\omega)) \in C^0([0,T];H)$ is well-defined in the Bochner space $L^p(\Omega,\mathcal{F},\mathbb{P};C^0([0,T];H))$, where the Banach space $(C^0([0,T];H),\Vert \cdot \Vert_{C^0([0,T];H)})$ is by Lemma~\ref{LemmaC0HSep} separable.

\subsection{Wiener chaos expansion}
\label{SecWienerChaos}

In this section, we recall the Wiener chaos expansion of the solution to \eqref{EqDefSPDE}. To this end, we fix throughout the paper a set of a.e.~continuous functions $(g_j)_{j \in \mathbb{N}}$ that form a complete orthonormal basis of the Hilbert space $(L^2([0,T],\mathcal{B}([0,T]),dt),\langle \cdot, \cdot \rangle_{L^2([0,T],\mathcal{B}([0,T]),dt)})$. Moreover, we define for every $i,j \in \mathbb{N}$ the random variable
\begin{equation}
	\label{EqDefXi}
	\xi_{i,j} := \int_0^T g_j(t) dW^{(i)}_t.
\end{equation}
Then, by using Lemma~\ref{LemmaBM} and Ito's isometry, the random variables $(\xi_{i,j})_{i,j \in \mathbb{N}} \sim \mathcal{N}(0,1)$ are Gaussian with $\mathbb{E}[\xi_{i_1,j_1}] = 0$ and $\mathbb{E}[\xi_{i_1,j_1} \xi_{i_2,j_2}] = \delta_{i_1,i_2} \delta_{j_1,j_2}$ for all $i_1,i_2,j_1,j_2 \in \mathbb{N}$, where we define $\delta_{k_1,k_2} := 1$ if $k_1 = k_2$, and $\delta_{k_1,k_2} := 0$ otherwise, for $k_1,k_2 \in \mathbb{N}$. Conversely, the following result shows how the Brownian motions can be recovered, whose proof is given in Section~\ref{SecAuxProofsSPDE}.

\begin{lemma}
	\label{LemmaBMFourier}
	Let $t \in [0,T]$ and $i \in \mathbb{N}$. Then, the following holds true:
	\begin{enumerate}
		\item\label{LemmaBMFourier1} $W^{(i)}_t = \sum_{j=1}^\infty \xi_{i,j} \int_0^t g_j(s) ds$, where the sum converges with respect to $\Vert \cdot \Vert_{L^2(\Omega,\mathcal{F},\mathbb{P})}$.
		\item\label{LemmaBMFourier2} $W_t = \sum_{i,j=1}^\infty \sqrt{\lambda_i} \xi_{i,j} \big( \int_0^t g_j(s) ds \big) e_i$, where the sum converges with respect to $\Vert \cdot \Vert_{L^2(\Omega,\mathcal{F},\mathbb{P};Z)}$.
	\end{enumerate}
\end{lemma}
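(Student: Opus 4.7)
The proof is a standard Fourier-series-plus-Itô-isometry argument, combined with Lemma~\ref{LemmaBM}.

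For part~\ref{LemmaBMFourier1}, the plan is to exploit that $\mathbf{1}_{[0,t]} \in L^2([0,T],\mathcal{B}([0,T]),ds)$ and that $(g_j)_{j \in \mathbb{N}}$ is a complete orthonormal basis of that Hilbert space. Expanding in this basis gives
\begin{equation*}
    \mathbf{1}_{[0,t]} \;=\; \sum_{j=1}^\infty \Big\langle \mathbf{1}_{[0,t]}, g_j \Big\rangle_{L^2([0,T])} g_j \;=\; \sum_{j=1}^\infty \Big( \int_0^t g_j(s)\, ds \Big) g_j,
\end{equation*}
with convergence in $L^2([0,T])$. Since $s \mapsto \mathbf{1}_{[0,t]}(s)$ is (deterministic and) $\mathbb{F}$-predictable, one has $W^{(i)}_t = \int_0^T \mathbf{1}_{[0,t]}(s)\, dW^{(i)}_s$, and Itô's isometry provides a linear isometry from $L^2([0,T],\mathcal{B}([0,T]),ds)$ into $L^2(\Omega,\mathcal{F},\mathbb{P})$. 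Applying this isometry to the series expansion above yields $W^{(i)}_t = \sum_{j=1}^\infty \xi_{i,j} \int_0^t g_j(s)\, ds$, with convergence in $L^2(\Omega,\mathcal{F},\mathbb{P})$, which is the claim of~\ref{LemmaBMFourier1}.

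For part~\ref{LemmaBMFourier2}, the plan is to combine~\ref{LemmaBMFourier1} with Lemma~\ref{LemmaBM}. Write $\alpha_{j,t} := \int_0^t g_j(s)\, ds$, and consider the doubly-indexed family of $Z$-valued random variables
\begin{equation*}
    Y_{i,j} \;:=\; \sqrt{\lambda_i}\, \xi_{i,j}\, \alpha_{j,t}\, e_i \;\in\; L^2(\Omega,\mathcal{F},\mathbb{P};Z), \qquad i,j \in \mathbb{N}.
\end{equation*}
Using orthonormality of $(e_i)_{i \in \mathbb{N}}$ in $Z$ together with $\mathbb{E}[\xi_{i_1,j_1} \xi_{i_2,j_2}] = \delta_{i_1,i_2}\delta_{j_1,j_2}$, one checks that the $Y_{i,j}$ are pairwise orthogonal in the Hilbert space $L^2(\Omega,\mathcal{F},\mathbb{P};Z)$, with $\mathbb{E}\big[\Vert Y_{i,j} \Vert_Z^2\big] = \lambda_i \alpha_{j,t}^2$. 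By Parseval applied to $\mathbf{1}_{[0,t]}$, the $L^2([0,T])$-norm identity gives $\sum_{j=1}^\infty \alpha_{j,t}^2 = t$, so
\begin{equation*}
    \sum_{i,j=1}^\infty \mathbb{E}\big[\Vert Y_{i,j} \Vert_Z^2\big] \;=\; t \sum_{i=1}^\infty \lambda_i \;<\; \infty,
\end{equation*}
since $Q$ is nuclear. Hence the series $\sum_{i,j} Y_{i,j}$ converges unconditionally in $L^2(\Omega,\mathcal{F},\mathbb{P};Z)$ to some limit $\widetilde{W}_t$. It remains to identify $\widetilde{W}_t$ with $W_t$: summing first over $j$ recovers $\sqrt{\lambda_i} W^{(i)}_t e_i$ by part~\ref{LemmaBMFourier1}, and then summing over $i$ yields $W_t$ by Lemma~\ref{LemmaBM}; unconditional convergence justifies this iterated summation.

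The only mildly delicate point will be the last identification step, i.e.\ passing from unconditional convergence of the double series to the iterated form that matches Lemma~\ref{LemmaBM}; this is handled by noting that in a Hilbert space, an unconditionally convergent series can be reordered and grouped arbitrarily, so that grouping the terms by fixed index $i$ first gives a series of partial sums $\sum_{j=1}^\infty Y_{i,j} = \sqrt{\lambda_i}\, W^{(i)}_t\, e_i$, whose sum over $i$ is $W_t$ by Lemma~\ref{LemmaBM}.
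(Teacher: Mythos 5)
Your proof is correct. For part~\ref{LemmaBMFourier1} you take essentially the same route as the paper: both arguments reduce to the fact that the Wiener integral with deterministic integrands is an isometry from $L^2([0,T],\mathcal{B}([0,T]),ds)$ into $L^2(\Omega,\mathcal{F},\mathbb{P})$ and that $\mathds{1}_{[0,t]} = \sum_{j} \big(\int_0^t g_j(s)\,ds\big) g_j$ in $L^2([0,T])$; the paper simply writes out the partial-sum error explicitly instead of invoking the isometry abstractly. For part~\ref{LemmaBMFourier2} your argument is genuinely different from the paper's. The paper splits the error by Minkowski's inequality into $\big\Vert W_t - \sum_{i\leq I}\sqrt{\lambda_i}W^{(i)}_t e_i\big\Vert_{L^2(\Omega;Z)}$ (controlled by Lemma~\ref{LemmaBM}) plus the truncation error in $j$, which it bounds using orthonormality of $(e_i)$, the fact that the errors $W^{(i)}_t - \sum_{j\leq J}\xi_{i,j}\int_0^t g_j\,ds$ are identically distributed in $i$, and $\sum_i \lambda_i < \infty$, sending $I,J\rightarrow\infty$. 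You instead observe that the summands $Y_{i,j}=\sqrt{\lambda_i}\,\xi_{i,j}\,\alpha_{j,t}\,e_i$ are pairwise orthogonal in $L^2(\Omega,\mathcal{F},\mathbb{P};Z)$ with $\sum_{i,j}\mathbb{E}\big[\Vert Y_{i,j}\Vert_Z^2\big]=t\sum_i\lambda_i<\infty$ by Parseval, so the double series converges unconditionally, and you identify the limit by regrouping (sum over $j$ first via part~\ref{LemmaBMFourier1}, then over $i$ via Lemma~\ref{LemmaBM}); your handling of the regrouping step is the right one. Your route is slightly less elementary (it needs the unconditional-convergence and regrouping facts for orthogonal series in a Hilbert space, and the covariance identity $\mathbb{E}[\xi_{i_1,j_1}\xi_{i_2,j_2}]=\delta_{i_1,i_2}\delta_{j_1,j_2}$ established after \eqref{EqDefXi}), but it buys more: unconditional convergence, hence convergence of the rectangular partial sums in any order, together with the exact Parseval identity $\mathbb{E}\big[\Vert W_t-\sum_{i\leq I}\sum_{j\leq J}Y_{i,j}\Vert_Z^2\big]=\sum_{(i,j)\notin\{1,\dots,I\}\times\{1,\dots,J\}}\lambda_i\alpha_{j,t}^2$, which is sharper than the paper's estimate (the paper's bound on the $j$-truncation term via $\sum_i\lambda_i$ times a single expectation is slightly lossy, though entirely sufficient for the qualitative statement).
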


\begin{example}
	\label{ExFourierBM}
	For example, we could choose the basis functions $(g_j)_{j \in \mathbb{N}}$ given by $[0,T] \ni t \mapsto g_1(t) := \sqrt{1/T} \in \mathbb{R}$ and $[0,T] \ni t \mapsto g_j(t) := \sqrt{2/T} \cos\big( \frac{(j-1) \pi t}{T} \big) \in \mathbb{R}$ for $j \in \mathbb{N} \cap [2,\infty)$, which yield the Fourier representation of each real-valued Brownian motion $(W^{(i)})_{i \in \mathbb{N}}$. On the other hand, the Haar wavelets give the Levy-Ciesielski construction of $(W^{(i)})_{i \in \mathbb{N}}$ (see \cite[Section~2.3]{karatzas98}). 
\end{example}

Next, we introduce the Wick polynomials associated to the $Q$-Brownian motion $W: [0,T] \times \Omega \rightarrow Z$. To this end, we recall that the Hermite polynomials $(h_n)_{n \in \mathbb{N}_0}$ are for every $n \in \mathbb{N}_0$ defined as
\begin{equation}
	\label{EqDefHermite}
	\mathbb{R} \ni s \quad \mapsto \quad h_n(s) := (-1)^n e^\frac{s^2}{2} \frac{d^n}{ds^n} \Big( e^{-\frac{s^2}{2}} \Big) \in \mathbb{R}.
\end{equation}
Moreover, we consider the set of infinite matrices with finitely many non-zero positive integers $\mathcal{J} := \big\lbrace \alpha := (\alpha_{i,j})_{i,j \in \mathbb{N}} \in \mathbb{N}_0^{\mathbb{N} \times \mathbb{N}}: \vert \alpha \vert := \sum_{i,j=1}^\infty \alpha_{i,j} < \infty \big\rbrace$ and define $\alpha! := \prod_{i,j=1}^\infty \alpha_{i,j}!$ for $\alpha \in \mathcal{J}$.

\begin{definition}
	\label{DefWick}
	The \emph{Wick polynomials} $(\xi_\alpha)_{\alpha \in \mathcal{J}}$ are for every $\alpha \in \mathcal{J}$ defined by
	\vspace{-0.05cm}
	\begin{equation}
		\label{EqDefWick}
		\Omega \ni \omega \quad \mapsto \quad \xi_\alpha(\omega) := \frac{1}{\sqrt{\alpha!}} \prod_{i,j=1}^\infty h_{\alpha_{i,j}}\big( \xi_{i,j}(\omega) \big) \in \mathbb{R}.
	\end{equation}
\end{definition}

\begin{remark}
	Since $\vert \alpha \vert < \infty$ and $h_0(s) = 1$ for all $s \in \mathbb{R}$, the product in \eqref{EqDefWick} consists only of finitely many factors that are not equal to one. Hence, the Wick polynomials $(\xi_\alpha)_{\alpha \in \mathcal{J}}$ are well-defined.
\end{remark}

Note that the Wick polynomials form a complete orthonormal basis for $L^2(\Omega,\mathcal{F}_T,\mathbb{P})$. In particular, for every $\alpha,\beta \in \mathcal{J}$, the orthogonality relation $\mathbb{E}[\xi_\alpha \xi_\beta] = \delta_{\alpha,\beta}$ holds true (see \cite[Proposition~1.1.1]{nualart06}). In the following, we extend this result towards $L^p$-spaces, whose proof can be found in Section~\ref{SecAuxProofsSPDE}.

\begin{lemma}
	\label{LemmaWick}
	For every $p \in [1,\infty)$ the linear span of $(\xi_\alpha)_{\alpha \in \mathcal{J}}$ is dense in $L^p(\Omega,\mathcal{F}_T,\mathbb{P})$. In particular, for $p = 2$, the Wick polynomials $(\xi_\alpha)_{\alpha \in \mathcal{J}}$ form a complete orthonormal basis of $L^2(\Omega,\mathcal{F}_T,\mathbb{P})$.
\end{lemma}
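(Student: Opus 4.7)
The plan is to reduce the claim to the classical fact that Hermite polynomials span a dense subspace of $L^p$ under the standard Gaussian measure, via two successive approximation steps. First, I would argue that the $\mathbb{P}$-augmented $\sigma$-algebra $\mathcal{F}_T$ coincides, up to null sets, with the completion of $\sigma(\xi_{i,j}: i,j \in \mathbb{N})$. This is because, by Lemma~\ref{LemmaBMFourier}, each $W_t$ can be recovered as an $L^2$-limit of finite linear combinations of the $\xi_{i,j}$'s, while conversely each $\xi_{i,j} = \int_0^T g_j(t) dW^{(i)}_t$ is $\mathcal{F}_T$-measurable. Together with the standard monotone class / density argument, this implies that bounded continuous cylinder functions
\begin{equation*}
\Omega \ni \omega \;\mapsto\; f\big( \xi_{i_1,j_1}(\omega), \ldots, \xi_{i_N,j_N}(\omega) \big) \in \mathbb{R}, \qquad f \in C_b(\mathbb{R}^N), \; N \in \mathbb{N},
\end{equation*}
are dense in $L^p(\Omega,\mathcal{F}_T,\mathbb{P})$ for every $p \in [1,\infty)$.

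Next, I would reduce to a finite-dimensional Gaussian question. Since $(\xi_{i,j})_{i,j \in \mathbb{N}}$ are i.i.d.~standard Gaussian, the pushforward of $\mathbb{P}$ under $(\xi_{i_1,j_1},\ldots,\xi_{i_N,j_N})$ is the standard Gaussian measure $\gamma_N$ on $\mathbb{R}^N$. Hence it suffices to show that real polynomials $\mathbb{R}[x_1,\ldots,x_N]$ are dense in $L^p(\mathbb{R}^N,\gamma_N)$ for every $p \in [1,\infty)$ and $N \in \mathbb{N}$. The standard route is: on a ball $\overline{B_R} \subset \mathbb{R}^N$, approximate $f \in C_b(\mathbb{R}^N)$ uniformly by polynomials via Stone–Weierstrass, and then control the tail using the Gaussian moment bound $\int_{\mathbb{R}^N \setminus B_R} \Vert x \Vert^k d\gamma_N(x) \to 0$ as $R \to \infty$ for every $k \in \mathbb{N}_0$, which absorbs the polynomial growth of the approximants for a boundedness argument combined with dominated convergence.

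Finally, I would translate polynomial density into density of the Wick polynomials. Because the Hermite polynomials $(h_n)_{n \in \mathbb{N}_0}$ form a basis of $\mathbb{R}[x]$, the tensor products $\prod_{k=1}^N h_{n_k}(x_k)$ form a basis of $\mathbb{R}[x_1,\ldots,x_N]$; evaluating at $(\xi_{i_k,j_k})$ and renormalizing by the factors $1/\sqrt{\alpha!}$ shows that every polynomial in the $\xi_{i,j}$'s lies in $\linspan\{\xi_\alpha : \alpha \in \mathcal{J}\}$. Combining this with the previous two steps yields the $L^p$-density for every $p \in [1,\infty)$. For $p = 2$, the additional orthogonality $\mathbb{E}[\xi_\alpha \xi_\beta] = \delta_{\alpha,\beta}$, which is already stated before the lemma (via \cite[Proposition~1.1.1]{nualart06}), upgrades density to a complete orthonormal basis.

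The only non-routine step is the $L^p$-density of polynomials in $L^p(\mathbb{R}^N,\gamma_N)$ for $p \neq 2$; here the main obstacle is controlling the polynomial approximants outside a large ball without losing the $L^p$ bound. I would handle this by first truncating $f \in C_b(\mathbb{R}^N)$ by a smooth cut-off $\chi_R$ supported in $B_{2R}$ with $\chi_R \equiv 1$ on $B_R$, then approximating $f \chi_R$ uniformly on $B_{2R}$ by polynomials (Stone–Weierstrass), and finally letting $R \to \infty$, where the $L^p(\gamma_N)$-error from the tail $\mathbb{R}^N \setminus B_R$ vanishes thanks to the finite Gaussian moments of all orders.
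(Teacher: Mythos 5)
Your overall reduction is sound in structure and genuinely different from the paper's route: you identify $\mathcal{F}_T$, up to null sets, with the completed $\sigma$-algebra generated by $(\xi_{i,j})_{i,j\in\mathbb{N}}$, pass to bounded continuous cylinder functions, push forward to the standard Gaussian measure $\gamma_N$ on $\mathbb{R}^N$, and then convert polynomial density into density of $\linspan\lbrace \xi_\alpha : \alpha \in \mathcal{J}\rbrace$; the paper instead works with the isonormal process $\mathscr{W}$, quotes the $L^p$-density of polynomial functionals of $\mathscr{W}$ from \cite{nualart06}, and then performs a quantitative perturbation argument (Taylor expansion of the polynomial, generalized H\"older, and $L^p$-moment bounds for stochastic integrals) to replace arbitrary integrands $\psi_l$ by finite combinations of the basis elements $\phi_{i,j}$. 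The issue lies in the step you yourself single out as the only non-routine one: the density of polynomials in $L^p(\mathbb{R}^N,\gamma_N)$.

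The truncation/Stone--Weierstrass/moment argument you sketch fails as stated because of the order of quantifiers. The polynomial $q_R$ produced by Stone--Weierstrass is controlled only on the compact ball $\overline{B_{2R}}$; its degree and coefficients outside that ball are completely uncontrolled and depend on $R$, so the tail term $\Vert q_R \Vert_{L^p(\mathbb{R}^N\setminus B_{2R},\,\gamma_N)}$ cannot be made small by invoking finiteness of Gaussian moments --- that only helps for a \emph{fixed} polynomial as $R \rightarrow \infty$, not for a family of polynomials chosen \emph{after} $R$. That finite moments of all orders are genuinely insufficient is shown by the log-normal law, which has all moments finite yet polynomials are not dense in $L^2$ of it; any correct proof must use more than moment finiteness (e.g.\ exponential integrability), which your sketch never does. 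Two standard repairs: (a) cite the known result, as the paper effectively does via \cite[Exercise~1.1.7]{nualart06}; or (b) give the duality argument: if $g \in L^{p'}(\mathbb{R}^N,\gamma_N)$ annihilates all polynomials, then $z \mapsto \int_{\mathbb{R}^N} e^{z^\top x} g(x)\, \gamma_N(dx)$ is entire on $\mathbb{C}^N$ (H\"older plus the fact that $e^{c\Vert x \Vert} \in L^p(\gamma_N)$ for all $c,p$), all its derivatives at $0$ vanish, hence it vanishes identically and $g = 0$ a.e.; alternatively, approximate $e^{\mathbbm{i}\lambda^\top x}$ in $L^p(\gamma_N)$ by its Taylor polynomials, whose remainders converge by the same exponential integrability. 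With either repair (and keeping the requirement that the index pairs $(i_1,j_1),\ldots,(i_N,j_N)$ be distinct so that the push-forward is indeed $\gamma_N$), your route goes through; as written, however, this step is a genuine gap.
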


For $p = 2$, Cameron and Martin proved in \cite[Theorem~1]{cameron47} that every non-linear square-integrable functional of the Brownian motion can be represented as infinite sum of Wick polynomials. In the following, we extend this result to the $L^p$-case. To this end, we define for every $I,J,K \in \mathbb{N}$ the truncated set of indices $\mathcal{J}_{I,J,K} := \big\lbrace \alpha := (\alpha_{i,j})_{i,j \in \mathbb{N}} \in \mathbb{N}_0^{\mathbb{N} \times \mathbb{N}}: \alpha_{i,j} = 0 \text{ if } i > I \text{ or } j > J, \, \vert \alpha \vert \leq K \big\rbrace \subset \mathcal{J}$, which satisfies $\vert \mathcal{J}_{I,J,K} \vert = \sum_{k=0}^K \binom{IJ+k-1}{k} = \frac{(IJ+K)!}{(IJ)!K!}$ (see \cite[p.~38]{luo06}). The proof can be found in Section~\ref{SecProofsCameronMartin}.

\begin{theorem}[Cameron-Martin]
	\label{ThmCameronMartin}
	Let $p \in [1,\infty)$, let Assumption~\ref{AssSPDE} hold, and let $X: [0,T] \times \Omega \rightarrow H$ be a mild solution of \eqref{EqDefSPDE}. Then, for every $\varepsilon > 0$ there exist some $I,J,K \in \mathbb{N}$ and some functions $(x_\alpha)_{\alpha \in \mathcal{J}_{I,J,K}} \subseteq C^0([0,T];H)$ called the \emph{propagators (of $X$)} such that
	\vspace{-0.05cm}
	\begin{equation}
		\label{EqThmCameronMartin1}
		\mathbb{E}\left[ \sup_{t \in [0,T]} \left\Vert X_t - \sum_{\alpha \in \mathcal{J}_{I,J,K}} x_\alpha(t) \xi_\alpha \right\Vert_H^p \right]^\frac{1}{p} < \varepsilon.
	\end{equation}
	In particular, for $p = 2$, the propagators $(x_\alpha)_{\alpha \in \mathcal{J}} \subseteq C^0([0,T];H)$ defined by $[0,T] \ni t \mapsto x_\alpha(t) := \mathbb{E}[X_t \xi_\alpha] \in H$, $\alpha \in \mathcal{J}$, satisfy for every $t \in [0,T]$ that
	\begin{equation}
		\label{EqThmCameronMartin2}
		X_t = \sum_{\alpha \in \mathcal{J}} x_\alpha(t) \xi_\alpha,
	\end{equation}
	where the sum in \eqref{EqThmCameronMartin2} converges with respect to $\Vert \cdot \Vert_{L^2(\Omega,\mathcal{F},\mathbb{P};H)}$.
\end{theorem}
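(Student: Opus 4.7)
\medskip

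My plan is to treat the two assertions separately. The first (the $\varepsilon$-approximation in $L^p$ for arbitrary $p \in [1,\infty)$) will follow from a density argument in a Bochner space combined with Lemma~\ref{LemmaWick}, whereas the second (the exact orthonormal expansion at $p=2$) is the Hilbert-space Parseval identity for the orthonormal basis $(\xi_\alpha)_{\alpha \in \mathcal{J}}$, with the continuity of the propagators handled by dominated convergence.

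\medskip

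\emph{Step 1 (Setup).} By Proposition~\ref{PropSPDE}, $X$ admits a continuous modification with $\mathbb{E}[\sup_t \Vert X_t \Vert_H^p] < \infty$, so $X$ may be regarded as an element of the Bochner space $L^p(\Omega,\mathcal{F}_T,\mathbb{P};C^0([0,T];H))$. Since $C^0([0,T];H)$ is separable by Lemma~\ref{LemmaC0HSep}, strong Bochner measurability provides an approximating sequence of $\mathcal{F}_T$-simple functions of the form $\widetilde{X}^n = \sum_{k=1}^{N_n} f_k^n \mathbf{1}_{A_k^n}$ with $f_k^n \in C^0([0,T];H)$ and disjoint $A_k^n \in \mathcal{F}_T$, converging to $X$ in $L^p(\Omega;C^0([0,T];H))$.

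\medskip

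\emph{Step 2 ($L^p$-approximation).} Given $\varepsilon > 0$, I first pick $n$ so that $\Vert X - \widetilde{X}^n \Vert_{L^p(\Omega;C^0([0,T];H))} < \varepsilon/2$. Then I apply Lemma~\ref{LemmaWick} to approximate each $\mathbf{1}_{A_k^n} \in L^p(\Omega,\mathcal{F}_T,\mathbb{P})$ by a finite Wick sum $\sum_\alpha c_{k,\alpha} \xi_\alpha$ within error $\varepsilon/(2 N_n (1 + \max_k \Vert f_k^n \Vert_{C^0([0,T];H)}))$. Choosing common truncation parameters $I,J,K$ large enough to contain every index appearing for some $k$, and defining
\begin{equation*}
x_\alpha(t) := \sum_{k=1}^{N_n} c_{k,\alpha} f_k^n(t) \in C^0([0,T];H), \qquad \alpha \in \mathcal{J}_{I,J,K},
\end{equation*}
the triangle inequality combined with the deterministic $C^0$-bound on each $f_k^n$ yields the required estimate \eqref{EqThmCameronMartin1}. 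The key feature is that the propagators here need not be the conditional coefficients $\mathbb{E}[X_t\xi_\alpha]$ but are free auxiliary functions, so no orthogonal expansion in $L^p$ is needed.

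\medskip

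\emph{Step 3 ($p=2$ exact expansion).} For $p=2$, I use that the Wick polynomials form a complete orthonormal basis of the scalar $L^2(\Omega,\mathcal{F}_T,\mathbb{P})$ (Lemma~\ref{LemmaWick}) and lift this to the $H$-valued setting via separability: for each orthonormal basis $(h_m)_{m \in \mathbb{N}}$ of $H$ one expands the scalar random variable $\langle X_t, h_m \rangle_H$ in the Wick basis and reassembles using $X_t = \sum_m \langle X_t, h_m \rangle_H h_m$, which is the standard Hilbert tensor-product identification $L^2(\Omega,\mathcal{F}_T,\mathbb{P};H) \cong L^2(\Omega,\mathcal{F}_T,\mathbb{P}) \otimes H$. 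This yields $X_t = \sum_{\alpha \in \mathcal{J}} x_\alpha(t) \xi_\alpha$ in $L^2(\Omega,\mathcal{F},\mathbb{P};H)$ with $x_\alpha(t) = \mathbb{E}[X_t\xi_\alpha]$ and Parseval $\mathbb{E}[\Vert X_t \Vert_H^2] = \sum_\alpha \Vert x_\alpha(t) \Vert_H^2$. Continuity of $t \mapsto x_\alpha(t)$ follows from path continuity of $X$ and dominated convergence in the Bochner integral, with integrable dominant $\sup_s \Vert X_s \Vert_H \cdot |\xi_\alpha|$ (finite in $L^1$ by Cauchy--Schwarz and \eqref{EqPropSPDE1}).

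\medskip

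\emph{Main obstacle.} The only non-routine point is the passage from the scalar Wick basis expansion (Lemma~\ref{LemmaWick}) to an $H$-valued expansion. In the $L^2$ case this is the tensorisation argument above, which is standard but needs a brief justification via the separability of $H$. In the $L^p$ case for $p \neq 2$ one does not have orthogonality available, but the Bochner-space approximation by $\mathcal{F}_T$-simple functions in Step~1, combined with the scalar $L^p$-density of the Wick polynomials from Lemma~\ref{LemmaWick}, bypasses this and produces the required propagators directly.
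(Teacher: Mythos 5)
Your proposal is correct and follows essentially the same route as the paper: the $L^p$ estimate \eqref{EqThmCameronMartin1} rests on the density of $\linspan\lbrace \xi_\alpha z: \alpha \in \mathcal{J}, \, z \in C^0([0,T];H) \rbrace$ in $L^p(\Omega,\mathcal{F}_T,\mathbb{P};C^0([0,T];H))$ — which you establish by hand via $\mathcal{F}_T$-simple functions and Lemma~\ref{LemmaWick}, while the paper obtains the same density by citing a Bochner-space lemma of Hyt\"onen et al.\ together with Lemma~\ref{LemmaWick} — and your $p=2$ argument via the identification $L^2(\Omega,\mathcal{F}_T,\mathbb{P};H) \cong L^2(\Omega,\mathcal{F}_T,\mathbb{P}) \otimes H$ is the same tensorization the paper carries out with the orthonormal basis $(\xi_\alpha y_n)_{(\alpha,n) \in \mathcal{J} \times \mathbb{N}}$. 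Your explicit dominated-convergence verification that $t \mapsto \mathbb{E}[X_t \xi_\alpha]$ is continuous is a welcome detail that the paper's proof leaves implicit.
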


\begin{remark}
	\label{RemGaussian}
	Since $\xi_0 = 1$ and $(\xi_\alpha)_{\alpha \in \mathcal{J}, \, \vert \alpha \vert = 1}$ are Gaussian random variables (as $h_1(s) = s$ for all $s \in \mathbb{R}$), we can decompose the expansion \eqref{EqThmCameronMartin1} into three different parts, i.e.~for every $t \in [0,T]$ we have
	\begin{equation*}
		X_t \approx \underbrace{x_0(t) \xi_0}_{\text{constant approximation}} + \underbrace{\sum_{\alpha \in \mathcal{J}_{I,J,K}, \, \vert \alpha \vert = 1} x_\alpha(t) \xi_\alpha}_{\text{Gaussian approximation}} + \underbrace{\sum_{\alpha \in \mathcal{J}_{I,J,K}, \, \vert \alpha \vert \geq 2} x_\alpha(t) \xi_\alpha}_{\text{non-Gaussian approximation}}.
	\end{equation*}
	In numerical examples, one can then analyze when Gaussian approximation is sufficient ($K = 1$), or when higher order (non-Gaussian) terms are necessary ($K > 1$).
\end{remark}

\begin{remark}
	If $(H,\langle \cdot, \cdot \rangle_H)$ is a function space and \eqref{EqDefSPDE} is of particular form, the functions $\big( (t,u) \mapsto x_\alpha(t)(u) \big)_{\alpha \in \mathcal{J}}$ satisfy a system of coupled PDEs (see e.g.~\cite{lototsky97,mikulevicius98,lototsky06,lototsky07,kalpinelli11}), which can be solved with traditional numerical schemes, e.g.~Fourier methods in \cite{luo06,hou06}. In this paper, we also assume that $(H,\langle \cdot, \cdot \rangle_H)$ is a function space, but learn the propagators by (possibly random) neural networks.
\end{remark}

\newpage

\section{Universal approximation of SPDEs}
\label{SecUAT}

In this section, we approximate the solution of \eqref{EqDefSPDE} by using (possibly random) neural networks in the truncated chaos expansion \eqref{EqThmCameronMartin1}. To this end, we assume that $(H,\langle \cdot, \cdot \rangle_H)$ is a function space.

\begin{assumption}
	\label{AssHilbert}
	For $k \in \mathbb{N}_0$, $U \subseteq \mathbb{R}^m$ (open, if $k \geq 1$), and $\gamma \in (0,\infty)$, let $(H,\langle \cdot, \cdot \rangle_H)$ be a Hilbert space consisting of functions $f: U \rightarrow \mathbb{R}^d$ such that the restriction map
	\begin{equation}
		\label{EqAssHilbert1}
		(C^k_b(\mathbb{R}^m;\mathbb{R}^d),\Vert \cdot \Vert_{C^k_{pol,\gamma}(\mathbb{R}^m;\mathbb{R}^d)}) \ni f \quad \mapsto \quad f\vert_U \in (H,\Vert \cdot \Vert_H)
	\end{equation}
	is a continuous dense embedding, i.e.~\eqref{EqAssHilbert1} is continuous and its image is dense in $(H,\Vert \cdot \Vert_H)$.
\end{assumption}

\begin{lemma}[{\cite[Lemma~4.1~(ii)]{neufeld24}}]
	\label{LemmaSep}
	Let $(H,\langle \cdot, \cdot \rangle_H)$ be a Hilbert space satisfying Assumption~\ref{AssHilbert}. Then, $(H,\Vert \cdot \Vert_H)$ is separable.
\end{lemma}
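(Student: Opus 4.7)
The plan is to exploit the continuous dense embedding provided by Assumption~\ref{AssHilbert} in order to reduce separability of $(H,\Vert\cdot\Vert_H)$ to separability of the source space $(C^k_b(\mathbb{R}^m;\mathbb{R}^d),\Vert\cdot\Vert_{C^k_{pol,\gamma}(\mathbb{R}^m;\mathbb{R}^d)})$. More precisely, since the restriction map \eqref{EqAssHilbert1} is continuous with dense image, any countable subset $\lbrace f_n \rbrace_{n \in \mathbb{N}} \subseteq C^k_b(\mathbb{R}^m;\mathbb{R}^d)$ that is dense in the weighted norm $\Vert \cdot \Vert_{C^k_{pol,\gamma}(\mathbb{R}^m;\mathbb{R}^d)}$ gives rise to a countable set $\lbrace f_n\vert_U \rbrace_{n \in \mathbb{N}}$ which is dense in the image of \eqref{EqAssHilbert1} in $(H,\Vert\cdot\Vert_H)$, and thus dense in $(H,\Vert\cdot\Vert_H)$ by Assumption~\ref{AssHilbert}. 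Equivalently, it suffices to show that the Banach space $\overline{C^k_b(\mathbb{R}^m;\mathbb{R}^d)}^\gamma$ is separable, because $C^k_b(\mathbb{R}^m;\mathbb{R}^d)$ is by construction dense in this closure.

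Next, I would exhibit an explicit countable candidate for a dense subset. A natural choice is the family
\begin{equation*}
	\mathcal{D} := \left\lbrace u \mapsto p(u) e^{-\Vert u \Vert^2/(2n)} \, : \, p \in \mathbb{Q}[u_1,\ldots,u_m]^d, \; n \in \mathbb{N} \right\rbrace,
\end{equation*}
which is countable and, since each element is smooth with all derivatives bounded, contained in $C^k_b(\mathbb{R}^m;\mathbb{R}^d)$. Density of $\mathcal{D}$ in $\overline{C^k_b(\mathbb{R}^m;\mathbb{R}^d)}^\gamma$ would be established in two steps. First, given $f \in \overline{C^k_b(\mathbb{R}^m;\mathbb{R}^d)}^\gamma$, the characterization stated after \eqref{EqAssHilbert1} (namely $\lim_{r \rightarrow \infty} \max_{\alpha \in \mathbb{N}^m_{0,k}} \sup_{\Vert u \Vert \geq r} \Vert \partial_\alpha f(u)\Vert/(1+\Vert u \Vert)^\gamma = 0$) guarantees that the contribution of $f$ to the $C^k_{pol,\gamma}$-norm outside a sufficiently large ball $B_R$ is arbitrarily small. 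Secondly, on the compact set $\overline{B_R}$ one can approximate $f$ together with its derivatives of order up to $k$ uniformly by a multivariate polynomial $p$ with rational coefficients (Weierstrass-type approximation applied to partial derivatives, then integrated); choosing the Gaussian factor $e^{-\Vert u \Vert^2/(2n)}$ with $n$ large enough keeps the modified approximant close to $p$ on $B_R$ while providing super-polynomial decay beyond $B_R$, so that both the weighted supremum on $B_R$ and the weighted tail estimate fall below any prescribed $\varepsilon > 0$.

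The main obstacle I anticipate is the interaction between the two regimes of the norm $\Vert\cdot\Vert_{C^k_{pol,\gamma}(\mathbb{R}^m;\mathbb{R}^d)}$: uniform control of derivatives on compact sets and uniform control of the weighted tails. The Gaussian damping is what makes these two compatible within one single formula, but a careful two-scale argument (fix $R$ large enough for the tails of $f$, then choose $n$ and $p$ for the compact approximation with error bounded by $\varepsilon/(1+R)^\gamma$ times a controlled constant) is required. Once $\mathcal{D}$ is verified to be dense in $\overline{C^k_b(\mathbb{R}^m;\mathbb{R}^d)}^\gamma$, the first paragraph yields a countable dense subset of $H$, completing the proof; alternatively, the whole argument can be invoked by reference to \cite[Lemma~4.1~(ii)]{neufeld24}.
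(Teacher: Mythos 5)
Your first reduction is correct: since \eqref{EqAssHilbert1} is continuous with dense image, any countable subset of $C^k_b(\mathbb{R}^m;\mathbb{R}^d)$ that is dense for $\Vert\cdot\Vert_{C^k_{pol,\gamma}(\mathbb{R}^m;\mathbb{R}^d)}$ restricts to a countable dense subset of $(H,\Vert\cdot\Vert_H)$, so everything hinges on separability of the weighted space. (The paper itself does not prove this lemma at all; it simply invokes \cite[Lemma~4.1~(ii)]{neufeld24}, so your citation fallback coincides with the paper's route.) The genuine gap is in your density argument for $\mathcal{D}$: the two-scale scheme ``fix $R$ for the tails of $f$, then pick a Weierstrass polynomial $p$ on $\overline{B_R}$ and damp it by $e^{-\Vert u\Vert^2/(2n)}$ with $n$ large'' does not control the approximant on the region $\Vert u\Vert > R$. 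Weierstrass gives no bound whatsoever on $p$ (or its derivatives) outside $\overline{B_R}$ — a high-degree polynomial close to $f$ on the ball can be astronomically large just outside it — while the Gaussian you need ($n\gg R^2$, so that it is close to $1$ on $B_R$) is still essentially equal to $1$ on an annulus such as $R\leq\Vert u\Vert\leq 2R$, where the weight $(1+\Vert u\Vert)^{-\gamma}$ is bounded below. Hence $\sup_{\Vert u\Vert> R}\Vert\partial_\beta\big(p(u)e^{-\Vert u\Vert^2/(2n)}\big)\Vert/(1+\Vert u\Vert)^\gamma$ can be arbitrarily large, and the claimed ``weighted tail estimate below $\varepsilon$'' does not follow; the super-polynomial decay of $p\,e^{-\Vert\cdot\Vert^2/(2n)}$ only helps beyond a radius depending on $p$ and $n$, which is chosen after $R$.

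The statement you are aiming at (density of polynomial-times-Gaussian functions) is true, but it needs a different mechanism: for instance, first replace $f$ by $f\chi_R$ with a smooth cutoff (the vanishing-weighted-tail characterization of $\overline{C^k_b(\mathbb{R}^m;\mathbb{R}^d)}^\gamma$ makes this a small perturbation), mollify to get a compactly supported smooth function approximated in the \emph{unweighted} $C^k_b$-norm, and then use density of the span of Hermite functions (polynomials times a \emph{fixed} Gaussian) in the Schwartz topology, which dominates the relevant norms; rational coefficients are recovered by continuity. Alternatively, you can avoid exhibiting any explicit family: the map $f\mapsto\big(\partial_\beta f/(1+\Vert\cdot\Vert)^\gamma\big)_{\beta\in\mathbb{N}^m_{0,k}}$ embeds $\overline{C^k_b(\mathbb{R}^m;\mathbb{R}^d)}^\gamma$ isometrically into a finite product of spaces of continuous functions vanishing at infinity, each of which is separable, and subspaces of separable metric spaces are separable; combined with your first paragraph this already proves the lemma. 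As written, however, the step on which you yourself flag the main difficulty is exactly the one that does not close.
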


Let us give some examples of function spaces that are Hilbert spaces satisfying Assumption~\ref{AssHilbert}. For $k \in \mathbb{N}$, $U \subseteq \mathbb{R}^m$ open, and a strictly positive function $w: U \rightarrow (0,\infty)$ with\footnote{For $U \subseteq \mathbb{R}^m$ open, $L^1_{loc}(U,\mathcal{L}(U),du)$ denotes the vector space of locally Lebesgue-integrable functions $f: U \rightarrow \mathbb{R}$.} $w^{-1} \in L^1_{loc}(U,\mathcal{L}(U),du)$, we introduce the (weighted) Sobolev space $W^{k,2}(U,\mathcal{L}(U),w;\mathbb{R}^d)$ consisting of (equivalence classes of) $k$-times weakly differentiable functions $f: U \rightarrow \mathbb{R}^d$ satisfying $\partial_\beta f \in L^2(U,\mathcal{L}(U),w(u) du;\mathbb{R}^d)$ for all $\beta \in \mathbb{N}^m_{0,k}$ (see \cite{kufner84}). Note that for bounded $U \subseteq \mathbb{R}^m$ and $U \ni u \mapsto w(u) := 1 \in (0,\infty)$, we obtain the classical Sobolev space $W^{k,2}(U,\mathcal{L}(U),du;\mathbb{R}^d) := W^{k,2}(U,\mathcal{L}(U),w;\mathbb{R}^d)$, see \cite[Chapter~3]{adams75}.

\begin{example}[{\cite[Example~2.6]{neufeld24}}]
	\label{ExHilbert}
	The following Hilbert spaces $(H,\langle \cdot, \cdot \rangle_H)$ satisfy Assumption~\ref{AssHilbert}:
	\begin{enumerate}
		\item\label{ExHilbertL2} $H := L^2(U,\mathcal{B}(U),\mu;\mathbb{R}^d)$ with $U \subseteq \mathbb{R}^m$ and Borel measure $\mu: \mathcal{B}(U) \rightarrow [0,\infty]$ such that $\int_U (1+\Vert u \Vert)^{2\gamma} \mu(du) < \infty$, where $\langle f,g \rangle_{L^2(U,\mathcal{B}(U),\mu;\mathbb{R}^d)} := \int_U f(u)^\top g(u) \mu(du)$.
		
		\item\label{ExHilbertHkw} $H := W^{k,2}(U,\mathcal{L}(U),w;\mathbb{R}^d)$ with $k \in \mathbb{N}$, open subset $U \subseteq \mathbb{R}^m$ having the segment property\footnote{An open subset $U \subseteq \mathbb{R}^m$ has the \emph{segment property} if for every $u \in \partial U := \overline{U} \setminus U$ there exists some $V \subseteq \mathbb{R}^m$ with $u \in \partial U$ and some $y \in \mathbb{R}^m \setminus \lbrace 0 \rbrace$ such that for every $z \in \overline{U} \cap V$ and $t \in (0,1)$ it holds that $z + t y \in U$ (see \cite[p.~54]{adams75}).}, and strictly positive bounded $w: U \rightarrow (0,\infty)$ with $w^{-1} \in L^1_{loc}(U,\mathcal{L}(U),du)$ such that $\int_U (1+\Vert u \Vert)^{2\gamma} w(u) du < \infty$, where $\langle f, g \rangle_{W^{k,2}(U,\mathcal{L}(U),w;\mathbb{R}^d)} := \sum_{\beta \in \mathbb{N}^m_{0,k}} \int_U \partial_\beta f(u)^\top \partial_\beta g(u) w(u) du$.
		
		\item\label{ExHilbertHk0} $H := W^{0,2}(U,\mathcal{L}(U),w;\mathbb{R}^d) := L^2(U,\mathcal{L}(U),w;\mathbb{R}^d) := L^2(U,\mathcal{L}(U),w(u) du;\mathbb{R}^d)$ with $U \in \mathcal{B}(\mathbb{R}^m)$ and $\mathcal{L}(U)/\mathcal{B}(\mathbb{R})$-measurable function $w: U \rightarrow (0,\infty)$ such that $\int_U (1+\Vert u \Vert)^{2\gamma} w(u) du < \infty$, where $\langle f,g \rangle_{W^{0,2}(U,\mathcal{L}(U),w;\mathbb{R}^d)} := \int_U f(u)^\top g(u) w(u) du$.
	\end{enumerate}
\end{example}

Now, we first use deterministic neural networks to approximate the propagators $(x_\alpha)_{\alpha \in \mathcal{J}}$ in the truncated chaos expansion \eqref{EqThmCameronMartin1}, followed in the subsequent section by random neural networks. This provides us with a universal approximation result to learn the solution of \eqref{EqDefSPDE}.

\subsection{Deterministic neural networks}
\label{SecNN}

We now recall deterministic neural networks, which are in this paper defined as single-hidden-layer feed-forward neural networks, where all the parameters are trained.

\begin{definition}
	\label{DefNN}
	For $T > 0$ and $U \subseteq \mathbb{R}^m$, a \emph{(time-extended) deterministic neural network} is of the form
	\begin{equation}
		\label{EqDefNN}
		[0,T] \times U \ni (t,u) \quad \mapsto \quad \varphi(t,u) := \sum_{n=1}^N y_n \rho\left( a_{0,n} t + a_{1,n}^\top u - b_n \right) \in \mathbb{R}^d
	\end{equation}
	for some $N \in \mathbb{N}$ denoting the number of neurons and some $\rho \in \overline{C^k_b(\mathbb{R})}^\gamma$ representing the activation function. The parameters of \eqref{EqDefNN} consist of the \emph{weights} $a_{0,1},...,a_{0,N} \in \mathbb{R}$ and $a_{1,1},...,a_{1,N} \in \mathbb{R}^m$, the \emph{biases} $b_1,...,b_N \in \mathbb{R}$, and the \emph{linear readouts} $y_1,...,y_N \in \mathbb{R}^d$. 
\end{definition}

\begin{remark}
	\label{RemNN}
	For $\rho \in \overline{C^k_b(\mathbb{R})}^\gamma$, we denote by $\mathcal{NN}^\rho_{[0,T] \times U,d}$ the set of all deterministic neural networks of the form \eqref{EqDefNN}. Note that Lemma~\ref{LemmaNNWellDef} shows for every $\varphi \in \mathcal{NN}^\rho_{[0,T] \times U,d}$ that the map $(t \mapsto \varphi(t,\cdot)) \in C^0([0,T];H)$ is well-defined. Moreover, we denote by $\mathcal{NN}^\rho_{U,d}$ the set of all deterministic neural networks of the form \eqref{EqDefNN}, but without time-dependent part, i.e.~$U \ni u \mapsto \varphi(u) := \sum_{n=1}^N y_n \rho\left( a_{1,n}^\top u_n - b_n \right) \in \mathbb{R}^d$.
\end{remark}

Then, by combining Theorem~\ref{ThmCameronMartin} with the universal approximation property of deterministic neural networks (see e.g.~\cite[Theorem~2.8]{neufeld24}), we can approximate the propagators $(x_\alpha)_{\alpha \in \mathcal{J}_{I,J,K}} \subseteq C^0([0,T];H)$ in the truncated chaos expansion \eqref{EqThmCameronMartin1} by (time-extended) deterministic neural networks, which leads to the following universal approximation result for solutions of \eqref{EqDefSPDE}. The proof is given in Section~\ref{SecProofUAT}.

\begin{theorem}[Universal approximation]
	\label{ThmUAT}
	Let $(H,\langle \cdot, \cdot \rangle_H)$ satisfy Assumption~\ref{AssHilbert}, let $\rho \in \overline{C^k_b(\mathbb{R})}^\gamma$ be non-polynomial, and let $p \in [1,\infty)$. Moreover, let Assumption~\ref{AssSPDE} hold and let $X: [0,T] \times \Omega \rightarrow H$ be a mild solution of \eqref{EqDefSPDE}. Then, for every $\varepsilon > 0$ there exist $I,J,K \in \mathbb{N}$ and $(\varphi_\alpha)_{\alpha \in \mathcal{J}_{I,J,K}} \subseteq \mathcal{NN}^\rho_{[0,T] \times U,d}$ such that
	\vspace{-0.2cm}
	\begin{equation*}
		\mathbb{E}\left[ \sup_{t \in [0,T]} \left\Vert X_t - \sum_{\alpha \in \mathcal{J}_{I,J,K}} \varphi_\alpha(t,\cdot) \xi_\alpha \right\Vert_H^p \right]^\frac{1}{p} < \varepsilon.
	\end{equation*}
\end{theorem}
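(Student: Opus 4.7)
\medskip

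\noindent\textbf{Proof plan.} The strategy is a two-step approximation: first truncate the Wiener chaos expansion of $X$ via Theorem~\ref{ThmCameronMartin}, then replace each propagator by a time-extended deterministic neural network via the universal approximation property, and finally glue the two errors together with the triangle inequality.

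\medskip

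\noindent\emph{Step 1 (truncation of the chaos expansion).} Given $\varepsilon > 0$, apply Theorem~\ref{ThmCameronMartin} to $X$ with tolerance $\varepsilon/2$ and the given $p \in [1,\infty)$. This yields $I,J,K \in \mathbb{N}$ and continuous propagators $(x_\alpha)_{\alpha \in \mathcal{J}_{I,J,K}} \subseteq C^0([0,T];H)$ such that
\begin{equation*}
\mathbb{E}\!\left[ \sup_{t \in [0,T]} \Bigl\Vert X_t - \sum_{\alpha \in \mathcal{J}_{I,J,K}} x_\alpha(t)\, \xi_\alpha \Bigr\Vert_H^p \right]^{1/p} < \varepsilon/2.
\end{equation*}

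\medskip

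\noindent\emph{Step 2 (neural network approximation of each propagator).} Since $(H,\langle\cdot,\cdot\rangle_H)$ satisfies Assumption~\ref{AssHilbert} and $\rho \in \overline{C^k_b(\mathbb{R})}^\gamma$ is non-polynomial, the universal approximation result for $C^0([0,T];H)$-valued maps recalled in \cite[Theorem~2.8]{neufeld24} applies: for any $\delta > 0$ and any $x_\alpha \in C^0([0,T];H)$, there exists a time-extended deterministic neural network $\varphi_\alpha \in \mathcal{NN}^\rho_{[0,T]\times U,d}$ with
\begin{equation*}
\sup_{t \in [0,T]} \bigl\Vert x_\alpha(t) - \varphi_\alpha(t,\cdot) \bigr\Vert_H < \delta.
\end{equation*}
Since $\mathcal{J}_{I,J,K}$ is finite and each Wick polynomial $\xi_\alpha$ lies in every $L^p(\Omega,\mathcal{F},\mathbb{P})$ (as a finite product of Hermite polynomials evaluated at standard Gaussians), we can pick a uniform tolerance
\begin{equation*}
\delta := \frac{\varepsilon}{2}\,\Bigl( \sum_{\alpha \in \mathcal{J}_{I,J,K}} \Vert \xi_\alpha \Vert_{L^p(\Omega,\mathcal{F},\mathbb{P})} \Bigr)^{-1}
\end{equation*}
and, for each $\alpha \in \mathcal{J}_{I,J,K}$, choose the corresponding $\varphi_\alpha$ as above.

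\medskip

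\noindent\emph{Step 3 (combining the two errors).} By the triangle inequality in $L^p(\Omega,\mathcal{F},\mathbb{P};C^0([0,T];H))$ and the fact that $\sup_t \Vert (x_\alpha(t)-\varphi_\alpha(t,\cdot))\xi_\alpha\Vert_H = |\xi_\alpha|\sup_t \Vert x_\alpha(t)-\varphi_\alpha(t,\cdot)\Vert_H$, we estimate
\begin{equation*}
\mathbb{E}\!\left[ \sup_{t \in [0,T]} \Bigl\Vert \sum_{\alpha \in \mathcal{J}_{I,J,K}} (x_\alpha(t) - \varphi_\alpha(t,\cdot))\, \xi_\alpha \Bigr\Vert_H^p \right]^{1/p} \leq \sum_{\alpha \in \mathcal{J}_{I,J,K}} \delta\, \Vert \xi_\alpha \Vert_{L^p(\Omega,\mathcal{F},\mathbb{P})} = \varepsilon/2.
\end{equation*}
Adding this to the truncation bound from Step~1 yields the desired estimate $< \varepsilon$.

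\medskip

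\noindent\emph{Main obstacle.} The two nontrivial ingredients are Theorem~\ref{ThmCameronMartin} itself (already established) and the $C^0([0,T];H)$-valued universal approximation for time-extended networks from \cite[Theorem~2.8]{neufeld24}. Once these are in hand, the proof is purely a triangle-inequality bookkeeping argument, the only mild technical point being to verify that each $\xi_\alpha$ has finite $L^p$-moments (which is immediate, since Hermite polynomials of Gaussians belong to every $L^p$) so that the finite sum over $\mathcal{J}_{I,J,K}$ can be made arbitrarily small.
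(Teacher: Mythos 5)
Your proposal is correct and follows essentially the same route as the paper's proof: truncate the chaos expansion via Theorem~\ref{ThmCameronMartin} with tolerance $\varepsilon/2$, approximate each propagator $x_\alpha \in C^0([0,T];H)$ by a deterministic network using \cite[Theorem~2.8]{neufeld24}, and combine the finitely many errors with Minkowski's inequality, your uniform tolerance $\delta$ playing the role of the paper's $\alpha$-dependent tolerance $\varepsilon/(2\vert\mathcal{J}_{I,J,K}\vert\,\mathbb{E}[\vert\xi_\alpha\vert^p]^{1/p})$. The only point you gloss over is the verification (the paper's Lemma~\ref{LemmaNNWellDef}) that the time-extended restriction map into $C^0([0,T];H)$ is a continuous dense embedding, which is what makes the cited universal approximation theorem applicable to time-extended networks valued in $C^0([0,T];H)$.
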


Theorem~\ref{ThmUAT} shows that the solution of \eqref{EqDefSPDE} can be learned by first truncating the Wiener chaos expansion in Theorem~\ref{ThmCameronMartin} and then replacing the propagators with deterministic neural networks.

\subsection{Random neural networks}
\label{SecRN}

To reduce the computational complexity, we also consider random neural networks which are defined as single-hidden-layer neural networks whose parameters inside the activation function are randomly initialized, whence only the linear readout needs to be trained (see \cite{gonon21,neufeld23}). For the random initialization, we assume that $(\Omega,\mathcal{F},\mathbb{P})$ also supports an independent and identically distributed (i.i.d.)~sequence $(A_{0,n}, A_{1,n}, B_n)_{n \in \mathbb{N}}: \Omega \rightarrow \mathbb{R} \times \mathbb{R}^m \times \mathbb{R}$ satisfying the following.

\begin{assumption}
	\label{AssCDF}
	Let $(A_{0,n}, A_{1,n}, B_n)_{n \in \mathbb{N}}: \Omega \rightarrow \mathbb{R} \times \mathbb{R}^m \times \mathbb{R}$ be an i.i.d.~sequence of random variables, being independent of $(W_t)_{t \in [0,T]}$, such that for every $(a_0,a_1,b) \in \mathbb{R} \times \mathbb{R}^m \times \mathbb{R}$ and $\varepsilon > 0$ we have
	\begin{equation*}
		\mathbb{P}\left[ \left\lbrace \omega \in \Omega: \left\Vert \left( A_{0,1}(\omega), A_{1,1}(\omega), B_1(\omega) \right)-(a_0,a_1,b) \right\Vert < \varepsilon \right\rbrace \right] > 0.
	\end{equation*}
\end{assumption}

Now, we introduce random neural networks whose parameters inside the activation function are taken from these random variables. In addition, the linear readout is also a random variable, but measurable with respect to these random initializations, i.e.~with respect to $\mathcal{F}_{A,B} := \sigma\left( \left\lbrace A_{0,n}, A_{1,n}, B_n: n \in \mathbb{N} \right\rbrace \right) \subseteq \mathcal{F}$.

\begin{definition}
	\label{DefRN}
	Let $(H,\langle\cdot,\cdot\rangle_H)$ satisfy Assumption~\ref{AssHilbert}. Then, a \emph{(time-extended) random neural network} is of the form
	\vspace{-0.1cm}
	\begin{equation}
		\label{EqDefRN}
		\Omega \ni \omega \quad \mapsto \quad \left( (t,u) \mapsto \Phi(\omega) := \sum_{n=1}^N Y_n(\omega) \rho\left( A_{0,n}(\omega) t + A_{1,n}(\omega)^\top u - B_n(\omega) \right) \right) \in C^0([0,T];H)
	\end{equation}
	for some $N \in \mathbb{N}$ denoting the number of neurons and some $\rho \in \overline{C^k_b(\mathbb{R})}^\gamma$ representing the activation function. Hereby, the i.i.d.~random variables $(A_{0,n})_{n=1,...,N}$ and $(A_{1,n})_{n=1,...,N}$ are the \emph{random weights}, and the i.i.d.~random variables $(B_n)_{n=1,...,N}$ are the \emph{random biases}. Moreover, the $\mathcal{F}_{A,B}/\mathcal{B}(\mathbb{R}^d)$-measurable random variables $Y_1,...,Y_N: \Omega \rightarrow \mathbb{R}^d$ are the \emph{linear readouts}. 
\end{definition}

\begin{remark}
	\label{RemRN}
	For $(H,\langle\cdot,\cdot\rangle_H)$ satisfying Assumption~\ref{AssHilbert} and $\rho \in \overline{C^k_b(\mathbb{R})}^\gamma$, we denote by $\mathcal{RN}^\rho_{[0,T] \times U,d}$ the set of all random neural networks of the form \eqref{EqDefRN}. Moreover, we denote by $\mathcal{RN}^\rho_{U,d}$ the set of all random neural networks of the form $\Omega \ni \omega \mapsto \Phi(\omega) := \big( u \mapsto \sum_{n=1}^N Y_n(\omega) \rho\left( A_{1,n}(\omega)^\top u - B_n(\omega) \right) \big) \in H$.
\end{remark}

\begin{remark}
	For the implementation of $\Phi \in \mathcal{RN}^\rho_{U,d}$ of the form \eqref{EqDefRN}, we initialize the random variables $(A_{0,n}, A_{1,n}, B_n)_{n=1,...,N}$, i.e.~we draw some $\omega \in \Omega$ and fix the values $(A_{0,n}(\omega), A_{1,n}(\omega), B_n(\omega))_{n=1,...,N}$. Since $Y_1,...,Y_N: \Omega \rightarrow \mathbb{R}^d$ are $\mathcal{F}_{A,B}/\mathcal{B}(\mathbb{R}^d)$-measurable, the training of \eqref{EqDefRN} consists of finding the optimal vectors $Y_1(\omega),...,Y_N(\omega) \in \mathbb{R}^d$ given $(A_{0,n}(\omega), A_{1,n}(\omega), B_n(\omega))_{n=1,...,N} \subseteq \mathbb{R} \times \mathbb{R}^m \times \mathbb{R}$.
\end{remark}

Then, by combining Theorem~\ref{ThmCameronMartin} with the universal approximation property of random neural networks (see \cite[Corollary~3.8]{neufeld23}), we can approximate the propagators $(x_\alpha)_{\alpha \in \mathcal{J}_{I,J,K}} \subseteq C^0([0,T];H)$ in the truncated chaos expansion \eqref{EqThmCameronMartin1} by (time-extended) random neural networks, leading to the following universal approximation result for solutions of \eqref{EqDefSPDE}. The proof can be found in Section~\ref{SecProofRUAT}.

\begin{theorem}[Universal approximation]
	\label{ThmRUAT}
	Let $(H,\langle \cdot, \cdot \rangle_H)$ satisfy Assumption~\ref{AssHilbert}, let $\rho \in \overline{C^k_b(\mathbb{R})}^\gamma$ be non-polynomial, and let $p \in [1,\infty)$. Moreover, let Assumption~\ref{AssSPDE}+\ref{AssCDF} hold and let $X: [0,T] \times \Omega \rightarrow H$ be a mild solution of \eqref{EqDefSPDE}. Then, for every $\varepsilon > 0$ there exist $I,J,K \in \mathbb{N}$ and $(\Phi_\alpha)_{\alpha \in \mathcal{J}_{I,J,K}} \subseteq \mathcal{RN}^\rho_{[0,T] \times U,d}$ such that
	\vspace{-0.2cm}
	\begin{equation*}
		\mathbb{E}\left[ \sup_{t \in [0,T]} \left\Vert X_t - \sum_{\alpha \in \mathcal{J}_{I,J,K}} \Phi_\alpha(t,\cdot) \xi_\alpha \right\Vert_H^p \right]^\frac{1}{p} < \varepsilon.
	\end{equation*}
\end{theorem}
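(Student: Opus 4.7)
The plan is to mirror the argument of Theorem~\ref{ThmUAT} with deterministic neural networks replaced by elements of $\mathcal{RN}^\rho_{[0,T]\times U, d}$. Fix $\varepsilon > 0$. First, I apply Theorem~\ref{ThmCameronMartin} to obtain $I, J, K \in \mathbb{N}$ and propagators $(x_\alpha)_{\alpha \in \mathcal{J}_{I,J,K}} \subseteq C^0([0,T];H)$ such that the truncated Wiener chaos expansion approximates $X$ in the required $L^p$-norm to within $\varepsilon/2$.

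Second, for each of the finitely many $\alpha \in \mathcal{J}_{I,J,K}$, I would invoke the universal approximation property of random neural networks (\cite[Corollary~3.8]{neufeld23}), applied to the continuous path $x_\alpha \in C^0([0,T];H)$ using the assumption on $H$ (Assumption~\ref{AssHilbert}). Under Assumption~\ref{AssCDF}, this yields some $\Phi_\alpha \in \mathcal{RN}^\rho_{[0,T]\times U, d}$ built from the prescribed i.i.d.\ sequence $(A_{0,n}, A_{1,n}, B_n)_{n \in \mathbb{N}}$ together with $\mathcal{F}_{A,B}$-measurable readouts $(Y_n)$, such that $\mathbb{E}\bigl[\Vert\Phi_\alpha - x_\alpha\Vert_{C^0([0,T];H)}^p\bigr]^{1/p} < \delta_\alpha$ for any prescribed $\delta_\alpha > 0$.

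Third, I would combine the two contributions by the triangle inequality in $L^p(\Omega;C^0([0,T];H))$. The key structural observation is that, by Assumption~\ref{AssCDF}, each $\Phi_\alpha$ is $\mathcal{F}_{A,B}$-measurable and therefore independent of the $\mathcal{F}_T$-measurable Wick polynomial $\xi_\alpha$, which gives
\begin{equation*}
\mathbb{E}\left[\Vert\Phi_\alpha - x_\alpha\Vert_{C^0([0,T];H)}^p |\xi_\alpha|^p\right]^{1/p} = \mathbb{E}\left[\Vert\Phi_\alpha - x_\alpha\Vert_{C^0([0,T];H)}^p\right]^{1/p} \mathbb{E}\left[|\xi_\alpha|^p\right]^{1/p}.
\end{equation*}
Since each $\xi_\alpha$ is a finite product of Hermite polynomials evaluated at i.i.d.\ standard Gaussians, it has all $L^p$-moments. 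Choosing the $\delta_\alpha$ so that $\sum_{\alpha \in \mathcal{J}_{I,J,K}} \delta_\alpha \mathbb{E}[|\xi_\alpha|^p]^{1/p} < \varepsilon/2$ closes the estimate.

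The main obstacle, as in the proof of Theorem~\ref{ThmUAT}, is to ensure that \cite[Corollary~3.8]{neufeld23} supplies an honest $L^p(\Omega;C^0([0,T];H))$-approximation of $x_\alpha$ by a time-extended random neural network, rather than merely a pointwise or in-probability statement. Assumption~\ref{AssHilbert} (which embeds $C^k_b(\mathbb{R}^m;\mathbb{R}^d)$ densely into $H$) together with Assumption~\ref{AssCDF} (which ensures that the support of the random weight-bias law is rich enough to realize an arbitrary deterministic NN approximation via an appropriate $\mathcal{F}_{A,B}$-measurable readout) should provide exactly the hypotheses required for this Banach-space-valued strengthening. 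The independence of the random-NN parameters from the driving $Q$-Brownian motion $W$ is the structural ingredient that permits the factorization in the error bound above.
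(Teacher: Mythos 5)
Your proposal is correct and follows essentially the same route as the paper's own proof: truncate via Theorem~\ref{ThmCameronMartin} to within $\varepsilon/2$, approximate each propagator $x_\alpha$ in $C^0([0,T];H)$ by a time-extended random neural network (justified by Assumption~\ref{AssHilbert} through Lemma~\ref{LemmaNNWellDef}), and close the estimate with Minkowski's inequality together with the factorization $\mathbb{E}\bigl[\Vert x_\alpha-\Phi_\alpha\Vert_{C^0([0,T];H)}^p\,\vert\xi_\alpha\vert^p\bigr]=\mathbb{E}\bigl[\Vert x_\alpha-\Phi_\alpha\Vert_{C^0([0,T];H)}^p\bigr]\,\mathbb{E}\bigl[\vert\xi_\alpha\vert^p\bigr]$, which holds because the random weights and biases are independent of $(W_t)_{t\in[0,T]}$ by Assumption~\ref{AssCDF}. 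Your choice of tolerances $\delta_\alpha$ with $\sum_\alpha \delta_\alpha\,\mathbb{E}[\vert\xi_\alpha\vert^p]^{1/p}<\varepsilon/2$ is just the paper's explicit choice $\delta_\alpha=\varepsilon/(2\vert\mathcal{J}_{I,J,K}\vert\,\mathbb{E}[\vert\xi_\alpha\vert^p]^{1/p})$ in slightly more general form.
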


Theorem~\ref{ThmRUAT} is the analogue of Theorem~\ref{ThmUAT} with \textit{random} neural networks instead of deterministic ones. Since only the linear readout needs to be trained, random neural networks outperform deterministic ones in terms of computational complexity (see also Section~\ref{SecNumerics} for numerical examples).

\section{Approximation rates for certain SPDEs at terminal time}
\label{SecAR}

In this section, we provide some approximation rates to learn the solution of \eqref{EqDefSPDE} at terminal time $T > 0$ by using deterministic and random neural networks in its chaos expansion. To this end, we assume that the Hilbert space $(H,\langle \cdot,\cdot \rangle_H)$ is a weighted Sobolev space as in Example~\ref{ExHilbert}~\ref{ExHilbertHkw}. 

\begin{assumption}
	\label{AssSob}
	For $k \in \mathbb{N}_0$, $U \subseteq \mathbb{R}^m$ (open, if $k \geq 1$), and $\gamma \in [0,\infty)$, let $H := W^{k,2}(U,\mathcal{L}(U),w;\mathbb{R}^d)$ be as in Example~\ref{ExHilbert}~\ref{ExHilbertHkw}+\ref{ExHilbertHk0} with constant $C^{(\gamma)}_{U,w} := \big( \int_U (1+\Vert u \Vert)^{2\gamma} w(u) du \big)^{1/2} < \infty$.
\end{assumption}

Moreover, we assume that \eqref{EqDefSPDE} is linear with affine coefficients $F: [0,T] \times \Omega \times H \rightarrow H$ and $B: [0,T] \times \Omega \times H \rightarrow L_2(Z_0;H)$, allowing us to consider SPDEs with additive and multiplicative noise.

\begin{assumption}
	\label{AssSPDEAff}
	Let the following hold true:
	\begin{enumerate}
		\item\label{AssSPDEAff1} The map $A: \dom(A) \subseteq H \rightarrow H$ is the generator of a $C_0$-semigroup (see \cite[Appendix~A.2]{daprato14}).
		\item\label{AssSPDEAff2} The map $F$ is of the form $[0,T] \times \Omega \times H \ni (t,\omega,x) \mapsto F(t,\omega,x) := f_0(t) + f_1(t) x \in H$ for some $\mathcal{B}([0,T])/\mathcal{B}(H)$-measurable map $f_0: [0,T] \rightarrow H$ and $\mathcal{B}([0,T])/\mathcal{B}(L(H;H))$-measurable map $f_1: [0,T] \rightarrow L(H;H)$.
		\item\label{AssSPDEAff3} The map $B$ is of the form $[0,T] \times \Omega \times H \ni (t,\omega,x) \mapsto B(t,\omega,x) := b_0(t) + b_1(t) x \in L_2(Z_0;H)$ for some $\mathcal{B}([0,T])/\mathcal{B}(L_2(Z_0;H))$-measurable map $b_0: [0,T] \rightarrow L_2(Z_0;H)$ and $\mathcal{B}([0,T])/\mathcal{B}(L(H;L_2(Z_0;H)))$-measurable map $b_1: [0,T] \rightarrow L(H;L_2(Z_0;H))$.
		\item\label{AssSPDEAff4} There exists a constant $C_{F,B} > 0$ such that for every $t \in [0,T]$ it holds that
		\begin{equation*}
			\begin{aligned}
				\quad\quad\quad \Vert f_0(t) \Vert_H + \Vert f_1(t) \Vert_{L(H;H)} + \sqrt{C_\lambda} \sup_{z \in Z_0 \atop \Vert z \Vert_Z \leq 1} \Vert b_0(t) z \Vert_{H} + \sqrt{C_\lambda} \sup_{x \in H \atop \Vert x \Vert_H \leq 1} \sup_{z \in Z_0 \atop \Vert z \Vert_Z \leq 1} \Vert (b_1(t) x) z \Vert_H & \leq C_{F,B},
			\end{aligned}
		\end{equation*}
		where $C_\lambda := \sum_{i=1}^\infty \lambda_i < \infty$ (cf.~\eqref{EqDefBMi}).
		\item\label{AssSPDEAff5} The initial condition $\chi_0 \in H$ is deterministic.
	\end{enumerate}
\end{assumption}

\begin{remark}
	\label{RemSPDE}
	Note that Assumption~\ref{AssSPDEAff} implies Assumption~\ref{AssSPDE} with the same constant $C_{F,B} > 0$. Hence, by using Proposition~\ref{PropSPDE}, there exists a unique mild solution $X: [0,T] \times \Omega \rightarrow H$ of \eqref{EqDefSPDE}.
\end{remark}

\footnotetext{\label{FootnoteSchwartz}$\mathcal{S}(\mathbb{R};\mathbb{C})$ consists of smooth functions $\psi: \mathbb{R} \rightarrow \mathbb{C}$ satisfying $\max_{j=0,...,n} \sup_{s \in \mathbb{R}} \left( 1+\vert s \vert^2 \right)^n \big\vert \psi^{(j)}(s) \big\vert < \infty$ for all $n \in \mathbb{N}$. Moreover, its dual space $\mathcal{S}'(\mathbb{R};\mathbb{C})$ consists of tempered distributions $T: \mathcal{S}(\mathbb{R};\mathbb{C}) \rightarrow \mathbb{C}$.}

In order to obtain the approximation rates, we first show that $X_T$ is infinitely many times Malliavin differentiable (see Proposition~\ref{PropMall}) and apply the Stroock-Taylor formula (see Proposition~\ref{PropStroock}) to upper bound the contributions of the higher order terms $(\xi_\alpha x_\alpha(T))_{\alpha \in \mathcal{J}, \, \vert \alpha \vert > K}$ in the chaos expansion \eqref{EqThmCameronMartin2} beyond some given order $K \in \mathbb{N}$. This approach also allows us to estimate the approximation error by using only $I \in \mathbb{N}$ real-valued Brownian motions instead of $W: [0,T] \times \Omega \rightarrow Z$ (cf.~\eqref{EqDefBMi}) and by using only $J \in \mathbb{N}$ basis functions $(g_j)_{j=1,...,J}$ of the orthonormal basis of $L^2([0,T],\mathcal{B}([0,T]),dt)$ introduced in Section~\ref{SecWienerChaos}. For the latter, we additionally need the following assumption.

\begin{assumption}
	\label{AssL2Basis}
	For the orthonormal basis $(g_j)_{j \in \mathbb{N}}$ of $(L^2([0,T],\mathcal{B}([0,T]),dt),\langle\cdot,\cdot\rangle_{L^2([0,T],\mathcal{B}([0,T]),dt)})$ introduced in Section~\ref{SecWienerChaos}, we assume that $C_g := \sum_{j=1}^\infty \Vert g_j \Vert_{L^1([0,T],\mathcal{B}([0,T]),dt)}^2 < \infty$.
\end{assumption}

To approximate the propagators $(x_\alpha^{(T)})_{\alpha \in \mathcal{J}} := \big( x_\alpha(T) \big)_{\alpha \in \mathcal{J}}$ in the chaos expansion \eqref{EqThmCameronMartin2}, we use the rates for deterministic and random neural networks in \cite[Theorem~3.6]{neufeld24} and \cite[Corollary~4.20]{neufeld23}, respectively. To this end, we consider pairs $(\psi,\rho) \in \mathcal{S}_0(\mathbb{R};\mathbb{C}) \times C^k_{pol,\gamma}(\mathbb{R})$ consisting of a ridgelet function\footnote{$\mathcal{S}_0(\mathbb{R};\mathcal{C}) \subseteq \mathcal{S}(\mathbb{R};\mathbb{C})$ is defined as the vector subspace\footref{FootnoteSchwartz} of $\psi \in \mathcal{S}(\mathbb{R};\mathbb{C})$ satisfying $\int_{\mathbb{R}} s^n \psi(s) ds = 0$ for all $n \in \mathbb{N}_0$.} $\psi \in \mathcal{S}_0(\mathbb{R};\mathbb{C})$ and an activation function $\rho \in C^k_{pol,\gamma}(\mathbb{R})$ (see also \cite[Definition~5.1]{sonoda17}).

\begin{definition}
	\label{DefAdm}
	A pair $(\psi,\rho) \in \mathcal{S}_0(\mathbb{R};\mathbb{C}) \times C^k_{pol,\gamma}(\mathbb{R})$ is called \emph{$m$-admissible} if the Fourier transform $\widehat{T_\rho} \in \mathcal{S}'(\mathbb{R};\mathbb{C})$ of $\rho \in C^k_{pol,\gamma}(\mathbb{R})$ (in the sense of distribution) coincides\footnote{Since $\rho \in C^k_{pol,\gamma}(\mathbb{R})$ induces $\big( g \mapsto T_\rho(g) := \int_\mathbb{R} \rho(s) g(s) ds \big) \in \mathcal{S}'(\mathbb{R};\mathbb{C})$ (see \cite[Equation~9.26]{folland92}), the Fourier transform $\widehat{T_\rho} \in \mathcal{S}'(\mathbb{R};\mathbb{C})$ is defined as $\widehat{T_\rho}(g) := T_\rho(\widehat{g})$ for all $g \in \mathcal{S}(\mathbb{R};\mathbb{C})$. Moreover, $\widehat{T_\rho} \in \mathcal{S}'(\mathbb{R};\mathbb{C})$ \emph{coincides on $\mathbb{R} \setminus \lbrace 0 \rbrace$} with $f_{\widehat{T_\rho}} \in L^1_{loc}(\mathbb{R} \setminus \lbrace 0 \rbrace;\mathbb{C})$ if $\widehat{T_\rho}(g) = \int_{\mathbb{R} \setminus \lbrace 0 \rbrace} f_{\widehat{T_\rho}}(\xi) g(\xi) d\xi$ for all $g \in C^\infty_c(\mathbb{R} \setminus \lbrace 0 \rbrace;\mathbb{C})$, where $L^1_{loc}(\mathbb{R} \setminus \lbrace 0 \rbrace;\mathbb{C})$ is the vector space of $\mathcal{L}(\mathbb{R} \setminus \lbrace 0 \rbrace)/\mathcal{B}(\mathbb{C})$-measurable functions with $\int_K \vert f(u) \vert du < \infty$ for all compact subsets $K \subseteq \mathbb{R}$ with $K \subseteq \mathbb{R} \setminus \lbrace 0 \rbrace$.} on $\mathbb{R} \setminus \lbrace 0 \rbrace$ with a function $f_{\widehat{T_\rho}} \in L^1_{loc}(\mathbb{R} \setminus \lbrace 0 \rbrace;\mathbb{C})$ such that $C^{(\psi,\rho)}_m := (2\pi)^{m-1} \int_{\mathbb{R} \setminus \lbrace 0 \rbrace} \vert \xi \vert^{-m} \overline{\widehat{\psi}(\xi)} f_{\widehat{T_\rho}}(\xi) d\xi \in \mathbb{C} \setminus \lbrace 0 \rbrace$.
\end{definition}

\begin{remark}[{\cite[Remark~3.2]{neufeld24}}]
	If $(\psi,\rho) \in \mathcal{S}_0(\mathbb{R};\mathbb{C}) \times C^k_{pol,\gamma}(\mathbb{R})$ is $m$-admissible, then $\rho \in C^k_{pol,\gamma}(\mathbb{R})$ has to be non-polynomial.
\end{remark}

\pagebreak

Furthermore, we impose the following conditions on the propagators defined in \eqref{EqThmCameronMartin2}.

\begin{assumption}
	\label{AssPropag}
	Let Assumption~\ref{AssSob}+\ref{AssSPDEAff} hold and let $\zeta_1 > 0$. Then, we assume for every $\alpha \in \mathcal{J}$ that the function $x_\alpha^{(T)} := \mathbb{E}[X_T \xi_\alpha]: \mathbb{R}^m \rightarrow \mathbb{R}^d$ is either constant (and set $c_\alpha := 0$) or satisfies $x_\alpha^{(T)} \in L^1(\mathbb{R}^m,\mathcal{L}(\mathbb{R}^m),du;\mathbb{R}^d)$ with an $(\lceil\gamma\rceil+2)$-times differentiable Fourier transform such that
	\vspace{-0.1cm}
	\begin{equation}
		\label{EqAssPropag}
		c_\alpha := \sum_{\beta \in \mathbb{N}^m_{0,\lceil\gamma\rceil+2}} \left( \int_{\mathbb{R}^m} \Big\vert \partial_\beta \widehat{x_\alpha^{(T)}}(\zeta) \Big\vert^2 \left( 1 + \Vert \zeta/\zeta_1 \Vert^2 \right)^{2\lceil\gamma\rceil+k+\frac{m+5}{2}} d\zeta \right)^\frac{1}{2} < \infty.
	\end{equation}
\end{assumption}

\subsection{Deterministic neural networks}

Now, we present the approximation rate to learn \eqref{EqDefSPDE} with deterministic neural networks, where $\mathcal{NN}^\rho_{U,d}$ was defined in Remark~\ref{RemNN}. The proof is given in Section~\ref{SecProofDetAR}.

\begin{theorem}[Approximation rates]
	\label{ThmDetAR}
	Let $H := W^{k,2}(U,\mathcal{L}(U),w;\mathbb{R}^d)$ satisfy Assumption~\ref{AssSob} (with constant $C^{(\gamma)}_{U,w} := \big( \int_U (1+\Vert u \Vert)^{2\gamma} w(u) du \big)^{1/2} < \infty$), let Assumption~\ref{AssSPDEAff} hold (with constant $C_{F,B} > 0$), and let $X: [0,T] \times \Omega \rightarrow H$ be a mild solution of \eqref{EqDefSPDE}. Moreover, let $(\psi,\rho) \in \mathcal{S}_0(\mathbb{R};\mathbb{C}) \times C^k_{pol,\gamma}(\mathbb{R})$ be $m$-admissible with $\zeta_1 := \inf\big\lbrace \vert \zeta \vert: \zeta \in \mathbb{R}, \, \widehat{\psi}(\zeta) \neq 0 \big\rbrace > 0$. In addition, let Assumption~\ref{AssL2Basis} hold (with constant $C_g > 0$) and let Assumption~\ref{AssPropag} hold (with constants $\zeta_1 > 0$ and $(c_\alpha)_{\alpha \in \mathcal{J}} \subseteq [0,\infty)$). Then, there exist some constants\footnote{\label{FootnoteConst}The constant $C_1 > 0$ depends only on $\gamma \in [0,\infty)$ and $\psi \in \mathcal{S}_0(\mathbb{R};\mathbb{C})$, while the constant $C_2 > 0$ depends only on $C_{F,B} > 0$, $C_\lambda := \sum_{i=1}^\infty \lambda_i < \infty$ (cf.~\eqref{EqDefBMi}), $C_g > 0$, $C_S := \sup_{t \in [0,T]} \Vert S_t \Vert_{L(H;H)} < \infty$, and $T > 0$.} $C_1,C_2 > 0$ such that for every $I,J,K,N \in \mathbb{N}$ there exist some $\big( \varphi_\alpha^{(T)} \big)_{\alpha \in \mathcal{J}_{I,J,K}} \in \mathcal{NN}^\rho_{U,d}$ with $N$ neurons satisfying
	\vspace{-0.1cm}
	\begin{equation*}
		\begin{aligned}
			& \mathbb{E}\left[ \left\Vert X_T - \sum_{\alpha \in \mathcal{J}_{I,J,K}} \varphi_\alpha^{(T)} \xi_\alpha \right\Vert_H^2 \right]^\frac{1}{2} \leq C_1 \Vert \rho \Vert_{C^k_{pol,\gamma}(\mathbb{R})} \frac{C^{(\gamma)}_{U,w} m^\frac{k}{2} \pi^\frac{m+1}{4}}{\zeta_1^\frac{m}{2} \left\vert C^{(\psi,\rho)}_m \right\vert \Gamma\left( \frac{m+1}{2} \right)^\frac{1}{2}} \frac{\left( \sum_{\alpha \in \mathcal{J}_{I,J,K}} c_\alpha^2 \right)^\frac{1}{2}}{\sqrt{N}} \\
			& \quad + C_2 \left( 2 + \Vert \chi_0 \Vert_H^2 \right)^\frac{1}{2} \left( \left( \sum_{i=I+1}^\infty \lambda_i \right)^\frac{1}{2} + \left( \sum_{j=J+1}^\infty \Vert g_j \Vert_{L^1([0,T])}^2 \right)^\frac{1}{2} + \frac{\left( C_S C_{F,B} \sqrt{T} e^{C_S C_{F,B} T} \right)^{K+1}}{\sqrt{(K+1)!}} \right).
		\end{aligned}
	\end{equation*}
\end{theorem}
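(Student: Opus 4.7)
The plan is to split the total error into two parts via the triangle inequality: a \emph{neural network approximation error}
\[
\mathcal{E}_{\mathrm{NN}} := \mathbb{E}\!\left[ \left\Vert \sum_{\alpha \in \mathcal{J}_{I,J,K}} \big( x_\alpha^{(T)} - \varphi_\alpha^{(T)} \big) \xi_\alpha \right\Vert_H^2 \right]^{\!1/2},
\]
and a \emph{chaos truncation error}
\[
\mathcal{E}_{\mathrm{trunc}} := \mathbb{E}\!\left[ \left\Vert X_T - \sum_{\alpha \in \mathcal{J}_{I,J,K}} x_\alpha^{(T)} \xi_\alpha \right\Vert_H^2 \right]^{\!1/2},
\]
which correspond to the first and second summand on the right-hand side of the claim, respectively. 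The dominant strategy is to exploit the orthonormality of $(\xi_\alpha)_{\alpha \in \mathcal{J}}$ in $L^2(\Omega,\mathcal{F}_T,\mathbb{P})$ (Lemma~\ref{LemmaWick}) to Pythagoras-decompose both terms over $\alpha \in \mathcal{J}$.

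For $\mathcal{E}_{\mathrm{NN}}$, Fubini together with the orthonormality of $(\xi_\alpha)$ reduces the squared error to $\sum_{\alpha \in \mathcal{J}_{I,J,K}} \Vert x_\alpha^{(T)} - \varphi_\alpha^{(T)} \Vert_H^2$. For each fixed $\alpha$, Assumption~\ref{AssPropag} provides the Fourier-weighted bound $c_\alpha < \infty$ on $x_\alpha^{(T)}: \mathbb{R}^m \rightarrow \mathbb{R}^d$, so the $m$-admissible ridgelet construction of \cite[Theorem~3.6]{neufeld24} yields some $\varphi_\alpha^{(T)} \in \mathcal{NN}^\rho_{U,d}$ with $N$ neurons such that
\[
\Vert x_\alpha^{(T)} - \varphi_\alpha^{(T)} \Vert_H \;\leq\; C_1 \Vert \rho \Vert_{C^k_{pol,\gamma}(\mathbb{R})} \frac{C^{(\gamma)}_{U,w} m^{k/2} \pi^{(m+1)/4}}{\zeta_1^{m/2} \vert C^{(\psi,\rho)}_m \vert \Gamma\!\left( \tfrac{m+1}{2} \right)^{1/2}} \frac{c_\alpha}{\sqrt{N}},
\]
for a constant $C_1 > 0$ depending only on $\gamma$ and $\psi$. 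Squaring and summing over $\alpha \in \mathcal{J}_{I,J,K}$ produces the first summand.

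For $\mathcal{E}_{\mathrm{trunc}}$ I would split $\mathcal{J} \setminus \mathcal{J}_{I,J,K}$ into three disjoint regions $\mathcal{R}_I := \{\alpha : \alpha_{i,j} \neq 0 \text{ for some } i > I\}$, $\mathcal{R}_J := \{\alpha \notin \mathcal{R}_I : \alpha_{i,j} \neq 0 \text{ for some } j > J\}$, and $\mathcal{R}_K := \{\alpha : |\alpha| > K\}$, so that orthonormality gives three independent $L^2$-contributions. The tail $\mathcal{R}_K$ is controlled by the Stroock--Taylor formula (Proposition~\ref{PropStroock}) applied to the $H$-valued random variable $X_T$, whose iterated Malliavin derivatives are bounded via Proposition~\ref{PropMall} by iterating the mild formulation \eqref{EqDefMildSolution} with the affine coefficients $f_1, b_1$: each Malliavin derivative of order $K+1$ contributes a factor of size $C_S C_{F,B} \sqrt{T} e^{C_S C_{F,B} T}$, giving after division by $\sqrt{(K+1)!}$ precisely the third factor in the bound. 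For $\mathcal{R}_I$ and $\mathcal{R}_J$, I would substitute the representation of $W$ from Lemma~\ref{LemmaBMFourier}~\ref{LemmaBMFourier2} into the mild solution and re-run the same Gronwall-style iteration: each step loses a $\sqrt{\lambda_i}$ factor on the Brownian side $i>I$ and a $\Vert g_j \Vert_{L^1([0,T])}$ factor on the basis side $j>J$, while the linear structure ensures the moment bound $2 + \Vert \chi_0 \Vert_H^2$ from Proposition~\ref{PropSPDE} factors out cleanly.

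The main obstacle will be the bookkeeping for $\mathcal{R}_I$ and $\mathcal{R}_J$: one must express the contribution of the forbidden indices to the stochastic integral so that Ito isometry delivers $\sum_{i>I} \lambda_i$ and $\sum_{j>J} \Vert g_j \Vert_{L^1([0,T])}^2$ respectively (the latter requiring Assumption~\ref{AssL2Basis} to ensure summability), while simultaneously keeping the Gronwall constant independent of $K$ so that the three tails superpose additively rather than multiplicatively. Once those three tails are controlled, collecting the prefactor $C_2 > 0$ depending only on $C_{F,B}, C_\lambda, C_g, C_S, T$ and combining with the $\mathcal{E}_{\mathrm{NN}}$ bound via the triangle inequality yields the stated estimate.
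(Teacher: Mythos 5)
Your decomposition into a neural-network error and a chaos-truncation error, your treatment of the network part via orthonormality of $(\xi_\alpha)_{\alpha\in\mathcal{J}}$ and the ridgelet rate of \cite[Theorem~3.6]{neufeld24}, and your handling of the order-$K$ tail via the Stroock--Taylor formula (Proposition~\ref{PropStroock}) combined with Lemma~\ref{LemmaL2SkorokhodInt} and the Malliavin bounds of Proposition~\ref{PropMall} all match the paper's argument. The genuine gap is in your plan for the regions $\mathcal{R}_I$ and $\mathcal{R}_J$: you propose to ``substitute the representation of $W$ from Lemma~\ref{LemmaBMFourier}~\ref{LemmaBMFourier2} into the mild solution and re-run the same Gronwall-style iteration,'' but the truncated process $W^{(I,J)}_t=\sum_{i\le I}\sum_{j\le J}\sqrt{\lambda_i}\,\xi_{i,j}\big(\int_0^t g_j(s)ds\big)e_i$ is \emph{not} adapted (each $\xi_{i,j}$ depends on the whole path up to $T$), so the comparison equation is not an It\^o SPDE and It\^o's isometry plus Gronwall cannot be applied as stated. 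This is harmless for the $I$-truncation alone (the projection of $W$ onto $\mathrm{span}(e_1,\dots,e_I)$ is still a $Q$-Brownian motion), but it breaks down precisely for the $J$-truncation; moreover, even a formal $L^2$-projection argument in the time variable would produce quantities like $\Vert \mathds{1}_{[0,t]}-\sum_{j\le J}\langle\mathds{1}_{[0,t]},g_j\rangle g_j\Vert_{L^2([0,T])}$ rather than the claimed $\big(\sum_{j>J}\Vert g_j\Vert_{L^1([0,T])}^2\big)^{1/2}$, and there is no obvious route from the former to the latter without additional regularity input.

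The paper avoids any comparison of solutions: it bounds the offending chaos coefficients directly. For $\alpha$ of order $k\le K$ with a forbidden index, the projection $\Pi_k X_T$ is written via the Stroock--Taylor identity $\Pi_kX_T=\frac{1}{k!}\delta^k\big(\mathbb{E}[D^kX_T]\big)$, the divergence is controlled by Lemma~\ref{LemmaL2SkorokhodInt}, and the deterministic kernel $\mathbb{E}[D^kX_T]$ is paired against the basis elements $\bigotimes_l(g_{j_l}\widetilde{e}_{i_l})\otimes y_n$. The uniform bound $\Vert\mathbb{E}[(D^k_{\mathbf{r}}X_T)(\widetilde{e}_{\mathbf{i}})]\Vert_H\lesssim(\ldots)^k\sqrt{\lambda_{\mathbf{i}}}$ of Proposition~\ref{PropMall}~\ref{PropMall4} then yields $\sum_{i>I}\lambda_i$ for the $I$-part, and for the $J$-part the estimate $\int_{[0,T]^k}\Vert\mathbb{E}[(D^k_{\mathbf{r}}X_T)(\widetilde{e}_{\mathbf{i}})]\Vert_H\prod_l\vert g_{j_l}(r_l)\vert\,d\mathbf{r}\le\big(\sup_{\mathbf{r}}\Vert\cdot\Vert_H\big)\prod_l\Vert g_{j_l}\Vert_{L^1([0,T])}$ is exactly where Assumption~\ref{AssL2Basis} and the factor $\sum_{j>J}\Vert g_j\Vert_{L^1([0,T])}^2$ enter; the Gronwall constant is automatically uniform in $k$ because it sits inside Proposition~\ref{PropMall}, so the three tails superpose additively as you hoped. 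To repair your proposal you should replace the solution-comparison step for $\mathcal{R}_I,\mathcal{R}_J$ by this coefficient-level argument (and, as a minor point, make your three index regions disjoint, e.g.\ restrict $\mathcal{R}_I,\mathcal{R}_J$ to $\vert\alpha\vert\le K$, so the orthogonal or triangle-inequality bookkeeping is clean).
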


\subsection{Random neural networks}

Next, we present the approximation rate for learning \eqref{EqDefSPDE} using random neural networks $\mathcal{RN}^\rho_{U,d}$ (see Remark~\ref{RemRN}), whose proof can be found in Section~\ref{SecProofRandAR}.

\begin{assumption}
	\label{AssPDF}
	Let\footnote{Hereby, $A_1 \sim t_m$ has probability density function $\mathbb{R}^m \ni a \mapsto p_A(a) = \frac{\Gamma((m+1)/2)}{\pi^{(m+1)/2}} \left( 1 + \Vert a \Vert^2 \right)^{-(m+1)/2} \in (0,\infty)$, where $\Gamma$ denotes the Gamma function (see \cite[Section~6.1]{abramowitz70}).} $(A_{1,n}, B_n)_{n \in \mathbb{N}} \sim t_m \otimes t_1$ be an i.i.d.~sequence, being independent of $(W_t)_{t \in [0,T]}$.
\end{assumption}

\begin{theorem}[Approximation rates]
	\label{ThmRandAR}
	Assume the setting of Theorem~\ref{ThmDetAR} (with mild solution $X: [0,T] \times \Omega \rightarrow H := W^{k,2}(U,\mathcal{L}(U),w;\mathbb{R}^d)$ of \eqref{EqDefSPDE} satisfying Assumption~\ref{AssSob}+\ref{AssSPDEAff}+\ref{AssL2Basis}+\ref{AssPropag}). Moreover, let Assumption~\ref{AssPDF} hold and let $C_1,C_2 > 0$ be the same constants\footref{FootnoteConst} as in Theorem~\ref{ThmDetAR}. Then, for every $I,J,K,N \in \mathbb{N}$ there exist some $\big( \Phi_\alpha^{(T)} \big)_{\alpha \in \mathcal{J}_{I,J,K}} \in \mathcal{RN}^\rho_{U,d}$ with $N$ neurons satisfying
	\vspace{-0.1cm}
	\begin{equation*}
		\begin{aligned}
			& \mathbb{E}\left[ \left\Vert X_T - \sum_{\alpha \in \mathcal{J}_{I,J,K}} \Phi_\alpha^{(T)} \xi_\alpha \right\Vert_H^2 \right]^\frac{1}{2} \leq C_1 \Vert \rho \Vert_{C^k_{pol,\gamma}(\mathbb{R})} \frac{C^{(\gamma)}_{U,w} m^\frac{k}{2} \pi^\frac{m+1}{4}}{\zeta_1^\frac{m}{2} \left\vert C^{(\psi,\rho)}_m \right\vert \Gamma\left( \frac{m+1}{2} \right)^\frac{1}{2}} \frac{\left( \sum_{\alpha \in \mathcal{J}_{I,J,K}} c_\alpha^2 \right)^\frac{1}{2}}{\sqrt{N}} \\
			& \quad + C_2 \left( 2 + \Vert \chi_0 \Vert_H^2 \right)^\frac{1}{2} \left( \left( \sum_{i=I+1}^\infty \lambda_i \right)^\frac{1}{2} + \left( \sum_{j=J+1}^\infty \Vert g_j \Vert_{L^1([0,T])}^2 \right)^\frac{1}{2} + \frac{\left( C_S C_{F,B} \sqrt{T} e^{C_S C_{F,B} T} \right)^{K+1}}{\sqrt{(K+1)!}} \right).
		\end{aligned}
	\end{equation*}
\end{theorem}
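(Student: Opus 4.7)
The plan is to follow the same structure as the proof of Theorem~\ref{ThmDetAR}, keeping the stochastic-analytic truncation estimate unchanged and replacing the deterministic neural network approximation of the propagators (based on \cite[Theorem~3.6]{neufeld24}) by the analogous random neural network approximation of \cite[Corollary~4.20]{neufeld23}. First, I would apply the triangle inequality in $L^2(\Omega,\mathcal{F},\mathbb{P};H)$ to split
\begin{equation*}
\mathbb{E}\left[\left\Vert X_T - \sum_{\alpha \in \mathcal{J}_{I,J,K}}\Phi_\alpha^{(T)}\xi_\alpha\right\Vert_H^2\right]^{1/2} \leq E_{\mathrm{trunc}} + E_{\mathrm{NN}},
\end{equation*}
with $E_{\mathrm{trunc}} := \mathbb{E}\big[\big\Vert X_T - \sum_{\alpha \in \mathcal{J}_{I,J,K}} x_\alpha^{(T)} \xi_\alpha\big\Vert_H^2\big]^{1/2}$ and $E_{\mathrm{NN}} := \mathbb{E}\big[\big\Vert \sum_{\alpha \in \mathcal{J}_{I,J,K}}\big(x_\alpha^{(T)} - \Phi_\alpha^{(T)}\big)\xi_\alpha\big\Vert_H^2\big]^{1/2}$, and then handle each term separately.

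The truncation error $E_{\mathrm{trunc}}$ is purely stochastic-analytic and does not involve the neural networks, so its bound is identical to the one derived for Theorem~\ref{ThmDetAR}. Its three contributions are (i) the $\big(\sum_{i=I+1}^\infty\lambda_i\big)^{1/2}$ term from truncating the Karhunen--Loève expansion in Lemma~\ref{LemmaBMFourier} to the first $I$ modes; (ii) the $\big(\sum_{j=J+1}^\infty \Vert g_j\Vert_{L^1([0,T])}^2\big)^{1/2}$ term from truncating the $L^2([0,T])$-basis to the first $J$ elements under Assumption~\ref{AssL2Basis}; and (iii) the $(C_S C_{F,B}\sqrt{T} e^{C_S C_{F,B} T})^{K+1}/\sqrt{(K+1)!}$ term from truncating the chaos expansion at level $K$, obtained by combining the Stroock--Taylor formula (Proposition~\ref{PropStroock}) with the uniform bound on iterated Malliavin derivatives of $X_T$ (Proposition~\ref{PropMall}) and the moment estimate \eqref{EqPropSPDE1}. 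All three are absorbed into the prefactor $C_2(2+\Vert\chi_0\Vert_H^2)^{1/2}$.

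For $E_{\mathrm{NN}}$, I would invoke \cite[Corollary~4.20]{neufeld23} once for each $\alpha \in \mathcal{J}_{I,J,K}$: Assumption~\ref{AssPropag} supplies the Sobolev-type Fourier integrability of $x_\alpha^{(T)}$ with quantitative constant $c_\alpha$, while Assumption~\ref{AssPDF} supplies the Cauchy-type distribution $t_m\otimes t_1$ required by that corollary. This produces a random neural network $\Phi_\alpha^{(T)}\in\mathcal{RN}^\rho_{U,d}$ with $N$ neurons such that
\begin{equation*}
\mathbb{E}\left[\big\Vert x_\alpha^{(T)}-\Phi_\alpha^{(T)}\big\Vert_H^2\right]^{1/2} \leq C_1\Vert\rho\Vert_{C^k_{pol,\gamma}(\mathbb{R})}\frac{C^{(\gamma)}_{U,w}m^{k/2}\pi^{(m+1)/4}}{\zeta_1^{m/2}\big|C^{(\psi,\rho)}_m\big|\Gamma((m+1)/2)^{1/2}}\frac{c_\alpha}{\sqrt{N}},
\end{equation*}
with exactly the same $C_1$ as in Theorem~\ref{ThmDetAR}, since both rates trace back to the same ridgelet identity of \cite{sonoda17}. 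Because $(A_{0,n},A_{1,n},B_n)_{n\in\mathbb{N}}$ is independent of $W$ by Assumption~\ref{AssPDF}, the networks $(\Phi_\alpha^{(T)})_\alpha$ are independent of $(\xi_\alpha)_\alpha$; conditioning on $\mathcal{F}_{A,B}$ and invoking the orthonormality of Wick polynomials (Lemma~\ref{LemmaWick}) then yields $E_{\mathrm{NN}}^2 = \sum_{\alpha\in\mathcal{J}_{I,J,K}} \mathbb{E}\big[\big\Vert x_\alpha^{(T)}-\Phi_\alpha^{(T)}\big\Vert_H^2\big]$, which after insertion of the per-$\alpha$ bound produces the $\big(\sum_\alpha c_\alpha^2\big)^{1/2}/\sqrt{N}$ factor in the claimed estimate.

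The main obstacle I anticipate is bookkeeping rather than conceptual: I must ensure that all $(\Phi_\alpha^{(T)})_\alpha$ can be built on the \emph{same} i.i.d.~sample of $(A_{0,n},A_{1,n},B_n)_{n=1,\ldots,N}$ with only the linear readouts depending on $\alpha$, so that a single realization of the random features simultaneously controls every propagator, and that the independence between these readouts (which are $\mathcal{F}_{A,B}$-measurable) and the Wick polynomials is preserved under the sum over $\alpha$. Given Assumption~\ref{AssPDF} this is clean, but it has to be spelled out carefully to legitimize the $L^2$-orthogonality step used in the previous paragraph.
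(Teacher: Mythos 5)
Your proposal is correct and follows essentially the same route as the paper: truncation/projection errors handled exactly as in Theorem~\ref{ThmDetAR} (via Proposition~\ref{PropStroock}, Proposition~\ref{PropMall}, and Lemma~\ref{LemmaL2SkorokhodInt}), and the propagator approximation replaced by \cite[Corollary~4.20]{neufeld23} under Assumption~\ref{AssPDF}, with the independence of $\mathcal{F}_{A,B}$ from $W$ justifying the factorization $\mathbb{E}\big[\Vert x_\alpha^{(T)}-\Phi_\alpha^{(T)}\Vert_H^2\,\xi_\alpha^2\big]=\mathbb{E}\big[\Vert x_\alpha^{(T)}-\Phi_\alpha^{(T)}\Vert_H^2\big]\mathbb{E}[\xi_\alpha^2]$, just as in the paper. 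Your bookkeeping concern about all $\Phi_\alpha^{(T)}$ sharing the same random features is automatically resolved by Definition~\ref{DefRN}/Remark~\ref{RemRN}, since every network is built from the one fixed i.i.d.\ sequence $(A_{1,n},B_n)_{n\in\mathbb{N}}$ with only the readouts depending on $\alpha$.
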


Since deterministic and random neural networks admit the same rates (see \cite[Theorem~3.6]{neufeld24} and \cite[Corollary~4.20]{neufeld23}), we obtain the same approximation rate in Theorem~\ref{ThmDetAR}+\ref{ThmRandAR} for learning the solution of \eqref{EqDefSPDE}, which decays in the number of approximative Brownian motions $I$, the number of basis functions $J$ from $(g_j)_{j \in \mathbb{N}}$, the order of the chaos expansion $K$, and the number of neurons $N$.

\begin{remark}
	Let us analyze the rate in Theorem~\ref{ThmDetAR}+\ref{ThmRandAR} with respect to $J \in \mathbb{N}$. For example, for $(g_j)_{j \in \mathbb{N}}$ as in Example~\ref{ExFourierBM}, we have $\Vert g_j \Vert_{L^1([0,T],\mathcal{B}([0,T]),dt)} \leq \frac{\sqrt{2T}}{\pi(j-1)}$ for all $j \in \mathbb{N} \cap [2,\infty)$ and thus $\sum_{j=J+1}^\infty \Vert g_j \Vert_{L^1([0,T],\mathcal{B}([0,T]),dt)}^2 \leq \sum_{j=J+1}^\infty \frac{2T}{\pi^2 (j-1)^2} \leq \frac{2T}{\pi^2} \big( \frac{1}{J^2} + \int_J^\infty s^{-2} ds \big) \leq \frac{4T}{\pi^2 J}$ for all $J \in \mathbb{N}$. Hence, the rate in Theorem~\ref{ThmDetAR}+\ref{ThmRandAR} with respect to $J$ is of order $\mathcal{O}\big(1/\sqrt{J}\big)$.
\end{remark}

\begin{remark}
	\label{RemPolynCoeff}
	We could generalize the approximation results in Theorem~\ref{ThmDetAR}+\ref{ThmRandAR} to coefficients $F: [0,T] \times \Omega \times H \rightarrow H$ and $B: [0,T] \times \Omega \times H \rightarrow L_2(Z_0;H)$ of polynomial form, i.e.
	\vspace{-0.1cm}
	\begin{equation*}
		\begin{aligned}
			[0,T] \times \Omega \times H \ni (t,\omega,x) \quad \mapsto \quad F(t,\omega,x) & = f_0(t) + \sum_{l=1}^L f_l(t)(x,...,x) \in H \\
			\vspace{-0.1cm}
			[0,T] \times \Omega \times H \ni (t,\omega,x) \mapsto B(t,\omega,x) & = b_0(t) + \sum_{l=1}^L b_l(t)(x,...,x) \in L_2(Z_0;H),
		\end{aligned}
		\vspace{-0.1cm}
	\end{equation*}
	with $\mathcal{B}([0,T])/\mathcal{B}(H)$-measurable mappings $f_l: [0,T] \rightarrow L^{(l)}(H;H)$ and $\mathcal{B}([0,T])/\mathcal{B}(L_2(Z_0;H))$-measurable mappings $b_l: [0,T] \rightarrow L^{(l)}(H;L_2(Z_0;H))$, for all $l = 0,...,L$, where $L^{(l)}(H;H)$ (resp., $L^{(l)}(H;L_2(Z_0;H))$) with $L^{(0)}(H;H) := \mathbb{R}$ (resp., $L^{(0)}(H;L_2(Z_0;H)) := \mathbb{R}$) denote the space of multilinear (i.e., $l$-linear) symmetric maps from $H^l$ to $H$ (resp., $L_2(Z_0;H)$). In this case, we can follow the proofs of Theorem~\ref{ThmDetAR}+\ref{ThmRandAR} (including the proof of Proposition~\ref{PropMall}) to conclude that the solution of \eqref{EqDefSPDE} remains infinitely many times Malliavin-differentiable, while its $k$-th Malliavin derivative still grows  appropriately in $k$ as the application of the chain rule in the proof of Proposition~\ref{PropMall} takes into account that the $j$-th (Fr\'echet) derivatives of $B$ and $F$ in $x$ vanish for all $j \in \mathbb{N} \cap (L,\infty)$.
\end{remark}

\section{Numerical Experiments}
\label{SecNumerics}

In this section, we illustrate in three numerical examples how the solution $X: [0,T] \times \Omega \rightarrow H$ of \eqref{EqDefSPDE} can be learned by using (possibly random) neural networks in its Wiener chaos expansion. To this end, we assume that $(H,\langle \cdot,\cdot \rangle_H)$ is a separable Hilbert space consisting of $k$-times (weakly) differentiable functions $f: U \rightarrow \mathbb{R}^d$, where $k \in \mathbb{N}_0$ and $U \subseteq \mathbb{R}^m$ (open, if $k \geq 1$). Moreover, we fix some $I,J,K \in \mathbb{N}$ and approximate $X: [0,T] \times \Omega \rightarrow H$ by the process $X^{(I,J,K)}: [0,T] \times \Omega \rightarrow H$ defined as
\vspace{-0.1cm}
\begin{equation*}
	X^{(I,J,K)}_t(\omega) :=
	\begin{cases}
		\sum_{\alpha \in \mathcal{J}_{I,J,K}} \varphi_\alpha(t,\cdot) \xi_\alpha(\omega), & (\varphi_\alpha)_{\alpha \in \mathcal{J}_{I,J,K}} \subseteq \mathcal{NN}^\rho_{[0,T] \times U,d}, \\
		\sum_{\alpha \in \mathcal{J}_{I,J,K}} \Phi_\alpha(\omega)(t,\cdot) \xi_\alpha(\omega), & (\Phi_\alpha)_{\alpha \in \mathcal{J}_{I,J,K}} \subseteq \mathcal{RN}^\rho_{[0,T] \times U,d},
	\end{cases}
\end{equation*}
depending on whether we use deterministic neural networks or random neural networks in the chaos expansion. In this setting, we consider the two following learning frameworks.

In a \emph{supervised learning} approach, we assume that the solution $X: [0,T] \times \Omega \rightarrow H$ of \eqref{EqDefSPDE} is known at some given data points $(\omega_{m_1})_{m_1=1,...,M_1} \subseteq \Omega$, $0 \leq t_0 < t_1 < ... < t_{M_2} \leq T$, and $(u_{m_3})_{m_3=1,...,M_3} \subseteq U$. Then, we aim to minimize the empirical (weighted Sobolev) error
\vspace{-0.1cm}
\begin{equation}
	\label{EqDefLoss1}
	\left( \sum_{m_1=1}^{M_1} \sum_{m_2=0}^{M_2} \sum_{m_3=1}^{M_3} \sum_{\beta \in \mathbb{N}^m_{0,k}} \widetilde{c}_{\beta,m_1,m_2,m_3}^2 \left\vert \partial_\beta X_{t_{m_2}}(\omega_{m_1})(u_{m_3}) - \partial_\beta X^{(I,J,K)}_{t_{m_2}}(\omega_{m_1})(u_{m_3}) \right\vert^2 \right)^\frac{1}{2}
	\vspace{-0.1cm}
\end{equation}
either over $(\varphi_\alpha)_{\alpha \in \mathcal{J}_{I,J,K}} \subseteq \mathcal{NN}^\rho_{[0,T] \times U,d}$ or over $(\Phi_\alpha)_{\alpha \in \mathcal{J}_{I,J,K}} \subseteq \mathcal{RN}^\rho_{[0,T] \times U,d}$. Hereby, the constants $(\widetilde{c}_{\beta,m_1,m_2,m_3})_{\beta \in \mathbb{N}^m_{0,k}, \, m_1=1,...,M_1, \, m_2=0,...,M_2, \, m_3 = 1,...,M_3} \subseteq [0,\infty)$ control the contributions of the derivatives, e.g.~$\widetilde{c}_{\beta,m_1,m_2,m_3} = m^{-\vert\beta\vert}$ for all $(\beta,m_1,m_2,m_3) \in \mathbb{N}^m_{0,k} \times \lbrace 1,...,M_1 \rbrace \times \lbrace 0,...,M_2 \rbrace \times \lbrace 1,...,M_3 \rbrace$ means equal contribution of each order. This supervised learning approach is summarized in Algorithm~\ref{AlgSV}.

\begin{algorithm}[!ht]
	\DontPrintSemicolon
	\begin{small}
		\KwInput{$I,J,K,N \in \mathbb{N}$ and the solution $X: [0,T] \times \Omega \rightarrow H$ of \eqref{EqDefSPDE}.}
		\vspace{0.1cm}
		\KwOutput{Approximation $X^{(I,J,K)}: [0,T] \times \Omega \rightarrow H$ of the solution $X: [0,T] \times \Omega \rightarrow H$ to \eqref{EqDefSPDE}.}
		
		\vspace{0.2cm}
		
		For $M_1, M_2, M_3 \in \mathbb{N}$, let $(\omega_{m_1})_{m_1=1,...,M_1} \subseteq \Omega$, $0 \leq t_0 < t_1 < ... < t_{M_2} \leq T$, $(u_{m_3})_{m_3=1,...,M_3} \subseteq U$, and $(\widetilde{c}_{\beta,m_1,m_2,m_3})_{\beta \in \mathbb{N}^m_{0,k}, \, m_1=1,...,M_1, \, m_2=0,...,M_2, \, m_3=1,...,M_3} \subseteq [0,\infty)$.
		
		Generate $I \cdot J \cdot M_1$ realizations $\xi_{i,j}(\omega_{m_1})$, $i = 1,...,I$, $j = 1,...,J$, and $m_1 = 1,...,M_1$, of i.i.d.~$\mathcal{N}(0,1)$-random variables.
		
		For every $m_1 = 1,...,M_1$ and $\alpha \in \mathcal{J}_{I,J,K}$ compute the Wick polynomial $\xi_\alpha(\omega_{m_1}) := \frac{1}{\sqrt{\alpha!}} \prod_{i,j=1}^\infty h_{\alpha_{i,j}}(\xi_{i,j}(\omega_{m_1}))$, see Definition~\ref{DefWick}.
		
		Choose a non-polynomial activation function $\rho \in \overline{C^k_b(\mathbb{R})}^\gamma$ for some $\gamma \geq 0$.
		
		\If{\textbf{deterministic}}{			
			For every $\alpha \in \mathcal{J}_{I,J,K}$ initialize a (time-extended) deterministic neural network of the form $[0,T] \times U \ni (t,u) \mapsto \varphi_\alpha(t,u) := \sum_{n=1}^N y_n \rho\left( a_{0,n} t + a_{1,n}^\top u - b_n \right) \in \mathbb{R}^d \quad\quad\quad\quad\quad\quad\quad\quad\quad\quad$ for some $(a_{0,n},a_{1,n},b_n)_{n=1,...,N} \subseteq \mathbb{R} \times \mathbb{R}^m \times \mathbb{R}$ and $(y_n)_{n=1,...,N} \subseteq \mathbb{R}^d$.
			
			Initialize the process $[0,T] \times \Omega \ni (t,\omega) \mapsto X^{(I,J,K)}_t(\omega) := \sum_{\alpha \in \mathcal{J}_{I,J,K}} \varphi_\alpha(t,\cdot) \xi_\alpha(\omega) \in H$.
			
			Minimize \eqref{EqDefLoss1} over $(a_{0,n},a_{1,n},b_n,y_n)_{n=1,...,N} \subseteq \mathbb{R} \times \mathbb{R}^m \times \mathbb{R} \times \mathbb{R}^d$ by, e.g., using (stochastic) gradient descent algorithms (see e.g.~\cite[Section~4.3]{goodfellow16}).
		}
		\Else{			
			Generate $M_1 \cdot N$ realizations $(A_{0,n}(\omega_{m_1}),A_{1,n}(\omega_{m_1}),B_n(\omega_{m_1}))$, $m_1 = 1,...,M_1$ and $n = 1,...,N$, of i.i.d.~random variables satisfying Assumption~\ref{AssCDF}.
			
			For every $\alpha \in \mathcal{J}_{I,J,K}$ initialize a (time-extended) random neural network of the form $\Omega \ni \omega \mapsto \big( (t,u) \mapsto \Phi_\alpha(\omega)(t,u) := \sum_{n=1}^N y_n \rho\left( A_{0,n}(\omega) t + A_{1,n}(\omega)^\top u - B_n(\omega) \right) \big) \in H \quad\quad\quad$ for some $(y_n)_{n=1,...,N} \subseteq \mathbb{R}^d$.
			
			Initialize the process $[0,T] \times \Omega \ni (t,\omega) \mapsto X^{(I,J,K)}_t(\omega) := \sum_{\alpha \in \mathcal{J}_{I,J,K}} \Phi_\alpha(\omega)(t,\cdot) \xi_\alpha(\omega) \in H$.
			
			Minimize \eqref{EqDefLoss1} over $(y_n)_{n=1,...,N} \subseteq \mathbb{R}^d$ by using the least squares method (see e.g.~\cite[Section~5.1]{neufeld23}).
		}
		
		\Return{$X^{(I,J,K)}: [0,T] \times \Omega \rightarrow H$}
	\end{small}
	\caption{Supervised learning to approximate the solution of \eqref{EqDefSPDE}}
	\label{AlgSV}
\end{algorithm}

Moreover, in an \emph{unsupervised learning} approach, we assume that the initial value $\chi_0 \in H$, the numbers $(\lambda_i)_{i \in \mathbb{N}}$ in \eqref{EqDefBMi}, and the coefficients $A: \dom(A) \subseteq H \rightarrow H$, $F: [0,T] \times \Omega \times H \rightarrow H$, and $B: [0,T] \times \Omega \times H \rightarrow L_2(Z_0;H)$ are given. Then, we learn $X^{(I,J,K)}: [0,T] \times \Omega \rightarrow H)$ that approximately satisfies \eqref{EqDefSPDE}, i.e.~we minimize the empirical (weighted Sobolev) error
\begin{equation}
	\label{EqDefLoss2}
	\begin{aligned}
		& \Bigg( \sum_{m_1=1}^{M_1} \sum_{m_2=0}^{M_2} \sum_{m_3=1}^{M_3} \sum_{\beta \in \mathbb{N}^m_{0,k}} \widetilde{c}_{\beta,m_1,m_2,m_3}^2 \Bigg\vert \partial_\beta X^{(I,J,K)}_{t_{m_2}}(\omega_{m_1})(u_{m_3}) - \partial_\beta \Bigg( \chi_0(u_{m_3}) \\
		& \quad\quad\quad\quad + \sum_{l=1}^{m_2} \left( A X^{(I,J,K)}_{t_{l-1}}(\omega_{m_1}) + F\left( t_{l-1}, \omega_{m_1}, X^{(I,J,K)}_{t_{l-1}}(\omega_{m_1}) \right) \right) (t_l - t_{l-1}) \\
		& \quad\quad\quad\quad + \sum_{l=1}^{m_2} B\left( t_{l-1}, \omega_{m_1}, X^{(I,J,K)}_{t_{l-1}}(\omega_{m_1}) \right) \left( W^{(I,J)}_{t_l}(\omega_{m_1}) - W^{(I,J)}_{t_{l-1}}(\omega_{m_1}) \right) \Bigg)(u_{m_3}) \Bigg\vert^2 \Bigg)^\frac{1}{2}
	\end{aligned}
	\vspace{-0.1cm}
\end{equation}
either over $(\varphi_\alpha)_{\alpha \in \mathcal{J}_{I,J,K}} \subseteq \mathcal{NN}^\rho_{[0,T] \times U,d}$ or over $(\Phi_\alpha)_{\alpha \in \mathcal{J}_{I,J,K}} \subseteq \mathcal{RN}^\rho_{[0,T] \times U,d}$, where we use an Euler–Maruyama approximation of \eqref{EqDefSPDE}, where the data points $(\omega_{m_1})_{m_1=1,...,M_1} \subseteq \Omega$, $0 \leq t_0 < t_1 < ... < t_{M_2} \leq T$, and $(u_{m_3})_{m_3=1,...,M_3} \subseteq U$ are given, and where $\Omega \times [0,T] \ni (t,\omega) \mapsto W^{(I,J)}_t(\omega) := \sum_{i=1}^I \sum_{j=1}^J \sqrt{\lambda_i} \xi_{i,j}(\omega) \big( \int_0^t g_j(s) ds \big) e_i \in Z$ is an approximation of $W: [0,T] \times \Omega \rightarrow H$ (see Lemma~\ref{LemmaBMFourier}~\ref{LemmaBMFourier2}). This unsupervised learning approach is summarized in Algorithm~\ref{AlgUSV}.

\begin{algorithm}[!ht]
	\DontPrintSemicolon
	\begin{small}
		\KwInput{$I,J,K,N \in \mathbb{N}$, the initial value $\chi_0 \in H$, the numbers $(\lambda_i)_{i \in \mathbb{N}}$ in \eqref{EqDefBMi}, and the coefficients $A: \dom(A) \subseteq H \rightarrow H$, $F: [0,T] \times \Omega \times H \rightarrow H$, and $B: [0,T] \times \Omega \times H \rightarrow L_2(Z_0;H)$.}
		\vspace{0.1cm}
		\KwOutput{Approximation $X^{(I,J,K)}: [0,T] \times \Omega \rightarrow H$ of the solution $X: [0,T] \times \Omega \rightarrow H$ to \eqref{EqDefSPDE}.}
		
		\vspace{0.2cm}
		
		For $M_1, M_2, M_3 \in \mathbb{N}$, let $(\omega_{m_1})_{m_1=1,...,M_1} \subseteq \Omega$, $0 \leq t_0 < t_1 < ... < t_{M_2} \leq T$, $(u_{m_3})_{m_3=1,...,M_3} \subseteq U$, and $(\widetilde{c}_{\beta,m_1,m_2,m_3})_{\beta \in \mathbb{N}^m_{0,k}, \, m_1=1,...,M_1, \, m_2=0,...,M_2, \, m_3=1,...,M_3} \subseteq [0,\infty)$.
		
		Generate $I \cdot J \cdot M_1$ realizations $\xi_{i,j}(\omega_{m_1})$, $i = 1,...,I$, $j = 1,...,J$, and $m_1 = 1,...,M_1$, of i.i.d.~$\mathcal{N}(0,1)$-random variables.
		
		For every $m_1 = 1,...,M_1$ and $m_2 = 0,...,M_2$ compute the approximative Brownian motion $W^{(I,J)}_{t_{m_2}}(\omega_{m_1}) := \sum_{i=1}^I \sum_{j=1}^J \sqrt{\lambda_i} \xi_{i,j}(\omega_{m_1}) \int_0^{t_{m_2}} g_j(s) ds$, see Lemma~\ref{LemmaBMFourier}~\ref{LemmaBMFourier2}.
		
		For every $m_1 = 1,...,M_1$ and $\alpha \in \mathcal{J}_{I,J,K}$ compute the Wick polynomial $\xi_\alpha(\omega_{m_1}) := \frac{1}{\sqrt{\alpha!}} \prod_{i,j=1}^\infty h_{\alpha_{i,j}}(\xi_{i,j}(\omega_{m_1}))$, see Definition~\ref{DefWick}.
		
		Choose a non-polynomial activation function $\rho \in \overline{C^k_b(\mathbb{R})}^\gamma$ for some $\gamma \geq 0$.
		
		\If{\textbf{deterministic}}{			
			For every $\alpha \in \mathcal{J}_{I,J,K}$ initialize a (time-extended) deterministic neural network of the form $[0,T] \times U \ni (t,u) \mapsto \varphi_\alpha(t,u) := \sum_{n=1}^N y_n \rho\left( a_{0,n} t + a_{1,n}^\top u - b_n \right) \in \mathbb{R}^d \quad\quad\quad\quad\quad\quad\quad\quad\quad\quad$ for some $(a_{0,n},a_{1,n},b_n)_{n=1,...,N} \subseteq \mathbb{R} \times \mathbb{R}^m \times \mathbb{R}$ and $(y_n)_{n=1,...,N} \subseteq \mathbb{R}^d$.
			
			Initialize the process $[0,T] \times \Omega \ni (t,\omega) \mapsto X^{(I,J,K)}_t(\omega) := \sum_{\alpha \in \mathcal{J}_{I,J,K}} \varphi_\alpha(t,\cdot) \xi_\alpha(\omega) \in H$.
			
			Minimize \eqref{EqDefLoss2} over $(a_{0,n},a_{1,n},b_n,y_n)_{n=1,...,N} \subseteq \mathbb{R} \times \mathbb{R}^m \times \mathbb{R} \times \mathbb{R}^d$ by, e.g., using (stochastic) gradient descent algorithms (see \cite[Section~4.3]{goodfellow16}).
		}
		\Else{			
			Generate $M_1 \cdot N$ realizations $(A_{0,n}(\omega_{m_1}),A_{1,n}(\omega_{m_1}),B_n(\omega_{m_1}))$, $m_1 = 1,...,M_1$ and $n = 1,...,N$, of i.i.d.~random variables satisfying Assumption~\ref{AssCDF}.
			
			For every $\alpha \in \mathcal{J}_{I,J,K}$ initialize a (time-extended) random neural network of the form $\Omega \ni \omega \mapsto \big( (t,u) \mapsto \Phi_\alpha(\omega)(t,u) := \sum_{n=1}^N y_n \rho\left( A_{0,n}(\omega) t + A_{1,n}(\omega)^\top u - B_n(\omega) \right) \big) \in H \quad\quad\quad$ for some $(y_n)_{n=1,...,N} \subseteq \mathbb{R}^d$.
			
			Initialize the process $[0,T] \times \Omega \ni (t,\omega) \mapsto X^{(I,J,K)}_t(\omega) := \sum_{\alpha \in \mathcal{J}_{I,J,K}} \Phi_\alpha(\omega)(t,\cdot) \xi_\alpha(\omega) \in H$.
			
			Minimize \eqref{EqDefLoss2} over $(y_n)_{n=1,...,N} \subseteq \mathbb{R}^d$ by using the least squares method (see \cite[Section~5.1]{neufeld23}).
		}
		
		\Return{$X^{(I,J,K)}: [0,T] \times \Omega \rightarrow H$}
	\end{small}
	\caption{Unsupervised learning to approximate the solution of \eqref{EqDefSPDE}}
	\label{AlgUSV}
\end{algorithm}

In the following\footnotemark, we consider three numerical examples: the stochastic heat equation, the Heath-Jarrow-Morton (HJM) equation, and the Zakai equation. Hereby, we always choose the basis functions $(g_j)_{j \in \mathbb{N}}$ of $(L^2([0,T],\mathcal{B}([0,T]),dt),\langle\cdot,\cdot\rangle_{L^2([0,T],\mathcal{B}([0,T]),dt)})$ as in Example~\ref{ExFourierBM}.

\footnotetext{The numerical experiments have been implemented in \texttt{Python} on an average laptop (Lenovo ThinkPad X13 Gen2a with Processor AMD Ryzen 7 PRO 5850U and Radeon Graphics, 1901 Mhz, 8 Cores, 16 Logical Processors). The code can be found under the following link: \url{https://github.com/psc25/ChaosSPDE}}

\subsection{Stochastic heat equation}
\label{SecStochHeat}

In the first numerical experiment, we learn the solution of the stochastic heat equation. To this end, we consider the Hilbert space $H := L^2(\mathbb{R}^m,\mathcal{L}(\mathbb{R}^m),w)$ with inner product $\langle f,g \rangle_{L^2(\mathbb{R}^m,\mathcal{L}(\mathbb{R}^m),w)} := \int_{\mathbb{R}^m} f(u) g(u) w(u)du$, where $\mathbb{R}^m \ni u \mapsto w(u) := (2\pi)^{-m/2} \exp\left( - \Vert u \Vert^2/2 \right) \in [0,\infty)$. Moreover, we assume that \eqref{EqDefSPDE} takes the form
\begin{equation}
	\label{EqDefStochHeat}
	\begin{cases}
		dX_t & = \Delta X_t dt + b_0 dW_t, \\
		X_0 & = \chi_0 \in L^2(\mathbb{R}^m,\mathcal{L}(\mathbb{R}^m),w),
	\end{cases}
\end{equation}
where $\chi_0 \in L^2(\mathbb{R}^m,\mathcal{L}(\mathbb{R}^m),w)$ is deterministic, and where $\Delta =: A$ denotes the Laplacian
\begin{equation}
	\label{EqDefLapl}
	W^{2,2}(\mathbb{R}^m,\mathcal{L}(\mathbb{R}^m),w) \ni x \quad \mapsto \quad \Delta x := \left( u \mapsto \sum_{l=1}^m \frac{\partial^2 x}{\partial u_l^2}(u) \right) \in L^2(\mathbb{R}^m,\mathcal{L}(\mathbb{R}^m),w).
\end{equation}
Moreover, the map $\left( z \mapsto b_0(z) := z \right) \in L_2(\mathbb{R};L^2(\mathbb{R}^m,\mathcal{L}(\mathbb{R}^m),w))$ returns the constant function $z \in L^2(\mathbb{R}^m,\mathcal{L}(\mathbb{R}^m),w)$, while $W: [0,T] \times \Omega \rightarrow Z := \mathbb{R}$ is a $Q$-Brownian motion, with $Q := \id_\mathbb{R} \in L_1(\mathbb{R};\mathbb{R})$. Thus, $\lambda_1 := 1$ and $\lambda_i := 0$ for all $i \in \mathbb{N} \cap [2,\infty)$, cf.~\eqref{EqDefBMi}.

Then, \eqref{EqDefStochHeat} admits the following mild solution (see Section~\ref{SecProofsNE} for the proof).

\begin{lemma}
	\label{LemmaStochHeat}
	The SPDE~\eqref{EqDefStochHeat} has a mild solution $X: [0,T] \times \Omega \rightarrow L^2(\mathbb{R}^m,\mathcal{L}(\mathbb{R}^m),w)$ given by
	\begin{equation}
		\label{EqLemmaStochHeat1}
		X_t = S_t \chi_0 + b_0 W_t, \quad\quad t \in [0,T],
	\end{equation}
	where the Laplacian $A := \Delta$ in \eqref{EqDefLapl} is the generator of the $C_0$-semigroup $(S_t)_{t \in [0,T]}$ defined by
	\begin{equation}
		\label{EqDefLaplSemigr}
		L^2(\mathbb{R}^m,\mathcal{L}(\mathbb{R}^m),w) \ni x \quad \mapsto \quad S_t x := 
		\begin{cases}
			x, & \text{if } t = 0, \\
			\left( u \mapsto \int_{\mathbb{R}^m} \phi_t(u-v) x(v) dv \right), & \text{if } t \in (0,T],
		\end{cases}
	\end{equation}
	for $t \in [0,T]$, with $\mathbb{R}^m \ni y \mapsto \phi_t(y) := (4\pi t)^{-m/2} \exp\left( -\Vert y \Vert^2/(4t) \right) \in \mathbb{R}$.
\end{lemma}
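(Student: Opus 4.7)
The claim splits into two parts: (i) verifying that $(S_t)_{t \in [0,T]}$ defined in \eqref{EqDefLaplSemigr} is a $C_0$-semigroup on $H := L^2(\mathbb{R}^m,\mathcal{L}(\mathbb{R}^m),w)$ whose generator coincides with the Laplacian $\Delta$ of \eqref{EqDefLapl} on $W^{2,2}(\mathbb{R}^m,\mathcal{L}(\mathbb{R}^m),w)$, and (ii) verifying that the candidate $X_t := S_t \chi_0 + b_0 W_t$ satisfies the mild-solution identity \eqref{EqDefMildSolution} for \eqref{EqDefStochHeat}.

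For (i), I would first establish pointwise boundedness of each $S_t$ as a self-map of $H$. Applying Cauchy--Schwarz with respect to the probability density $\phi_t(u-\cdot)$ yields $\vert S_t x(u) \vert^2 \le \int_{\mathbb{R}^m} \phi_t(u-v) \vert x(v) \vert^2 dv$, and Fubini reduces $\Vert S_t x \Vert_H^2$ to $\int_{\mathbb{R}^m} \vert x(v) \vert^2 (\phi_t \ast w)(v)\,dv$, where the Gaussian convolution $\phi_t \ast w$ can be computed explicitly and controlled in terms of $w$ up to a $t$-dependent constant. The semigroup property $S_s S_t = S_{s+t}$ follows from the Chapman--Kolmogorov identity $\phi_s \ast \phi_t = \phi_{s+t}$ via Fubini, and $S_0 = \id_H$ by construction. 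Strong continuity at $t = 0$ is verified by using that $\phi_t$ is an approximate identity: the convergence $\Vert S_t x - x \Vert_H \to 0$ is established first for $x$ in a dense subspace of smooth, rapidly decaying functions (by dominated convergence) and then extended to all of $H$ by density and the uniform boundedness just obtained. Finally, the generator is identified with $\Delta$ on $W^{2,2}(\mathbb{R}^m,\mathcal{L}(\mathbb{R}^m),w)$ by differentiating under the integral and using $\partial_t \phi_t = \Delta \phi_t$ together with integration by parts.

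For (ii), with $F \equiv 0$ and $B(t,\omega,x) = b_0$ constant, the identity \eqref{EqDefMildSolution} reduces to $X_t = S_t \chi_0 + \int_0^t S_{t-s} b_0\,dW_s$. The key observation is that $b_0: \mathbb{R} \to H$ returns constant functions and that $S_r$ fixes constants: for any $z \in \mathbb{R}$ and $r \in [0,T]$, $(S_r(b_0 z))(u) = z \int_{\mathbb{R}^m} \phi_r(u-v) dv = z = (b_0 z)(u)$. Hence $S_{t-s} b_0 = b_0$ as elements of $L_2(\mathbb{R}; H)$ for every $s \in [0,t]$, and the stochastic integral collapses to $\int_0^t b_0\,dW_s = b_0 W_t$, recovering \eqref{EqLemmaStochHeat1}. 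Predictability of $X$ and the integrability $\mathbb{P}\bigl[ \int_0^T \Vert X_t \Vert_H^2 dt < \infty \bigr] = 1$ follow from continuity of $t \mapsto S_t\chi_0$ in $H$ (a by-product of strong continuity) and continuity of $t \mapsto W_t \in \mathbb{R}$.

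The main technical obstacle lies in step (i): the heat semigroup is classically analyzed on the unweighted Lebesgue space $L^2(\mathbb{R}^m,du)$, whereas on the Gaussian-weighted space $L^2(\mathbb{R}^m, w)$ the pointwise ratio $(\phi_t \ast w)(v)/w(v)$ is not uniformly bounded in $v$. Obtaining a genuine operator-norm bound for $S_t$ on $H$ therefore requires care; one natural route is to estimate $\Vert S_t \Vert_{L(H;H)}$ for small $t$ and then extend to all $t \in [0,T]$ via the semigroup property, or alternatively to work on a dense subspace of sufficiently well-behaved functions on which the estimates close up before extending by density. Once $S_t$ is firmly established as a bounded self-map of $H$, the remaining steps of (i) and the whole of (ii) are essentially routine.
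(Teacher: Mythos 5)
Your part (ii) is exactly the paper's argument: since $F \equiv 0$ and $S_{t-s} b_0 = b_0$ (because $\int_{\mathbb{R}^m} \phi_{t-s}(u-v)\,dv = 1$), the stochastic integral collapses to $b_0 W_t$ and \eqref{EqLemmaStochHeat1} follows. The divergence is in part (i): the paper does not prove that $\Delta$ generates $(S_t)_{t \in [0,T]}$ on $H = L^2(\mathbb{R}^m,\mathcal{L}(\mathbb{R}^m),w)$ at all — it simply invokes \cite[Exercise~5.9]{hairer09} — whereas you attempt a self-contained verification, and it is precisely there that your proposal has a genuine gap.

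The gap is the boundedness of $S_t$ on the Gaussian-weighted space, and neither of your proposed repairs can close it. Your Cauchy--Schwarz/Fubini step only yields $\Vert S_t x \Vert_H^2 \leq \int_{\mathbb{R}^m} \vert x(v) \vert^2 (\phi_t * w)(v)\,dv$, and since $w$ is the $\mathcal{N}(0,I_m)$ density and $\phi_t$ the $\mathcal{N}(0,2tI_m)$ density, one has $(\phi_t * w)(v)/w(v) = (1+2t)^{-m/2} \exp\bigl( \tfrac{t}{1+2t} \Vert v \Vert^2 \bigr)$, which is unbounded — as you yourself note. Extending from small $t$ via the semigroup law presupposes an operator bound for small $t$, which faces the identical obstruction; and extending "by density" from a core of nice functions presupposes exactly the uniform bound you are trying to establish. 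In fact no such bound exists in this setting: taking $x(v) = e^{a \Vert v \Vert^2}$ with $\tfrac{1}{4(1+t)} \leq a < \min\bigl\lbrace \tfrac{1}{4}, \tfrac{1}{4t} \bigr\rbrace$ (such $a$ exists for every $t > 0$), a Gaussian computation gives $S_t x(u) = (1-4ta)^{-m/2} \exp\bigl( \tfrac{a}{1-4ta} \Vert u \Vert^2 \bigr)$; here $x \in H$ (since $a < \tfrac14$) while $S_t x \notin H$ (since $a \geq \tfrac{1}{4(1+t)}$), so $S_t$ does not even map $H$ into itself, let alone with a uniform bound $C_S$. Consequently the route you sketch for (i) cannot be completed as written; the paper's proof sidesteps the issue entirely by treating the semigroup/generator statement as a citation, so to align with it you should either do the same or first move to a functional-analytic setting (e.g.\ a different weight) in which the convolution estimate genuinely closes before attempting a direct proof.
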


\begin{remark}
	Let us verify Assumption~\ref{AssPropag}. By using Lemma~\ref{LemmaStochHeat}, Lemma~\ref{LemmaBMFourier}~\ref{LemmaBMFourier2}, and Lemma~\ref{LemmaWick}, it follows for every $\alpha \in \mathcal{J}$ that the propagator $x^{(T)}_\alpha := \mathbb{E}\left[ X_T \xi_\alpha \right] \in L^2(\mathbb{R}^m,\mathcal{L}(\mathbb{R}^m),w)$ satisfies
	\begin{equation*}
		x^{(T)}_\alpha := \mathbb{E}\left[ X_T \xi_\alpha \right] = \left( \phi_T * \chi_0 \right) \mathbb{E}[\xi_\alpha] + \mathbb{E}\left[ W_T \xi_\alpha \right] =
		\begin{cases}
			\phi_T * \chi_0, & \text{if } \alpha = 0, \\
			\int_0^T g_j(t) dt, & \text{if } \alpha = \epsilon(1,j) \text{ for some } j \in \mathbb{N}, \\
			0, & \text{otherwise},
		\end{cases}
	\end{equation*}
	where $\epsilon(1,j) := (\delta_{(1,j),(k,l)})_{k,l \in \mathbb{N}} \in \mathcal{J}$ has zero entries except a one at position $(1,j) \in \mathbb{N} \times \mathbb{N}$. Hence, for any $\alpha \in \mathcal{J}$ with $\vert \alpha \vert \geq 1$, we observe that $x^{(T)}_\alpha := \mathbb{E}\left[ X_T \xi_\alpha \right]: \mathbb{R}^m \rightarrow \mathbb{R}^d$ is constant, thus satisfying Assumption~\ref{AssPropag}. On the other hand, for $\alpha = 0 \in \mathcal{J}$, we refer to \cite[Lemma~10.1]{neufeld23} for conditions on $\chi_0 \in L^2(\mathbb{R}^m,\mathcal{L}(\mathbb{R}^m),w)$ such that $x^{(T)}_0 := \mathbb{E}\left[ X_T \xi_0 \right] = \phi_T * \chi_0 \in L^1(\mathbb{R}^m,\mathcal{L}(\mathbb{R}^m),du)$ has $(\lceil\gamma\rceil+2)$-times differentiable Fourier transform with finite constant $c_0$ defined in \eqref{EqAssPropag}.
\end{remark}

For the numerical example, we choose $(I,J,K) = (1,5,1)$, $T = 1$, and $\mathbb{R}^m \ni u \mapsto \chi_0(u) := 10 \exp\left( -\Vert u \Vert^2/\left( 2\sigma^2 \right) \right) \in \mathbb{R}$ with $\sigma = 6$. Moreover, let $(\omega_{m_1})_{m_1=1,...,M_1} \subseteq \Omega$ with $M_1 = 200$, let $(t_{m_2})_{m_2=0,...,M_2} \subseteq [0,T]$ be equidistant with $M_2 = 20$, and let $(u_{m_3})_{m_3=1,...,M_3} \sim \mathcal{N}_m(0,I_m)$ be an i.i.d.~sequence of normally distributed random variables with $M_3 = 1000$. After splitting the data into 80\%/20\% for training/testing along $(\omega_{m_1})_{m_1=1,...,M_1}$, we run the algorithm for both the supervised and unsupervised approach, and both with deterministic and random neural networks (using the activation function $\rho(s) := \tanh(s)$, and with $N = 25$ and $N = 75$ neurons, respectively). For the training of deterministic networks, we apply the Adam algorithm (see \cite{kingma15}) over $10^5$ epochs with learning rate $2 \cdot 10^{-3}$ and batchsize $40$, while the random networks are learned with the least squares method.

Figure~\ref{FigStochHeat} shows that (possibly random) neural networks in the chaos expansion are able to approximate the solution of the stochastic heat equation \eqref{EqDefStochHeat} via both learning approaches described in Algorithm~\ref{AlgSV}+\ref{AlgUSV}.

\begin{figure}[ht]
	\centering
	\begin{minipage}[t][][t]{0.48\textwidth}
		\centering
		\includegraphics[height = 4.8cm, trim={0 8 0 0}, clip]{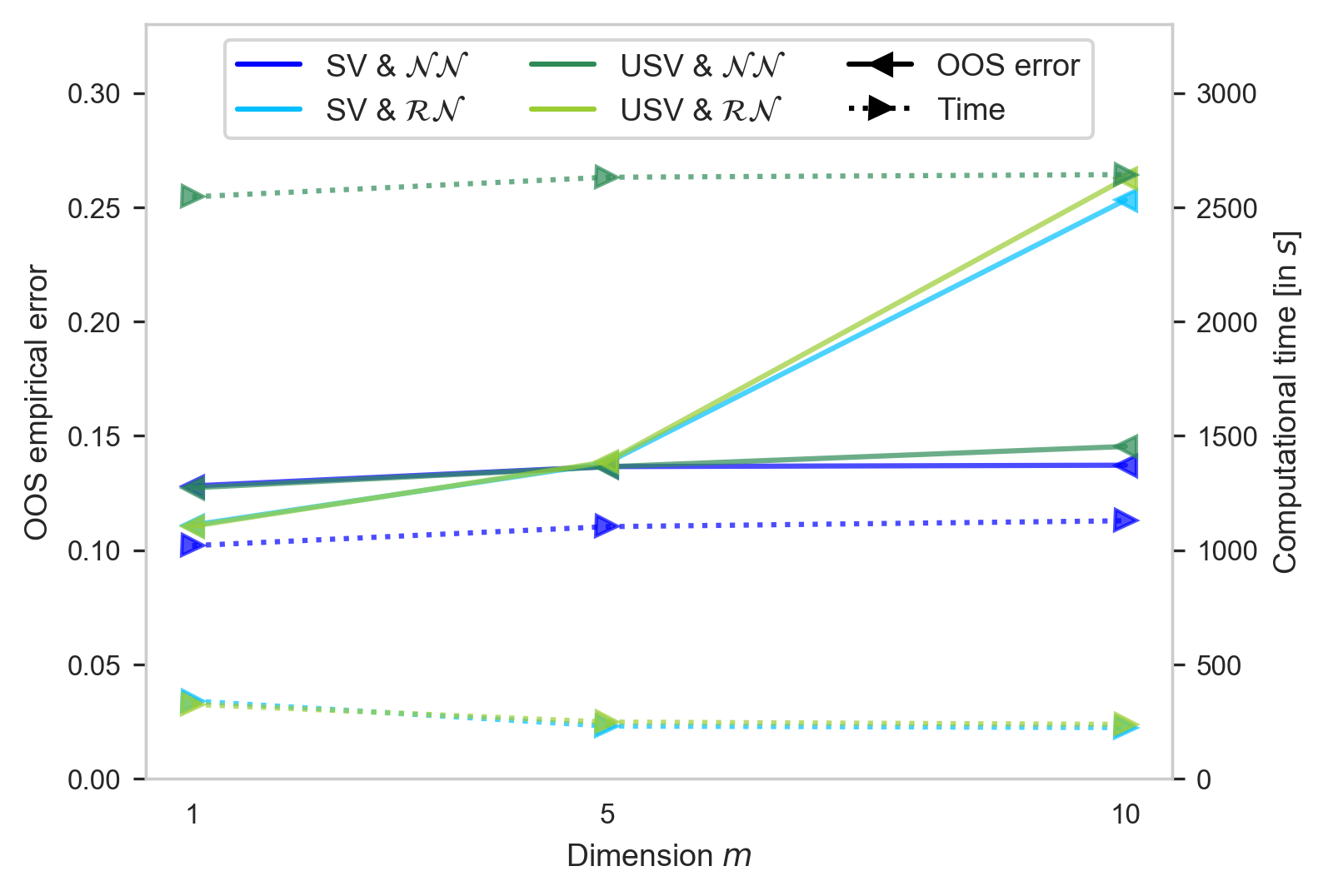}
		\subcaption{Learning performance and computational time}
		\label{FigStochHeat1}
	\end{minipage}\hfill
	\begin{minipage}[t][][t]{0.48\textwidth}
		\centering
		\includegraphics[height = 4.8cm, trim={34 8 4 0}, clip]{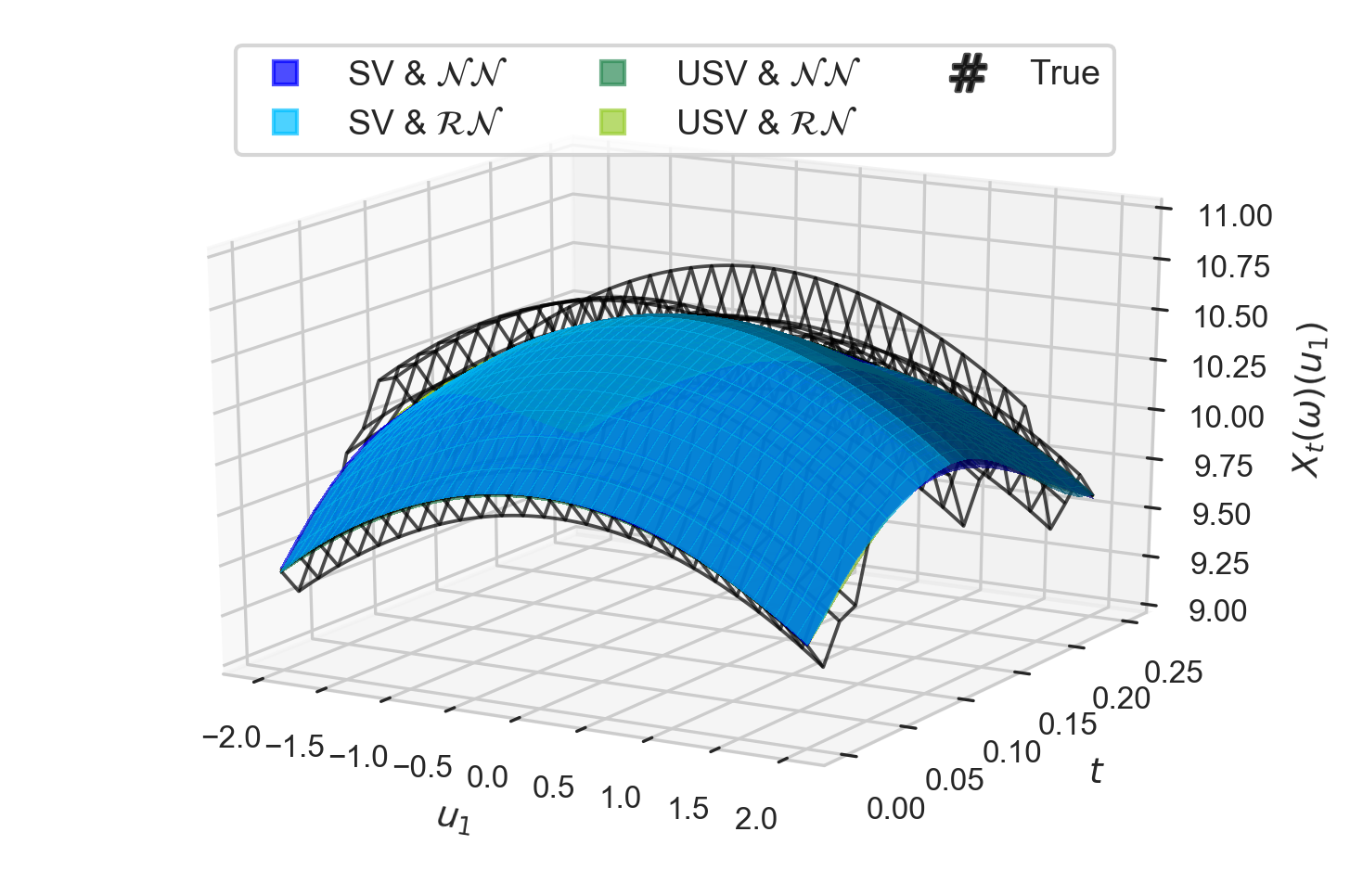}
		\subcaption{Approximation for $m = 1$}
		\label{FigStochHeat2}
	\end{minipage}
	\begin{minipage}[t][][t]{0.48\textwidth}
		\centering
		\includegraphics[height = 4.8cm, trim={34 8 4 0}, clip]{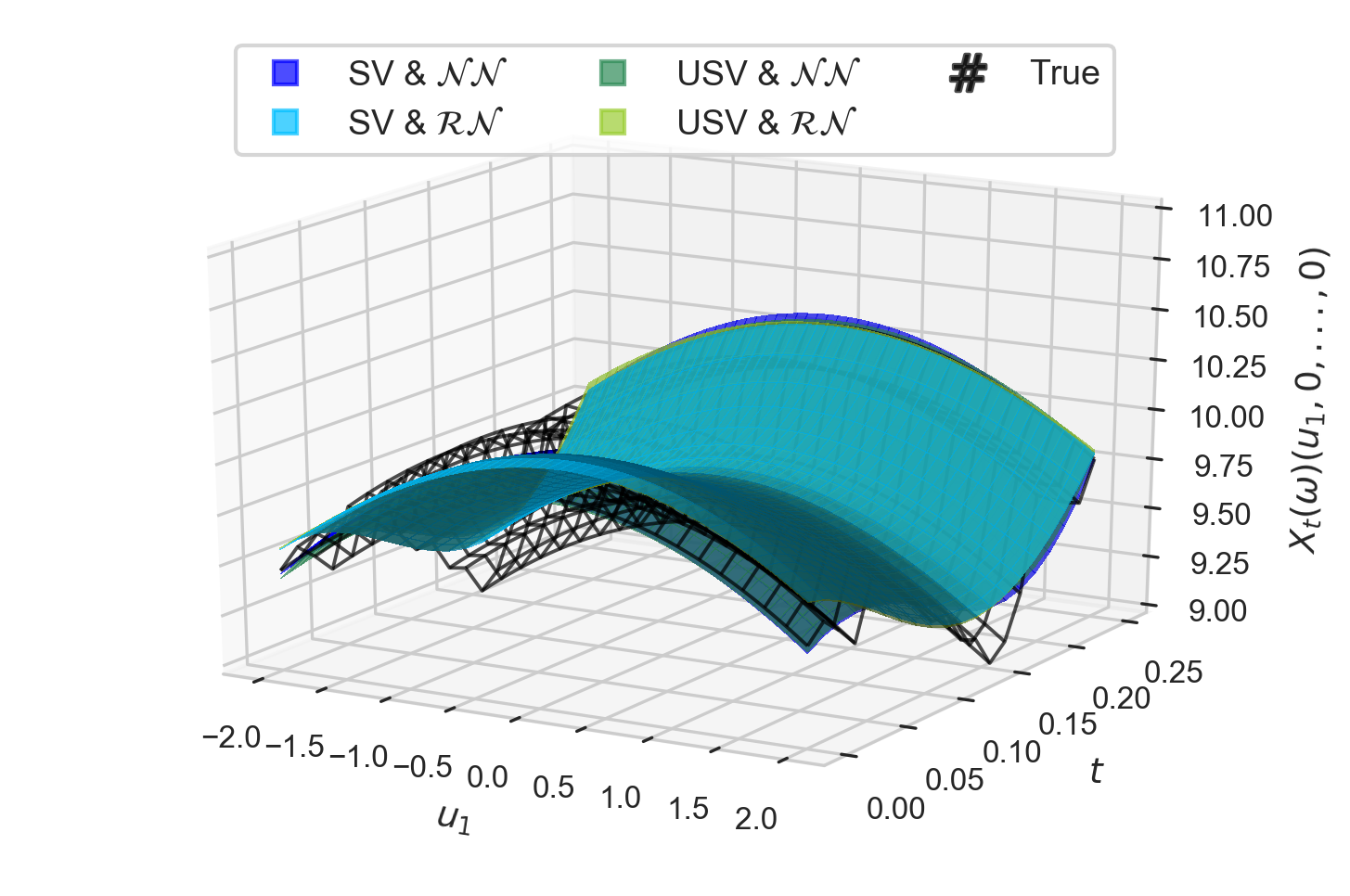}
		\subcaption{Approximation for $m = 5$}
		\label{FigStochHeat3}
	\end{minipage}\hfill
	\begin{minipage}[t][][t]{0.48\textwidth}
		\centering
		\includegraphics[height = 4.8cm, trim={34 8 4 0}, clip]{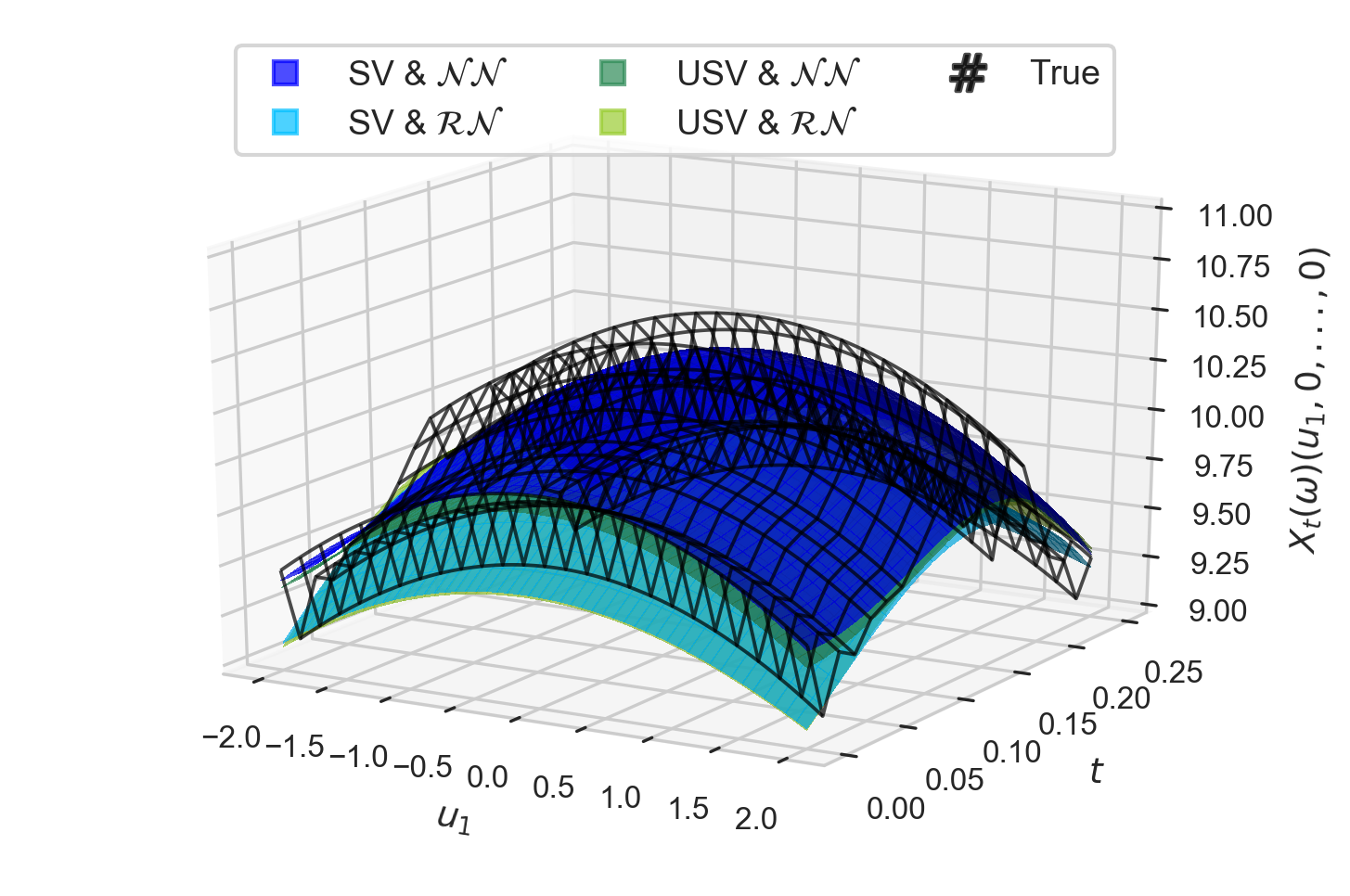}
		\subcaption{Approximation for $m = 10$}
		\label{FigStochHeat4}
	\end{minipage}
	\caption{\small Learning the solution $X: [0,T] \times \Omega \rightarrow H$ of the stochastic heat equation \eqref{EqDefStochHeat} with neural networks (label ``$\mathcal{NN}$'') and random neural networks (label ``$\mathcal{RN}$'') in the chaos expansion, either via supervised learning (Algorithm~\ref{AlgSV}; label ``SV'') or unsupervised learning (Algorithm~\ref{AlgUSV}; label ``USV''). In (\subref{FigStochHeat1}), the learning performance is displayed in terms of the out-of-sample (OOS) empirical error \eqref{EqDefLoss1} together with the computational time. In (\subref{FigStochHeat2})-(\subref{FigStochHeat4}), the learned solutions $[0,T] \times \mathbb{R} \ni (t,u_1) \mapsto X^{(I,J,K)}_t(\omega)(u_1,0,...,0) \in \mathbb{R}$, with $(I,J,K) = (1,5,1)$, are compared to the true solution $[0,T] \times \mathbb{R} \ni (t,u_1) \mapsto X_t(\omega)(u_1,0,...,0) \in \mathbb{R}$ obtained via \eqref{EqLemmaStochHeat1} for different $m \in \lbrace 1,5,10 \rbrace$ and some $\omega \in \Omega$ of the test set.}
	\label{FigStochHeat}
\end{figure}

\subsection{Heath-Jarrow-Morton (HJM) equation in interest rate theory}
\label{SecNumericsHJM}

In the second example, we learn the solution of the Heath-Jarrow-Morton (HJM) equation. To this end, we assume that $(H,\langle \cdot,\cdot \rangle_H)$ is a separable Hilbert space consisting of continuous curves $x: [0,\infty) \rightarrow \mathbb{R}$ such that for every $u \in [0,\infty)$ the evaluation map $H \ni x \mapsto x(u) \in \mathbb{R}$ is continuous. Moreover, we assume that \eqref{EqDefSPDE} takes the form
\begin{equation}
	\label{EqDefHJM}
	\begin{cases}
		dX_t & = \left( \frac{d}{du} X_t + F(t,\cdot,X_t) \right) dt + B(t,\cdot,X_t) dW_t, \quad\quad t \in [0,\infty), \\
		X_0 & = \chi_0 \in H,
	\end{cases}
\end{equation}
where $\chi_0 \in H$ is deterministic, where $\frac{d}{du} =: A: \dom(\frac{d}{du}) \subseteq H \rightarrow H$ is the generator of the $C_0$-semigroup $(S_t)_{t \in [0,T]}$ defined by $H \ni x \mapsto S_t x := x(\cdot + t) \in H$, and where $W: [0,T] \times \Omega \rightarrow Z$ is a $Q$-Brownian motion with values on a (possibly different) separable Hilbert space $(Z,\langle\cdot,\cdot\rangle_Z)$, with $Q \in L_1(Z;Z)$ and RKHS $(Z_0,\langle\cdot,\cdot\rangle_{Z_0})$. Moreover, the diffusion coefficient $B: [0,T] \times \Omega \times H \rightarrow L_2(Z_0;H)$ is assumed to be $(\mathcal{P}_T \otimes \mathcal{B}(H))/\mathcal{B}(L_2(Z_0;H))$-measurable, and the drift coefficient $F: [0,T] \times \Omega \times H \rightarrow H$ is given by the famous HJM no-arbitrage condition (see \cite[Eq.~6.2]{carmona07}), i.e.
\begin{equation}
	\vspace{-0.02cm}
	\label{EqDefHJMDrift}
	F(t,\omega,x) := \left( u \mapsto \sum_{i=1}^\infty B(t,\omega,x)(\widetilde{e}_i)(u) \int_0^u B(t,\omega,x)(\widetilde{e}_i)(v) dv + B(t,\omega,x)(\eta(t,\omega,x))(u) \right) \in H,
\end{equation}
for $(t,\omega,x) \in [0,T] \times \Omega \times H$, where $(\widetilde{e}_i)_{i \in \mathbb{N}} := \left( Q^{1/2} e_i \right)_{i \in \mathbb{N}}$ is an orthonormal basis of $(Z_0,\langle \cdot,\cdot \rangle_{Z_0})$, and $\eta := (\eta_t)_{t \in [0,T]}: [0,T] \times \Omega \times H \rightarrow Z_0$ is any $(\mathcal{P}_T \otimes \mathcal{B}(H))/\mathcal{B}(Z_0)$-measurable map.

\begin{remark}
	\label{RemHJM}
	The solution $X := (X_t)_{t \in [0,T]}$ of the HJM equation~\eqref{EqDefHJM} describes the instantaneous forward rate. Indeed, let $r := (r_t)_{t \in [0,\infty)}: [0,\infty) \times \Omega \rightarrow \mathbb{R}$ be the spot interest rate. Then, for any $t \in [0,T]$ and $u \in [0,\infty)$, the price of a zero coupon bond with maturity date $u$ and nominal value $\$1$ is defined as $P_t(u) := \mathbb{E}\big[ \exp\big( \!\!-\int_t^{t+u} r_s ds \big) \big\vert \mathcal{F}_t \big]$. Hence, if $B(t,\omega,X_t) \eta(t,\omega,X_t) = 0 \in H$ for $(dt \otimes d\mathbb{P})$-a.e.~$(t,\omega) \in [0,T] \times \Omega$, the process $X := (X_t)_{t \in [0,T]}: [0,T] \times \Omega \rightarrow H$ defined as
	\vspace{-0.02cm}
	\begin{equation*}
		X_t := \left( u \mapsto -\frac{\partial}{\partial u} \log(P_t(u)) \right), \quad\quad t \in [0,T],
	\end{equation*}
	expresses the instantaneous forward rate in the sense that for every $t \in [0,T]$ and $u \in [0,\infty)$ it holds that $P_t(u) = \exp\big( \!\!-\int_t^{t+u} X_t(v-t) dv \big)$. For more details, we refer to \cite[Chapter~6]{carmona07}.
\end{remark}

For the numerical experiment, we let $[0,\infty) \ni u \mapsto w(u) := \exp(0.1 u) \in [0,\infty)$ and consider the Hilbert space $(H,\langle \cdot,\cdot \rangle_H)$ consisting of absolutely continuous curves $x: [0,\infty) \rightarrow \mathbb{R}$ satisfying $\int_0^\infty x'(u)^2 w(u) du < \infty$, equipped with the inner product $\langle x,y \rangle_H := x(0) y(0) + \int_0^\infty x'(u) y'(u) w(u) du$, for $x,y \in H$ (see also \cite[Section~6.3.3]{carmona07} and \cite{filipovic01}). Moreover, we set $Z := \mathbb{R}$ and consider a real-valued $Q$-Brownian motion $W: [0,T] \times \Omega \rightarrow \mathbb{R}$, with $Q := \id_\mathbb{R} \in L_1(\mathbb{R};\mathbb{R})$, and therefore $\lambda_1 := 1$ and $\lambda_i := 0$ for all $i \in \mathbb{N} \cap [2,\infty)$, cf.~\eqref{EqDefBMi}. In addition, we assume that the spot interest rate $r := (r_t)_{t \in [0,T]}: [0,T] \times \Omega \rightarrow \mathbb{R}$ follows a Vasi\v{c}ek model
\begin{equation*}
	\begin{cases}
		dr_t & = (\mu - \kappa r_t) dt + \sigma dW_t, \quad\quad t \in [0,T], \\
		r_0 & \in \mathbb{R},
	\end{cases}
\end{equation*}
where $r_0 \in \mathbb{R}$ is deterministic, and where $\mu, \kappa \in \mathbb{R}$ as well as $\sigma > 0$ are given parameters. Then, by \cite[Equation~2.33]{carmona07} (see also \cite[Section~5.4.1]{filipovic09}), this corresponds to the HJM equation~\eqref{EqDefHJM} with coefficients $[0,T] \times \Omega \times H \ni (t,\omega,x) \mapsto B(t,\omega,x) := \left( z \mapsto \sigma \exp(-\kappa \, \cdot) z \right) \in L_2(\mathbb{R};H)$ and $[0,T] \times \Omega \times H \ni (t,\omega,x) \mapsto F(t,\omega,x) := \frac{\sigma^2}{\kappa} \exp(-\kappa \, \cdot) \left( 1 - \exp(-\kappa \, \cdot) \right) \in H$ by \eqref{EqDefHJMDrift}, whose solution is given by
\begin{equation}
	\label{EqDefHJMSol}
	X_t = \left( u \mapsto -\frac{\partial}{\partial u} \log(P_t(u)) \right) = \left( u \mapsto r_t e^{-\kappa u} + \frac{\mu}{\kappa} \left( 1 - e^{-\kappa u} \right) - \frac{\sigma^2}{2\kappa^2} \left( 1 - e^{-\kappa u} \right)^2 \right).
\end{equation}
Now, we choose $I = 1$, $T = 1$, $r_0 = 4$, $\mu = 4$, $\kappa = 0.9$, and $\sigma = 0.5$, let $(\omega_{m_1})_{m_1=1,...,M_1} \subseteq \Omega$ with $M_1 = 200$, let $(t_{m_2})_{m_2=0,...,M_2} \subseteq [0,T]$ be an equidistant time grid with $M_2 = 20$, and let $u_1 := 0$ and $(u_{m_3})_{m_3=2,...,M_3} \sim C_u \mathds{1}_{[0,3]}(u) \exp(0.1 u) du$ be an i.i.d.~sequence of random variables with $M_3 = 81$, where $C_u > 0$ is a normalizing constant. After splitting the data into 80\%/20\% for training/testing along $(\omega_{m_1})_{m_1=1,...,M_1}$, we run the algorithm for both the supervised and unsupervised approach, and both with deterministic and random neural networks (using the activation function $\rho(s) := \tanh(s)$, and with $N = 25$ and $N = 75$ neurons, respectively), where we choose $\widetilde{c}_{\beta,m_1,m_2,m_3} := \mathds{1}_{\lbrace 0 \rbrace \times \lbrace 0 \rbrace}(\beta,m_3) + \mathds{1}_{\lbrace 1 \rbrace \times \lbrace 2,...,M_3 \rbrace}(\beta,m_3)$ for $\beta \in \mathbb{N}^1_{0,1} = \lbrace 0,1 \rbrace$, $m_1 = 1,...,M_1$, $m_2 = 0,...,M_2$, and $m_3 = 1,...,M_3$. This means that the empirical error \eqref{EqDefLoss1} becomes now
\begin{equation*}
	\left( \sum_{m_1=1}^{M_1} \sum_{m_2=0}^{M_2} \left( \left\vert \left( X_{t_{m_2}} - X^{(I,J,K)}_{t_{m_2}} \right)(\omega_{m_1})(0) \right\vert^2 + \sum_{m_3=1}^{M_3} \left\vert \frac{d}{du} \left( X_{t_{m_2}} - X^{(I,J,K)}_{t_{m_2}} \right)(\omega_{m_1})(u_{m_3}) \right\vert^2 \right) \right)^\frac{1}{2}
\end{equation*}
and analogously for \eqref{EqDefLoss2}, which is a truncated empirical version of the norm of $(H,\langle \cdot,\cdot \rangle_H)$. For the training of deterministic networks, we apply the Adam algorithm (see \cite{kingma15}) over $10^5$ epochs with learning rate $5 \cdot 10^{-4}$ and batchsize $40$, while the random networks are learned with the least squares method.

Figure~\ref{FigHJM} shows that deterministic and random neural networks in the Wiener chaos expansion are able to learn the solution of the HJM equation \eqref{EqDefHJM} via both learning approaches described in Algorithm~\ref{AlgSV}+\ref{AlgUSV}. Moreover, Figure~\ref{FigHJM}~(\subref{FigHJM3}) shows that a Gaussian approximation ($K = 1$) is sufficient (see Remark~\ref{RemGaussian}).

\begin{figure}[ht]
	\centering
	\begin{minipage}[t][][t]{0.48\textwidth}
		\centering
		\includegraphics[height = 4.8cm, trim={0 8 0 0}, clip]{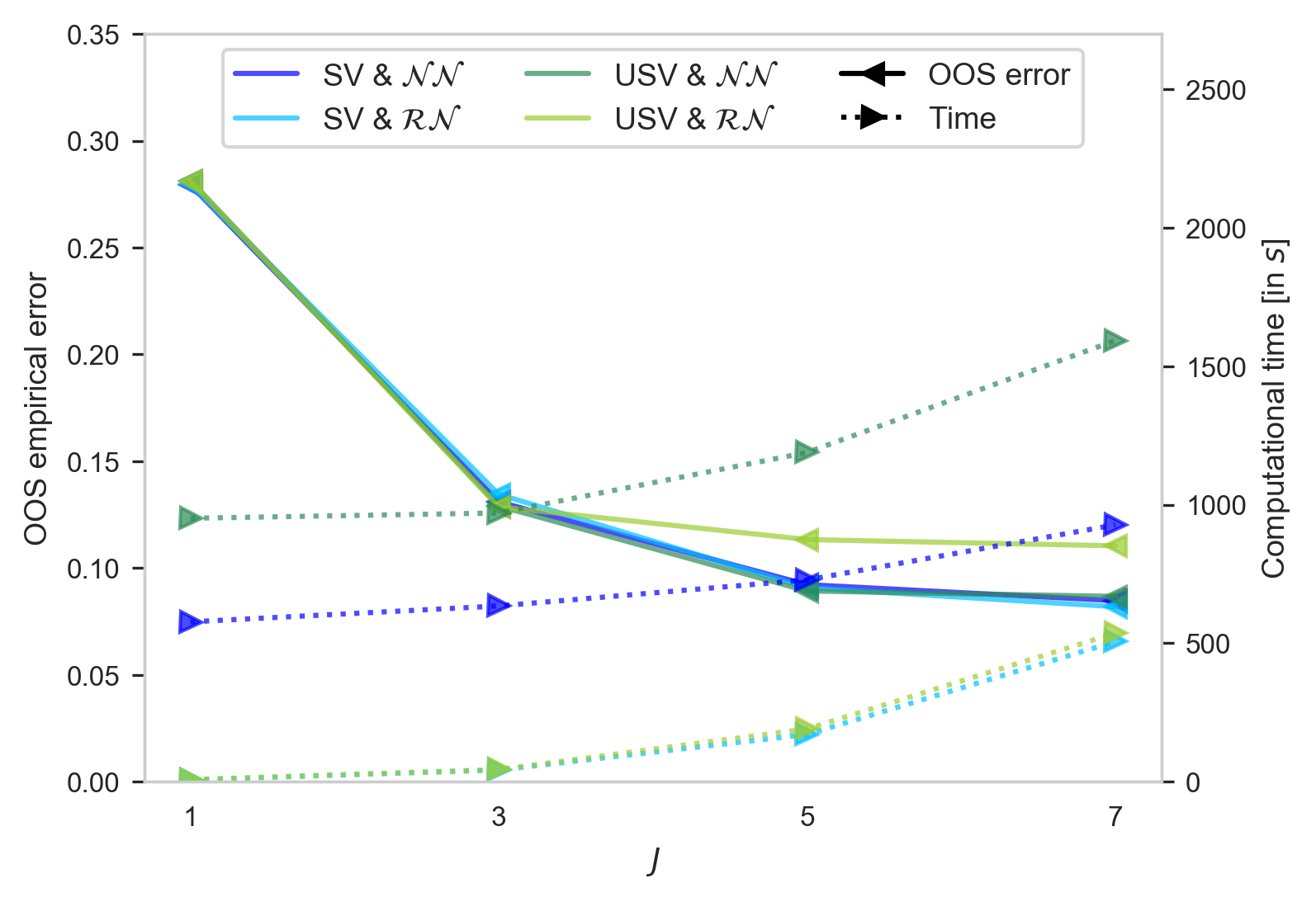}
		\subcaption{Learning performance and computational time for $(J,K) \in \lbrace 1,3,5,7 \rbrace \times \lbrace 2 \rbrace$}
		\label{FigHJM1}
	\end{minipage}\hfill
	\begin{minipage}[t][][t]{0.48\textwidth}
		\centering
		\includegraphics[height = 4.8cm, trim={34 8 4 0}, clip]{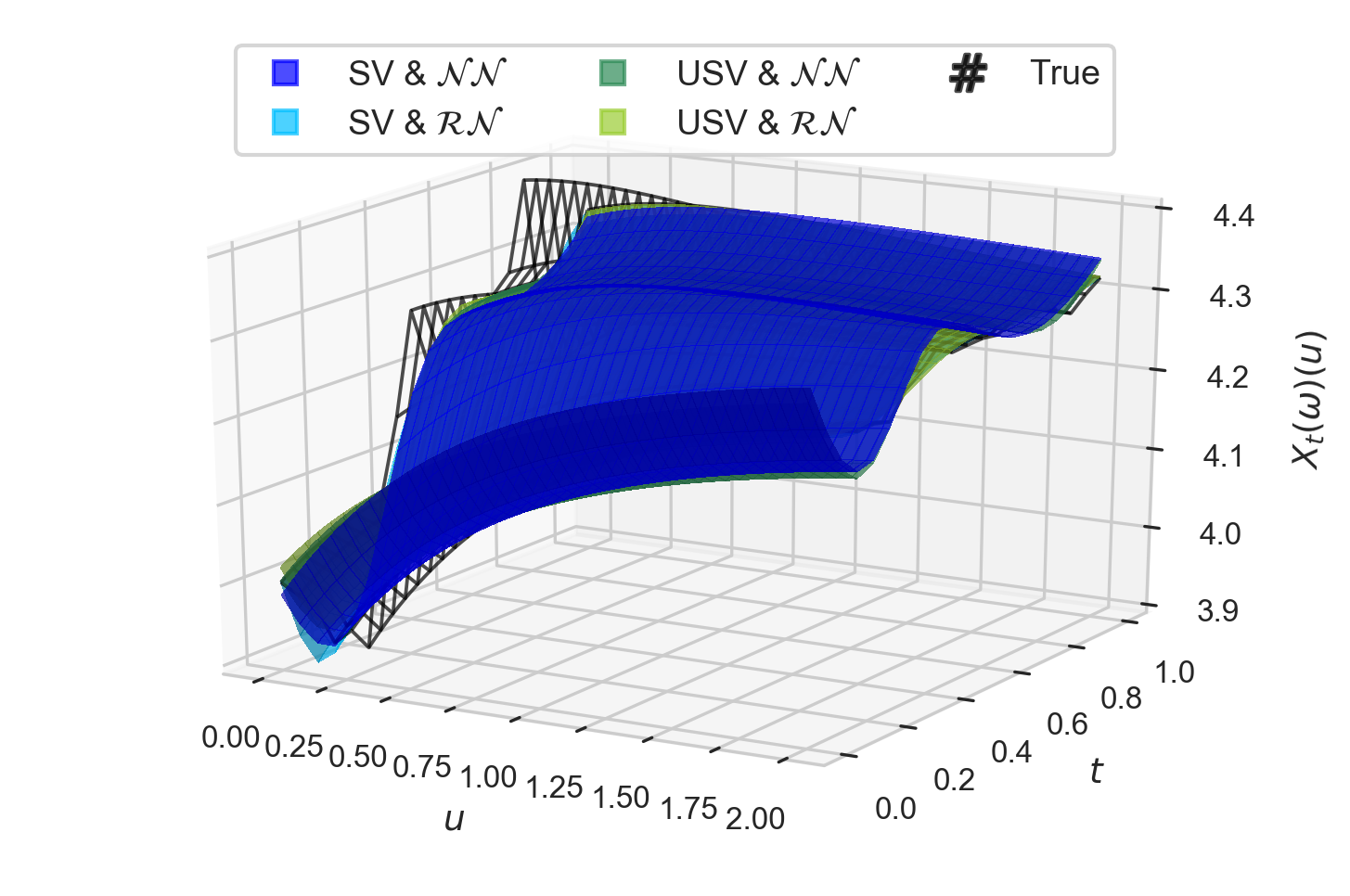}
		\subcaption{Approximation for $(J,K) = (7,2)$}
		\label{FigHJM2}
	\end{minipage}
	\begin{minipage}[t][][t]{0.48\textwidth}
		\centering
		\includegraphics[height = 4.8cm, trim={0 8 0 0}, clip]{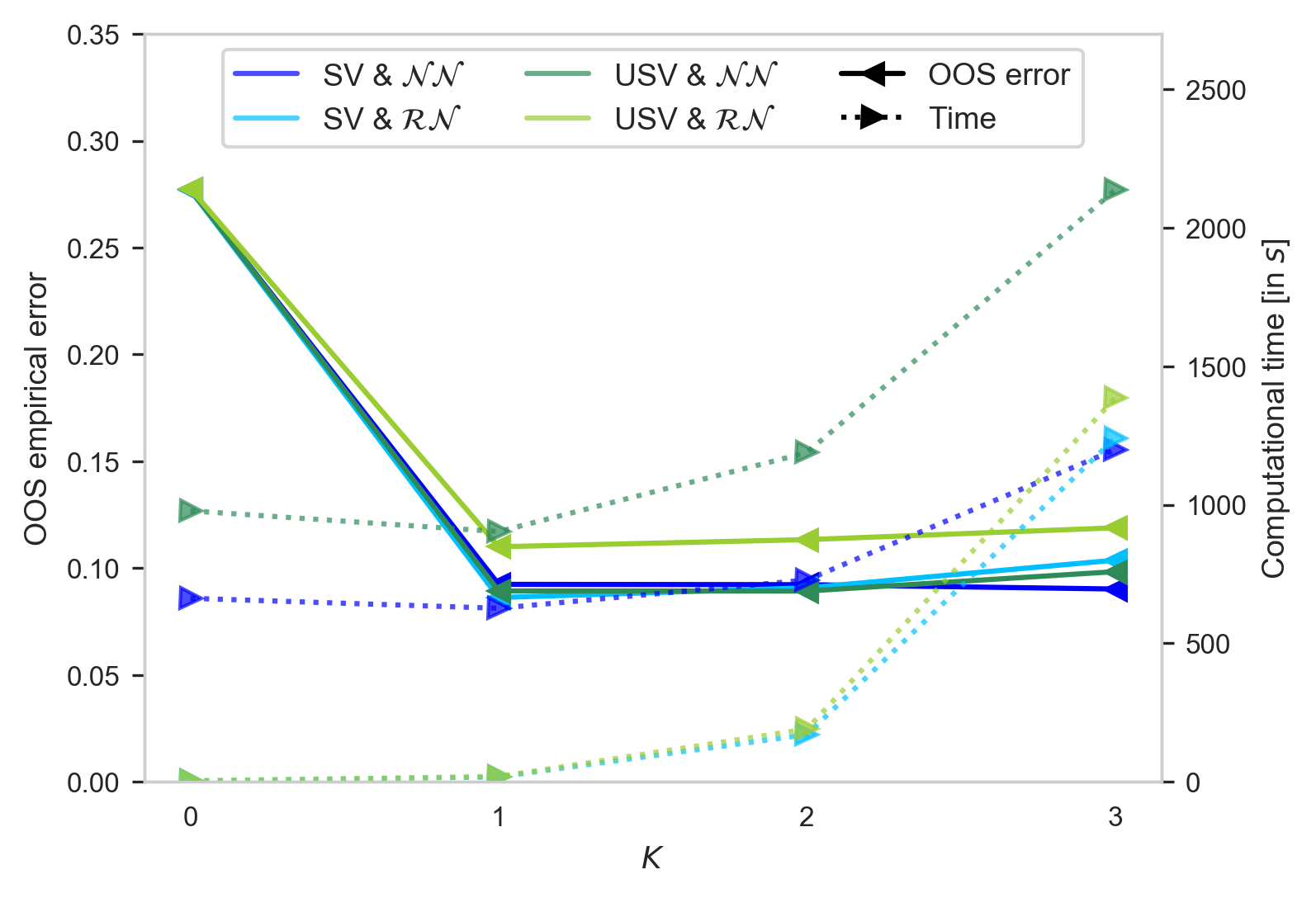}
		\subcaption{Learning performance and computational time for $(J,K) \in \lbrace 5 \rbrace \times \lbrace 0,1,2,3 \rbrace$}
		\label{FigHJM3}
	\end{minipage}\hfill
	\begin{minipage}[t][][t]{0.48\textwidth}
		\centering
		\includegraphics[height = 4.8cm, trim={34 8 4 0}, clip]{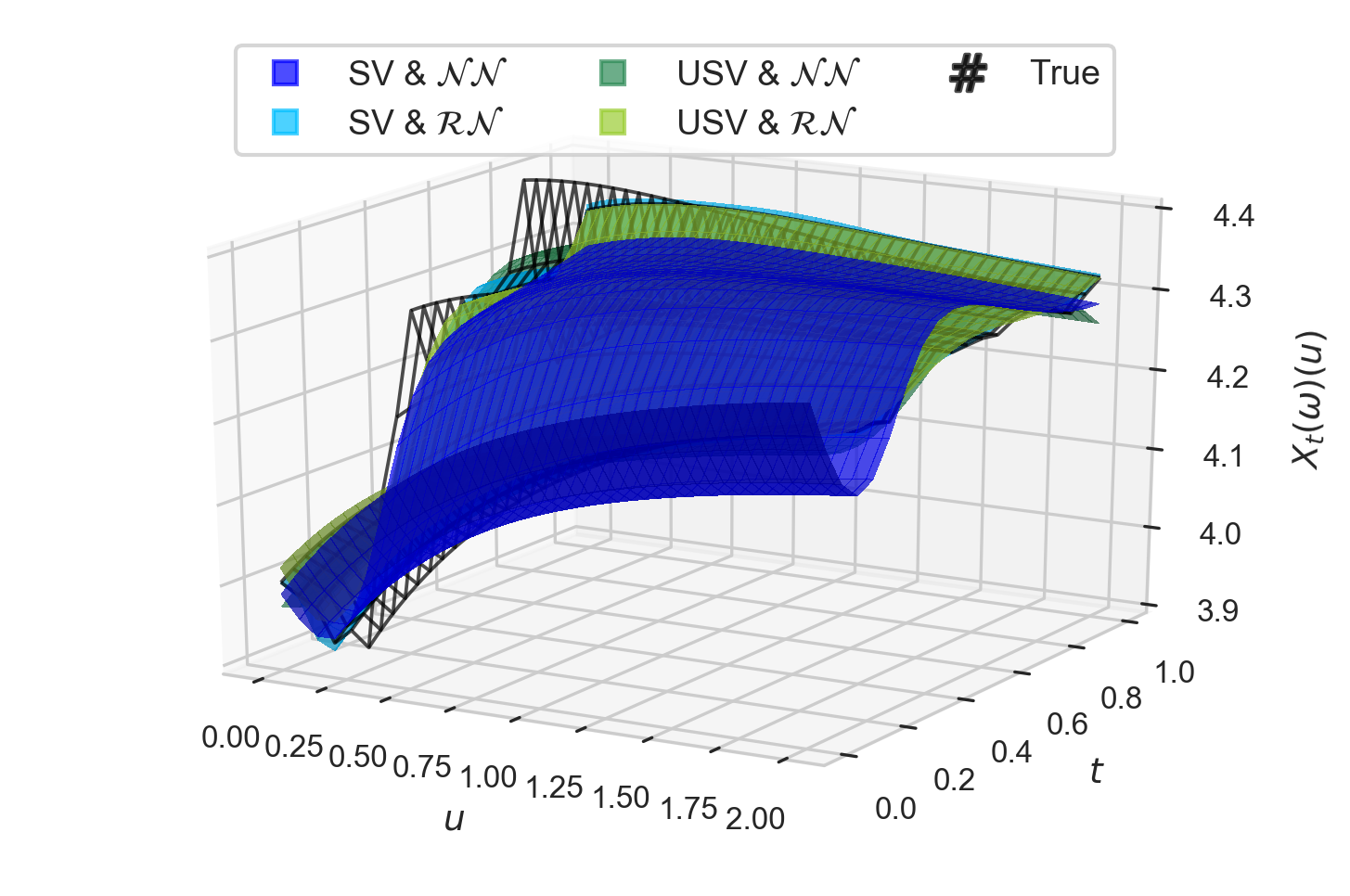}
		\subcaption{Approximation for $(J,K) = (5,3)$}
		\label{FigHJM4}
	\end{minipage}
	\caption{\small Learning the solution $X: [0,T] \times \Omega \rightarrow H$ of the HJM equation~\eqref{EqDefHJM} with neural networks (label ``$\mathcal{NN}$'') and random neural networks (label ``$\mathcal{RN}$'') in the chaos expansion, either via supervised learning (Algorithm~\ref{AlgSV}; label ``SV'') or unsupervised learning (Algorithm~\ref{AlgUSV}; label ``USV''). In (\subref{FigHJM1})+(\subref{FigHJM3}), the learning performance is displayed in terms of the out-of-sample (OOS) empirical error \eqref{EqDefLoss1} together with the computational time for different $J \in \lbrace 1,3,5,7 \rbrace$ and $K \in \lbrace 0,1,2,3 \rbrace$, respectively. In (\subref{FigHJM2})+(\subref{FigHJM4}), the learned solutions $[0,T] \times \mathbb{R} \ni (t,u) \mapsto X^{(I,J,K)}_t(\omega)(u) \in \mathbb{R}$, with $(I,J,K) = (1,7,2)$ and $(I,J,K) = (1,5,3)$, are compared to the true solution $[0,T] \times \mathbb{R} \ni (t,u) \mapsto X_t(\omega)(u) \in \mathbb{R}$ obtained via \eqref{EqDefHJMSol} for some $\omega \in \Omega$ of the test set.}
	\label{FigHJM}
\end{figure}

\vspace{-0.1cm}

\subsection{Zakai equation in filtering}
\label{SecNumericsZakai}

In the third example, we consider the Zakai equation in filtering theory (see \cite{kalman60,kushner64,zakai69,rozovsky18}), which describes the evolution of an unnormalized density function of the conditional distribution of an unobservable process. To this end, we consider for some fixed $m \in \mathbb{N}$ the Hilbert space $(H,\langle\cdot,\cdot\rangle_H) := (L^2(\mathbb{R}^m,\mathcal{L}(\mathbb{R}^m),w),\langle\cdot,\cdot\rangle_{L^2(\mathbb{R}^m,\mathcal{L}(\mathbb{R}^m),w)})$, where $\mathbb{R}^m \ni u \mapsto w(u) := (2\pi)^{-m/2} \exp\left( - \Vert u \Vert^2/2 \right) \in [0,\infty)$. Moreover, for $T > 0$, we assume that $Y := (Y_t)_{t \in [0,T]}: [0,T] \times \Omega \rightarrow \mathbb{R}^m$ is an unobserved signal process satisfying
\vspace{-0.1cm}
\begin{equation}
	\label{EqDefY}
	Y_t = Y_0 + \int_0^t \mu(Y_s) ds + \int_0^t \sigma(Y_s) d\widetilde{W}_s, \quad\quad t \in [0,T],
	\vspace{-0.1cm}
\end{equation}
where $Y_0$ is $\mathcal{F}_0$-measurable with probability density function $\chi_0: \mathbb{R}^m \rightarrow [0,\infty)$, where $\mu \in C^2_b(\mathbb{R}^m;\mathbb{R}^m)$ and $\sigma \in C^3_b(\mathbb{R}^m;\mathbb{R}^{m \times m})$ are the SDE-coefficients, and where $\widetilde{W} := (\widetilde{W}_t)_{t \in [0,T]}: [0,T] \times \Omega \rightarrow \mathbb{R}^m$ is an $m$-dimensional Brownian motion. Then, $Y: [0,T] \times \Omega \rightarrow \mathbb{R}^m$ is a Markov process with generator
\begin{equation}
	\vspace{-0.1cm}
	\label{EqDefGenerator}
	C^\infty_c(\mathbb{R}^m) \ni x \quad \mapsto \quad \mathcal{A} x := \frac{1}{2} \trace\left( \sigma(\cdot) \sigma(\cdot)^\top \nabla^2 x(\cdot) \right) + \mu(\cdot)^\top \nabla x(\cdot) \in C^\infty_c(\mathbb{R}^m) \subseteq L^2(\mathbb{R}^m,\mathcal{L}(\mathbb{R}^m),w),
	\vspace{-0.1cm}
\end{equation}
where $\nabla x(u) := \big( \frac{\partial x}{\partial u_i}(u) \big)_{i=1,...,m} \in \mathbb{R}^m$ and $\nabla^2 x(u) := \big( \frac{\partial^2 x}{\partial u_i \partial u_j}(u) \big)_{i,j=1,...,m} \in \mathbb{R}^{m \times m}$. Moreover, we assume that $Z := (Z_t)_{t \in [0,T]}: [0,T] \times \Omega \rightarrow \mathbb{R}^n$ is the observed process, which satisfies
\begin{equation}
	\vspace{-0.1cm}
	\label{EqDefObservation}
	Z_t = \int_0^t \kappa(Y_s) ds + W_t, \quad\quad t \in [0,T],
	\vspace{-0.1cm}
\end{equation}
where $\kappa: \mathbb{R}^m \rightarrow \mathbb{R}^n$ is a continuous function, and where $\widetilde{W} := (\widetilde{W}_t)_{t \in [0,T]}: [0,T] \times \Omega \rightarrow \mathbb{R}^n$ is an $n$-dimensional Brownian motion, independent from $\widetilde{W}: [0,T] \times \Omega \rightarrow \mathbb{R}^m$. Then, by assuming that $\chi_0 \in W^{2,2}(\mathbb{R}^m,\mathcal{L}(\mathbb{R}^m),w)$, we can apply~\cite[Theorem~6.3]{rozovsky18} to conclude that the solution of the SPDE
\begin{equation}
	\label{EqDefZakai}
	\begin{cases}
		dX_t & = \left( \mathcal{A}^* X_t + F(t,\cdot,X_t) \right) dt + B(t,\cdot,X_t) dW_t, \quad\quad t \in [0,T], \\
		X_0 & = \chi_0 \in L^2(\mathbb{R}^m,\mathcal{L}(\mathbb{R}^d),w),
	\end{cases}
\end{equation}
describes at each time $t \in [0,T]$ an unnormalized density function of the conditional distribution of $Y_t$ given the observations of $(Z_s)_{s \in [0,t]}$, which means that
\begin{equation*}
	\mathbb{P}\left[ Y_t \in A \big\vert (Z_s)_{s \in [0,t]} \right] = \frac{\int_A X_t(u) du}{\int_{\mathbb{R}^m} X_t(u) du}, \quad\quad A \in \mathcal{L}(\mathbb{R}^m),
\end{equation*}
where $\mathcal{A}^*: \dom(\mathcal{A}^*) := C^\infty_c(\mathbb{R}^m) \subseteq L^2(\mathbb{R}^m,\mathcal{L}(\mathbb{R}^m),w) \rightarrow L^2(\mathbb{R}^m,\mathcal{L}(\mathbb{R}^m),w)$ is the adjoint\footnote{In fact, $\mathcal{A}^*$ is defined as the adjoint of $\mathcal{A}: (C^\infty_c(\mathbb{R}^m),\Vert\cdot\Vert_{L^2(\mathbb{R}^m,\mathcal{L}(\mathbb{R}^m),du)}) \rightarrow (C^\infty_c(\mathbb{R}^m),\Vert\cdot\Vert_{L^2(\mathbb{R}^m,\mathcal{L}(\mathbb{R}^m),du)})$ in \eqref{EqDefGenerator}, i.e.~as the operator $\mathcal{A}^*: (C^\infty_c(\mathbb{R}^m),\Vert\cdot\Vert_{L^2(\mathbb{R}^m,\mathcal{L}(\mathbb{R}^m),du)}) \rightarrow (C^\infty_c(\mathbb{R}^m),\Vert\cdot\Vert_{L^2(\mathbb{R}^m,\mathcal{L}(\mathbb{R}^m),du)})$ such that for every $x,y \in C^\infty_c(\mathbb{R}^m)$ it holds that $\langle \mathcal{A} x, y \rangle_{L^2(\mathbb{R}^m,\mathcal{L}(\mathbb{R}^m),du)} = \langle x, \mathcal{A}^* y \rangle_{L^2(\mathbb{R}^m,\mathcal{L}(\mathbb{R}^m),du)}$. Hence, by applying integration by parts, it follows that $\mathcal{A}^* y = \frac{1}{2} \sum_{i,j,l=1}^m \frac{\partial}{\partial u_i \partial u_j} \left( \sigma_{i,l}(\cdot) \sigma_{j,l}(\cdot) y(\cdot) \right) - \sum_{l=1}^m \frac{\partial}{\partial u_l} \left( \mu_l(\cdot) y(\cdot) \right)$ for all $y \in C^\infty_c(\mathbb{R}^m)$, where $\sigma := (\sigma_{i,j})_{i,j=1,...,m} \in C^3_b(\mathbb{R}^m;\mathbb{R}^{m \times m})$ and $\mu := (\mu_l)_{l=1,...,m}^\top \in C^2_b(\mathbb{R}^m;\mathbb{R}^m)$.} of $\mathcal{A}$ given in \eqref{EqDefGenerator}, and where $W := (W_t)_{t \in [0,T]}: [0,T] \times \Omega \rightarrow Z := \mathbb{R}^m$ is the $m$-dimensional Brownian motion from \eqref{EqDefObservation}. Note that \eqref{EqDefZakai} is of the form \eqref{EqDefSPDE} with coefficients $F$ and $B$ given by
\begin{equation*}
	\begin{aligned}
		[0,T] \times \Omega \times H \ni (t,\omega,x) \quad \mapsto \quad F(t,\omega,x) & := x(\cdot) \kappa(\cdot)^\top \kappa(Y_t(\omega)) \in H \\
		[0,T] \times \Omega \times H \ni (t,\omega,x) \quad \mapsto \quad B(t,\omega,x) & := \left( v \mapsto x(\cdot) \kappa(\cdot)^\top v \right) \in L_2(\mathbb{R}^n;H),
	\end{aligned}
\end{equation*}
where $H := L^2(\mathbb{R}^m,\mathcal{L}(\mathbb{R}^m),w)$. Since $X := (X_t)_{t \in [0,T]}$ depends on the randomness of both Brownian motions $W$ and $\widetilde{W}$ (the latter via $Y$ in $F$ and $B$), we have to include both of them into the computation of the Wick polynomials $(\xi_\alpha)_{\alpha \in \mathcal{J}}$. This means that we consider the ($\mathbb{R}^n \times \mathbb{R}^m$)-valued $Q$-Brownian motion $\overline{W} := (W_t, \widetilde{W}_t)_{t \in [0,T]}: [0,T] \times \Omega \rightarrow \mathbb{R}^n \times \mathbb{R}^m$, with $Q := \id_{\mathbb{R}^n \times \mathbb{R}^m} \in L_1(\mathbb{R}^n \times \mathbb{R}^m;\mathbb{R}^n \times \mathbb{R}^m)$, and thus $\lambda_i := 1$ for all $i = 1,...,m+n$, and $\lambda_i := 0$ for all $i \in \mathbb{N} \cap [m+n+1,\infty)$, cf.~\eqref{EqDefBMi}.

For the numerical example, we choose $m = 2$ and $T = 0.5$, the drift function $\mathbb{R}^m \ni y \mapsto \mu(y) := 0.25 \frac{y}{1 + \Vert y \Vert^2} \in \mathbb{R}^m$, and diffusion function $\mathbb{R}^m \ni y \mapsto \sigma(y) := m^{-1/2} \mathbf{1} \in \mathbb{R}^{m \times m}$, with $\mathbf{1} \in \mathbb{R}^{m \times m}$ denoting the matrix having all entries equal to one, and assume that $Y_0 \sim \mathcal{N}_m(0,I_m)$, i.e.~$\mathbb{R}^m \ni u \mapsto \chi_0(u) := \left( 2\pi \right)^{-m/2} \exp\left( -\Vert u \Vert^2/2 \right) \in \mathbb{R}$. Moreover, we consider $m = n$ and $\mathbb{R}^m \ni y \mapsto \kappa(y) := 0.5 y \in \mathbb{R}^n$. In addition, let $(\omega_{m_1})_{m_1=1,...,M_1} \subseteq \Omega$ with $M_1 = 300$, let $(t_{m_2})_{m_2=0,...,M_2} \subseteq [0,T]$ be an equidistant grid with $M_2 = 20$, and let $(u_{m_3})_{m_3=1,...,M_3} \sim \mathcal{N}_m(0,I_m)$ be an i.i.d.~sequence of normally distributed random variables with $M_3 = 150$. After splitting the data into 80\%/20\% for training/testing along $(\omega_{m_1})_{m_1=1,...,M_1}$, we run the algorithm for both the supervised and unsupervised approach, and both with deterministic and random neural networks (using the activation function $\rho(s) := \tanh(s)$, and with $N = 25$ and $N = 75$ neurons, respectively). For the training of deterministic networks, we apply the Adam algorithm (see \cite{kingma15}) over $10^5$ epochs with learning rate $5 \cdot 10^{-4}$ and batchsize $40$, while the random networks are learned with the least squares method. 

Figure~\ref{FigZakai} shows that the truncated Wiener chaos expansion with deterministic and random neural networks can learn the solution of the Zakai equation~\eqref{EqDefZakai} using both Algorithm~\ref{AlgSV}+\ref{AlgUSV}. Hereby, the true solution $X: [0,T] \times \Omega \rightarrow H$ of \eqref{EqDefZakai} is approximated with the Monte-Carlo method in \cite{beck23}.

\begin{figure}[ht]
	\centering
	\begin{minipage}[t][][t]{0.48\textwidth}
		\centering
		\includegraphics[height = 4.8cm, trim={0 8 0 0}, clip]{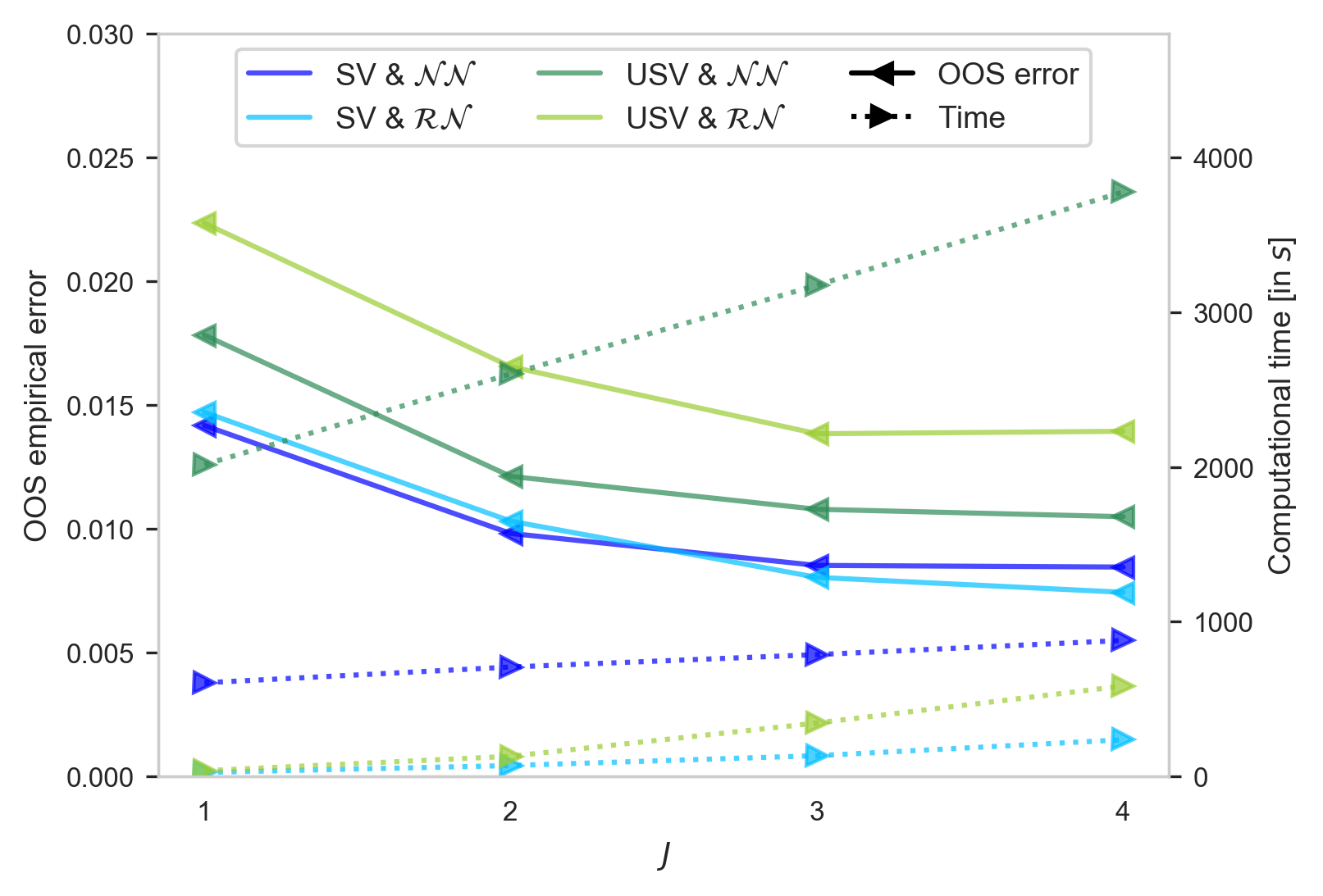}
		\subcaption{Learning performance and computational time}
		\label{FigZakai1}
	\end{minipage}\hfill
	\begin{minipage}[t][][t]{0.48\textwidth}
		\centering
		\includegraphics[height = 4.8cm, trim={34 8 4 0}, clip]{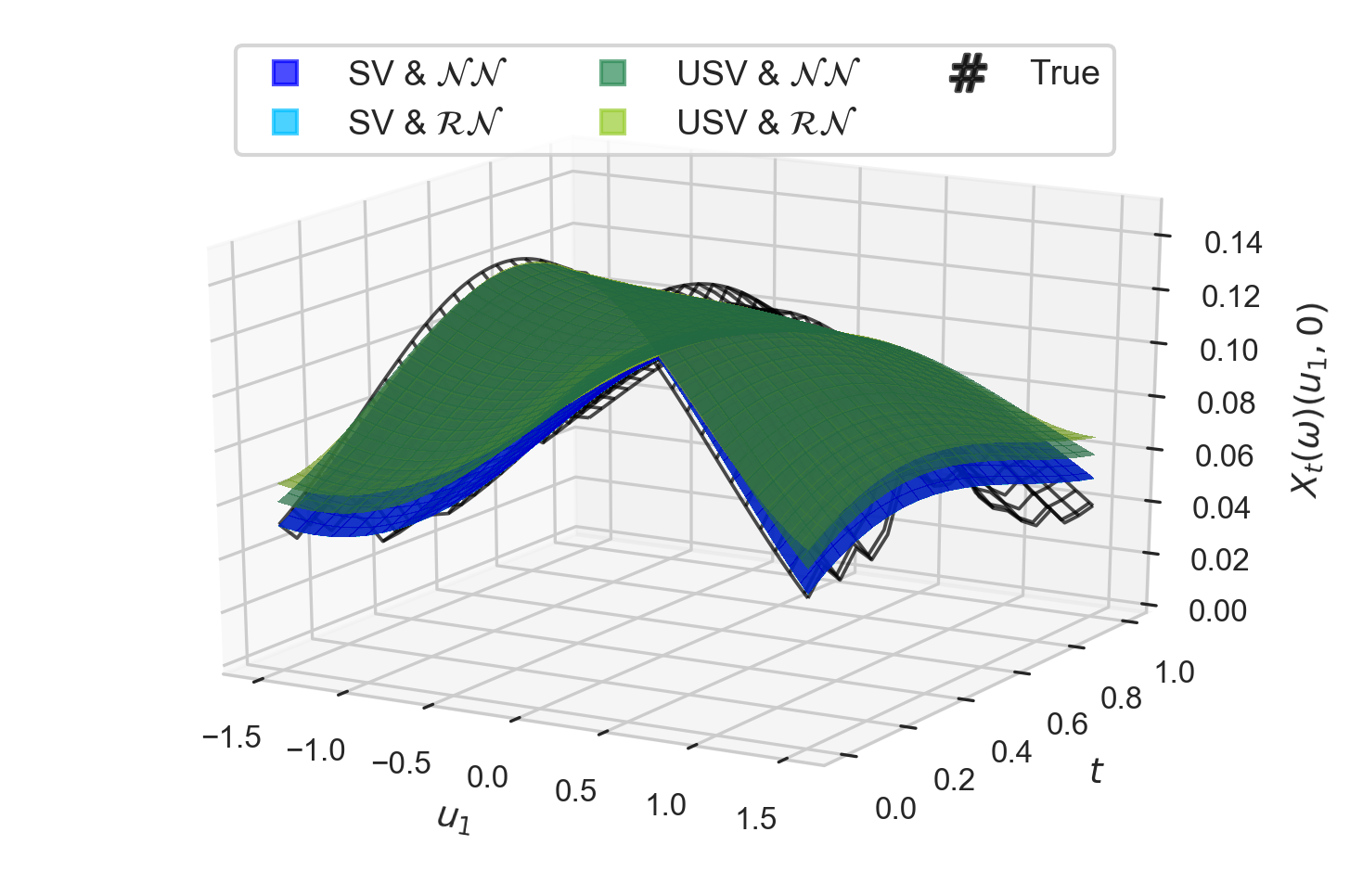}		
		\subcaption{Approximation for $J = 5$}
		\label{FigZakai2}
	\end{minipage}
	\caption{\small Learning the solution $X: [0,T] \times \Omega \rightarrow H$ of the Zakai equation~\eqref{EqDefZakai} with neural networks (label ``$\mathcal{NN}$'') and random neural networks (label ``$\mathcal{RN}$'') in the chaos expansion, either via supervised learning (Algorithm~\ref{AlgSV}; label ``SV'') or unsupervised learning (Algorithm~\ref{AlgUSV}; label ``USV''). In (\subref{FigZakai1}), the learning performance is displayed in terms of the out-of-sample (OOS) empirical error \eqref{EqDefLoss1} together with the computational time for different $J \in \lbrace 1,...,5 \rbrace$. In (\subref{FigZakai2}), the learned solutions $[0,T] \times \mathbb{R} \ni (t,u_1) \mapsto X^{(I,J,K)}_t(\omega)(u_1,0) \in \mathbb{R}$, with $(I,J,K) = (2m,5,1)$, are compared to the true solution $[0,T] \times \mathbb{R} \ni (t,u_1) \mapsto X_t(\omega)(u_1,0) \in \mathbb{R}$ obtained via the Monte-Carlo method in \cite{beck23} for some $\omega \in \Omega$ of the test set.}
	\label{FigZakai}
\end{figure}

\subsection{Conclusion}

Figure~\ref{FigStochHeat}-\ref{FigZakai} empirically demonstrate that deterministic and random neural networks in the truncated Wiener chaos are able to learn the solution of \eqref{EqDefSPDE}. In the supervised learning approach (see Algorithm~\ref{AlgSV}), the solution of \eqref{EqDefSPDE} is provided as a target, whereas the unsupervised learning approach (see Algorithm~\ref{AlgUSV}) recovers the solution by minimizing the (squared) difference of the left- and right-hand side of \eqref{EqDefSPDE}. The latter is naturally more challenging and results in a higher out-of-sample error. Moreover, random neural networks have a slightly lower approximation accuracy than deterministic neural networks but are computationally more efficient (see also \cite[Section~6]{neufeld23} for a comparison).

\section{Proofs}
\label{SecProofs}

\subsection{Proof of auxiliary results in Section~\ref{SecSPDE}}
\label{SecAuxProofsSPDE}

\begin{proof}[Proof of Proposition~\ref{PropSPDE}]
	For any $p \in [1,\infty)$, we first apply \cite[Theorem~7.2~(i)]{daprato14} to conclude that $X: [0,T] \times \Omega \rightarrow H$ exists, is unique up to modifications among the $\mathbb{F}$-predictable processes $\widetilde{X}: [0,T] \times \Omega \rightarrow H$ satisfying $\mathbb{P}\big[ \int_0^T \Vert \widetilde{X}_t \Vert_H^2 dt < \infty \big] = 1$, and admits a continuous modification.
	
	Now, for $p \in (2,\infty)$, we use \cite[Theorem~7.2~(iii)]{daprato14} (with constant $\widetilde{C}^{(p)}_{F,B,S,T} > 0$ depending only on $p \in (2,\infty)$, $C_{F,B} > 0$, $C_S := \sup_{t \in [0,T]} \Vert S_t \Vert_{L(H;H)} < \infty$, and $T > 0$) and that the initial condition $\chi_0 \in H$ is deterministic (see Assumption~\ref{AssSPDE}~\ref{AssSPDE5}) implying that $X_0 \in L^p(\Omega,\mathcal{F},\mathbb{P};H)$ to obtain that
	\begin{equation*}
		\begin{aligned}
			\mathbb{E}\left[ \sup_{t \in [0,T]} \Vert X_t \Vert_H^p \right] & \leq \widetilde{C}^{(p)}_{F,B,S,T} \left( 1 + \mathbb{E}\left[ \Vert X_0 \Vert_H^p \right] \right) \\
			& = \widetilde{C}^{(p)}_{F,B,S,T} \left( 1 + \Vert \chi_0 \Vert_H^p \right).
		\end{aligned}
	\end{equation*}
	On the other hand, for $p \in [1,2]$, we use Jensen's inequality, \cite[Theorem~7.2~(iii)]{daprato14} (with exponent $3$ and constant $C^{(3)}_{F,B,S,T} > 0$ depending only on $C_{F,B} > 0$, $C_S := \sup_{t \in [0,T]} \Vert S_t \Vert_{L(H;H)} < \infty$, and $T > 0$), that the initial condition $\chi_0 \in H$ is deterministic (see Assumption~\ref{AssSPDE}~\ref{AssSPDE5}) implying that $X_0 \in L^3(\Omega,\mathcal{F},\mathbb{P};H)$, and the inequality $(x+y)^{p/3} \leq \left( x^{p/3} + y^{p/3} \right)$ for any $x,y \geq 0$ to conclude that
	\begin{equation}
		\label{EqPropSPDEProof1}
		\begin{aligned}
			\mathbb{E}\left[ \sup_{t \in [0,T]} \Vert X_t \Vert_H^p \right] & \leq \mathbb{E}\left[ \sup_{t \in [0,T]} \Vert X_t \Vert_H^3 \right]^\frac{p}{3} \\
			& \leq \left( \widetilde{C}^{(3)}_{F,B,S,T} \left( 1 + \mathbb{E}\left[ \Vert X_0 \Vert_H^3 \right] \right) \right)^\frac{p}{3} \\
			& = \left( \widetilde{C}^{(3)}_{F,B,S,T} \right)^\frac{p}{3} \left( 1 + \Vert \chi_0 \Vert_H^3 \right)^\frac{p}{3} \\
			& \leq \left( \widetilde{C}^{(3)}_{F,B,S,T} \right)^\frac{p}{3} \left( 1 + \Vert \chi_0 \Vert_H^p \right).
		\end{aligned}
	\end{equation}
	Hence, defining the constant $C^{(p)}_{F,B,S,T} > 0$ by $C^{(p)}_{F,B,S,T} := \widetilde{C}^{(p)}_{F,B,S,T}$ if $p \in (2,\infty)$, and by $C^{(p)}_{F,B,S,T} := \big( \widetilde{C}^{(3)}_{F,B,S,T} \big)^{p/3}$ if $p \in [1,2]$, we obtain the result.
\end{proof}

\begin{lemma}
	\label{LemmaC0HSep}
	Let $(H,\langle \cdot, \cdot \rangle_H)$ be separable. Then, $(C^0([0,T];H),\Vert \cdot \Vert_{C^0([0,T];H)})$ is separable.
\end{lemma}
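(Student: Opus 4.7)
The plan is to construct an explicit countable dense subset of $C^0([0,T];H)$ via piecewise linear interpolations whose breakpoints are rational times and whose values at these breakpoints come from a countable dense subset of $H$.

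First, since $(H,\Vert \cdot \Vert_H)$ is separable, fix a countable dense subset $D := \lbrace h_n : n \in \mathbb{N} \rbrace \subseteq H$. Let $\mathcal{D} \subseteq C^0([0,T];H)$ denote the collection of all functions $g: [0,T] \rightarrow H$ for which there exist $N \in \mathbb{N}$, rational breakpoints $0 = t_0 < t_1 < \cdots < t_N = T$ in $\mathbb{Q} \cap [0,T]$, and indices $i_0, \ldots, i_N \in \mathbb{N}$ such that, on each subinterval $[t_{k-1}, t_k]$, the function $g$ equals the affine interpolation
\begin{equation*}
    g(t) = \frac{t_k - t}{t_k - t_{k-1}} h_{i_{k-1}} + \frac{t - t_{k-1}}{t_k - t_{k-1}} h_{i_k}, \quad\quad t \in [t_{k-1}, t_k].
\end{equation*}
Each such function is continuous (as a concatenation of affine pieces matching at the nodes), and the collection $\mathcal{D}$ is countable as a countable union over $N \in \mathbb{N}$ of countable products of countable index sets.

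Next, I would show that $\mathcal{D}$ is dense in $(C^0([0,T];H),\Vert \cdot \Vert_{C^0([0,T];H)})$. Fix $f \in C^0([0,T];H)$ and $\varepsilon > 0$. Since $[0,T]$ is compact, $f$ is uniformly continuous with respect to $\Vert \cdot \Vert_H$, so there exists $\delta > 0$ such that $\Vert f(s) - f(t) \Vert_H < \varepsilon/3$ whenever $\vert s - t \vert < \delta$. Choose a rational partition $0 = t_0 < t_1 < \cdots < t_N = T$ with mesh less than $\delta$. By density of $D$ in $H$, pick $h_{i_k} \in D$ with $\Vert f(t_k) - h_{i_k} \Vert_H < \varepsilon/3$ for each $k = 0, \ldots, N$, and let $g \in \mathcal{D}$ be the associated piecewise linear interpolation. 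On each $[t_{k-1}, t_k]$, the convexity of the norm and the triangle inequality give, for $t \in [t_{k-1}, t_k]$,
\begin{equation*}
    \Vert f(t) - g(t) \Vert_H \leq \Vert f(t) - f(t_{k-1}) \Vert_H + \Vert f(t_{k-1}) - h_{i_{k-1}} \Vert_H + \Vert h_{i_{k-1}} - g(t) \Vert_H,
\end{equation*}
and a parallel bound holds when the endpoint $t_k$ is used. Controlling the last term by $\max(\Vert h_{i_{k-1}} - f(t_{k-1}) \Vert_H + \Vert f(t_{k-1}) - f(t_k) \Vert_H + \Vert f(t_k) - h_{i_k} \Vert_H, \ldots)$ and optimizing with a straightforward bookkeeping yields $\Vert f(t) - g(t) \Vert_H < \varepsilon$ uniformly in $t \in [0,T]$, hence $\Vert f - g \Vert_{C^0([0,T];H)} < \varepsilon$.

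The two mildly delicate points in this plan are (a) verifying the uniform bound above carefully on each subinterval, since one must combine a uniform-continuity estimate with a discrete-approximation estimate that passes through the affine interpolant, and (b) checking that $\mathcal{D}$ indeed lies in $C^0([0,T];H)$, which amounts to the trivial observation that adjacent affine pieces agree at the shared endpoint. Neither is a genuine obstacle; the estimate in (a) is the one calculation worth writing out in detail.
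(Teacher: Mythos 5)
Your argument is correct, but it takes a genuinely different route from the paper. The paper does not interpolate: it forms the countable set $\Pol_\mathbb{Q}([0,T];H)$ of finite sums $\sum_n q_n(t) x_n$ with rational-coefficient polynomials $q_n$ and $x_n$ from a countable dense subset of $H$, and then invokes the vector-valued Stone--Weierstrass theorem (Buck) to get density in $C^0([0,T];H)$. Your construction via piecewise affine interpolants with rational interior breakpoints and node values in a countable dense subset of $H$ is the classical elementary proof: it needs only uniform continuity of $f$ on the compact interval and the triangle inequality, and the key estimate is cleanest exactly the way you hint at it --- since $g(t)$ is a convex combination of $h_{i_{k-1}}$ and $h_{i_k}$, convexity of the norm gives $\Vert f(t)-g(t)\Vert_H \le \max_{j \in \{k-1,k\}} \big( \Vert f(t)-f(t_j)\Vert_H + \Vert f(t_j)-h_{i_j}\Vert_H \big) < 2\varepsilon/3$, so no further bookkeeping is needed. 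What each approach buys: yours is self-contained and requires no external approximation theorem; the paper's is shorter on the page but outsources the work to a less standard vector-valued Stone--Weierstrass result and has to verify the submodule/point-separating/nowhere-vanishing hypotheses. One cosmetic fix in your write-up: if $T \notin \mathbb{Q}$ you cannot demand $t_N = T \in \mathbb{Q}$; simply take the interior breakpoints rational and always include the two fixed endpoints $0$ and $T$, which keeps the family countable and changes nothing else.
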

\begin{proof}
	By using that $(H,\langle \cdot, \cdot \rangle_H)$ is separable, there exists a sequence $(x_n)_{n \in \mathbb{N}} \subseteq H$ that is dense in $H$. From this, we define the countable set
	\begin{equation*}
		\Pol_\mathbb{Q}([0,T];H) := \left\lbrace [0,T] \ni t \mapsto \sum_{n=1}^N q_n(t) x_n \in H: \,
		\begin{matrix}
			N \in \mathbb{N}, \, q_1,...,q_N \in \Pol_\mathbb{Q}([0,T]) \\
			x_1,...,x_N \in H
		\end{matrix}
		\right\rbrace,
	\end{equation*}
	where $\Pol_\mathbb{Q}([0,T]) := \big\lbrace [0,T] \ni t \mapsto \sum_{k=0}^K c_k t^k \in \mathbb{R}: K \in \mathbb{N}, \, c_0,...,c_K \in \mathbb{Q} \big\rbrace$. Then, $\Pol_\mathbb{Q}([0,T];H)$ is a $\Pol_\mathbb{Q}([0,T])$-submodule (i.e.~for every $q \in \Pol_\mathbb{Q}([0,T])$ and $w \in \Pol_\mathbb{Q}([0,T];H)$ it holds that $q \cdot w \in \Pol_\mathbb{Q}([0,T];H)$). Moreover, $\Pol_\mathbb{Q}([0,T])$ is a subalgebra (i.e.~for every $q_1,q_2 \in \Pol_\mathbb{Q}([0,T])$ it holds that $q_1 + q_2 \in \Pol_\mathbb{Q}([0,T])$ and $q_1 \cdot q_2 \in \Pol_\mathbb{Q}([0,T])$) which is point separating (i.e.~for every distinct $t_1, t_2 \in [0,T]$ there exists some $q \in \Pol_\mathbb{Q}([0,T])$ with $q(t_1) \neq q(t_2)$) and nowhere vanishing (i.e.~for every $t_0 \in [0,T]$ there exists some $q \in \Pol_\mathbb{Q}([0,T])$ with $q(t_0) \neq 0$). In addition, for every $t \in [0,T]$, the set $\lbrace w(t): w \in \Pol_\mathbb{Q}([0,T];H) \rbrace \supseteq \lbrace x_n: n \in \mathbb{N} \rbrace$ is dense in $H$. Then, we can apply the vector-valued Stone-Weierstrass theorem in \cite[p.~103]{buck58} to conclude that $\Pol_\mathbb{Q}([0,T];H)$ is dense in $C^0([0,T];H)$. Since $\Pol_\mathbb{Q}([0,T];H)$ is countable, $(C^0([0,T];H),\Vert \cdot \Vert_{C^0([0,T];H)})$ is separable.
\end{proof}

\begin{proof}[Proof of Lemma~\ref{LemmaBMFourier}]
	For \ref{LemmaBMFourier1}, we fix some $j \in \mathbb{N}$ and $t \in [0,T]$. Then, by using the definition of $\xi_{i,j}$ in \eqref{EqDefXi}, that $\int_0^t g_j(s) ds = \langle \mathds{1}_{[0,t]}, g_j \rangle_{L^2([0,T],\mathcal{B}([0,T]),dt)}$, Ito's isometry, and that $(g_j)_{i \in \mathbb{N}}$ is a complete orthonormal basis of $(L^2([0,T],\mathcal{B}([0,T]),dt),\langle\cdot,\cdot\rangle_{L^2([0,T],\mathcal{B}([0,T]),dt)})$, it follows that
	\begin{equation*}
		\begin{aligned}
			& \left\Vert W^{(i)}_t - \sum_{j=1}^J \xi_{i,j} \int_0^t g_j(s) ds \right\Vert_{L^2(\Omega,\mathcal{F},\mathbb{P})} \\
			& \quad\quad = \mathbb{E}\left[ \left\vert W^{(i)}_t - \sum_{j=1}^J \xi_{i,j} \int_0^t g_j(s) ds \right\vert^2 \right]^\frac{1}{2} \\
			& \quad\quad = \mathbb{E}\left[ \left\vert \int_0^T \mathds{1}_{[0,t]}(s) dW^{(i)}_s - \sum_{j=1}^J \left( \int_0^T g_j(s) dW^{(i)}_s \right) \langle \mathds{1}_{[0,t]}, g_j \rangle_{L^2([0,T],\mathcal{B}([0,T]),dt)} \right\vert^2 \right]^\frac{1}{2} \\
			& \quad\quad = \mathbb{E}\left[ \left\vert \int_0^T \left( \mathds{1}_{[0,t]}(s) - \sum_{j=1}^J \langle \mathds{1}_{[0,t]}, g_j \rangle_{L^2([0,T],\mathcal{B}([0,T]),dt)} g_j(s) \right) dW^{(i)}_s \right\vert^2 \right]^\frac{1}{2} \\
			& \quad\quad = \left( \int_0^T \left( \mathds{1}_{[0,t]}(s) - \sum_{j=1}^J \langle \mathds{1}_{[0,t]}, g_j \rangle_{L^2([0,T],\mathcal{B}([0,T]),dt)} g_j(s) \right)^2 ds \right)^\frac{1}{2} \\
			& \quad\quad = \left\Vert \mathds{1}_{[0,t]} - \sum_{j=1}^J \langle \mathds{1}_{[0,t]}, g_j \rangle_{L^2([0,T],\mathcal{B}([0,T]),dt)} g_j \right\Vert_{L^2([0,T],\mathcal{B}([0,T]),dt)} \quad \overset{J \rightarrow \infty}{\longrightarrow} \quad 0.
		\end{aligned}
	\end{equation*}
	For \ref{LemmaBMFourier2}, we use Minkowski's inequality, that $(e_i)_{i \in \mathbb{N}}$ is an orthonormal basis of $(Z,\langle \cdot, \cdot \rangle_Z)$, that \mbox{$\sum_{i=1}^\infty \lambda_i < \infty$}, that $W^{(i)}_t - \sum_{j=1}^J \xi_{i,j} \int_0^t g_j(s) ds \sim W^{(1)}_t - \sum_{j=1}^J \xi_{1,j} \int_0^t g_j(s) ds$ are identically distributed for any $i \in \mathbb{N}$, Lemma~\ref{LemmaBM}, and \ref{LemmaBMFourier1} to conclude that
	\begin{equation*}
		\begin{aligned}
			& \left\Vert W_t - \sum_{i=1}^I \sum_{j=1}^J \sqrt{\lambda_i} \xi_{i,j} \left( \int_0^t g_j(s) ds \right) e_i \right\Vert_{L^2(\Omega,\mathcal{F},\mathbb{P};Z)} \\
			& \quad\quad = \mathbb{E}\left[ \left\Vert W_t - \sum_{i=1}^I \sum_{j=1}^J \sqrt{\lambda_i} \xi_{i,j} \left( \int_0^t g_j(s) ds \right) e_i \right\Vert_Z^2 \right]^\frac{1}{2} \\
			& \quad\quad \leq \mathbb{E}\left[ \left\Vert W_t - \sum_{i=1}^I \sqrt{\lambda_i} W^{(i)}_t e_i \right\Vert_Z^2 \right]^\frac{1}{2} + \mathbb{E}\left[ \left\Vert \sum_{i=1}^I \sqrt{\lambda_i} W^{(i)}_t e_i - \sum_{i=1}^I \sum_{j=1}^J \sqrt{\lambda_i} \xi_{i,j} \left( \int_0^t g_j(s) ds \right) e_i \right\Vert_Z^2 \right]^\frac{1}{2} \\
			& \quad\quad = \mathbb{E}\left[ \left\Vert W_t - \sum_{i=1}^I \sqrt{\lambda_i} W^{(i)}_t e_i \right\Vert_Z^2 \right]^\frac{1}{2} + \sum_{i=1}^I \lambda_i \mathbb{E}\left[ \left\vert W^{(i)}_t - \sum_{j=1}^J \xi_{i,j} \int_0^t g_j(s) ds \right\vert^2 \right]^\frac{1}{2} \\
			& \quad\quad \leq \mathbb{E}\left[ \left\Vert W_t - \sum_{i=1}^I \sqrt{\lambda_i} W^{(i)}_t e_i \right\Vert_Z^2 \right]^\frac{1}{2} + \left( \sum_{i=1}^\infty \lambda_i \right) \mathbb{E}\left[ \left\vert W^{(1)}_t - \sum_{j=1}^J \xi_{1,j} \int_0^t g_j(s) ds \right\vert^2 \right]^\frac{1}{2} \\
			& \quad\quad\quad \overset{I,J \rightarrow \infty}{\longrightarrow} \quad 0,
		\end{aligned}
	\end{equation*}
	which completes the proof.
\end{proof}

\begin{proof}[Proof of Lemma~\ref{LemmaWick}]
	Fix some $p \in [1,\infty)$. Moreover, we define the map
	\begin{equation*}
		L^2([0,T];L_2(Z_0;\mathbb{R})) \ni \psi \quad \mapsto \quad \mathscr{W}(\psi) := \int_0^T \psi(t) dW_t \in L^2(\Omega,\mathcal{F}_T,\mathbb{P})
	\end{equation*}
	returning the stochastic integral of a deterministic function $\psi: [0,T] \rightarrow L_2(Z_0;\mathbb{R})$ at terminal time $T > 0$. Then, by using Ito's isometry in \cite[Proposition~4.28]{daprato14} (with\footnote{The \emph{trace} of a nuclear operator $S \in L_1(Z;Z)$ is defined as $\trace(S) := \sum_{i=1}^\infty \langle S e_i, e_i \rangle_Z$.} $\trace\big( \big( \Xi_1 Q^{1/2} \big) \big( \Xi_2 Q^{1/2} \big)^* \big) = \langle \Xi_1, \Xi_2 \rangle_{L_2(Z_0;H)}$ for any $\Xi_1,\Xi_2 \in L_2(Z_0;H)$, see \cite[Appendix~C]{daprato14}), it follows for every $\psi,\phi \in L^2([0,T];L_2(Z_0;\mathbb{R}))$ that
	\begin{equation*}
		\mathbb{E}[\mathscr{W}(\psi)] = \mathbb{E}\left[ \int_0^T \psi(t) dW_t \right] = 0
	\end{equation*}
	and that
	\begin{equation*}
		\begin{aligned}
			\mathbb{E}[\mathscr{W}(\psi) \mathscr{W}(\phi)] & = \mathbb{E}\left[ \left( \int_0^T \psi(t) dW_t \right) \left( \int_0^T \phi(t) dW_t \right) \right] \\
			& = \mathbb{E}\left[ \int_0^T \langle \psi(t), \phi(t) \rangle_{L_2(Z_0;\mathbb{R})} dt \right] \\
			& = \langle \psi, \phi \rangle_{L^2([0,T];L_2(Z_0;\mathbb{R}))},
		\end{aligned}
	\end{equation*}
	which shows that $\mathscr{W}$ is an isonormal process in the sense of \cite[Definition~1.1.1]{nualart06}. Hence, by using \cite[Exercise~1.1.7]{nualart06} (see also \cite[Theorem~2.2.4~(i)]{nourdin12}), we conclude that 
	\begin{equation}
		\label{EqLemmaWickProof1}
		\left\lbrace q\left( \mathscr{W}(\psi_1),...,\mathscr{W}(\psi_n) \right): n \in \mathbb{N}, \, q \in \Pol(\mathbb{R}^n), \, \psi_1,...,\psi_n \in L^2([0,T];L_2(Z_0;\mathbb{R})) \right\rbrace
	\end{equation}
	is dense in $L^p(\Omega,\mathcal{F}_T,\mathbb{P})$, where $\Pol(\mathbb{R}^n)$ consists of polynomials of the form $\mathbb{R}^n \ni x := (x_1,...,x_n)^\top \mapsto \sum_{\mathbf{k} \in \mathbb{N}^n_{0,K}} a_\mathbf{k} \prod_{l=1}^n x_l^{k_l} \in \mathbb{R}$ for some $N \in \mathbb{N}$ and $(a_\mathbf{k})_{\mathbf{k} \in \mathbb{N}^n_{0,K}} \subseteq \mathbb{R}$, where $\mathbf{k} := (k_1,...,k_n) \in \mathbb{N}^n_{0,K}$. 
	
	Now, we show that $\linspan\lbrace \xi_\alpha: \alpha \in \mathcal{J} \rbrace$ is also dense in $L^p(\Omega,\mathcal{F}_T,\mathbb{P})$. Let us define for every $i,j \in \mathbb{N}$ the deterministic function $\big( t \mapsto \phi_{i,j}(t) := \left( z \mapsto g_j(t) \langle \widetilde{e}_i, z \rangle_{Z_0} \right) \big) \in L^2([0,T];L_2(Z_0;\mathbb{R}))$ satisfying $\xi_{i,j} = \int_0^T g_j(t) dW^{(i)}_t = \lambda_i^{-1/2} \int_0^T g_j(t) \langle e_i, dW_t \rangle_Z = \int_0^T \phi_{i,j}(t) dW_t$, where $(\widetilde{e}_i)_{i \in \mathbb{N}} := \big( Q^{1/2} e_i \big)_{i \in \mathbb{N}}$ is an orthonormal basis of $(Z_0,\langle \cdot,\cdot \rangle_{Z_0})$. Since $\xi_\alpha = \frac{1}{\sqrt{\alpha!}} \prod_{i,j=1}^\infty h_{\alpha_{i,j}}(\xi_{i,j}) = \frac{1}{\sqrt{\alpha!}} \prod_{i,j=1}^\infty h_{\alpha_{i,j}}(\mathscr{W}(\phi_{i,j}))$ for any $\alpha \in \mathcal{J}$ and $\linspan\left\lbrace \mathbb{R}^n \ni (s_1,...,s_n) \mapsto \prod_{i=1}^n h_{k_i}(s_i) \in \mathbb{R}: k_1,...,k_n \in \mathbb{N}_0 \right\rbrace = \Pol(\mathbb{R}^n)$ for any $n \in \mathbb{N}$, it therefore suffices to show that
	\begin{equation}
		\label{EqLemmaWickProof2}
		\left\lbrace q\left( \mathscr{W}(\phi_{i_1,j_1}),...,\mathscr{W}(\phi_{i_n,j_n}) \right): n \in \mathbb{N}, \, q \in \Pol(\mathbb{R}^n), \, (i_1,j_1),...,(i_n,j_n) \in \mathbb{N}^2 \right\rbrace
	\end{equation}
	is dense in $L^p(\Omega,\mathcal{F}_T,\mathbb{P})$. To this end, we fix some $Z \in L^p(\Omega,\mathcal{F}_T,\mathbb{P})$ and $\varepsilon \in (0,1)$. Then, by using that \eqref{EqLemmaWickProof1} is dense in $L^p(\Omega,\mathcal{F}_T,\mathbb{P})$ there exists some $n \in \mathbb{N}$, $q \in \Pol(\mathbb{R}^n)$, and $\psi_1,...,\psi_n \in L^2([0,T];L_2(Z_0;\mathbb{R}))$ such that
	\begin{equation}
		\label{EqLemmaWickProof3}
		\mathbb{E}\left[ \vert Z - q\left( \mathscr{W}(\psi_1),...,\mathscr{W}(\psi_n) \right) \vert^p \right]^\frac{1}{p} < \frac{\varepsilon}{2}.
	\end{equation}
	Moreover, for every fixed $l = 1,...,n$, we use that the system $(\phi_{i,j})_{i,j \in \mathbb{N}}$ forms an orthonormal basis of $(L^2([0,T];L_2(Z_0;\mathbb{R})),\langle\cdot,\cdot\rangle_{L^2([0,T];L_2(Z_0;\mathbb{R}))})$ to conclude that there exist some $I_l \in \mathbb{N}$ and $J_l \in \mathbb{N}$ such that $\phi_l := \sum_{i=1}^{I_l} \sum_{j=1}^{J_l} c_{l,i,j} \phi_{i,j} \in \linspan\left\lbrace \phi_{i,j}: i,j \in \mathbb{N} \right\rbrace$ (with $c_{l,i,j} := \langle \psi_l, \phi_{i,j} \rangle_{L^2([0,T];L_2(Z_0;\mathbb{R}))}$) satisfies
	\begin{equation}
		\label{EqLemmaWickProof6}
		\left\Vert \psi_l - \phi_l \right\Vert_{L^2([0,T];L_2(Z_0;\mathbb{R}))} < (2C_\psi)^{-1} C_{N(n+1)p}^{-\frac{1}{(n+1)p}} \varepsilon,
	\end{equation}
	where $C_\psi := 1 + \sum_{\beta \in \mathbb{N}^n_{0,N}} \mathbb{E}\left[ \left\vert \partial_\beta q\left( \mathscr{W}(\psi_1),...,\mathscr{W}(\psi_n) \right) \right\vert^{(n+1)p} \right]^{1/((n+1)p)} \geq 1$, and where $C_{N(n+1)p} > 0$ is the constant in \cite[Theorem~4.36]{daprato14} (with exponent $N(n+1)p \geq 2$). Hence, for every $k = 1,...,N$, we use \cite[Theorem~4.36]{daprato14} (with exponent $N(n+1)p \geq 2$ and constant $C_{k(n+1)p} \geq 1$, which is increasing in $k \in \mathbb{N}$), the inequality \eqref{EqLemmaWickProof6}, that $2C_\psi \geq 1$, that $C_{k(n+1)p} \geq 1$, and that $\varepsilon \in (0,1)$ to conclude that
	\begin{equation}
		\label{EqLemmaWickProof5}
		\begin{aligned}
			\mathbb{E}\left[ \left\vert \mathscr{W}(\phi_l-\psi_l) \right\vert^{k(n+1)p} \right]^\frac{1}{(n+1)p} & = \mathbb{E}\left[ \left\vert \int_0^T \left( \phi_l(t)-\psi_l(t) \right) dW_t \right\vert^{k(n+1)p} \right]^\frac{1}{(n+1)p} \\
			& \leq C_{k(n+1)p}^\frac{1}{(n+1)p} \left( \mathbb{E}\left[ \int_0^T \Vert \phi_l(t)-\psi_l(t) \Vert_{L_2(Z_0;\mathbb{R})}^2 dt \right] \right)^\frac{k}{2} \\
			& \leq C_{N(n+1)p}^\frac{1}{(n+1)p} \left\Vert \psi_l - \phi_l \right\Vert_{L^2([0,T];L_2(Z_0;\mathbb{R}))}^k \\
			& < C_{N(n+1)p}^\frac{1}{(n+1)p} \left( (2C_\psi)^{-1} C_{N(n+1)p}^{-\frac{1}{(n+1)p}} \varepsilon \right)^k \\
			& \leq (2C_\psi)^{-1} \varepsilon.
		\end{aligned}
	\end{equation}
	Thus, by using Minkowski's inequality, the inequality \eqref{EqLemmaWickProof3}, Taylor's theorem, i.e.~that $q(y) - q(x) = \sum_{\beta \in \mathbb{N}^n_{0,N}, \, \vert\beta\vert \geq 1} \frac{\partial_\beta q(x)}{\beta!} \prod_{l=1}^N (y_l-x_l)^{\beta_l}$ for any $x := (x_1,...,x_n)^\top, y := (y_1,...,y_n)^\top \in \mathbb{R}^n$, together with Minkowski's inequality, the generalized H\"older's inequality (with exponents $\frac{1}{(n+1)p} + \sum_{l=1}^n \frac{1}{(n+1)p} = \frac{1}{p}$), the inequality \eqref{EqLemmaWickProof5}, that $2C_\psi \geq 1$, and that $\varepsilon \in (0,1)$, it follows for $q\left( \mathscr{W}(\phi_1),...,\mathscr{W}(\phi_n) \right)$ belonging to \eqref{EqLemmaWickProof2} (as $\mathscr{W}(\phi_l) = \sum_{i=1}^{I_l} \sum_{j=1}^{J_l} c_{l,i,j} \mathscr{W}(\phi_{i,j})$ for any $l = 1,...,n$) that
	\begin{equation*}
		\begin{aligned}
			& \mathbb{E}\left[ \vert Z - q\left( \mathscr{W}(\phi_1),...,\mathscr{W}(\phi_n) \right) \vert^p \right]^\frac{1}{p} \\
			& \quad\quad \leq \mathbb{E}\left[ \vert Z - q\left( \mathscr{W}(\psi_1),...,\mathscr{W}(\psi_n) \right) \vert^p \right]^\frac{1}{p} + \mathbb{E}\left[ \vert q\left( \mathscr{W}(\psi_1),...,\mathscr{W}(\psi_n) \right) - q\left( \mathscr{W}(\phi_1),...,\mathscr{W}(\phi_n) \right) \vert^p \right]^\frac{1}{p} \\
			& \quad\quad < \mathbb{E}\left[ \vert Z - q\left( \mathscr{W}(\psi_1),...,\mathscr{W}(\psi_n) \right) \vert^p \right]^\frac{1}{p} \\
			& \quad\quad\quad\quad + \sum_{\beta \in \mathbb{N}^n_{0,N} \atop \vert\beta\vert \geq 1} \mathbb{E}\left[ \left\vert \frac{\partial_\beta q\left( \mathscr{W}(\psi_1),...,\mathscr{W}(\psi_n) \right)}{\beta!} \prod_{l=1}^n (\mathscr{W}(\phi_l)-\mathscr{W}(\psi_l))^{\beta_l} \right\vert^p \right]^\frac{1}{p} \\
			& \quad\quad < \mathbb{E}\left[ \vert Z - q\left( \mathscr{W}(\psi_1),...,\mathscr{W}(\psi_n) \right) \vert^p \right]^\frac{1}{p} \\
			& \quad\quad\quad\quad + \sum_{\beta \in \mathbb{N}^n_{0,N} \atop \vert\beta\vert \geq 1} \frac{1}{\beta!} \mathbb{E}\left[ \left\vert \partial_\beta q\left( \mathscr{W}(\psi_1),...,\mathscr{W}(\psi_n) \right) \right\vert^{(n+1)p} \right]^\frac{1}{(n+1)p} \prod_{l=1}^n \mathbb{E}\left[ \left\vert \mathscr{W}(\phi_l-\psi_l) \right\vert^{\beta_l (n+1) p} \right]^\frac{1}{(n+1)p} \\
			& \quad\quad < \mathbb{E}\left[ \vert Z - q\left( \mathscr{W}(\psi_1),...,\mathscr{W}(\psi_n) \right) \vert^p \right]^\frac{1}{p} + C_\psi \max_{\beta \in \mathbb{N}^n_{0,N} \atop \vert\beta\vert \geq 1} \prod_{l=1}^n \left( (2C_\psi)^{-1} \varepsilon \right)^{\beta_l} \\
			& \quad\quad \leq \frac{\varepsilon}{2} + C_\psi (2 C_\psi)^{-1} \varepsilon = \varepsilon.
		\end{aligned}
	\end{equation*}
	Since $Z \in L^p(\Omega,\mathcal{F}_T,\mathbb{P})$ and $\varepsilon > 0$ were chosen arbitrarily, this shows that \eqref{EqLemmaWickProof2} is dense in $L^p(\Omega,\mathcal{F}_T,\mathbb{P})$ and therefore that $\linspan\lbrace \xi_\alpha: \alpha \in \mathcal{J} \rbrace$ is also dense in $L^p(\Omega,\mathcal{F}_T,\mathbb{P})$. Finally, for $p = 2$, the Wick polynomials $(\xi_\alpha)_{\alpha \in \mathcal{J}}$ are by \cite[Proposition~1.1.1]{nualart06} a complete orthonormal basis of $L^2(\Omega,\mathcal{F}_T,\mathbb{P})$.
\end{proof}

\subsection{Proof of Theorem~\ref{ThmCameronMartin}}
\label{SecProofsCameronMartin}

\begin{proof}[Proof of Theorem~\ref{ThmCameronMartin}]
	For \eqref{EqThmCameronMartin1}, we fix some $\varepsilon > 0$. Then, by using the continuous modification, Proposition~\ref{PropSPDE}, and that the Brownian motion $W: [0,T] \times \Omega \rightarrow Z$ is the only random force driving \eqref{EqDefSPDE}, we observe that $X \in L^p(\Omega,\mathcal{F}_T,\mathbb{P};C^0([0,T];H))$. Moreover, by combining \cite[Lemma~1.2.19~(i)]{hytoenen16} with Lemma~\ref{LemmaWick}, the linear span of $\left\lbrace \Omega \ni \omega \mapsto \xi_\alpha(\omega) z \in C^0([0,T];H): \alpha \in \mathcal{J}, \, z \in C^0([0,T];H) \right\rbrace$ is dense in $L^p(\Omega,\mathcal{F}_T,\mathbb{P};C^0([0,T];H))$. Hence, there exist some $I,J,K \in \mathbb{N}$ and $(x_\alpha)_{\alpha \in \mathcal{J}_{I,J,K}} \subseteq C^0([0,T];H)$ such that
	\begin{equation*}
		\begin{aligned}
			\mathbb{E}\left[ \sup_{t \in [0,T]} \left\Vert X_t - \sum_{\alpha \in \mathcal{J}_{I,J,K}} x_\alpha(t) \xi_\alpha \right\Vert^p \right]^\frac{1}{p} = \mathbb{E}\left[ \left\Vert X - \sum_{\alpha \in \mathcal{J}_{I,J,K}} x_\alpha \xi_\alpha \right\Vert_{C^0([0,T];H)}^p \right]^\frac{1}{p} < \varepsilon.
		\end{aligned}
	\end{equation*}
	which shows the conclusion in \eqref{EqThmCameronMartin1}.
	
	For \eqref{EqThmCameronMartin2}, we fix an orthonormal basis $(y_n)_{n \in \mathbb{N}}$ of $(H,\langle \cdot, \cdot \rangle_H)$. Then, we claim that $(\xi_\alpha y_n)_{(\alpha,n) \in \mathcal{J} \times \mathbb{N}}$ is an orthonormal basis of the Bochner space $L^2(\Omega,\mathcal{F}_T,\mathbb{P};H)$, where $\langle F, G \rangle_{L^2(\Omega,\mathcal{F},\mathbb{P};H)} := \mathbb{E}[\langle F,G \rangle_H]$ is the inner product. Indeed, by using that $(\xi_\alpha)_{\alpha \in \mathcal{J}}$ are by Lemma~\ref{LemmaWick} orthonormal in $L^2(\Omega,\mathcal{F}_T,\mathbb{P})$ and that $(y_n)_{n \in \mathbb{N}}$ are by assumption orthonormal in $H$, it follows for every $\alpha,\beta \in \mathcal{J}$ and $l,m \in \mathbb{N}$ that
	\begin{equation*}
		\langle \xi_\alpha y_n, \xi_\beta y_m \rangle_{L^2(\Omega,\mathcal{F},\mathbb{P};H)} = \mathbb{E}\left[ \langle \xi_\alpha y_n , \xi_\beta y_m \rangle_H \right] = \mathbb{E}[\xi_\alpha \xi_\beta] \langle y_n, y_m \rangle_H = \delta_{\alpha,\beta} \delta_{\beta,m} = \delta_{(\alpha,n),(\beta,m)}.
	\end{equation*}
	Moreover, by combining \cite[Lemma~1.2.19~(i)]{hytoenen16} with Lemma~\ref{LemmaWick}, we conclude that the linear span of $\left\lbrace \Omega \ni \omega \mapsto \xi_\alpha(\omega) y_n \in H: (\alpha,n) \in \mathcal{J} \times \mathbb{N} \right\rbrace$ is dense in $L^2(\Omega,\mathcal{F}_T,\mathbb{P};H)$. This shows that $(\xi_\alpha y_n)_{(\alpha,n) \in \mathcal{J} \times \mathbb{N}}$ is an orthonormal basis of $L^2(\Omega,\mathcal{F}_T,\mathbb{P};H)$. Hence, the chaos expansion \eqref{EqThmCameronMartin2} follows as an orthogonal expansion along the basis elements $(\xi_\alpha y_n)_{(\alpha,n) \in \mathcal{J} \times \mathbb{N}}$ of $L^2(\Omega,\mathcal{F}_T,\mathbb{P};H)$.
\end{proof}

\subsection{Proof of auxiliary results in Section~\ref{SecUAT}}
\label{SecAuxProofsUAT}

\begin{lemma}
	\label{LemmaNNWellDef}
	Let $(H,\langle \cdot, \cdot \rangle_H)$ satisfy Assumption~\ref{AssHilbert}. Then, the time-extended restriction map
	\begin{equation}
		\label{EqLemmaWellDefProof1}
		(C^k_b(\mathbb{R} \times \mathbb{R}^m;\mathbb{R}^d),\Vert \cdot \Vert_{C^k_{pol,\gamma}(\mathbb{R} \times \mathbb{R}^m;\mathbb{R}^d)}) \ni f \,\, \mapsto \,\, \left( t \mapsto f(t,\cdot)\vert_U \right) \in (C^0([0,T];H),\Vert \cdot \Vert_{C^0([0,T];H)})
	\end{equation}
	is a continuous dense embedding. Moreover, for every $\rho \in \overline{C^k_b(\mathbb{R})}^\gamma$ and $\varphi \in \mathcal{NN}^\rho_{[0,T] \times U,d}$, it holds that $(t \mapsto \varphi(t,\cdot)) \in C^0([0,T];H)$.
\end{lemma}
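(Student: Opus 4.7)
The plan is to tackle the two assertions separately. For the continuity of \eqref{EqLemmaWellDefProof1}, I first observe that for every $f \in C^k_b(\mathbb{R} \times \mathbb{R}^m;\mathbb{R}^d)$ and every $t \in [0,T]$, the partial freeze $f(t,\cdot)$ belongs to $C^k_b(\mathbb{R}^m;\mathbb{R}^d)$, so by Assumption~\ref{AssHilbert} its restriction $f(t,\cdot)\vert_U$ is a well-defined element of $H$ with $\Vert f(t,\cdot)\vert_U \Vert_H \leq C_H \Vert f(t,\cdot) \Vert_{C^k_{pol,\gamma}(\mathbb{R}^m;\mathbb{R}^d)}$, where $C_H > 0$ is the operator norm of \eqref{EqAssHilbert1}. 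Since $t \in [0,T]$ is bounded, the weights satisfy $(1+\Vert u \Vert)^\gamma \leq (1+\Vert (t,u) \Vert)^\gamma \leq (1+T)^\gamma (1+\Vert u \Vert)^\gamma$, which yields a uniform bound $\sup_{t \in [0,T]} \Vert f(t,\cdot)\vert_U \Vert_H \leq C \Vert f \Vert_{C^k_{pol,\gamma}(\mathbb{R}\times\mathbb{R}^m;\mathbb{R}^d)}$. Continuity in $t$ then follows from the standard split: given $\varepsilon > 0$, first pick $R > 0$ so large that $2\Vert f \Vert_{C^k_b(\mathbb{R}\times\mathbb{R}^m;\mathbb{R}^d)}/(1+R)^\gamma < \varepsilon/2$ (possible since $\gamma > 0$), and then use uniform continuity of each $\partial_\beta f$ on the compact set $[0,T] \times \overline{B_R(0)}$ to bound the contribution from $\Vert u \Vert \leq R$.

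For density, I would mimic the argument in Lemma~\ref{LemmaC0HSep} and apply the vector-valued Stone-Weierstrass theorem of \cite[p.~103]{buck58} to the $\Pol([0,T])$-submodule
\begin{equation*}
W := \left\lbrace \sum_{n=1}^N p_n(\cdot)\, x_n\vert_U : N \in \mathbb{N}, \, p_n \in \Pol([0,T]), \, x_n \in C^k_b(\mathbb{R}^m;\mathbb{R}^d) \right\rbrace \subseteq C^0([0,T];H).
\end{equation*}
Since $\Pol([0,T])$ is a point-separating, nowhere-vanishing subalgebra of $C^0([0,T])$, and since $\lbrace w(t) : w \in W \rbrace$ contains $\lbrace x\vert_U : x \in C^k_b(\mathbb{R}^m;\mathbb{R}^d) \rbrace$, which is dense in $H$ by Assumption~\ref{AssHilbert}, Stone-Weierstrass forces $W$ to be dense in $C^0([0,T];H)$. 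Every element of $W$ is realized as the image under \eqref{EqLemmaWellDefProof1} of $f(t,u) := \chi(t) \sum_n p_n(t) x_n(u)$, where $\chi \in C^\infty_c(\mathbb{R})$ is any fixed cutoff with $\chi \equiv 1$ on $[0,T]$; boundedness of $\chi \cdot p_n$ and $x_n$ (together with their derivatives of order $\leq k$) forces $f \in C^k_b(\mathbb{R}\times\mathbb{R}^m;\mathbb{R}^d)$, and $f\vert_{[0,T] \times U}$ equals the given element of $W$, completing the density step.

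For the second assertion, if $\rho \in C^k_b(\mathbb{R})$ the network $\varphi$ manifestly lies in $C^k_b(\mathbb{R}\times\mathbb{R}^m;\mathbb{R}^d)$ and the conclusion follows from the first part. For general $\rho \in \overline{C^k_b(\mathbb{R})}^\gamma$, I would pick a sequence $\rho_l \in C^k_b(\mathbb{R})$ with $\Vert \rho_l - \rho \Vert_{C^k_{pol,\gamma}(\mathbb{R})} \to 0$ and form the corresponding networks $\varphi_l$ by replacing $\rho$ with $\rho_l$ in \eqref{EqDefNN}. Each $(t \mapsto \varphi_l(t,\cdot)\vert_U) \in C^0([0,T];H)$ by the first part, and a direct chain-rule computation using $\vert \rho^{(j)}(s) - \rho_l^{(j)}(s) \vert \leq \Vert \rho - \rho_l \Vert_{C^k_{pol,\gamma}(\mathbb{R})} (1+\vert s \vert)^\gamma$ for $j \leq k$, combined with $1 + \vert a_{0,n} t + a_{1,n}^\top u - b_n \vert \leq C_n (1+\Vert u \Vert)$ uniformly in $t \in [0,T]$, yields
\begin{equation*}
\sup_{t \in [0,T]} \Vert \varphi(t,\cdot) - \varphi_l(t,\cdot) \Vert_{C^k_{pol,\gamma}(\mathbb{R}^m;\mathbb{R}^d)} \leq C \Vert \rho - \rho_l \Vert_{C^k_{pol,\gamma}(\mathbb{R})},
\end{equation*}
with $C > 0$ depending only on the network parameters $(y_n, a_{0,n}, a_{1,n}, b_n)_{n=1,...,N}$, $N$, and $T$. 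Extending the restriction map continuously to $\overline{C^k_b(\mathbb{R}^m;\mathbb{R}^d)}^\gamma \to H$ (available by Assumption~\ref{AssHilbert} together with the definition of the closure $\overline{C^k_b(\mathbb{R}^m;\mathbb{R}^d)}^\gamma$), the sequence $(t \mapsto \varphi_l(t,\cdot)\vert_U)$ becomes Cauchy in the complete space $C^0([0,T];H)$, and its limit is readily identified pointwise with $(t \mapsto \varphi(t,\cdot)\vert_U)$, giving the claim.

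The main difficulty is the last paragraph: one must verify that the pointwise-defined object $(t \mapsto \varphi(t,\cdot)\vert_U)$, interpreted via the extension of the restriction map to $\overline{C^k_b(\mathbb{R}^m;\mathbb{R}^d)}^\gamma$, really coincides with the uniform $C^0([0,T];H)$-limit of the $C^k_b$-approximants $\varphi_l$. The cutoff trick in the density step and the polynomial-weight split in the continuity step are essentially routine once the hypothesis $\gamma > 0$ is invoked.
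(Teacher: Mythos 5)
Your proof is correct, and it deviates from the paper's argument in two of the three steps. The continuity estimate is essentially the paper's (you make the harmless factor $(1+T)^\gamma$ explicit, and you additionally spell out the continuity of $t \mapsto f(t,\cdot)\vert_U$ via the weight-splitting/compactness argument, a point the paper leaves implicit in its norm chain). For density, the paper fixes $f \in C^0([0,T];H)$, covers $[0,T]$ by sets on which $f$ oscillates by less than $\varepsilon/2$, and glues $H$-approximants $g_n\vert_U$ of the $f(t_n,\cdot)$ with a continuous partition of unity; you instead invoke the vector-valued Stone--Weierstrass theorem of Buck, exactly as in Lemma~\ref{LemmaC0HSep}, applied to the $\Pol([0,T])$-module $\lbrace \sum_{n} p_n\, x_n\vert_U \rbrace$, and then produce genuine preimages in $C^k_b(\mathbb{R}\times\mathbb{R}^m;\mathbb{R}^d)$ by the smooth cutoff $\chi \equiv 1$ on $[0,T]$. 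Your variant has the advantage that the approximants are manifestly in the image of \eqref{EqLemmaWellDefProof1}; with the paper's merely continuous $\eta_n$ one must still smooth the partition of unity in $t$ (or argue that its $g$ lies in the closure of that image) to reach the same conclusion, so your route is, if anything, slightly tighter, at the price of the extra module/cutoff bookkeeping. For the final assertion, the paper simply verifies the hypotheses of \cite[Definition~2.3]{neufeld24} and cites \cite[Lemma~2.5]{neufeld24}, whereas you re-prove that lemma in this special case: approximate $\rho$ by $\rho_l \in C^k_b(\mathbb{R})$ in $\Vert\cdot\Vert_{C^k_{pol,\gamma}(\mathbb{R})}$, derive the uniform-in-$t$ stability bound $\sup_{t\in[0,T]}\Vert \varphi(t,\cdot)-\varphi_l(t,\cdot)\Vert_{C^k_{pol,\gamma}(\mathbb{R}^m;\mathbb{R}^d)} \leq C\Vert\rho-\rho_l\Vert_{C^k_{pol,\gamma}(\mathbb{R})}$ (which is correct, since $\partial_\beta$ of $\rho(a_{0,n}t+a_{1,n}^\top u-b_n)$ only produces constant prefactors and the affine argument is controlled by $C_n(1+\Vert u\Vert)$ uniformly in $t$), and pass to the limit through the continuous extension of the restriction map to $\overline{C^k_b(\mathbb{R}^m;\mathbb{R}^d)}^\gamma$. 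This is more self-contained but redoes the computation the citation encapsulates; the identification of the $C^0([0,T];H)$-limit with $t\mapsto\varphi(t,\cdot)\vert_U$ pointwise in $t$, which you flag as the main difficulty, is unproblematic (for each fixed $t$ both candidates are the $H$-limit of $\varphi_l(t,\cdot)\vert_U$) and is precisely the content of the cited lemma.
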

\begin{proof}
	In order to show that \eqref{EqLemmaWellDefProof1} is a continuous embedding, we observe that for every $f \in C^k_b(\mathbb{R} \times \mathbb{R}^m;\mathbb{R}^d)$ it holds that
	\begin{equation*}
		\begin{aligned}
			\Vert f \Vert_{C^0([0,T];H)} & = \sup_{t \in [0,T]} \Vert f(t,\cdot) \Vert_H \\
			& \leq C_H \sup_{t \in [0,T]} \Vert f(t,\cdot) \Vert_{C^k_{pol,\gamma}(\mathbb{R}^m;\mathbb{R}^d)} \\
			& = C_H \sup_{t \in [0,T]} \max_{\beta \in \mathbb{N}^m_{0,k}} \sup_{u \in \mathbb{R}^m} \frac{\Vert \partial_{(0,\beta)} f(t,u) \Vert}{(1+\Vert u \Vert)^\gamma} \\
			& \leq C_H \max_{\beta \in \mathbb{N}^{m+1}_{0,k}} \sup_{(t,u) \in \mathbb{R} \times \mathbb{R}^m} \frac{\Vert \partial_\beta f(t,u) \Vert}{(1+\Vert (t,u) \Vert)^\gamma} \\
			& = C_H \Vert f \Vert_{C^k_{pol,\gamma}(\mathbb{R} \times \mathbb{R}^m;\mathbb{R}^d)}.
		\end{aligned}
	\end{equation*}
	Next, for showing that \eqref{EqLemmaWellDefProof1} is a dense embedding, we fix some $f \in C^0([0,T];H)$ and $\varepsilon > 0$. Moreover, we define the collection $(V_t)_{t \in [0,T]}$ of open subsets $V_t := \left\lbrace s \in [0,T]: \Vert f(t,\cdot) - f(s,\cdot) \Vert_H < \varepsilon/2 \right\rbrace \subseteq [0,T]$ for $t \in [0,T]$, which forms an open cover of $[0,T]$. Then, by using that $[0,T]$ is compact, there exists some $N \in \mathbb{N}$ and $t_1,...,t_N \in [0,T]$ such that $(V_{t_n})_{n=1,...,N}$ is a finite subcover of $[0,T]$. In addition, for every $n = 1,...,N$, we use that the restriction map \eqref{EqAssHilbert1} is a dense embedding (see Assumption~\ref{AssHilbert}) to conclude that there exists some $g_n \in C^k_b(\mathbb{R}^m;\mathbb{R}^d)$ such that $\Vert f(t_n,\cdot) - g_n\vert_U \Vert_H < \varepsilon/2$. Thus, by using a partition of unity $(\eta_n)_{n=1,...,N} \subseteq C^0([0,T])$ subordinate to $(V_{t_n})_{n=1,...,N}$ (i.e.~functions $(\eta_n)_{n=1,...,N} \subseteq C^0([0,T])$ such that $0 \leq \eta_n(t) \leq 1$ for all $t \in [0,T]$ and $n = 1,...,N$, that $\sum_{n=1}^N \eta_n(t) = 1$ for all $t \in [0,T]$, and that $\supp(\eta_n) := \overline{\lbrace t \in [0,T]: \eta_n(t) \neq 0 \rbrace} \subseteq V_{t_n}$ for all $n = 1,...,N$), we conclude for the function $\big( t \mapsto g(t) := \sum_{n=1}^N \eta_n(t) g_n \big) \in C^0([0,T];H)$ that
	\begin{equation*}
		\begin{aligned}
			& \Vert f - g\vert_{[0,T] \times U} \Vert_{C^0([0,T];H)} \\
			& \quad\quad = \sup_{t \in [0,T]} \left\Vert f(t,\cdot) - \sum_{n=1}^N \eta_n(t) g_n\vert_U \right\Vert_H \\
			& \quad\quad \leq \sup_{t \in [0,T]} \left\Vert f(t,\cdot) - \sum_{n=1}^N \eta_n(t) f(t_n,\cdot) \right\Vert_H + \sup_{t \in [0,T]} \left\Vert \sum_{n=1}^N \eta_n(t) f(t_n,\cdot) - \sum_{n=1}^N \eta_n(t) g_n\vert_U \right\Vert_H \\
			& \quad\quad \leq \sup_{t \in [0,T]} \left\Vert \sum_{n=1}^N \eta_n(t) (f(t,\cdot) - f(t_n,\cdot)) \right\Vert_H + \sup_{t \in [0,T]} \left\Vert \sum_{n=1}^N \eta_n(t) (f(t_n,\cdot) - g_n\vert_U) \right\Vert_H \\
			& \quad\quad \leq \sup_{t \in [0,T]} \sum_{n=1}^N \eta_n(t) \underbrace{\left\Vert f(t,\cdot) - f(t_n,\cdot) \right\Vert_H}_{< \varepsilon/2} + \sup_{t \in [0,T]} \sum_{n=1}^N \eta_n(t) \underbrace{\left\Vert f(t_n,\cdot) - g_n\vert_U \right\Vert_H}_{< \varepsilon/2} \\
			& \quad\quad < \frac{\varepsilon}{2} \sup_{t \in [0,T]} \sum_{n=1}^N \eta_n(t) + \frac{\varepsilon}{2} \sup_{t \in [0,T]} \sum_{n=1}^N \eta_n(t) \\
			& \quad\quad = \frac{\varepsilon}{2} + \frac{\varepsilon}{2} = \varepsilon.
		\end{aligned}
	\end{equation*}
	Since $f \in C^0([0,T];H)$ and $\varepsilon > 0$ were chosen arbitrarily, this shows that \eqref{EqLemmaWellDefProof1} is a dense embedding.
	
	Finally, we use the previous step to conclude that $C^0([0,T];H)$ satisfies the conditions\footnote{\label{FootnoteOpen0T}To be precise, if $k \geq 1$, we need to consider $C^0((0,T);H)$ instead of $C^0([0,T];H)$ in order to have an open set $(0,T) \times U$ for \cite[Definition~2.3]{neufeld24}. However, by continuous extension, we observe that \eqref{EqLemmaWellDefProof1} is a continuous dense embedding if and only if $(C^k_b(\mathbb{R} \times \mathbb{R}^m;\mathbb{R}^d),\Vert \cdot \Vert_{C^k_{pol,\gamma}(\mathbb{R} \times \mathbb{R}^m;\mathbb{R}^d)}) \ni f \mapsto (t \mapsto f(t,\cdot)\vert_U) \in (C^0((0,T);H),\Vert \cdot \Vert_{C^0((0,T);H)})$ is a continuous dense embedding.} of \cite[Definition~2.3]{neufeld24}. Hence, by applying \cite[Lemma~2.5]{neufeld24}, it follows for every $\rho \in \overline{C^k_b(\mathbb{R})}^\gamma$ and $\varphi \in \mathcal{NN}^\rho_{[0,T] \times U,d}$ that $(t \mapsto \varphi(t,\cdot)) \in C^0([0,T];H)$.
\end{proof}

\subsection{Proof of Theorem~\ref{ThmUAT}}
\label{SecProofUAT}

\begin{proof}[Proof of Theorem~\ref{ThmUAT}]
	Fix some $\varepsilon > 0$. Then, by using Theorem~\ref{ThmCameronMartin}, there exists some $I,J,K \in \mathbb{N}$ and $(x_\alpha)_{\alpha \in \mathcal{J}_{I,J,K}} \subseteq C^0([0,T];H)$ such that
	\begin{equation}
		\label{EqThmUATProof1}
		\mathbb{E}\left[ \sup_{t \in [0,T]} \left\Vert X_t - \sum_{\alpha \in \mathcal{J}_{I,J,K}} x_\alpha(t) \xi_\alpha \right\Vert_H^p \right]^\frac{1}{p} < \frac{\varepsilon}{2}.
	\end{equation}
	Moreover, by using Lemma~\ref{LemmaNNWellDef}, we observe that $C^0([0,T];H)$ satisfies the conditions$^\text{\ref{FootnoteOpen0T}}$ of \cite[Definition~2.3]{neufeld24}. Hence, we can apply the universal approximation result for deterministic neural networks in \cite[Theorem~2.8]{neufeld24} to conclude that for every $\alpha \in \mathcal{J}_{I,J,K}$ there exists some $\varphi_\alpha \in \mathcal{NN}^\rho_{[0,T] \times U,d}$ such that
	\begin{equation}
		\label{EqThmUATProof2}
		\left\Vert x_\alpha - \varphi_\alpha \right\Vert_{C^0([0,T];H)} = \sup_{t \in [0,T]} \left\Vert x_\alpha(t) - \varphi_\alpha(t,\cdot) \right\Vert_H < \frac{\varepsilon}{2 \vert \mathcal{J}_{I,J,K} \vert \mathbb{E}\left[ \vert \xi_\alpha \vert^p \right]^\frac{1}{p}}.
	\end{equation}
	Thus, by combining \eqref{EqThmUATProof1} and \eqref{EqThmUATProof2} with Minkowski's inequality, it follows that
	\begin{equation*}
		\begin{aligned}
			& \mathbb{E}\left[ \sup_{t \in [0,T]} \left\Vert X_t - \sum_{\alpha \in \mathcal{J}_{I,J,K}} \varphi_\alpha(t) \xi_\alpha \right\Vert_H^p \right]^\frac{1}{p} \\
			& \quad\quad \leq \mathbb{E}\left[ \sup_{t \in [0,T]} \left\Vert X_t - \sum_{\alpha \in \mathcal{J}_{I,J,K}} x_\alpha(t) \xi_\alpha \right\Vert_H^p \right]^\frac{1}{p} + \mathbb{E}\left[ \sup_{t \in [0,T]} \left\Vert \sum_{\alpha \in \mathcal{J}_{I,J,K}} x_\alpha(t) \xi_\alpha - \sum_{\alpha \in \mathcal{J}_{I,J,K}} \varphi_\alpha(t) \xi_\alpha \right\Vert_H^p \right]^\frac{1}{p} \\
			& \quad\quad \leq \mathbb{E}\left[ \sup_{t \in [0,T]} \left\Vert X_t - \sum_{\alpha \in \mathcal{J}_{I,J,K}} x_\alpha(t) \xi_\alpha \right\Vert_H^p \right]^\frac{1}{p} + \mathbb{E}\left[ \left( \sum_{\alpha \in \mathcal{J}_{I,J,K}} \sup_{t \in [0,T]} \left\Vert x_\alpha(t) \xi_\alpha - \varphi_\alpha(t) \xi_\alpha \right\Vert_H \right)^p \right]^\frac{1}{p} \\
			& \quad\quad \leq \mathbb{E}\left[ \sup_{t \in [0,T]} \left\Vert X_t - \sum_{\alpha \in \mathcal{J}_{I,J,K}} x_\alpha(t) \xi_\alpha \right\Vert_H^p \right]^\frac{1}{p} + \sum_{\alpha \in \mathcal{J}_{I,J,K}} \mathbb{E}\left[ \sup_{t \in [0,T]} \left\Vert x_\alpha(t) \xi_\alpha - \varphi_\alpha(t) \xi_\alpha \right\Vert_H^p \right]^\frac{1}{p} \\
			& \quad\quad \leq \mathbb{E}\left[ \sup_{t \in [0,T]} \left\Vert X_t - \sum_{\alpha \in \mathcal{J}_{I,J,K}} x_\alpha(t) \xi_\alpha \right\Vert_H^p \right]^\frac{1}{p} + \sum_{\alpha \in \mathcal{J}_{I,J,K}} \sup_{t \in [0,T]} \left\Vert x_\alpha(t) - \varphi_\alpha(t,\cdot) \right\Vert_H \mathbb{E}\left[ \vert \xi_\alpha \vert^p \right]^\frac{1}{p} \\
			& \quad\quad < \frac{\varepsilon}{2} + \sum_{\alpha \in \mathcal{J}_{I,J,K}} \frac{\varepsilon}{2 \vert \mathcal{J}_{I,J,K} \vert \mathbb{E}\left[ \vert \xi_\alpha \vert^p \right]^\frac{1}{p}} \mathbb{E}\left[ \vert \xi_\alpha \vert^p \right]^\frac{1}{p} \leq \varepsilon,
		\end{aligned}
	\end{equation*}
	which completes the proof.
\end{proof}

\subsection{Proof of Theorem~\ref{ThmRUAT}}
\label{SecProofRUAT}

\begin{proof}[Proof of Theorem~\ref{ThmRUAT}]
	Fix some $\varepsilon > 0$. Then, by using Theorem~\ref{ThmCameronMartin}, there exists some $I,J,K \in \mathbb{N}$ and $(x_\alpha)_{\alpha \in \mathcal{J}_{I,J,K}} \subseteq C^0([0,T];H)$ such that
	\begin{equation}
		\label{EqThmRUATProof1}
		\sup_{t \in [0,T]} \mathbb{E}\left[ \left\Vert X_t - \sum_{\alpha \in \mathcal{J}_{I,J,K}} x_\alpha(t) \xi_\alpha \right\Vert_H^p \right]^\frac{1}{p} < \frac{\varepsilon}{2}.
	\end{equation}
	Moreover, by using Lemma~\ref{LemmaNNWellDef}, we observe that $C^0([0,T];H)$ satisfies the conditions$^\text{\ref{FootnoteOpen0T}}$ of \cite[Definition~2.3]{neufeld24}. Hence, we can apply the universal approximation result for random neural networks in \cite[Theorem~2.8]{neufeld24} to conclude that for every $\alpha \in \mathcal{J}_{I,J,K}$ there exists some $\Phi_\alpha \in \mathcal{RN}^\rho_{[0,T] \times U,d}$ such that
	\begin{equation}
		\label{EqThmRUATProof2}
		\mathbb{E}\left[ \left\Vert x_\alpha - \Phi_\alpha \right\Vert_{C^0([0,T];H)}^p \right]^\frac{1}{p} = \mathbb{E}\left[ \sup_{t \in [0,T]} \left\Vert x_\alpha(t) - \Phi_\alpha(t,\cdot) \right\Vert_H^p \right]^\frac{1}{p} < \frac{\varepsilon}{2 \vert \mathcal{J}_{I,J,K} \vert \mathbb{E}\left[ \vert \xi_\alpha \vert^p \right]^\frac{1}{p}}.
	\end{equation}
	Thus, by using Minkowski's inequality, that $\Phi_\alpha \in \mathcal{RN}^\rho_{[0,T] \times U,d}$ is independent of $\xi_\alpha: \Omega \rightarrow \mathbb{R}$ for any $\alpha \in \mathcal{J}$ (as $(A_{0,n}, A_{1,n}, B_n)_{n \in \mathbb{N}}: \Omega \rightarrow \mathbb{R} \times \mathbb{R}^m \times \mathbb{R}$ are by Assumption~\ref{AssCDF} independent of $(W_t)_{t \in [0,T]}$), and the inequalities \eqref{EqThmRUATProof1}+\eqref{EqThmRUATProof2}, it follows that
	\begin{equation*}
		\begin{aligned}
			& \sup_{t \in [0,T]} \mathbb{E}\left[ \left\Vert X_t - \sum_{\alpha \in \mathcal{J}_{I,J,K}} \Phi_\alpha(t) \xi_\alpha \right\Vert_H^p \right]^\frac{1}{p} \\
			& \quad\quad \leq \mathbb{E}\left[ \sup_{t \in [0,T]} \left\Vert X_t - \sum_{\alpha \in \mathcal{J}_{I,J,K}} x_\alpha(t) \xi_\alpha \right\Vert_H^p \right]^\frac{1}{p} + \mathbb{E}\left[ \sup_{t \in [0,T]} \left\Vert \sum_{\alpha \in \mathcal{J}_{I,J,K}} x_\alpha(t) \xi_\alpha - \sum_{\alpha \in \mathcal{J}_{I,J,K}} \Phi_\alpha(t) \xi_\alpha \right\Vert_H^p \right]^\frac{1}{p} \\
			& \quad\quad \leq \mathbb{E}\left[ \sup_{t \in [0,T]} \left\Vert X_t - \sum_{\alpha \in \mathcal{J}_{I,J,K}} x_\alpha(t) \xi_\alpha \right\Vert_H^p \right]^\frac{1}{p} + \mathbb{E}\left[ \left( \sum_{\alpha \in \mathcal{J}_{I,J,K}} \sup_{t \in [0,T]} \left\Vert x_\alpha(t) \xi_\alpha - \Phi_\alpha(t) \xi_\alpha \right\Vert_H \right)^p \right]^\frac{1}{p} \\
			& \quad\quad \leq \mathbb{E}\left[ \sup_{t \in [0,T]} \left\Vert X_t - \sum_{\alpha \in \mathcal{J}_{I,J,K}} x_\alpha(t) \xi_\alpha \right\Vert_H^p \right]^\frac{1}{p} + \sum_{\alpha \in \mathcal{J}_{I,J,K}} \mathbb{E}\left[ \sup_{t \in [0,T]} \left\Vert x_\alpha(t) - \Phi_\alpha(t) \right\Vert_H^p \left\vert \xi_\alpha \right\vert^p \right]^\frac{1}{p} \\
			& \quad\quad \leq \mathbb{E}\left[ \sup_{t \in [0,T]} \left\Vert X_t - \sum_{\alpha \in \mathcal{J}_{I,J,K}} x_\alpha(t) \xi_\alpha \right\Vert_H^p \right]^\frac{1}{p} + \sum_{\alpha \in \mathcal{J}_{I,J,K}} \mathbb{E}\left[ \sup_{t \in [0,T]} \left\Vert x_\alpha(t) - \Phi_\alpha(t) \right\Vert_H^p \right]^\frac{1}{p} \mathbb{E}\left[ \left\vert \xi_\alpha \right\vert^p \right]^\frac{1}{p} \\
			& \quad\quad < \frac{\varepsilon}{2} + \sum_{\alpha \in \mathcal{J}_{I,J,K}} \frac{\varepsilon}{2 \vert \mathcal{J}_{I,J,K} \vert \mathbb{E}\left[ \vert \xi_\alpha \vert^p \right]^\frac{1}{p}} \mathbb{E}\left[ \vert \xi_\alpha \vert^p \right]^\frac{1}{p} \leq \varepsilon,
		\end{aligned}
	\end{equation*}
	which completes the proof.
\end{proof}

\subsection{Proof of auxiliary results in Section~\ref{SecAR}}
\label{SecAuxProofsAR}

\begin{proof}[Proof of Remark~\ref{RemSPDE}]
	Let Assumption~\ref{AssSPDEAff} hold. Then, Assumption~\ref{AssSPDE}~\ref{AssSPDE1}+\ref{AssSPDE5} are satisfied. Moreover, by using Assumption~\ref{AssSPDEAff}~\ref{AssSPDEAff2}+\ref{AssSPDEAff3}, the map $[0,T] \times \Omega \times H \ni (t,\omega,x) \mapsto F(t,\omega,x) := f_0(t) + f_1(t) x \in H$ is $(\mathcal{P}_T \otimes \mathcal{B}(H))/\mathcal{B}(H)$-measurable and the map $[0,T] \times \Omega \times H \ni (t,\omega,x) \mapsto B(t,\omega,x) := b_0(t) + b_1(t) x \in L_2(Z_0;H)$ is $(\mathcal{P}_T \otimes \mathcal{B}(L_2(Z_0;H)))/\mathcal{B}(L_2(Z_0;H))$-measurable, which shows Assumption~\ref{AssSPDE}~\ref{AssSPDE2}+\ref{AssSPDE3}. In addition, by using that $\Vert \widetilde{e}_i \Vert_Z = \big\Vert Q^{1/2} e_i \big\Vert_Z = \sqrt{\lambda_i} \Vert e_i \Vert_Z = \sqrt{\lambda_i}$ (as $Qe_i = \lambda e_i$) together with $\sum_{i=1}^\infty \lambda_i = C_\lambda < \infty$ and Assumption~\ref{AssSPDEAff}~\ref{AssSPDEAff4}, we obtain for every $t \in [0,T]$, $\omega \in \Omega$, and $x,y \in H$ that
	\begin{equation*}
		\begin{aligned}
			& \Vert F(t,\omega,x) - F(t,\omega,y) \Vert_H + \Vert B(t,\omega,x) - B(t,\omega,y) \Vert_{L_2(Z_0;H)} \\
			& \quad\quad = \Vert (f_0(t) + f_1(t) x) - (f_0(t) + f_1(t) y) \Vert_H + \Vert (b_0(t) + b_1(t) x) - (b_0(t) + b_1(t) y) \Vert_{L_2(Z_0;H)} \\
			& \quad\quad = \Vert f_1(t) (x-y) \Vert_H + \left( \sum_{i=1}^\infty \Vert (b_1(t) (x-y)) (\widetilde{e}_i) \Vert_H^2 \right)^\frac{1}{2} \\
			& \quad\quad \leq \Vert f_1(t) \Vert_{L(H;H)} \Vert x-y \Vert_H + \left( \sup_{z \in Z_0 \atop \Vert z \Vert_Z \leq 1} \sup_{\widetilde{x} \in H \atop \Vert \widetilde{x} \Vert_H \leq 1} \Vert (b_1(t) \widetilde{x}) z \Vert_H \right) \Vert x-y \Vert_H \left( \sum_{i=1}^\infty \Vert \widetilde{e}_i \Vert_Z^2 \right)^\frac{1}{2} \\
			& \quad\quad \leq \left( \Vert f_1(t) \Vert_{L(H;H)} + \sqrt{C_\lambda} \left( \sup_{z \in Z_0 \atop \Vert z \Vert_Z \leq 1} \sup_{\widetilde{x} \in H \atop \Vert \widetilde{x} \Vert_H \leq 1} \Vert (b_1(t) \widetilde{x}) z \Vert_H \right) \right) \Vert x-y \Vert_H \\
			& \quad\quad \leq C_{F,B} \Vert x-y \Vert_H
		\end{aligned}
	\end{equation*}
	and that
	\begin{equation*}
		\begin{aligned}
			& \Vert F(t,\omega,x) \Vert_H^2 + \Vert B(t,\omega,x) \Vert_{L_2(Z_0;H)}^2 = \Vert f_0(t) + f_1(t) x \Vert_H^2 + \sum_{i=1}^\infty \Vert (b_0(t) + b_1(t) x)(\widetilde{e}_i) \Vert_H^2 \\
			& \leq \left( \Vert f_0(t) \Vert_H + \Vert f_1(t) x \Vert_H \right)^2 + \sum_{i=1}^\infty \Vert b_0(t)(\widetilde{e}_i) \Vert_H^2 + \sum_{i=1}^\infty \Vert (b_1(t) x)(\widetilde{e}_i) \Vert_H^2 \\
			& \leq \left( \Vert f_0(t) \Vert_H + \Vert f_1(t) \Vert_{L(H;H)} \Vert x \Vert_H \right)^2 \\
			& \quad\quad + \left( \sup_{z \in Z_0 \atop \Vert z \Vert_Z \leq 1} \Vert b_0(t)(z) \Vert_H^2 + \left( \sup_{z \in Z_0 \atop \Vert z \Vert_Z \leq 1} \sup_{\widetilde{x} \in H \atop \Vert \widetilde{x} \Vert_H \leq 1} \Vert (b_1(t) \widetilde{x})(z) \Vert_H^2 \right) \Vert x \Vert_H^2 \right) \sum_{i=1}^\infty \Vert \widetilde{e}_i \Vert_Z^2 \\
			& \leq \left( \Vert f_0(t) \Vert_H + \Vert f_1(t) \Vert_{L(H;H)} + C_\lambda \sup_{z \in Z_0 \atop \Vert z \Vert_Z \leq 1} \Vert b_0(t)(z) \Vert_H^2 + C_\lambda \sup_{z \in Z_0 \atop \Vert z \Vert_Z \leq 1} \sup_{\widetilde{x} \in H \atop \Vert \widetilde{x} \Vert_H \leq 1} \Vert (b_1(t) \widetilde{x})(z) \Vert_H^2 \right)^2 \left( 1 + \Vert x \Vert_H^2 \right) \\
			& \leq C_{F,B}^2 \left( 1 + \Vert x \Vert_H \right)^2,
		\end{aligned}
	\end{equation*}
	which proves that Assumption~\ref{AssSPDE}~\ref{AssSPDE4} also holds. Hence, we can apply Proposition~\ref{PropSPDE} to conclude that \eqref{EqDefSPDE} admits a unique mild solution $X: [0,T] \times \Omega \rightarrow H$.
\end{proof}

\subsubsection{Malliavin calculus for Hilbert space-valued random variables}
\label{SubsecMall1}

In order to prove the approximation rates in Theorem~\ref{ThmDetAR}+\ref{ThmRandAR}, we first apply the Stroock-Taylor formula in \cite{stroock87} to Hilbert space-valued random variables that are infinitely many times Malliavin differentiable. To this end, we recall some notions of Malliavin calculus for Hilbert space-valued random variables (see e.g.~\cite{malliavin78,sanzsole05,nualart06,carmona07}) and introduce the abbreviations $L^2([0,T]^k;\widetilde{H}) := L^2([0,T]^k,\mathcal{B}([0,T]^k),du;\widetilde{H})$ and $L^2(\Omega;\widetilde{H}) := L^2(\Omega,\mathcal{F}_T,\mathbb{P};\widetilde{H})$ for any $k \in \mathbb{N}$ and any Hilbert space $(\widetilde{H},\langle \cdot, \cdot \rangle_{\widetilde{H}})$. Moreover, we denote the stochastic integral of every deterministic function $\psi \in L^2([0,T];L_2(Z_0;\mathbb{R}))$ by $\mathscr{W}(\psi) := \int_0^T \psi_t dW_t \in L^2(\Omega)$. In addition, we define $\mathscr{P}(H) \subseteq L^2(\Omega;H)$ as the linear span of polynomial $H$-valued random variables of the form
\begin{equation}
	\label{EqDefSmoothRandVar}
	\Omega \ni \omega \quad \mapsto \quad F(\omega) = q(\mathscr{W}(\psi_1)(\omega),...,\mathscr{W}(\psi_N)(\omega)) x \in H,
\end{equation}
for some $N \in \mathbb{N}$, $\psi_1,...,\psi_N \in L^2([0,T];L_2(Z_0;\mathbb{R}))$, $x \in H$, and a polynomial $q \in \Pol(\mathbb{R}^N)$. Then, for any $k \in \mathbb{N}$, we define the $k$-th \emph{Malliavin derivative} of $F \in \mathscr{P}(H)$ in \eqref{EqDefSmoothRandVar} as
\begin{equation*}
	\begin{aligned}
		& \Omega \ni \omega \quad \mapsto \quad D^k F(\omega) := \left( (s_1,...,s_k) \mapsto D_{s_1,...,s_k} F \right)(\omega) := \\
		& \sum_{\beta := (\beta_1,...,\beta_N) \in \mathbb{N}^N_0 \atop \vert \beta \vert = k} \frac{\partial^k q}{\partial u_1^{\beta_1} \cdots \partial u_N^{\beta_N}}(\mathscr{W}(\psi_1)(\omega),...,\mathscr{W}(\psi_N)(\omega)) \, \bigotimes_{n=1}^N \psi_n(\cdot)^{\otimes \beta_n} \otimes x \in L^2([0,T]^k;Z_0^{\otimes k} \otimes H),
	\end{aligned}
\end{equation*}
where the element $\bigotimes_{n=1}^N \psi_n(\cdot)^{\otimes \beta_n} \otimes x \in L^2([0,T]^k;Z_0^{\otimes k} \otimes H)$ denotes the function
\begin{equation*}
	\begin{aligned}
		& [0,T]^k \ni (s_1,...,s_k) \mapsto \left( (z_1,...,z_k) \mapsto x \prod_{n=1}^N \prod_{l=1}^{\beta_n} \psi_n(s_{\ind_{l,n}(\beta)})(z_{\ind_{l,n}(\beta)}) \right) \in Z_0^{\otimes k} \otimes H,
	\end{aligned}
\end{equation*}
with $\ind_{l,n}(\beta) := \beta_1 + ... + \beta_{n-1} + l \in \mathbb{N}$, and where $Z_0^{\otimes k} \otimes H$ can be understood as the $k$-times iterated space of Hilbert-Schmidt operators $L_2(Z_0;...\,L_2(Z_0;H)...)$ equipped with $\langle \Xi_1, \Xi_2 \rangle_{Z_0^{\otimes k} \otimes H} = \sum_{i_1,...,i_k=1}^\infty \langle \Xi_1\left( \widetilde{e}_{i_1},...,\widetilde{e}_{i_k} \right), \Xi_2\left( \widetilde{e}_{i_1},...,\widetilde{e}_{i_k} \right) \rangle_H$. Note that $D^k \mathscr{P}(H) \subseteq \mathscr{P}(Z_0^{\otimes k} \otimes H)$, that $D^k D^l F = D^{k+l} F$ for any $F \in \mathscr{P}(H)$, and that $D^k: \mathscr{P}(H) \subseteq L^2(\Omega;H) \rightarrow L^2(\Omega;L^2([0,T]^k;Z_0^{\otimes k} \otimes H))$ is closeable. Moreover, we define $\mathbb{D}^{k,2}(H)$ as the closure of $\mathscr{P}(H)$ with respect to the norm $\Vert \cdot \Vert_{\mathbb{D}^{k,2}(H)}$ induced by
\begin{equation*}
	\langle F,G \rangle_{\mathbb{D}^{k,2}} := \mathbb{E}\left[ \langle F,G \rangle_H \right] + \sum_{j=1}^k \mathbb{E}\left[ \langle D^j F, D^j G \rangle_{L^2([0,T]^j;Z_0^{\otimes j} \otimes H)} \right].
\end{equation*}
Then, the operator $D^k: \mathbb{D}^{k,2}(H) \rightarrow L^2(\Omega;H)$ is linear and continuous (see \cite[Proposition~1.5.7]{nualart06}). In addition, we define the vector space $\mathbb{D}^{\infty,2}(H) := \bigcap_{k \in \mathbb{N}} \mathbb{D}^{k,2}(H)$.

Furthermore, for any $k \in \mathbb{N}$, we define the \emph{$k$-th divergence operator} $\delta^k: \dom(\delta^k) \rightarrow L^2(\Omega;H)$ as the adjoint of the $k$-th Malliavin derivative $D^k: \mathbb{D}^{k,2}(H) \rightarrow L^2(\Omega;H)$. More precisely, the domain $\dom(\delta^k)$ consists of all $\Psi \in L^2(\Omega;L^2([0,T]^k;Z_0^{\otimes k} \otimes H))$ for which there exists a constant $C_\Psi > 0$ such that for every $F \in \mathbb{D}^{k,2}(H)$ it holds that $\big\vert \mathbb{E}\big[ \langle D^k F, \Psi \rangle_{L^2([0,T]^k;Z_0^{\otimes k} \otimes H)} \big] \big\vert \leq C_\Psi \Vert F \Vert_{\mathbb{D}^{k,2}(H)}$. In this case, $\delta^k(\Psi)$ is defined as the unique element in $L^2(\Omega;H)$ such that for every $F \in \mathbb{D}^{k,2}(H)$ we have
\begin{equation}
	\label{EqDefDivergence}
	\mathbb{E}\left[ \langle F, \delta^k(\Psi) \rangle_H \right] = \mathbb{E}\left[ \langle D^k F, \Psi \rangle_{L^2([0,T]^k;Z_0^{\otimes k} \otimes H)} \right].
\end{equation}
Note that $\mathbb{D}^{k,2}(L^2([0,T]^k;Z_0^{\otimes k} \otimes H)) \subseteq \dom(\delta^k)$ and that the restriction $\delta^k\vert_{\mathbb{D}^{k,2}(L^2([0,T]^k;Z_0^{\otimes k} \otimes H))}: \mathbb{D}^{k,2}(L^2([0,T]^k;Z_0^{\otimes k} \otimes H)) \rightarrow L^2(\Omega;H)$ is continuous (see \cite[Proposition~1.5.7]{nualart06}). 

Then, we apply the Stroock-Taylor formula in \cite[Theorem~6]{stroock87} to random variables in $\mathbb{D}^{\infty,2}(H)$. To this end, we define the subset $\mathcal{J}_k := \lbrace \alpha \in \mathcal{J}: \vert \alpha \vert = k \rbrace \subseteq \mathcal{J}$ and denote by $L^2(\Omega;H) \ni F \mapsto \Pi_k F := \sum_{\alpha \in \mathcal{J}_k} \mathbb{E}[F \xi_\alpha] \xi_\alpha \in L^2(\Omega;H)$ the projection onto the Wick polynomials $(\xi_\alpha)_{\alpha \in \mathcal{J}_k}$ of order $k$.

\begin{proposition}
	\label{PropStroock}
	Let $F \in \mathbb{D}^{\infty,2}(H)$. Then, for every $k \in \mathbb{N}$, we have $\Pi_k F = \frac{1}{k!} \delta^k\left( \mathbb{E}\left[ D^k F \right] \right)$, and thus
	\begin{equation}
		\label{EqPropStroock1}
		F = \sum_{k=0}^\infty \Pi_k F = \sum_{k=0}^\infty \frac{1}{k!} \delta^k\left( \mathbb{E}\left[ D^k F \right] \right),
	\end{equation}
	with $\Pi_0(F) = \delta^0\left( \mathbb{E}\left[ D^0 F \right] \right) := \mathbb{E}[F]$, where both sums in \eqref{EqPropStroock1} converge in $L^2(\Omega;H)$.
\end{proposition}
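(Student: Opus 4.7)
The plan is to reduce the statement to the scalar-valued Stroock--Taylor formula in \cite[Theorem~6]{stroock87} by exploiting the tensor-product structure built into the definition of $\mathscr{P}(H)$, and then to extend to $\mathbb{D}^{\infty,2}(H)$ by density and continuity. The first equality $F = \sum_{k=0}^\infty \Pi_k F$ in $L^2(\Omega;H)$ is essentially free: fixing an orthonormal basis $(y_n)_{n \in \mathbb{N}}$ of $H$, the system $(\xi_\alpha y_n)_{(\alpha,n) \in \mathcal{J} \times \mathbb{N}}$ is an orthonormal basis of $L^2(\Omega;H)$ (as already verified in the proof of Theorem~\ref{ThmCameronMartin}), so the chaos decomposition of $F$ converges in $L^2(\Omega;H)$ and regrouping the terms with $\vert\alpha\vert = k$ gives $\Pi_k F$. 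It therefore suffices to identify $\Pi_k F$ with $\frac{1}{k!}\delta^k(\mathbb{E}[D^k F])$ for every $k \in \mathbb{N}$.

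For elementary random variables $F = G \cdot y$ with $G \in \mathscr{P}(\mathbb{R})$ and $y \in H$, the definitions give $D^k F = (D^k G) \otimes y$, and hence $\mathbb{E}[D^k F] = \mathbb{E}[D^k G] \otimes y$ as a deterministic element of $L^2([0,T]^k;Z_0^{\otimes k}) \otimes H \subseteq L^2([0,T]^k;Z_0^{\otimes k} \otimes H)$. Using the adjoint characterization \eqref{EqDefDivergence} together with the elementary identity $D^k \langle F', y \rangle_H = \langle D^k F', y \rangle_H$ for any test $F' \in \mathscr{P}(H)$, one verifies that $\delta^k(\mathbb{E}[D^k G] \otimes y) = \delta^k(\mathbb{E}[D^k G]) \cdot y$, where the $\delta^k$ on the right is the scalar divergence. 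Since the projection respects the tensor structure, i.e.~$\Pi_k(G \cdot y) = (\Pi_k G) \cdot y$, applying the scalar Stroock--Taylor formula $\Pi_k G = \frac{1}{k!}\delta^k(\mathbb{E}[D^k G])$ in $L^2(\Omega)$ yields the desired identity for $F = G \cdot y$. By linearity of $D^k$, $\mathbb{E}$, $\delta^k$, and $\Pi_k$, the identity $\Pi_k F = \frac{1}{k!}\delta^k(\mathbb{E}[D^k F])$ extends to every finite linear combination, and hence to all of $\mathscr{P}(H)$.

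For the density step, $\mathscr{P}(H)$ is by definition dense in $\mathbb{D}^{k,2}(H)$, so for any $F \in \mathbb{D}^{\infty,2}(H) \subseteq \mathbb{D}^{k,2}(H)$ we may pick $F_n \in \mathscr{P}(H)$ with $F_n \to F$ in $\mathbb{D}^{k,2}(H)$. Then $\Pi_k F_n \to \Pi_k F$ in $L^2(\Omega;H)$ because $\Pi_k$ is an orthogonal projection, and $\mathbb{E}[D^k F_n] \to \mathbb{E}[D^k F]$ in $L^2([0,T]^k;Z_0^{\otimes k} \otimes H)$ by Jensen's inequality together with continuity of $D^k$. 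Viewing each deterministic element as a member of $\mathbb{D}^{k,2}(L^2([0,T]^k;Z_0^{\otimes k} \otimes H))$ with vanishing Malliavin derivatives, the continuity of $\delta^k$ on this subspace (cf.~\cite[Proposition~1.5.7]{nualart06}) gives $\delta^k(\mathbb{E}[D^k F_n]) \to \delta^k(\mathbb{E}[D^k F])$ in $L^2(\Omega;H)$. Passing to the limit in the identity already established on $\mathscr{P}(H)$ therefore proves the first claim, and the expansion \eqref{EqPropStroock1} follows by substituting it into $F = \sum_{k=0}^\infty \Pi_k F$.

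The main obstacle is the tensor identity $\delta^k(\psi \otimes y) = \delta^k(\psi) \cdot y$ in Step~1: although morally obvious, writing it down cleanly forces a careful bookkeeping of the iterated Hilbert--Schmidt structure on $Z_0^{\otimes k} \otimes H$ and a verification that scalar Malliavin calculus commutes with testing against a fixed $y \in H$. Once this tensorial compatibility is established, the remaining steps are routine linearity plus a standard density argument.
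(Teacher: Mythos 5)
Your argument is correct, but it follows a genuinely different route from the paper. You reduce the claim to the scalar Stroock--Taylor formula: you first prove $\Pi_k(G\,y)=\tfrac{1}{k!}\,\delta^k\big(\mathbb{E}[D^kG]\otimes y\big)$ for elementary tensors $G\,y$ with $G$ a polynomial of Gaussians, via the compatibility identities $D^k(G\,y)=(D^kG)\otimes y$, $\delta^k(\psi\otimes y)=\delta^k(\psi)\,y$ (verified through the duality \eqref{EqDefDivergence} and $D^k\langle F',y\rangle_H=\langle D^kF',y\rangle_H$), and $\Pi_k(G\,y)=(\Pi_kG)\,y$; you then extend by linearity to $\mathscr{P}(H)$ and pass to general $F\in\mathbb{D}^{\infty,2}(H)$ by density in $\mathbb{D}^{k,2}(H)$, using boundedness of $\Pi_k$, Jensen's inequality for $\mathbb{E}[D^k\cdot]$, and continuity of $\delta^k$ on deterministic integrands. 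The paper instead reproves the formula directly for $H$-valued $F$: it expands $\Pi_kF$ in the orthonormal basis $(\xi_\alpha y_n)$, uses Stroock's Lemma~1 to write each $\xi_\alpha y_n$ as $\delta^k$ of an elementary deterministic tensor $\bigotimes_{i,j}(g_j\widetilde{e}_i)^{\otimes\alpha_{i,j}}\otimes y_n$ (with the $k!/\alpha!$ multiplicity count), moves $\delta^k$ onto $F$ by duality, and identifies the resulting sum as the basis expansion of $\mathbb{E}[D^kF]$, so no approximation of $F$ is ever needed — the duality identity applies to any $F\in\mathbb{D}^{k,2}(H)$ directly. Your route outsources the Hermite/divergence combinatorics to the scalar literature at the price of the tensorization bookkeeping and a limit passage; the paper's route is longer in computation but self-contained and approximation-free. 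Both hinge on the same two ingredients, the adjoint relation \eqref{EqDefDivergence} and the continuity of $\delta^k$ on deterministic elements (your citation of \cite[Proposition~1.5.7]{nualart06} matches the paper's; note that on deterministic $\psi$ the $\mathbb{D}^{k,2}$-norm reduces to the $L^2([0,T]^k;Z_0^{\otimes k}\otimes H)$-norm, which is exactly what your limit step requires), so your proof is sound as a proposal.
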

\begin{proof}
	We follow the proof of \cite[Theorem~6]{stroock87}. To this end, we fix some $F \in \mathbb{D}^{\infty,2}(H)$ and $k \in \mathbb{N}$, and assume that $(y_n)_{n \in \mathbb{N}}$ is an orthonormal basis of $(H,\langle \cdot,\cdot \rangle_H)$. Then, by using the definition of the Wick polynomials $(\xi_\alpha)_{\alpha \in \mathcal{J}}$ (see Definition~\ref{DefWick}),  that $y_n \prod_{i,j=1}^\infty h_{\alpha_{i,j}}(\xi_{i,j}) = \delta^k\big( \bigotimes_{i,j=1}^\infty \left( g_j \widetilde{e}_i \right)^{\otimes \alpha_{i,j}} \otimes y_n \big)$ for any $l \in \mathbb{N}$ and $\alpha \in \mathcal{J}$ with $\vert \alpha \vert = k$ (see \cite[Lemma~1]{stroock87}), that there are $\frac{k!}{\alpha!}$ possibilities to form $\alpha \in \mathcal{J}$ with indices $(i_1,j_1),...,(i_k,j_k) \in \mathbb{N}^2$, that $F \in \mathbb{D}^{\infty,2}(H) \subseteq \mathbb{D}^{k,2}(H)$ together with \eqref{EqDefDivergence}, the linearity of the expectation and inner product, that $\delta^k: \mathbb{D}^{k,2}(L^2([0,T]^k;Z_0^{\otimes k} \otimes H)) \rightarrow L^2(\Omega;H)$ is continuous (see \cite[Proposition~1.5.7]{nualart06}), and that $\big( \bigotimes_{l=1}^k \left( g_{j_l} \widetilde{e}_{i_l} \right) \otimes y_n \big)_{\mathbf{i} := (i_1,...,i_k) \in \mathbb{N}^k, \, \mathbf{j} := (j_1,...,j_k) \in \mathbb{N}^k, \, n \in \mathbb{N}}$ is an orthonormal basis of $(L^2([0,T]^k;Z_0^{\otimes k} \otimes H),\langle \cdot, \cdot \rangle_{L^2([0,T]^k;Z_0^{\otimes k} \otimes H)})$, it follows that	
	\begin{equation}
		\label{EqPropStroockProof1}
		\begin{aligned}
			\Pi_k F & = \sum_{\alpha \in \mathcal{J}_k} \mathbb{E}\left[ F \xi_\alpha \right] \xi_\alpha = \sum_{\alpha \in \mathcal{J}_k} \sum_{n=1}^\infty \mathbb{E}\left[ \langle F, \xi_\alpha y_n \rangle_H \right] \xi_\alpha y_n \\
			& = \sum_{\alpha \in \mathcal{J}_k} \sum_{n=1}^\infty \frac{1}{\alpha!} \mathbb{E}\left[ \bigg\langle F, y_n \prod_{i,j=1}^\infty h_{\alpha_{i,j}}(\xi_{i,j}) \bigg\rangle_H \right] y_n \prod_{i,j=1}^\infty h_{\alpha_{i,j}}(\xi_{i,j}) \\
			& = \sum_{\alpha \in \mathcal{J}_k} \sum_{n=1}^\infty \frac{1}{\alpha!} \mathbb{E}\left[ \bigg\langle F, \delta^k\left( \bigotimes_{i,j=1}^\infty \left( g_j \widetilde{e}_i \right)^{\otimes \alpha_{i,j}} \otimes y_n \right) \bigg\rangle_H \right] \delta^k\left( \bigotimes_{i,j=1}^\infty \left( g_j \widetilde{e}_i \right)^{\otimes \alpha_{i,j}} \otimes y_n \right) \\
			& = \frac{1}{k!} \sum_{\mathbf{i}, \mathbf{j} \in \mathbb{N}^k} \sum_{n=1}^\infty \mathbb{E}\left[ \bigg\langle F, \delta^k\left( \bigotimes_{l=1}^k \left( g_{j_l} \widetilde{e}_{i_l} \right) \otimes y_n \right) \bigg\rangle_H \right] \delta^k\left( \bigotimes_{l=1}^k \left( g_{j_l} \widetilde{e}_{i_l} \right) \otimes y_n \right) \\
			& = \frac{1}{k!} \sum_{\mathbf{i}, \mathbf{j} \in \mathbb{N}^k} \sum_{n=1}^\infty \mathbb{E}\left[ \Big\langle D^k F, \bigotimes_{l=1}^k \left( g_{j_l} \widetilde{e}_{i_l} \right) \otimes y_n \Big\rangle_{L^2([0,T]^k;Z_0^{\otimes k} \otimes H)} \right] \delta^k\left( \bigotimes_{l=1}^k \left( g_{j_l} \widetilde{e}_{i_l} \right) \otimes y_n \right) \\
			& = \frac{1}{k!} \sum_{\mathbf{i}, \mathbf{j} \in \mathbb{N}^k} \sum_{n=1}^\infty \Big\langle \mathbb{E}\left[ D^k F \right], \bigotimes_{l=1}^k \left( g_{j_l} \widetilde{e}_{i_l} \right) \otimes y_n \Big\rangle_{L^2([0,T]^k;Z_0^{\otimes k} \otimes H)} \delta^k\left( \bigotimes_{l=1}^k \left( g_{j_l} \widetilde{e}_{i_l} \right) \otimes y_n \right) \\
			& = \frac{1}{k!} \delta^k\left( \sum_{\mathbf{i}, \mathbf{j} \in \mathbb{N}^k} \sum_{n=1}^\infty \Big\langle \mathbb{E}\left[ D^k F \right], \bigotimes_{l=1}^k \left( g_{j_l} \widetilde{e}_{i_l} \right) \otimes y_n \Big\rangle_{L^2([0,T]^k;Z_0^{\otimes k} \otimes H)} \bigotimes_{l=1}^k \left( g_{j_l} \widetilde{e}_{i_l} \right) \otimes y_n \right) \\
			& = \frac{1}{k!} \delta^k\left( \mathbb{E}\left[ D^k F \right] \right),
		\end{aligned}
	\end{equation}
	where the sums in the first six lines converge in $L^2(\Omega;H)$, and where the sum in the second last line converges in $L^2([0,T]^k;Z_0^{\otimes k} \otimes H)$. Since $k \in \mathbb{N}$ was chosen arbitrarily and the Wick polynomials $(\xi_\alpha)_{\alpha \in \mathcal{J}}$ are orthonormal among different orders (see Lemma~\ref{LemmaWick}), we obtain the expansion \eqref{EqPropStroock1}.
\end{proof}

Moreover, we compute an upper bound for the $\Vert \cdot \Vert_{L^2([0,T];H)}$-norm of the $k$-th divergence operator of some given $\psi \in L^2([0,T]^k;Z_0^{\otimes k} \otimes H) \subseteq \dom(\delta^k)$ (see also \cite[Proposition~2.3]{nualart88} for the case $H = \mathbb{R}$).

\begin{lemma}
	\label{LemmaL2SkorokhodInt}
	For $k \in \mathbb{N}$, let $\psi \in L^2([0,T]^k;Z_0^{\otimes k} \otimes H)$. Then, $\psi \in \dom(\delta^k)$ and it holds that
	\begin{equation}
		\begin{aligned}
			\label{EqLemmaL2SkorokhodInt1}
			\mathbb{E}\left[ \left\Vert \delta^k(\psi) \right\Vert_H^2 \right] \leq k! \Vert \psi \Vert_{L^2([0,T]^k;Z_0^{\otimes k} \otimes H)}^2.
		\end{aligned}
	\end{equation}
\end{lemma}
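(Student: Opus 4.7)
The bound \eqref{EqLemmaL2SkorokhodInt1} is the standard $L^{2}$-isometry-type inequality for the $k$-fold divergence operator (equivalently the $k$-fold multiple Wiener--It\^o integral), adapted here to Hilbert-space-valued integrands. Since any deterministic $\psi \in L^{2}([0,T]^{k};Z_{0}^{\otimes k}\otimes H)$, viewed as a constant random variable, lies in $\mathbb{D}^{k,2}(L^{2}([0,T]^{k};Z_{0}^{\otimes k}\otimes H)) \subseteq \dom(\delta^{k})$ with vanishing Malliavin derivative, the inclusion $\psi \in \dom(\delta^{k})$ is automatic, and the real task is the quantitative estimate. My approach is to expand $\psi$ in the orthonormal basis $\mathcal{B} := \{\bigotimes_{l=1}^{k}(g_{j_{l}}\widetilde{e}_{i_{l}}) \otimes y_{n} : \mathbf{i},\mathbf{j}\in\mathbb{N}^{k},\,n\in\mathbb{N}\}$ of $L^{2}([0,T]^{k}; Z_{0}^{\otimes k}\otimes H)$, where $(y_{n})_{n\in\mathbb{N}}$ is any orthonormal basis of $H$, apply $\delta^{k}$ term-by-term via the identity already derived in the proof of Proposition~\ref{PropStroock}, and then control the $L^{2}(\Omega;H)$-norm by orthonormality of the Wick polynomials together with a combinatorial Cauchy--Schwarz estimate.

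Concretely, I write $\psi = \sum_{\mathbf{i},\mathbf{j},n} c_{\mathbf{i},\mathbf{j},n}\bigotimes_{l=1}^{k}(g_{j_{l}}\widetilde{e}_{i_{l}})\otimes y_{n}$ with $\|\psi\|_{L^{2}([0,T]^{k};Z_{0}^{\otimes k}\otimes H)}^{2} = \sum_{\mathbf{i},\mathbf{j},n}|c_{\mathbf{i},\mathbf{j},n}|^{2}$. From the identity $\delta^{k}\big(\bigotimes_{i,j}(g_{j}\widetilde{e}_{i})^{\otimes\alpha_{i,j}}\otimes y_{n}\big) = \sqrt{\alpha!}\,\xi_{\alpha}y_{n}$ for each $\alpha \in \mathcal{J}_{k}$ (precisely \cite[Lemma~1]{stroock87}, as used in \eqref{EqPropStroockProof1}), together with the permutation-invariance $\delta^{k}(\psi) = \delta^{k}(\widetilde{\psi})$ under symmetrization in the first $k$ variables, I group the index tuples $(\mathbf{i},\mathbf{j})$ into equivalence classes $S_{\alpha}$ of tuples whose multiplicity pattern coincides with $\alpha \in \mathcal{J}_{k}$, with $|S_{\alpha}| = k!/\alpha!$. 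Working first on finite truncations $\psi^{(M)}$ (where term-wise application of $\delta^{k}$ is justified by the continuity statement in \cite[Proposition~1.5.7]{nualart06}), this yields
\begin{equation*}
\delta^{k}(\psi^{(M)}) \;=\; \sum_{\alpha \in \mathcal{J}_{k},\,n\in\mathbb{N}} \sqrt{\alpha!}\,\xi_{\alpha}y_{n} \sum_{(\mathbf{i},\mathbf{j}) \in S_{\alpha}} c^{(M)}_{\mathbf{i},\mathbf{j},n}.
\end{equation*}
Since $(\xi_{\alpha}y_{n})_{\alpha\in\mathcal{J},\,n\in\mathbb{N}}$ is orthonormal in $L^{2}(\Omega;H)$ (established inside the proof of Theorem~\ref{ThmCameronMartin}), I obtain $\mathbb{E}\big[\|\delta^{k}(\psi^{(M)})\|_{H}^{2}\big] = \sum_{\alpha,n}\alpha!\big|\sum_{(\mathbf{i},\mathbf{j}) \in S_{\alpha}}c^{(M)}_{\mathbf{i},\mathbf{j},n}\big|^{2}$, and one application of Cauchy--Schwarz using $|S_{\alpha}| = k!/\alpha!$ gives $\mathbb{E}\big[\|\delta^{k}(\psi^{(M)})\|_{H}^{2}\big] \leq k!\,\|\psi^{(M)}\|^{2} \leq k!\,\|\psi\|^{2}$. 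Passing to the limit $M\to\infty$ via continuity of $\delta^{k}$ yields \eqref{EqLemmaL2SkorokhodInt1}.

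I expect the main obstacle to be the combinatorial bookkeeping needed to rigorously justify the Hilbert-space-valued analogue of the permutation-invariance identity $\delta^{k}(\psi) = \delta^{k}(\widetilde{\psi})$, which in the scalar case is a classical Malliavin-calculus fact (cf.~\cite[Proposition~1.1.4]{nualart06}) but whose verification here requires keeping track of the tensor factor in $H$. If a direct extension proves awkward, a clean fallback is to reduce to the scalar case by writing $\psi = \sum_{n}\psi_{n}\otimes y_{n}$ with $\psi_{n} \in L^{2}([0,T]^{k};Z_{0}^{\otimes k})$, verifying via the duality formula \eqref{EqDefDivergence} that $\delta^{k}(\psi_{n}\otimes y_{n}) = \delta^{k}_{\mathbb{R}}(\psi_{n})\,y_{n}$ where $\delta^{k}_{\mathbb{R}}$ is the real-valued divergence operator, and then applying the known scalar bound $\mathbb{E}[\delta^{k}_{\mathbb{R}}(\psi_{n})^{2}] \leq k!\|\psi_{n}\|^{2}$ term-wise and summing via $\|\psi\|^{2} = \sum_{n}\|\psi_{n}\|^{2}$.
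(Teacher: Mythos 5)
Your proposal is correct, but it proves the estimate by a different route than the paper. The paper argues by duality: it computes $D^k\xi_\alpha$ explicitly via \cite[Proposition~1.2.2]{nualart06}, deduces the uniform bound $\Vert \mathbb{E}[D^k Y] \Vert_{L^2([0,T]^k;Z_0^{\otimes k}\otimes H)}^2 \leq k!$ for $Y$ ranging over combinations of the $\xi_\alpha y_n$ in the unit ball of $L^2(\Omega;H)$, and then estimates $\mathbb{E}\big[\Vert\delta^k(\psi)\Vert_H^2\big]$ as a dual-norm supremum using the adjoint relation \eqref{EqDefDivergence} together with $\mathbb{E}\big[\langle \psi, D^kY\rangle\big] = \langle \psi, \mathbb{E}[D^kY]\rangle$ (valid since $\psi$ is deterministic) and Cauchy--Schwarz. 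You instead work on the primal side: expand $\psi$ in the product basis, apply $\delta^k$ termwise via the identity $\delta^k\big(\bigotimes_{i,j}(g_j\widetilde{e}_i)^{\otimes\alpha_{i,j}}\otimes y_n\big)=\sqrt{\alpha!}\,\xi_\alpha y_n$ (Stroock's Lemma~1, as in the proof of Proposition~\ref{PropStroock}), use orthonormality of $(\xi_\alpha y_n)$ in $L^2(\Omega;H)$, and apply Cauchy--Schwarz over the $k!/\alpha!$ index tuples sharing pattern $\alpha$; this in fact yields the exact identity $\mathbb{E}\big[\Vert\delta^k(\psi)\Vert_H^2\big]=k!\,\Vert\widetilde{\psi}\Vert^2$ for the symmetrization $\widetilde{\psi}$, of which \eqref{EqLemmaL2SkorokhodInt1} is a consequence. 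Both arguments rest on the same combinatorial count, but yours requires justifying the symmetrization invariance $\delta^k(\psi)=\delta^k(\widetilde{\psi})$ for deterministic Hilbert-space-valued integrands (or, as in your fallback, the componentwise identity $\delta^k(\psi_n\otimes y_n)=\delta^k_{\mathbb{R}}(\psi_n)\,y_n$, which reduces everything to the scalar estimate of Nualart--Zakai --- precisely the reference \cite{nualart88} the paper cites for $H=\mathbb{R}$), whereas the paper's dual formulation sidesteps symmetrization entirely because $\mathbb{E}[D^kY]$ enters only through an inner product. Your truncation-and-limit step is sound, since on deterministic integrands the $\mathbb{D}^{k,2}$- and $L^2$-norms coincide, so the continuity of $\delta^k$ from \cite[Proposition~1.5.7]{nualart06} applies; the componentwise fallback is arguably the cleanest of the three arguments.
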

\begin{proof}
	Fix some $k \in \mathbb{N}$ and $\psi \in L^2([0,T]^k;Z_0^{\otimes k} \otimes H) \subseteq \mathbb{D}^{k,2}(L^2([0,T]^k;Z_0^{\otimes k} \otimes H)) \subseteq \dom(\delta^k)$. Then, by iteratively using \cite[Proposition~1.2.2]{nualart06} and the notations $\epsilon(i,j) := (\delta_{(i,j),(m,n)})_{(m,n) \in \mathbb{N}^2} \in \mathcal{J} \subseteq \mathbb{N}_0^{\mathbb{N} \times \mathbb{N}}$ for any $i,j \in \mathbb{N}_2$ as well as $\epsilon(i_{l:k},j_{l:k}) := \epsilon(i_l,j_l) + ... + \epsilon(i_k,j_k) \in \mathcal{J} \subseteq \mathbb{N}_0^{\mathbb{N} \times \mathbb{N}}$ for any $i,j \in \mathbb{N}$ and $l = 1,...,k$, it follows for every $\alpha \in \mathcal{J}$ that
	\begin{equation*}
		\begin{aligned}
			& D^k_{s_1,...,s_k} \xi_\alpha = \sum_{i_k=1 \atop j_k=1}^\infty \sqrt{\alpha_{i_k,j_k}} D^{k-1} \xi_{\alpha-\epsilon(i_k,j_k)} g_{j_k}(s_k) \widetilde{e}_{i_k} \\
			& \quad\quad = \sum_{i_{k-1},i_k=1 \atop j_{k-1},j_k=1}^\infty \sqrt{\alpha_{i_k,j_k}} \sqrt{(\alpha-\epsilon(i_k,j_k))_{i_{k-1},j_{k-1}}} D^{k-2} \xi_{\alpha-\epsilon(i_{(k-1):k},j_{(k-1):k})} \left( g_{j_{k-1}}(s_{k-1}) \widetilde{e}_{i_{k-1}} \right) \otimes \left( g_{j_k}(s_k) \widetilde{e}_{i_k} \right) \\
			& \quad\quad = ... \\
			& \quad\quad = \sqrt{\alpha!} \sum_{i_1,...,i_k=1 \atop j_1,...,j_k=1}^\infty \xi_{\alpha-\epsilon(i_{1:k},j_{1:k})} \bigotimes_{l=1}^k \left( g_{j_l}(s_l) \widetilde{e}_{i_l} \right).
		\end{aligned}
	\end{equation*}
	Hence, by taking the expectation and using that $\mathbb{E}\left[ \xi_{\alpha-\epsilon(i_{1:k},j_{1:k})} \right] = \delta_{\alpha,\epsilon(i_{1:k},j_{1:k})}$, Minkowski's inequality, that $\big( \bigotimes_{j=1}^k (g_{j_l} \widetilde{e}_{i_l}) \big)_{\mathbf{i} := (i_1,...,i_k) \in \mathbb{N}^k, \, \mathbf{j} := (j_1,...,j_k) \in \mathbb{N}^k}$ forms an orthonormal basis of the Hilbert space $(L^2([0,T]^k;Z_0^{\otimes k}),\langle\cdot,\cdot\rangle_{L^2([0,T]^k;Z_0^{\otimes k})})$, and that there are $\frac{\vert\alpha\vert!}{\alpha!}$ possibilities to form $\epsilon(i_{1:k},j_{1:k}) = \alpha$ with $(i_1,j_1),...,(i_k,j_k) \in \mathbb{N}^2$, we have for every $\alpha \in \mathcal{J}$ that
	\begin{equation*}
		\begin{aligned}
			\left\Vert \mathbb{E}\left[ D^k \xi_\alpha \right] \right\Vert_{L^2([0,T]^k;Z_0^{\otimes k})}^2 & = \alpha! \left\Vert \sum_{\mathbf{i}, \mathbf{j} \in \mathbb{N}^k} \mathbb{E}\left[ \xi_{\alpha-\epsilon(i_{1:k},j_{1:k})} \right] \bigotimes_{l=1}^k \left( g_{j_l} \widetilde{e}_{i_l} \right) \right\Vert_{L^2([0,T]^k;Z_0^{\otimes k})}^2 \\
			& \leq \alpha! \sum_{\mathbf{i}, \mathbf{j} \in \mathbb{N}^k} \mathds{1}_{\lbrace \epsilon(i_{1:k},j_{1:k}) \rbrace}(\alpha) = \alpha! \frac{k!}{\alpha!} = k!.
		\end{aligned}
	\end{equation*}
	Next, for a fixed orthonormal basis $(y_n)_{n \in \mathbb{N}}$ of $(H,\langle\cdot,\cdot\rangle_H)$, let $\conv(\lbrace \pm \xi_\alpha y_n: (\alpha,n) \in \mathcal{J} \times \mathbb{N} \rbrace) := \big\lbrace \sum_{n=1}^N \lambda_n \xi_{\alpha_n} y_n: N \in \mathbb{N}, \, \alpha_1,...,\alpha_N \in \mathcal{J}, \, y_1,...,y_N \in H, \, \lambda_1,...,\lambda_N \in [-1,1], \, \sum_{n=1}^N \vert \lambda_n \vert = 1 \big\rbrace$ denote the set of convex combinations built from $\lbrace \pm \xi_\alpha y_n: (\alpha,n) \in \mathcal{J} \times \mathbb{N} \rbrace$. Then, for every $Y := \sum_{n=1}^N \lambda_n \xi_{\alpha_n} y_n \in \conv(\lbrace \pm \xi_\alpha y_n: (\alpha,n) \in \mathcal{J} \times \mathbb{N} \rbrace)$, it holds that
	\begin{equation*}
		\begin{aligned}
			\left\Vert \mathbb{E}\left[ D^k Y \right] \right\Vert_{L^2([0,T]^k;Z_0^{\otimes k} \otimes H)}^2 & = \left\Vert \sum_{n=1}^N \lambda_n \mathbb{E}\left[ D^k \xi_{\alpha_n} \right] y_n \right\Vert_{L^2([0,T]^k;Z_0^{\otimes k} \otimes H)}^2 \\
			& \leq \left( \sum_{n=1}^N \vert \lambda_n \vert \left\Vert \mathbb{E}\left[ D^k \xi_{\alpha_n} \right] y_n \right\Vert_{L^2([0,T]^k;Z_0^{\otimes k} \otimes H)} \right)^2 \\
			& \leq \left( \sum_{n=1}^N \vert \lambda_n \vert \right)^2 \max_{n=1,...,N} \left( \left\Vert \mathbb{E}\left[ D^k \xi_{\alpha_n} \right] \right\Vert_{L^2([0,T]^k;Z_0^{\otimes k})}^2 \Vert y_n \Vert_H^2 \right) \\
			& \leq k!.
		\end{aligned}
	\end{equation*}
	Finally, by using the dual norm of $L^2([0,T];H) \cong L^2([0,T];H)^*$, that $\linspan\lbrace \xi_\alpha y_n: (\alpha,n) \in \mathcal{J} \times \mathbb{N} \rbrace$ is dense in $L^2([0,T];H)$ (see Lemma~\ref{LemmaWick} and \cite[Lemma~1.2.19~(i)]{hytoenen16}), that $\lbrace Y \in \linspan\lbrace \xi_\alpha y_n: (\alpha,n) \in \mathcal{J} \times \mathbb{N} \rbrace: \Vert Y \Vert_{L^2(\Omega;H)} \leq 1 \rbrace = \conv(\lbrace \pm \xi_\alpha y_n: (\alpha,n) \in \mathcal{J} \times \mathbb{N} \rbrace)$, the identity~\eqref{EqDefDivergence}, and linearity, it follows that
	\begin{equation*}
		\begin{aligned}
			\mathbb{E}\left[ \left\Vert \delta^k(\psi) \right\Vert_H^2 \right] & = \sup_{Y \in L^2(\Omega;H) \atop \Vert Y \Vert_{L^2(\Omega;H)} \leq 1} \left\vert \mathbb{E}\left[ \langle \delta^k(\psi), Y \rangle_H \right] \right\vert^2 \\
			& = \sup_{Y \in \linspan\lbrace \xi_\alpha y_n: (\alpha,n) \in \mathcal{J} \times \mathbb{N} \rbrace \atop \Vert Y \Vert_{L^2(\Omega;H)} \leq 1} \left\vert \mathbb{E}\left[ \langle \delta^k(\psi), Y \rangle_H \right] \right\vert^2 \\
			& = \sup_{Y \in \conv(\lbrace \pm \xi_\alpha y_n: (\alpha,n) \in \mathcal{J} \times \mathbb{N} \rbrace)} \left\vert \mathbb{E}\left[ \langle \psi, D^k Y \rangle_{L^2([0,T]^k;Z_0^{\otimes k} \otimes H)} \right] \right\vert^2 \\
			& = \sup_{Y \in \conv(\lbrace \pm \xi_\alpha y_n: (\alpha,n) \in \mathcal{J} \times \mathbb{N} \rbrace)} \left\vert \Big\langle \psi, \mathbb{E}\left[ D^k Y \right] \Big\rangle_{L^2([0,T]^k;Z_0^{\otimes k} \otimes H)} \right\vert^2 \\
			& \leq \Vert \psi \Vert_{L^2([0,T]^k;Z_0^{\otimes k} \otimes H)}^2 \sup_{Y \in \conv(\lbrace \xi_\alpha y_n: (\alpha,n) \in \mathcal{J} \times \mathbb{N} \rbrace)} \left\Vert \mathbb{E}\left[ D^k Y \right] \right\Vert_{L^2([0,T]^k;Z_0^{\otimes k} \otimes H)}^2 \\
			& \leq k! \Vert \psi \Vert_{L^2([0,T]^k;Z_0^{\otimes k} \otimes H)}^2,
		\end{aligned}
	\end{equation*}
	which completes the proof.
\end{proof}

\subsubsection{Malliavin regularity of solution to \eqref{EqDefSPDE}}
\label{SubsecMall2}

For the proof of the approximation rates in Theorem~\ref{ThmDetAR}+\ref{ThmRandAR}, we first show that the solution of \eqref{EqDefSPDE} belongs at each time to $\mathbb{D}^{\infty,2}(H)$. Then, by upper bounding its Malliavin derivatives, we can estimate the higher order terms in the chaos expansion \eqref{EqThmCameronMartin2}.

To this end, we generalize the results in \cite[Theorem~7.1]{sanzsole05}, \cite[Section~5.5]{carmona07}, and \cite[Theorem~5.7]{kruse14} from $\mathbb{D}^{1,2}(H)$ to $\mathbb{D}^{\infty,2}(H)$. We use the notations $D^0 X_t := X_t$ for $t \in [0,T]$, $\mathbf{r}^* := \max_{j=1,...,k} r_j$ and $\mathbf{r}_{-l} := (r_1,...,r_{l-1},r_{l+1},...,r_k)$ for $\mathbf{r} := (r_1,...,r_k) \in [0,T]^k$, $\mathbf{z}_{-l} := (z_1,...,z_{l-1},z_{l+1},...,z_k)$ for $\mathbf{z} := (z_1,...,z_k) \in Z_0^{\otimes k}$, and $\lambda_\mathbf{i} := \prod_{l=1}^k \lambda_{i_l}$ for $\mathbf{i} := (i_1,...,i_k) \in \mathbb{N}^k$ and $(\lambda_i)_{i \in \mathbb{N}}$ given in \eqref{EqDefBMi}.

\begin{proposition}
	\label{PropMall}
	Let Assumption~\ref{AssSPDEAff} hold and let $X$ be a mild solution of \eqref{EqDefSPDE}. Then, by using the constants $C^{(2)}_{F,B,S,t} > 0$ (see Proposition~\ref{PropSPDE}), $C_S := \sup_{t \in [0,T]} \Vert S_t \Vert_{L(H;H)} <\infty$, and $C_\lambda := \sum_{i=1}^\infty \lambda_i < \infty$ (cf.~\eqref{EqDefBMi}), the following holds true:
	\begin{enumerate}
		\item\label{PropMall1} For every $t \in [0,T]$ we have $X_t \in \mathbb{D}^{\infty,2}(H)$. Moreover, for every $k \in \mathbb{N}$, a.e.~$\mathbf{r} := (r_1,...,r_k) \in [0,T]^k$, every $\mathbf{z} := (z_1,...,z_k) \in Z_0^{\otimes k}$, and every $t \in [0,T]$ it holds that
		\begin{equation}
			\label{EqPropMall1}
			\begin{aligned}
				\quad\quad \left( D^k_\mathbf{r} X_t \right)(\mathbf{z}) & = 
				\begin{cases}
					\begin{matrix*}[l]
						\sum_{l=1}^k S_{t-r_l} \big( \mathds{1}_{\lbrace 1 \rbrace}(k) b_0(r_l) + b_1(r_l) \big[ \big( D^{k-1}_{\mathbf{r}_{-l}} X_{r_l} \big)(\mathbf{z}_{-l}) \big] \big)(z_l)  \\
						\quad\quad + \int_{\mathbf{r}^*}^t S_{t-s} f_1(s)\big[ \big( D^k_\mathbf{r} X_s \big)(\mathbf{z}) \big] ds \\
						\quad\quad + \int_{\mathbf{r}^*}^t S_{t-s} b_1(s)\big[ \big( D^k_\mathbf{r} X_s \big)(\mathbf{z}) \big] dW_s, \quad\quad \mathbb{P}\text{-a.s.},
					\end{matrix*}
					& \text{if } t \in [\mathbf{r}^*,T], \\
					0, & \text{if } t \in [0,\mathbf{r}^*).
				\end{cases}
			\end{aligned}
		\end{equation}
		\item\label{PropMall2} For every $k \in \mathbb{N}$, a.e.~$\mathbf{r} := (r_1,...,r_k) \in [0,T]^k$, every $\mathbf{i} := (i_1,...,i_k) \in \mathbb{N}^k$, and every $t \in [0,T]$, it holds that
		\begin{equation}
			\label{EqPropMall2}
			\quad\quad \mathbb{E}\left[ \left\Vert \left( D^k_\mathbf{r} X_t \right)(\widetilde{e}_\mathbf{i}) \right\Vert_H^2 \right] \leq 2 C^{(2)}_{F,B,S,t} \left( 2 + \Vert \chi_0 \Vert_H^2 \right) \left( 3 \frac{C_S^2 C_{F,B}^2}{C_\lambda} \right)^k e^{3 k C_S^2 C_{F,B}^2 (t+1) t} \lambda_\mathbf{i}.
		\end{equation}
		\item\label{PropMall3} For every $k \in \mathbb{N}$, a.e.~$\mathbf{r} := (r_1,...,r_k) \in [0,T]^k$, every $\mathbf{z} := (z_1,...,z_k) \in Z_0^{\otimes k}$, and every $t \in [0,T]$, it holds that
		\begin{equation}
			\label{EqPropMall3}
			\begin{aligned}
				\quad \mathbb{E}\left[ \left( D^k_\mathbf{r} X_t \right)(\mathbf{z}) \right] & = 
				\begin{cases}
					\begin{matrix*}[l]
						\sum_{l=1}^k S_{t-r_l} \big( \mathds{1}_{\lbrace 1 \rbrace}(k) b_0(r_l) + b_1(r_l) \mathbb{E}\big[ \big( D^{k-1}_{\mathbf{r}_{-l}} X_{r_l} \big)(\mathbf{z}_{-l}) \big] \big)(z_l)  \\
						\quad\quad + \int_{\mathbf{r}^*}^t S_{t-s} f_1(s) \mathbb{E}\big[ \big( D^k_\mathbf{r} X_s \big)(\mathbf{z}) \big] ds,
					\end{matrix*}
					& \text{if } t \in [\mathbf{r}^*,T], \\
					0, & \text{if } t \in [0,\mathbf{r}^*).
				\end{cases}
			\end{aligned}
		\end{equation}
		\item\label{PropMall4} For every $k \in \mathbb{N}$, a.e.~$\mathbf{r} := (r_1,...,r_k) \in [0,T]^k$, every $\mathbf{i} := (i_1,...,i_k) \in \mathbb{N}^k$, and every $t \in [0,T]$, it holds that
		\begin{equation}
			\label{EqPropMall4}
			\left\Vert \mathbb{E}\left[ \left( D^k_\mathbf{r} X_t \right)(\widetilde{e}_\mathbf{i}) \right] \right\Vert_H \leq \sqrt{C^{(2)}_{F,B,S,t}} \left( 2 + \Vert \chi_0 \Vert_H^2 \right)^\frac{1}{2} \left( \frac{C_S C_{F,B}}{\sqrt{C_\lambda}} \right)^k e^{k C_S C_{F,B} t} \sqrt{\lambda_\mathbf{i}}.
		\end{equation}
	\end{enumerate}
\end{proposition}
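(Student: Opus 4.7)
The plan is to prove the four statements jointly by induction on $k \in \mathbb{N}$, using a Picard iteration to propagate Malliavin regularity at each step. First, I would set up the Picard scheme $X^{(0)}_t := S_t \chi_0$ and
\begin{equation*}
	X^{(n+1)}_t := S_t \chi_0 + \int_0^t S_{t-s} \left( f_0(s) + f_1(s) X^{(n)}_s \right) ds + \int_0^t S_{t-s} \left( b_0(s) + b_1(s) X^{(n)}_s \right) dW_s,
\end{equation*}
which by the fixed-point construction behind Proposition~\ref{PropSPDE} converges in $L^2(\Omega;H)$ uniformly in $t \in [0,T]$ to $X_t$. Because the coefficients are affine in $x$, each iterate $X^{(n)}_t$ lies in $\mathscr{P}(H) \subseteq \mathbb{D}^{\infty,2}(H)$: indeed, it is a finite polynomial in iterated stochastic integrals of deterministic kernels, as one sees by iterating the recursion.

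Second, for the base case $k = 1$ of part~\ref{PropMall1}, I would apply $D$ to the Picard recursion, using that $D$ commutes with deterministic Bochner integrals and satisfies $D_r \big( \int_0^t \Psi_s dW_s \big) = \Psi_r + \int_r^t D_r \Psi_s dW_s$ for $\mathbb{F}$-adapted $\Psi$ with appropriate regularity (see e.g.~\cite[Proposition~1.3.8]{nualart06}, adapted to Hilbert-space valued integrands). This yields a linear equation of the same affine type for $D_r X^{(n+1)}_t$, whose closedness under the iteration (proved via a Gronwall argument on $\mathbb{E}\big[ \sup_{t \in [r,T]} \Vert (D_r X^{(n)}_t)(z) \Vert_H^2 \big]$ using Assumption~\ref{AssSPDEAff}~\ref{AssSPDEAff4}) provides both convergence of $D X^{(n)} \to D X$ in the relevant Bochner norm and closedness of $D$ to pass from the Picard identity to \eqref{EqPropMall1}. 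For the inductive step $k-1 \mapsto k$, I would apply $D$ one more time to \eqref{EqPropMall1} at level $k-1$, using the same commutation rule with stochastic integrals and the Leibniz-type identity, producing the desired formula at level $k$ and confirming $X_t \in \mathbb{D}^{k,2}(H)$ for every $k \in \mathbb{N}$, hence $X_t \in \mathbb{D}^{\infty,2}(H)$.

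Third, for part~\ref{PropMall2}, I would insert $\widetilde{e}_\mathbf{i}$ into \eqref{EqPropMall1}, take squared norms in $H$, use $(a+b+c)^2 \leq 3a^2 + 3b^2 + 3c^2$, apply Ito's isometry (\cite[Proposition~4.28]{daprato14}) to the stochastic integral, use $\Vert \widetilde{e}_{i_l} \Vert_Z^2 = \lambda_{i_l}$ and the bounds of Assumption~\ref{AssSPDEAff}~\ref{AssSPDEAff4} (with the $\sqrt{C_\lambda}$ factors exactly absorbing the sum over Hilbert-Schmidt indices of $b_1$), and then close with Gronwall's lemma. The inductive hypothesis on the boundary term at $r_l$ (which involves $\mathbb{E}\big[\Vert (D^{k-1}_{\mathbf{r}_{-l}} X_{r_l})(\widetilde{e}_{\mathbf{i}_{-l}}) \Vert_H^2\big]$) together with the $k=0$ bound $\mathbb{E}[\Vert X_{r_l} \Vert_H^2] \leq C^{(2)}_{F,B,S,r_l}(1 + \Vert \chi_0 \Vert_H^2)$ from Proposition~\ref{PropSPDE} fixes the geometric factor $(3 C_S^2 C_{F,B}^2/C_\lambda)^k$ and exponential $e^{3kC_S^2 C_{F,B}^2(t+1)t}$.

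Finally, parts~\ref{PropMall3}--\ref{PropMall4} follow by taking expectations in \eqref{EqPropMall1}. The stochastic integral has zero mean since its integrand lies in $L^2([\mathbf{r}^*,t]\times\Omega; L_2(Z_0;H))$ by part~\ref{PropMall2}, so Fubini yields the deterministic integral equation \eqref{EqPropMall3}; then Gronwall's lemma combined with the bound \eqref{EqPropMall2} applied at order $k-1$ for the forcing term at time $r_l$ produces \eqref{EqPropMall4}, where now only a single power of $C_{F,B}/\sqrt{C_\lambda}$ appears per order (as the stochastic-integral contribution is gone) and the square root of the second-moment bound replaces the second moment. The main obstacle will be rigorously justifying the commutation of $D^k$ with the $L_2(Z_0;H)$-valued stochastic integral in the Picard step, and ensuring joint Bochner measurability in $(\mathbf{r},\omega,t)$ so that the identities in \eqref{EqPropMall1} and \eqref{EqPropMall3} can be integrated against elements of $Z_0^{\otimes k}$; this I would handle by first proving everything for the Picard iterates (where the structure is explicit) and then passing to the limit using the closedness of $D^k$ on $\mathbb{D}^{k,2}(H)$ and the uniform-in-$n$ bound from part~\ref{PropMall2}.
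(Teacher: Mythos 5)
Your overall architecture matches the paper for parts \ref{PropMall2}--\ref{PropMall4} (insert $\widetilde{e}_\mathbf{i}$, use $(a+b+c)^2\leq 3(a^2+b^2+c^2)$, It\^o's isometry, Gronwall; take expectations and kill the stochastic integral via the second-moment bound), but for part \ref{PropMall1} you take a genuinely different route. The paper does not run a Picard iteration: it invokes the known first-order result for Lipschitz SPDEs (\cite[Lemma~5.3]{carmona07}) and bootstraps by observing that, for fixed $\widetilde{\mathbf{r}}$ and $\widetilde{\mathbf{z}}$, the process $\big(D^{k-1}_{\widetilde{\mathbf{r}}}X_t\big)(\widetilde{\mathbf{z}})$ is itself a mild solution of an affine SPDE started at $\widetilde{\mathbf{r}}^*$ with random initial datum, so the same lemma applies at every order; the commutation of $D$ with the time and stochastic integrals is then taken from the cited references rather than re-derived. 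Your from-scratch Picard argument is more self-contained but must supply exactly the machinery the paper outsources: the commutation rule for $D$ with $L_2(Z_0;H)$-valued stochastic integrals and the limit-passage lemma. On that last point, note one inaccuracy: the iterates $X^{(n)}_t$ are \emph{not} elements of $\mathscr{P}(H)$ — an iterated Wiener integral with an operator-valued kernel is generally an infinite (not finite) linear combination of terms $q(\mathscr{W}(\psi_1),\dots,\mathscr{W}(\psi_N))x$ — so you cannot quote membership in the polynomial class; you need either the standard lemma "$X^{(n)}_t\to X_t$ in $L^2(\Omega;H)$ plus $\sup_n\Vert X^{(n)}_t\Vert_{\mathbb{D}^{k,2}(H)}<\infty$ implies $X_t\in\mathbb{D}^{k,2}(H)$" (cf.\ \cite[Lemma~1.5.3]{nualart06}), or strong convergence of $D^kX^{(n)}_t$, which your Gronwall estimates on differences can indeed deliver. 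This is fixable, not fatal.

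The one step that, as written, would fail to give the stated result is your derivation of \eqref{EqPropMall4}: you propose to bound the boundary (forcing) term by "\eqref{EqPropMall2} applied at order $k-1$". Doing so yields $\Vert\mathbb{E}[(D^{k-1}_{\mathbf{r}_{-k}}X_{r_k})(\widetilde{e}_{\mathbf{i}_{-k}})]\Vert_H\leq \big(\mathbb{E}[\Vert\cdot\Vert_H^2]\big)^{1/2}\leq \sqrt{2C^{(2)}_{F,B,S,t}}\,(2+\Vert\chi_0\Vert_H^2)^{1/2}\big(3C_S^2C_{F,B}^2/C_\lambda\big)^{(k-1)/2}e^{\frac{3}{2}(k-1)C_S^2C_{F,B}^2(t+1)t}\sqrt{\lambda_{\mathbf{i}_{-k}}}$, which after the linear Gronwall step produces extra factors $\sqrt{2}$ and $3^{(k-1)/2}$ and, more importantly, the exponent $\tfrac{3}{2}(k-1)C_S^2C_{F,B}^2(t+1)t$ instead of the claimed $kC_SC_{F,B}t$; this is strictly weaker than \eqref{EqPropMall4} and would also degrade the $K$-dependence propagated into Theorem~\ref{ThmDetAR}. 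The correct route — and the paper's — is to induct on \eqref{EqPropMall4} itself: from the mean equation \eqref{EqPropMall3} (which has no stochastic integral, hence no factor $3$ and a first-power Gronwall), bound the boundary term by the order-$(k-1)$ instance of \eqref{EqPropMall4}, and use Proposition~\ref{PropSPDE} only at the base case $k=1$ through $1+\Vert\mathbb{E}[X_r]\Vert_H\leq\sqrt{C^{(2)}_{F,B,S,t}}\,(2+\Vert\chi_0\Vert_H^2)^{1/2}$. With that substitution (and the support observation that $D^{k-1}_{\mathbf{r}_{-l}}X_{r_l}=0$ whenever $r_l<\max\mathbf{r}_{-l}$, so that only the boundary term at the maximal time survives, the general ordering being recovered by symmetry of $D^k$), your proof goes through.
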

\begin{proof}
	For \ref{PropMall1}, we show by induction on $k \in \mathbb{N}$ that $X_t \in \mathbb{D}^{k,2}(H)$ with \eqref{EqPropMall1} holding for $k$. For $k = 1$, the results follows from \cite[Lemma~5.3]{carmona07} and \cite[p.~152]{carmona07}. Now, for the induction step, we fix some $k \in \mathbb{N} \cap [2,\infty)$, assume that $X_t \in \mathbb{D}^{k-1,2}(H)$ with \eqref{EqPropMall1} holding for $k-1$, and aim to prove that $X_t \in \mathbb{D}^{k,2}(H)$ with \eqref{EqPropMall1} holding for $k$. To this end, we fix some $\widetilde{\mathbf{z}} := (z_1,...,z_k) \in Z_0^{\otimes (k-1)}$ and $\widetilde{\mathbf{r}} := (r_1,...,r_{k-1}) \in [0,T]^{k-1}$ (such that \eqref{EqPropMall1} holds for $k-1$), and define $\widetilde{\mathbf{r}}^* := \max_{j=1,...,k-1} r_j$. Then, by using \eqref{EqPropMall1} for $k-1$, the $H$-valued process $\big( \big( D^{k-1}_{\widetilde{\mathbf{r}}} X_t \big)(\widetilde{\mathbf{z}}) \big)_{t \in [\widetilde{\mathbf{r}}^*,T]}$ satisfies for every $t \in [\widetilde{\mathbf{r}}^*,T]$ that
	\begin{equation}
		\label{EqPropMallProof1}
		\begin{aligned}
			\left( D^{k-1}_{\widetilde{\mathbf{r}}} X_t \right)(\widetilde{\mathbf{z}}) & = \underbrace{\sum_{l=1}^{k-1} S_{t-r_l} \left( \mathds{1}_{\lbrace 1 \rbrace}(k-1) b_0(r_l) + b_1(r_l) \left[ \left( D^{k-2}_{\widetilde{\mathbf{r}}_{-l}} X_{r_l} \right)(\widetilde{\mathbf{z}}_{-l}) \right] \right)(z_l)}_{= S_{t-\widetilde{\mathbf{r}}^*} \sum_{l=1}^{k-1} S_{\widetilde{\mathbf{r}}^*-r_l} \big( \mathds{1}_{\lbrace 1 \rbrace}(k-1) b_0(r_l) + b_1(r_l) \big[ \big( D^{k-2}_{\widetilde{\mathbf{r}}_{-l}} X_{r_l} \big) (\widetilde{\mathbf{z}}_{-l}) \big] \big)(z_l)} \\
			& \quad\quad + \int_{\widetilde{\mathbf{r}}^*}^t S_{t-s} f_1(s)\left[ \left( D^{k-1}_{\widetilde{\mathbf{r}}} X_s \right)(\widetilde{\mathbf{z}}) \right] ds \\
			& \quad\quad + \int_{\widetilde{\mathbf{r}}^*}^t S_{t-s} b_1(s)\left[ \left( D^{k-1}_{\widetilde{\mathbf{r}}} X_s \right)(\widetilde{\mathbf{z}}) \right] dW_s, \quad\quad \mathbb{P}\text{-a.s.}
		\end{aligned}
	\end{equation}
	Hence, by using the initial value $\sum_{l=1}^{k-1} S_{\widetilde{\mathbf{r}}^*-r_l} \big( \mathds{1}_{\lbrace 1 \rbrace}(k-1) b_0(r_l) + b_1(r_l) \big[ \big( D^{k-2}_{\widetilde{\mathbf{r}}_{-l}} X_{r_l} \big)(\widetilde{\mathbf{z}}_{-l}) \big] \big)(z_l) \in H$ and that the coefficients $[\widetilde{\mathbf{r}}^*,T] \times \Omega \times H \ni (t,\omega,x) \mapsto f_1(t) x \in H$ as well as $[\widetilde{\mathbf{r}}^*,T] \times \Omega \times H \ni (t,\omega,x) \mapsto b_1(t) x \in L_2(Z_0;H)$ satisfy the Lipschitz and linear growth condition (see Assumption~\ref{AssSPDEAff}~\ref{AssSPDEAff4}), we can apply \cite[Lemma~5.3]{carmona07} to conclude that $\big( D^{k-1}_{\widetilde{\mathbf{r}}} X_t \big)(\widetilde{\mathbf{z}}) \in \mathbb{D}^{1,2}(H)$ for all $t \in [\widetilde{\mathbf{r}}^*,T]$. On the other hand, for every $t \in [0,\widetilde{\mathbf{r}}^*)$, we have $\big( D^{k-1}_{\widetilde{\mathbf{r}}} X_t \big)(\widetilde{\mathbf{z}}) = 0 \in \mathbb{D}^{1,2}(H)$. Since $\widetilde{\mathbf{r}} := (r_1,...,r_{k-1}) \in [0,T]^{k-1}$ and $\widetilde{\mathbf{z}} \in Z_0^{\otimes (k-1)}$ were chosen arbitrarily, this shows that $D^{k-1} X_t = \big( (\widetilde{\mathbf{r}};\widetilde{\mathbf{z}}) \mapsto \big( D^{k-1}_{\widetilde{\mathbf{r}}} X_t \big)(\widetilde{\mathbf{z}}) \big) \in \mathbb{D}^{1,2}(L^2([0,T]^{k-1};Z_0^{k-1} \otimes H))$ and thus $X_t \in D^{k,2}(H)$. Moreover, by using \eqref{EqPropMallProof1}, the chain rule for the Malliavin derivative (see \cite[Proposition~5.2]{carmona07}), the Malliavin derivative of a time integral (see \cite[Proposition~4.8]{kruse14}), the Malliavin derivative of a stochastic integral with $\mathbb{F}$-predictable integrand (see \cite[Proposition~5.4]{carmona07}), and that $D^k_\mathbf{r} X_s = 0 \in Z_0^{\otimes k} \otimes H$ for any $\mathbf{r} \in [0,T]^k$ and $s \in [0,\mathbf{r}^*)$ with $\mathbf{r}^* := \max_{j=1,...,k} r_j$, it follows for a.e.~$\mathbf{r} := (\widetilde{\mathbf{r}},r_k) := (r_1,...,r_k) \in [0,T]^k$, every $\mathbf{z} := (\widetilde{\mathbf{z}},z_k) := (z_1,...,z_k) \in Z_0^{\otimes k}$, and every $t \in [\mathbf{r}^*,T]$ that
	\begin{equation*}
		\begin{aligned}
			\left( D^k_\mathbf{r} X_t \right)(\mathbf{z}) & = \left( D_r \left( D^{k-1}_{\widetilde{\mathbf{r}}} X_t \right)(\widetilde{\mathbf{z}}) \right)(z_k) \\
			& = \left( D_r \sum_{l=1}^{k-1} S_{t-r_l} \left( \mathds{1}_{\lbrace 1 \rbrace}(k-1) b_0(r_l) + b_1(r_l)\left[ \left( D^{k-2}_{\widetilde{\mathbf{r}}_{-l}} X_{r_l} \right)(\widetilde{\mathbf{z}}_{-l}) \right] \right)(z_l) \right)(z_k) \\
			& \quad\quad + \left( D_r \int_{\widetilde{\mathbf{r}}^*}^t S_{t-s} f_1(s)\left[ \left( D^{k-1}_{\widetilde{\mathbf{r}}} X_s \right)(\widetilde{\mathbf{z}}) \right] ds \right)(z_k) \\
			& \quad\quad + \left( D_r \int_{\widetilde{\mathbf{r}}^*}^t S_{t-s} b_1(s)\left[ \left( D^{k-1}_{\widetilde{\mathbf{r}}} X_s \right)(\widetilde{\mathbf{z}}) \right] dW_s \right)(z_k) \\
			& = \sum_{l=1}^{k-1} S_{t-r_l} b_1(r_l)\left[ \left( D^{k-1}_{(\widetilde{\mathbf{r}}_{-l},r_k)} X_{r_l} \right)(\widetilde{\mathbf{z}}_{-l},z_k) \right](z_l) + S_{t-r_k} b_1(r_k)\left[ \left( D^k_{\widetilde{\mathbf{r}}} X_{r_k} \right)(\widetilde{\mathbf{z}}) \right](z_k) \\
			& \quad\quad + \int_{\widetilde{\mathbf{r}}^*}^t S_{t-s} f_1(s)\left[ \left( D^k_{(\widetilde{\mathbf{r}},r_k)} X_s \right)(\widetilde{\mathbf{z}},z_k) \right] ds \\
			& \quad\quad + \int_{\widetilde{\mathbf{r}}^*}^t S_{t-s} b_1(s)\left[ \left( D^k_{(\widetilde{\mathbf{r}},r_k)} X_s \right)(\widetilde{\mathbf{z}},z_k) \right] dW_s \\
			& = \sum_{l=1}^k S_{t-r_l} b_1(r_l)\left[ \left( D^{k-1}_{\mathbf{r}_{-l}} X_{r_l} \right)(\mathbf{z}_{-l}) \right](z_l) \\
			& \quad\quad + \int_{\mathbf{r}^*}^t S_{t-s} f_1(s)\left[ \left( D^k_\mathbf{r} X_s \right)(\mathbf{z}) \right] ds \\
			& \quad\quad + \int_{\mathbf{r}^*}^t S_{t-s} b_1(s)\left[ \left( D^k_\mathbf{r} X_s \right)(\mathbf{z}) \right] dW_s.
		\end{aligned}
	\end{equation*}
	Otherwise if $t \in [0,\mathbf{r}^*)$, we observe that $\left( D^k_\mathbf{r} X_t \right)(\mathbf{z}) = 0$. This shows that \eqref{EqPropMall1} holds for $k$, terminates the induction step, and therefore proves \ref{PropMall1}.
	
	For \ref{PropMall2}, we use induction on $k \in \mathbb{N}$ to show \eqref{EqPropMall2}. For the induction initialization $k = 1$, we fix some $i \in \mathbb{N}$. Moreover, by using that $C_S := \sup_{t \in [0,T]} \Vert S_t \Vert_{L(H;H)} < \infty$, that $\Vert \widetilde{e}_i \Vert_Z = \big\Vert Q^{1/2} e_i \big\Vert_Z = \sqrt{\lambda_i} \Vert e_i \Vert_Z = \sqrt{\lambda_i}$ (as $Qe_i = \lambda e_i$), and Assumption~\ref{AssSPDEAff}~\ref{AssSPDEAff4}, we obtain for every $0 \leq r \leq t \leq T$ and $x \in H$ that 
	\begin{equation}
		\label{EqPropMallProof2}
		\begin{aligned}
			& \left\Vert S_{t-r} \left( b_0(r) + b_1(r) x \right)(\widetilde{e}_i) \right\Vert_H \\
			& \quad\quad \leq \left\Vert S_{t-r} \right\Vert_{L(H;H)} \left\Vert \left( b_0(r) + b_1(r) x \right)(\widetilde{e}_i) \right\Vert_H \\
			& \quad\quad \leq C_S \left( \left\Vert b_0(r)(\widetilde{e}_i) \right\Vert_H + \left\Vert (b_1(r) x)(\widetilde{e}_i) \right\Vert_H \right) \\
			& \quad\quad \leq C_S \left( \left( \sup_{z \in Z_0 \atop \Vert z \Vert_Z \leq 1} \left\Vert b_0(r)(z) \right\Vert_H \right) \Vert \widetilde{e}_i \Vert_Z + \left( \sup_{z \in Z_0 \atop \Vert z \Vert_Z \leq 1} \sup_{y \in H \atop \Vert y \Vert_H \leq 1} \left\Vert (b_1(r) y)(z) \right\Vert_H \right) \Vert x \Vert_H \Vert \widetilde{e}_i \Vert_Z \right) \\
			& \quad\quad \leq \frac{C_S C_{F,B}}{C_\lambda} \sqrt{\lambda_i} \left( 1 + \Vert x \Vert_H \right).
		\end{aligned}
	\end{equation}
	In addition, by using again that $C_S := \sup_{t \in [0,T]} \Vert S_t \Vert_{L(H;H)} < \infty$ and Assumption~\ref{AssSPDEAff}~\ref{AssSPDEAff4}, we have for every $0 \leq s \leq t \leq T$ and $x \in H$ that 
	\begin{equation}
		\label{EqPropMallProof3}
		\left\Vert S_{t-s} f_1(s) x \right\Vert_H \leq \left\Vert S_{t-s} \right\Vert_{L(H;H)} \left\Vert f_1(s) \right\Vert_{L(H;H)} \Vert x \Vert_H \leq C_S C_{F,B} \Vert x \Vert_H
	\end{equation}
	and that
	\begin{equation}
		\label{EqPropMallProof4}
		\left\Vert S_{t-s} b_1(s) x \right\Vert_{L_2(Z_0;H)} \leq \left\Vert S_{t-s} \right\Vert_{L(H;H)} \left\Vert b_1(s) \right\Vert_{L(H;L_2(Z_0;H))} \Vert x \Vert_H \leq C_S C_{F,B} \Vert x \Vert_H.
	\end{equation}
	Then, by inserting the inequality $(x+y+z)^2 \leq 3 \left( x^2 + y^2 + z^2 \right)$ for any $x,y,z \geq 0$ into \eqref{EqPropMall1}, using the inequality~\eqref{EqPropMallProof2} together with Jensen's inequality as well as Ito's isometry in \cite[Corollary~4.29]{daprato14} (using the integrand $s \mapsto \Psi_s := \mathds{1}_{[r,t]}(s) S_{t-s} b_1(s) [(D_r X_t)(\widetilde{e}_i)]$ and that $\trace\big( \Psi_s Q^{1/2} \left( \Psi_s Q^{1/2} \right)^* \big) := \sum_{i=1}^\infty \langle \Psi_s Q^{1/2} \left( \Psi_s Q^{1/2} \right)^* e_i, e_i \rangle_Z = \Vert \Psi_s \Vert_{L_2(Z_0;H)}^2$), and the inequality $(x+y)^2 \leq 2 \left( x^2 + y^2 \right)$ for any $x,y \geq 0$ together with the inequalities \eqref{EqPropMallProof3}+\eqref{EqPropMallProof4}, we conclude for a.e.~$r \in [0,T]$ and every $t \in [r,T]$ that
	\begin{equation*}
		\begin{aligned}
			\mathbb{E}\left[ \left\Vert \left( D_r X_t \right)(\widetilde{e}_i) \right\Vert_H^2 \right] & \leq 3 \mathbb{E}\left[ \left\Vert \left( S_{t-r} b_0(r) + b_1(r) X_r \right)(\widetilde{e}_i) \right\Vert_H^2 \right] \\
			& \quad\quad + 3 \mathbb{E}\left[ \left\Vert \int_r^t S_{t-s} f_1(s) \left[ (D_r X_s)(\widetilde{e}_i) \right] ds \right\Vert_H^2 \right] \\
			& \quad\quad + 3 \mathbb{E}\left[ \left\Vert \int_r^t S_{t-s} b_1(s) \left[ (D_r X_s)(\widetilde{e}_i) \right] dW_s \right\Vert_H^2 \right] \\
			& \leq 3 \frac{C_S^2 C_{F,B}^2}{C_\lambda} \lambda_i \mathbb{E}\left[ \left( 1 + \Vert X_r \Vert_H \right)^2 \right] \\
			& \quad\quad + 3 t \mathbb{E}\left[ \int_r^t \left\Vert S_{t-s} f_1(s) \left[ (D_r X_s)(\widetilde{e}_i) \right] \right\Vert_H^2 ds \right] \\
			& \quad\quad + 3 \mathbb{E}\left[ \int_r^t \left\Vert S_{t-s} b_1(s) \left[ (D_r X_s)(\widetilde{e}_i) \right] \right\Vert_{L_2(Z_0;H)}^2 ds \right] \\
			& \leq 6 \frac{C_S^2 C_{F,B}^2}{C_\lambda} \lambda_i \left( 1 + \mathbb{E}\left[ \Vert X_r \Vert_H^2 \right] \right) \\
			& \quad\quad + 3 C_S^2 C_{F,B}^2 (t+1) \int_r^t \mathbb{E}\left[ \left\Vert (D_r X_s)(\widetilde{e}_i) \right\Vert_H^2 ds \right].
		\end{aligned}
	\end{equation*}
	Hence, by using the Gr\"onwall inequality and that $1 + \mathbb{E}\left[ \Vert X_r \Vert_H^2 \right] \leq C^{(2)}_{F,B,S,t} \left( 2 + \Vert \chi_0 \Vert_H^2 \right)$ by Proposition~\ref{PropSPDE}, it follows for a.e.~$r \in [0,T]$ and every $t \in [r,T]$ that
	\begin{equation*}
		\begin{aligned}
			\mathbb{E}\left[ \left\Vert \left( D_r X_t \right)(\widetilde{e}_i) \right\Vert_H^2 \right] & \leq 6 \frac{C_S^2 C_{F,B}^2}{C_\lambda} \lambda_i \left( 1 + \mathbb{E}\left[ \Vert X_r \Vert_H^2 \right] \right) e^{3 C_S^2 C_{F,B}^2 (t+1) t} \\
			& \leq 2 C^{(2)}_{F,B,S,t} \left( 2 + \Vert \chi_0 \Vert_H^2 \right) 3 \frac{C_S^2 C_{F,B}^2}{C_\lambda} e^{3 C_S^2 C_{F,B}^2 (t+1) t} \lambda_i,
		\end{aligned}
	\end{equation*}
	which shows \eqref{EqPropMall2} for $k = 1$, a.e.~$r \in [0,T]$, and any $t \in [r,T]$. Since $D_r X_t = 0 \in L_2(Z_0;H)$ for any $t \in [0,r)$, we obtain \eqref{EqPropMall2} for $k = 1$, completing the proof of the induction initialization. Now, for the induction step, we fix some $k \in \mathbb{N} \cap [2,\infty)$, assume that \eqref{EqPropMall2} holds for $k-1$, and aim to show that \eqref{EqPropMall2} holds for $k$. To this end, we fix some $\mathbf{i} := (i_1,...,i_k) \in \mathbb{N}^k$. Moreover, by using that $C_S := \sup_{t \in [0,T]} \Vert S_t \Vert_{L(H;H)} < \infty$, Assumption~\ref{AssSPDEAff}~\ref{AssSPDEAff4}, and that $\Vert \widetilde{e}_i \Vert_Z = \big\Vert Q^{1/2} e_i \big\Vert_Z = \sqrt{\lambda_i} \Vert e_i \Vert_Z = \sqrt{\lambda_i}$ (as $Qe_i = \lambda e_i$), we conclude for every $0 \leq r \leq t \leq T$ and $x \in H$ that 
	\begin{equation}
		\label{EqPropMallProof5}
		\begin{aligned}
			\left\Vert S_{t-r} \left( b_1(r) x \right)(\widetilde{e}_i) \right\Vert_H & \leq \left\Vert S_{t-r} \right\Vert_{L(H;H)} \left\Vert \left( b_1(r) x \right)(\widetilde{e}_i) \right\Vert_H \\
			& \leq C_S \left( \sup_{z \in Z_0 \atop \Vert z \Vert_Z \leq 1} \sup_{y \in H \atop \Vert y \Vert_H \leq 1} \left\Vert (b_1(r) y)(z) \right\Vert_H \right) \Vert x \Vert_H \Vert \widetilde{e}_i \Vert_Z \\
			& \leq \frac{C_S C_{F,B}}{C_\lambda} \sqrt{\lambda_i} \Vert x \Vert_H.
		\end{aligned}
	\end{equation}	
	Then, by inserting the inequality $(x+y+z)^2 \leq 3 \left( x^2 + y^2 + z^2 \right)$ for any $x,y,z \geq 0$ into \eqref{EqPropMall1}, by using the inequality~\eqref{EqPropMallProof2} together with Jensen's inequality as well as Ito's isometry in \cite[Corollary~4.29]{daprato14} (using the integrand $s \mapsto \Psi_s := \mathds{1}_{[r_k,t]}(s) S_{t-s} b_1(s) \left[ \left( D^k_\mathbf{r} X_t \right)(\widetilde{e}_\mathbf{i}) \right]$ and that $\trace\big( \Psi_s Q^{1/2} \left( \Psi_s Q^{1/2} \right)^* \big) := \sum_{i=1}^\infty \langle \Psi_s Q^{1/2} \left( \Psi_s Q^{1/2} \right)^* e_i, e_i \rangle_Z = \Vert \Psi_s \Vert_{L_2(Z_0;H)}^2$), and the inequality $(x+y)^2 \leq 2 \left( x^2 + y^2 \right)$ for any $x,y \geq 0$ together with the inequalities \eqref{EqPropMallProof3}+\eqref{EqPropMallProof4}, it follows for a.e.~$\mathbf{r} := (r_1,...,r_k) \in [0,T]^k$ with $r_k = \mathbf{r}^*$ and every $t \in [r_k,T]$ that
	\begin{equation*}
		\begin{aligned}
			\mathbb{E}\left[ \left\Vert \left( D^k_\mathbf{r} X_t \right)(\widetilde{e}_{\mathbf{i}}) \right\Vert_H^2 \right] & \leq 3 \mathbb{E}\left[ \left\Vert S_{t-r_k} b_1(r_k)\left[ \left( D^{k-1}_{\mathbf{r}_{-k}} X_{r_k} \right)(\widetilde{e}_{\mathbf{i}_{-k}}) \right](\widetilde{e}_{\mathbf{i}_k}) \right\Vert_H^2 \right]  \\
			& \quad\quad + 3 \mathbb{E}\left[ \left\Vert \int_{r_k}^t S_{t-s} f_1(s)\left[ \left( D^k_\mathbf{r} X_t \right)(\widetilde{e}_{\mathbf{i}}) \right] ds \right\Vert_H^2 \right] \\
			& \quad\quad + 3 \mathbb{E}\left[ \left\Vert \int_{r_k}^t S_{t-s} b_1(s)\left[ \left( D^k_\mathbf{r} X_t \right)(\widetilde{e}_{\mathbf{i}}) \right] dW_s \right\Vert_H^2 \right] \\
			& \leq 3 \frac{C_S^2 C_{F,B}^2}{C_\lambda} \lambda_{i_k} \mathbb{E}\left[ \left\Vert \left( D^{k-1}_{\mathbf{r}_{-k}} X_{r_k} \right)(\widetilde{e}_{\mathbf{i}_{-k}}) \right\Vert_H^2 \right] \\
			& \quad\quad + 3 t \mathbb{E}\left[ \int_{r_k}^t \left\Vert S_{t-s} f_1(s)\left[ \left( D^k_\mathbf{r} X_t \right)(\widetilde{e}_\mathbf{i}) \right] \right\Vert_H^2 ds \right] \\
			& \quad\quad + 3 \mathbb{E}\left[ \int_{r_k}^t \left\Vert S_{t-s} b_1(s)\left[ \left( D^k_\mathbf{r} X_t \right)(\widetilde{e}_\mathbf{i}) \right] \right\Vert_{L_2(Z_0;H)}^2 ds \right] \\
			& \leq 3 \frac{C_S^2 C_{F,B}^2}{C_\lambda} \lambda_{i_k} \mathbb{E}\left[ \left\Vert \left( D^{k-1}_{\mathbf{r}_{-k}} X_{r_k} \right)(\widetilde{e}_{\mathbf{i}_{-k}}) \right\Vert_H^2 \right] \\
			& \quad\quad + 3 C_S^2 C_{F,B}^2 (t+1) \int_0^t \mathbb{E}\left[ \left\Vert \left( D^k_\mathbf{r} X_t \right)(\widetilde{e}_{\mathbf{i}}) \right\Vert_H^2 \right] ds.
		\end{aligned}
	\end{equation*}
	Hence, by using the Gr\"onwall inequality together with the induction hypothesis (i.e.~that \eqref{EqPropMall2} holds for $k-1$), it follows for a.e.~$\mathbf{r} := (r_1,...,r_k) \in [0,T]^k$ with $r_k = \mathbf{r}^*$ and every $t \in [r_k,T]$ that
	\begin{equation*}
		\begin{aligned}
			& \mathbb{E}\left[ \left\Vert \left( D^k_\mathbf{r} X_t \right)(\widetilde{e}_{\mathbf{i}}) \right\Vert_H^2 \right] \\
			& \quad \leq 3 \frac{C_S^2 C_{F,B}^2}{C_\lambda} \lambda_{i_k} \mathbb{E}\left[ \left\Vert \left( D^{k-1}_{\mathbf{r}_{-k}} X_{r_k} \right)(\widetilde{e}_{\mathbf{i}_{-k}}) \right\Vert_H^2 \right] e^{3 C_S^2 C_{F,B}^2 (t+1) t} \\
			& \quad \leq 3 \frac{C_S^2 C_{F,B}^2}{C_\lambda} \lambda_{i_k} \left( 2 C^{(2)}_{F,B,S,t} \left( 2 + \Vert \chi_0 \Vert_H^2 \right) \left( \frac{3 C_S^2 C_{F,B}^2}{C_\lambda} \right)^{k-1} e^{3 (k-1) C_S^2 C_{F,B}^2 (t+1) t} \lambda_{\mathbf{i}_{-k}} \right) e^{3 C_S^2 C_{F,B}^2 (t+1) t} \\
			& \quad = 2 C^{(2)}_{F,B,S,t} \left( 2 + \Vert \chi_0 \Vert_H^2 \right) \left( \frac{3 C_S^2 C_{F,B}^2}{C_\lambda} \right)^k e^{3 k C_S^2 C_{F,B}^2 (t+1) t} \lambda_\mathbf{i},
		\end{aligned}
	\end{equation*}
	which shows \eqref{EqPropMall2} for $k$, a.e.~$\mathbf{r} := (r_1,...,r_k) \in [0,T]^k$, and any $t \in [r_k,T]$. Otherwise, if $r_k < \mathbf{r}^*$, we choose a permutation $\sigma: \lbrace 1,...,k \rbrace \rightarrow \lbrace 1,...,k \rbrace$ such that $\sigma(k) = \argmax_{l=1,...,k} r_l$ and use that $\left( D^k_\mathbf{r} X_t \right)(\widetilde{e}_{\mathbf{i}})$ is by \eqref{EqPropMall1} symmetric (i.e.~that $\left( D^k_{\mathbf{r}_\sigma} X_t \right)(\widetilde{e}_{\mathbf{i}_\sigma}) = \left( D^k_\mathbf{r} X_t \right)(\widetilde{e}_{\mathbf{i}})$ for all permutations $\sigma: \lbrace 1,...,k \rbrace \rightarrow \lbrace 1,...,k \rbrace$, where $\mathbf{r}_\sigma := (r_{\sigma(1)},...,r_{\sigma(k)})$ and $\mathbf{i}_\sigma := (i_{\sigma(1)},...,i_{\sigma(k)})$) to conclude for a.e.~$\mathbf{r} := (r_1,...,r_k) \in [0,T]^k$ and every $t \in [\mathbf{r}^*,T]$ that
	\begin{equation*}
		\begin{aligned}
			\mathbb{E}\left[ \left\Vert \left( D^k_\mathbf{r} X_t \right)(\widetilde{e}_{\mathbf{i}}) \right\Vert_H^2 \right] & = \mathbb{E}\left[ \left\Vert \left( D^k_{\mathbf{r}_\sigma} X_t \right)(\widetilde{e}_{\mathbf{i}_\sigma}) \right\Vert_H^2 \right] \\
			& \leq 2 C^{(2)}_{F,B,S,t} \left( 2 + \Vert \chi_0 \Vert_H^2 \right) \left( \frac{3 C_S^2 C_{F,B}^2}{C_\lambda} \right)^k e^{3 k C_S^2 C_{F,B}^2 (t+1) t} \lambda_{\mathbf{i}_\sigma} \\
			& = 2 C^{(2)}_{F,B,S,t} \left( 2 + \Vert \chi_0 \Vert_H^2 \right) \left( \frac{3 C_S^2 C_{F,B}^2}{C_\lambda} \right)^k e^{3 k C_S^2 C_{F,B}^2 (t+1) t} \lambda_\mathbf{i},
		\end{aligned}
	\end{equation*}
	which shows \eqref{EqPropMall2} for $k$, a.e.~$\mathbf{r} := (r_1,...,r_k) \in [0,T]^k$, and any $t \in [\mathbf{r}^*,T]$. Since $D^k_\mathbf{r} X_t = 0 \in Z_0^{\otimes k} \otimes H$ for any $t \in [0,\mathbf{r}^*)$, we obtain \eqref{EqPropMall2} for $k$, completing the induction step and therefore proving \ref{PropMall2}.
	
	For \ref{PropMall3}, we fix some $k \in \mathbb{N}$. Then, by using that $(\widetilde{e}_\mathbf{i})_{\mathbf{i} \in \mathbb{N}^k}$ is an orthonormal basis of $(Z_0^{\otimes k},\langle\cdot,\cdot\rangle_{Z_0^{\otimes k}})$ together with Minkowski's inequality, the inequalities~\eqref{EqPropMallProof4}+\eqref{EqPropMall2}, and the Cauchy-Schwarz inequality together with $\sum_{\mathbf{i} \in \mathbb{N}^k} \lambda_\mathbf{i} = \big( \sum_{i=1}^\infty \lambda_i \big)^k$, it follows for a.e.~fixed $\mathbf{r} := (r_1,...,r_k) \in [0,T]^k$, every fixed $\mathbf{z} := \sum_{\mathbf{i} \in \mathbb{N}^k} c_\mathbf{i} \widetilde{e}_\mathbf{i} \in Z_0^{\otimes k}$ with $c_\mathbf{i} := \langle \mathbf{z},\widetilde{e}_\mathbf{i} \rangle_{Z_0^{\otimes k}} \in \mathbb{R}$ satisfying $\sum_{\mathbf{i} \in \mathbb{N}^k} \vert c_\mathbf{i} \vert^2 < \infty$, and every fixed $t \in [\mathbf{r}^*,T]$ that
	\begin{equation*}
		\begin{aligned}
			& \mathbb{E}\left[ \int_0^t \left\Vert S_{t-s} b_1(s)\left[ \left( D^k_\mathbf{r} X_s \right)(\mathbf{z}) \right] \right\Vert_{L_2(Z_0;H)}^2 ds \right]^\frac{1}{2} \\
			& \quad \leq \sum_{\mathbf{i} \in \mathbb{N}^k} \vert c_\mathbf{i} \vert \left( \int_0^t \mathbb{E}\left[ \left\Vert S_{t-s} b_1(s)\left[ \left( D^k_\mathbf{r} X_s \right)(\widetilde{e}_\mathbf{i}) \right] \right\Vert_{L_2(Z_0;H)}^2 \right] \right)^\frac{1}{2} \\
			& \quad \leq C_S C_{F,B} \sqrt{t} \sum_{\mathbf{i} \in \mathbb{N}^k} \vert c_\mathbf{i} \vert \sup_{s \in [0,t]} \mathbb{E}\left[ \left\Vert \left( D^k_\mathbf{r} X_s \right)(\widetilde{e}_\mathbf{i}) \right\Vert_H^2 \right]^\frac{1}{2} \\
			& \quad \leq C_S C_{F,B} \sqrt{t} \sqrt{2 C^{(2)}_{F,B,S,t}} \left( 2 + \Vert \chi_0 \Vert_H^2 \right)^\frac{1}{2} \left( \frac{3 C_S^2 C_{F,B}^2}{C_\lambda} \right)^\frac{k}{2} e^{\frac{3}{2} k C_S^2 C_{F,B}^2 (t+1) t} \sum_{\mathbf{i} \in \mathbb{N}^k} \vert c_\mathbf{i} \vert \sqrt{\lambda_\mathbf{i}} \\
			& \quad \leq C_S C_{F,B} \sqrt{t} \sqrt{2 C^{(2)}_{F,B,S,t}} \left( 2 + \Vert \chi_0 \Vert_H^2 \right)^\frac{1}{2} \left( \frac{3 C_S^2 C_{F,B}^2}{C_\lambda} \right)^\frac{k}{2} e^{\frac{3}{2} k C_S^2 C_{F,B}^2 (t+1) t} \left( \sum_{\mathbf{i} \in \mathbb{N}^k} \vert c_\mathbf{i} \vert^2 \right)^\frac{1}{2} \left( \sum_{\mathbf{i} \in \mathbb{N}^k} \lambda_\mathbf{i} \right)^\frac{1}{2} \\
			& \quad \leq C_S C_{F,B} \sqrt{t} \sqrt{2 C^{(2)}_{F,B,S,t}} \left( 2 + \Vert \chi_0 \Vert_H^2 \right)^\frac{1}{2} \left( \frac{3 C_S^2 C_{F,B}^2}{C_\lambda} \right)^\frac{k}{2} e^{\frac{3}{2} k C_S^2 C_{F,B}^2 (t+1) t} \left( \sum_{\mathbf{i} \in \mathbb{N}^k} \vert c_\mathbf{i} \vert^2 \right)^\frac{1}{2} \left( \sum_{i=1}^\infty \lambda_i \right)^\frac{k}{2} < \infty.
		\end{aligned}
	\end{equation*}
	Hence, we can apply \cite[Proposition~4.28]{daprato14} to conclude that
	\begin{equation*}
		\mathbb{E}\left[ \int_0^t S_{t-s} b_1(s)\left[ \left( D^k_\mathbf{r} X_s \right)(\mathbf{z}) \right] dW_s \right] = 0.
	\end{equation*}
	Thus, by taking the expectation in \eqref{EqPropMall1}, we obtain the identity \eqref{EqPropMall3}.
	
	For \ref{PropMall4}, we use induction on $k \in \mathbb{N}$ to show \eqref{EqPropMall4}. For the induction initialization $k = 1$, we fix some $i \in \mathbb{N}$. Then, by applying the triangle inequality in \eqref{EqPropMall3} and using the inequalities~\eqref{EqPropMallProof2}+\eqref{EqPropMallProof3}, we conclude for a.e.~$r \in [0,T]$ and every $t \in [r,T]$ that
	\begin{equation*}
		\begin{aligned}
			\left\Vert \mathbb{E}\left[ \left( D_r X_t \right)(\widetilde{e}_i) \right] \right\Vert_H & \leq \left\Vert \left( S_{t-r} b_0(r) + b_1(r) \mathbb{E}\left[ X_r \right] \right)(\widetilde{e}_i) \right\Vert_H + \left\Vert \int_r^t S_{t-s} f_1(s) \mathbb{E}\left[ (D_r X_s)(\widetilde{e}_i) \right] ds \right\Vert_H \\
			& \leq \frac{C_S C_{F,B}}{\sqrt{C_\lambda}} \sqrt{\lambda_i} \left( 1 + \left\Vert \mathbb{E}\left[ X_r \right] \right\Vert_H \right) + C_S C_{F,B} \int_0^t \left\Vert \mathbb{E}\left[ (D_r X_s)(\widetilde{e}_i) \right] \right\Vert_H ds
		\end{aligned}
	\end{equation*}
	Hence, by using the Gr\"onwall inequality and that $1 + \mathbb{E}\left[ \Vert X_r \Vert_H \right] \leq \sqrt{C^{(2)}_{F,B,S,t}} \left( 2 + \Vert \chi_0 \Vert_H^2 \right)^{1/2}$ by Proposition~\ref{PropSPDE}, it follows for a.e.~$r \in [0,T]$ and every $t \in [r,T]$ that
	\begin{equation*}
		\begin{aligned}
			\left\Vert \mathbb{E}\left[ \left( D_r X_t \right)(\widetilde{e}_i) \right] \right\Vert_H & \leq \frac{C_S C_{F,B}}{\sqrt{C_\lambda}} \sqrt{\lambda_i} \left( 1 + \left\Vert \mathbb{E}\left[ X_r \right] \right\Vert_H \right) e^{C_S C_{F,B} t} \\
			& \leq \sqrt{C^{(2)}_{F,B,S,t}} \left( 2 + \Vert \chi_0 \Vert_H^2 \right)^\frac{1}{2} \frac{C_S C_{F,B}}{\sqrt{C_\lambda}} e^{C_S C_{F,B} t} \sqrt{\lambda_i},
		\end{aligned}
	\end{equation*}
	which shows \eqref{EqPropMall4} for $k = 1$, a.e.~$r \in [0,T]$, and any $t \in [r,T]$. Since $D_r X_t = 0 \in L_2(Z_0;H)$ for any $t \in [0,r)$, we obtain \eqref{EqPropMall4} for $k = 1$, completing the proof of the induction initialization. Now, for the induction step, we fix some $k \in \mathbb{N} \cap [2,\infty)$, assume that \eqref{EqPropMall4} holds for $k-1$, and aim to show that \eqref{EqPropMall4} holds for $k$. To this end, we fix some $\mathbf{i} \in \mathbb{N}^k$. Then, by applying the triangle inequality in \eqref{EqPropMall3} and using the inequalities~\eqref{EqPropMallProof2}+\eqref{EqPropMallProof3}, it follows for a.e.~$\mathbf{r} := (r_1,...,r_k) \in [0,T]^k$ with $r_k = \mathbf{r}^*$ and every $t \in [r_k,T]$ that
	\begin{equation*}
		\begin{aligned}
			\left\Vert \mathbb{E}\left[ \left( D^k_\mathbf{r} X_t \right)(\widetilde{e}_{\mathbf{i}}) \right] \right\Vert_H & \leq \left\Vert S_{t-r_k} b_1(r_k) \mathbb{E}\left[ \left( D^{k-1}_{\mathbf{r}_{-k}} X_{r_k} \right)(\widetilde{e}_{\mathbf{i}_{-k}}) \right](\widetilde{e}_{\mathbf{i}_k}) \right\Vert_H  \\
			& \quad\quad + \left\Vert \int_{r_k}^t S_{t-s} f_1(s) \mathbb{E}\left[ \left( D^k_\mathbf{r} X_t \right)(\widetilde{e}_{\mathbf{i}}) \right] ds \right\Vert_H \\
			& \leq \frac{C_S C_{F,B}}{\sqrt{C_\lambda}} \sqrt{\lambda_{i_k}} \left\Vert \mathbb{E}\left[ \left( D^{k-1}_{\mathbf{r}_{-k}} X_{r_k} \right)(\widetilde{e}_{\mathbf{i}_{-k}}) \right] \right\Vert_H \\
			& \quad\quad + C_S C_{F,B} \int_0^t \left\Vert \mathbb{E}\left[ \left( D^k_\mathbf{r} X_t \right)(\widetilde{e}_{\mathbf{i}}) \right] \right\Vert_H ds.
		\end{aligned}
	\end{equation*}
	Hence, by using the Gr\"onwall inequality together with the induction hypothesis (i.e.~that \eqref{EqPropMall4} holds for $k-1$), it follows for a.e.~$\mathbf{r} := (r_1,...,r_k) \in [0,T]^k$ with $r_k = \mathbf{r}^*$ and every $t \in [r_k,T]$ that
	\begin{equation*}
		\begin{aligned}
			& \left\Vert \mathbb{E}\left[ \left( D^k_\mathbf{r} X_t \right)(\widetilde{e}_{\mathbf{i}}) \right] \right\Vert_H \\
			& \quad\quad \leq \frac{C_S C_{F,B}}{\sqrt{C_\lambda}} \sqrt{\lambda_{i_k}} \left\Vert \mathbb{E}\left[ \left( D^{k-1}_{\mathbf{r}_{-k}} X_{r_k} \right)(\widetilde{e}_{\mathbf{i}_{-k}}) \right] \right\Vert_H e^{C_S C_{F,B} t} \\
			& \quad\quad \leq \frac{C_S C_{F,B}}{\sqrt{C_\lambda}} \sqrt{\lambda_{i_k}} \left( \sqrt{C^{(2)}_{F,B,S,t}} \left( 2 + \Vert \chi_0 \Vert_H^2 \right)^\frac{1}{2} \left( \frac{C_S C_{F,B}}{\sqrt{C_\lambda}} \right)^{k-1} e^{(k-1) C_S C_{F,B} t} \sqrt{\lambda_{\mathbf{i}_{-k}}} \right) e^{C_S C_{F,B} t} \\
			& \quad\quad = \sqrt{C^{(2)}_{F,B,S,t}} \left( 2 + \Vert \chi_0 \Vert_H^2 \right)^\frac{1}{2} \left( \frac{C_S C_{F,B}}{\sqrt{C_\lambda}} \right)^k e^{k C_S C_{F,B} t} \sqrt{\lambda_\mathbf{i}},
		\end{aligned}
	\end{equation*}
	which shows \eqref{EqPropMall4} for $k$, a.e.~$\mathbf{r} := (r_1,...,r_k) \in [0,T]^k$, and any $t \in [r_k,T]$. Otherwise, if $r_k < \mathbf{r}^*$, we choose a permutation $\sigma: \lbrace 1,...,k \rbrace \rightarrow \lbrace 1,...,k \rbrace$ satisfying $\sigma(k) = \argmax_{l=1,...,k} r_l$ and use that $\mathbb{E}\left[ \left( D^k_\mathbf{r} X_t \right)(\widetilde{e}_{\mathbf{i}}) \right]$ is by \eqref{EqPropMall3} symmetric (i.e.~that $\mathbb{E}\left[ \left( D^k_{\mathbf{r}_\sigma} X_t \right)(\widetilde{e}_{\mathbf{i}_\sigma}) \right] = \mathbb{E}\left[ \left( D^k_\mathbf{r} X_t \right)(\widetilde{e}_{\mathbf{i}}) \right]$ for all permutations $\sigma: \lbrace 1,...,k \rbrace \rightarrow \lbrace 1,...,k \rbrace$, where $\mathbf{r}_\sigma := (r_{\sigma(1)},...,r_{\sigma(k)})$ and $\mathbf{i}_\sigma := (i_{\sigma(1)},...,i_{\sigma(k)})$) to conclude for a.e.~$\mathbf{r} := (r_1,...,r_k) \in [0,T]^k$ and every $t \in [\mathbf{r}^*,T]$ that
	\begin{equation*}
		\begin{aligned}
			\left\Vert \mathbb{E}\left[ \left( D^k_\mathbf{r} X_t \right)(\widetilde{e}_\mathbf{i}) \right] \right\Vert_H & = \left\Vert \mathbb{E}\left[ \left( D^k_{\mathbf{r}_\sigma} X_t \right)(\widetilde{e}_{\mathbf{i}_\sigma}) \right] \right\Vert_H \\
			& \leq \sqrt{C^{(2)}_{F,B,S,t}} \left( 2 + \Vert \chi_0 \Vert_H^2 \right)^\frac{1}{2} \left( \frac{C_S C_{F,B}}{\sqrt{C_\lambda}} \right)^k e^{k C_S C_{F,B} t} \sqrt{\lambda_{\mathbf{i}_\sigma}} \\
			& = \sqrt{C^{(2)}_{F,B,S,t}} \left( 2 + \Vert \chi_0 \Vert_H^2 \right)^\frac{1}{2} \left( \frac{C_S C_{F,B}}{\sqrt{C_\lambda}} \right)^k e^{k C_S C_{F,B} t} \sqrt{\lambda_\mathbf{i}},
		\end{aligned}
	\end{equation*}
	which shows \eqref{EqPropMall4} for $k$, a.e.~$\mathbf{r} := (r_1,...,r_k) \in [0,T]^k$, and any $t \in [\mathbf{r}^*,T]$. Since $\mathbb{E}\left[ D^k_\mathbf{r} X_t \right] = 0 \in Z_0^{\otimes k} \otimes H$ for any $t \in [0,\mathbf{r}^*)$, we obtain \eqref{EqPropMall2} for $k$, completing the induction step and proving \ref{PropMall4}. 
\end{proof}

\pagebreak

\subsection{Proof of Theorem~\ref{ThmDetAR}}
\label{SecProofDetAR}

\begin{proof}[Proof of Theorem~\ref{ThmDetAR}]
	Let $X: [0,T] \times \Omega \rightarrow H := W^{k,2}(U,\mathcal{L}(U),w;\mathbb{R}^d)$ be a mild solution of \eqref{EqDefSPDE} satisfying Assumption~\ref{AssSPDEAff}+\ref{AssPropag} and fix some $I,J,K,N \in \mathbb{N}$. Then, we can split the approximation error into four parts: \labeltext{(i)}{ThmDetARProof1} The truncation error of $X_T$ by using only Wick polynomials up to order $K$; \labeltext{(ii)}{ThmDetARProof2} the projection error by using only the first $I$ basis elements $(\widetilde{e}_i)_{i=1,...,I}$ of $(Z_0,\langle \cdot,\rangle_{Z_0})$; \labeltext{(iii)}{ThmDetARProof3} the projection error by using only the first $J$ basis elements $(g_j)_{j = 1,...,J}$ of $(L^2([0,T]),\langle\cdot,\cdot\rangle_{L^2([0,T])})$; \labeltext{(iv)}{ThmDetARProof4} the approximation error of $x_\alpha^{(T)} := \mathbb{E}[X_T \xi_\alpha] \in W^{k,2}(U,\mathcal{L}(U),w;\mathbb{R}^d)$ by a deterministic neural network with $N$ neurons, for $\alpha \in \mathcal{J}_{I,J,K}$. 
	
	For \ref{ThmDetARProof1}, we use that the Wick polynomials $(\xi_\alpha)_{\alpha \in \mathcal{J}}$ are orthonormal in $(L^2(\Omega),\langle \cdot,\cdot \rangle_{L^2(\Omega)})$ (see Lemma~\ref{LemmaWick}), the projection $\Pi_k: L^2(\Omega;H) \rightarrow L^2(\Omega;H)$ onto the Wick polynomials $(\xi_\alpha)_{\alpha \in \mathcal{J}_k}$ of order $k$, Proposition~\ref{PropStroock}, and Lemma~\ref{LemmaL2SkorokhodInt} to obtain that
	\begin{equation*}
		\begin{aligned}
			\mathbb{E}\left[ \left\Vert \sum_{\alpha \in \mathcal{J}, \, \vert \alpha \vert > K} x_\alpha^{(T)} \xi_\alpha \right\Vert_H^2 \right] & = \sum_{k=K+1}^\infty \mathbb{E}\left[ \left\Vert \sum_{\alpha \in \mathcal{J}_k} \mathbb{E}[X_T \xi_\alpha] \xi_\alpha \right\Vert_H^2 \right] \\
			& = \sum_{k=K+1}^\infty \mathbb{E}\left[ \Vert \Pi_k X_T \Vert_H^2 \right] \\
			& = \sum_{k=K+1}^\infty \frac{1}{\left( k! \right)^2} \mathbb{E}\left[ \left\Vert \delta^k\left( \mathbb{E}\left[ D^k X_T \right] \right) \right\Vert_H^2 \right] \\
			& \leq \sum_{k=K+1}^\infty \frac{k!}{\left( k! \right)^2} \left\Vert \mathbb{E}\left[ D^k X_T \right] \right\Vert_{L^2([0,T]^k;Z_0^{\otimes k} \otimes H)}^2 \\
			& = \sum_{k=K+1}^\infty \frac{1}{k!} \int_{[0,T]^k} \sum_{\mathbf{i} \in \mathbb{N}^k} \left\Vert \mathbb{E}\left[ \left( D^k_\mathbf{r} X_T \right)(\widetilde{e}_\mathbf{i}) \right] \right\Vert_H^2 d\mathbf{r}.
		\end{aligned}
	\end{equation*}
	Hence, by using Proposition~\ref{PropMall}~\ref{PropMall4}, that $\sum_{\mathbf{i} \in \mathbb{N}^k} \lambda_\mathbf{i} = \big( \sum_{i=1}^\infty \lambda_i \big)^k = C_\lambda^k$, the identity $\sum_{k=K+1}^\infty \frac{c^k}{k!} = \frac{c^{K+1}}{(K+1)!} \sum_{k=K+1}^\infty \frac{c^{k-(K+1)}}{(k-(K+1))!} = \frac{c^{K+1}}{(K+1)!} \sum_{j=0}^\infty \frac{c^j}{j!} = e^c \frac{c^{K+1}}{(K+1)!}$ for any $c \in [0,\infty)$, and the constant $C_{11} := \big( C^{(2)}_{F,B,S,T} e^{C_S^2 C_{F,B}^2 T \exp(2 C_S C_{F,B} T)} \big)^{1/2} \geq 0$ (depending only on $C_{F,B} > 0$, $C_S := \sup_{t \in [0,T]} \Vert S_t \Vert_{L(H;H)} < \infty$, and $T > 0$), it follows that
	\begin{equation}
		\label{EqThmDetARProof1}
		\begin{aligned}
			\mathbb{E}\left[ \left\Vert \sum_{\alpha \in \mathcal{J}, \, \vert \alpha \vert > K} x_\alpha^{(T)} \xi_\alpha \right\Vert_H^2 \right] & \leq \sum_{k=K+1}^\infty \frac{T^k}{k!} \sum_{\mathbf{i} \in \mathbb{N}^k} \left( C^{(2)}_{F,B,S,T} \left( 2 + \Vert \chi_0 \Vert_H^2 \right) \left( \frac{C_S^2 C_{F,B}^2}{C_\lambda} \right)^k e^{2k C_S C_{F,B} T} \lambda_\mathbf{i} \right) \\
			& \leq C^{(2)}_{F,B,S,T} \left( 2 + \Vert \chi_0 \Vert_H^2 \right) \sum_{k=K+1}^\infty \frac{\left( C_S^2 C_{F,B}^2 T e^{2 C_S C_{F,B} T} \right)^k}{k!} \\
			& \leq C^{(2)}_{F,B,S,T} \left( 2 + \Vert \chi_0 \Vert_H^2 \right) e^{C_S^2 C_{F,B}^2 T \exp(2 C_S C_{F,B} T)} \frac{\left( C_S^2 C_{F,B}^2 T e^{2 C_S C_{F,B} T} \right)^{K+1}}{(K+1)!} \\
			& \leq C_{11}^2 \left( 2 + \Vert \chi_0 \Vert_H^2 \right) \frac{\left( C_S^2 C_{F,B}^2 T e^{2 C_S C_{F,B} T} \right)^{K+1}}{(K+1)!}.
		\end{aligned}
	\end{equation}
	This bounds the truncation error of $X_T$ by using the Wick polynomials only up to order $K$.
	
	For \ref{ThmDetARProof2}, we fix some $k \in 1,...,K$ and an orthonormal basis $(y_n)_{n \in \mathbb{N}}$ of $(H,\langle \cdot,\cdot \rangle_H)$. Then, by using that $\big( \bigotimes_{l=1}^k \left( g_{j_l} \widetilde{e}_{i_l} \right) \otimes y_n \big)_{\mathbf{i},\mathbf{j} \in \mathbb{N}^k, \, n \in \mathbb{N}}$ is an orthonormal basis of $(L^2([0,T]^k;Z_0^{\otimes k} \otimes H),\langle \cdot, \cdot \rangle_{L^2([0,T]^k;Z_0^{\otimes k} \otimes H)})$ together with the same steps as in line 1-7 of \eqref{EqPropStroockProof1}, Lemma~\ref{LemmaL2SkorokhodInt}, that $\Vert \widetilde{x} \Vert_{\widetilde{H}}^2 = \sum_{i=1}^\infty \vert \langle x, \widetilde{h}_i \rangle_{\widetilde{H}} \vert^2$ for any $\widetilde{x} \in \widetilde{H}$ and Hilbert space $(\widetilde{H},\langle\cdot,\cdot\rangle_{\widetilde{H}})$ with complete orthonormal basis $(\widetilde{h}_i)_{i \in \mathbb{N}}$, that $\big( y_n \prod_{l=1}^k g_{j_l}(\cdot) \big)_{\mathbf{j} \in \mathbb{N}^k, \, n \in \mathbb{N}}$ is an orthonormal basis of $(L^2([0,T]^k;H),\langle \cdot, \cdot \rangle_{L^2([0,T]^k;H)})$, we obtain that
	\vspace{-0.05cm}
	\begin{equation}
		\label{EqThmDetARProof2}
		\begin{aligned}
			& \mathbb{E}\left[ \left\Vert \sum_{\alpha \in \mathcal{J}_k \atop \exists i > I: \, \exists j \in \mathbb{N}: \, \alpha_{i,j} > 0} \mathbb{E}[X_T \xi_\alpha] \xi_\alpha \right\Vert_H^2 \right] = \mathbb{E}\left[ \left\Vert \sum_{\alpha \in \mathcal{J}_k \atop \exists i > I: \, \exists j \in \mathbb{N}: \, \alpha_{i,j} > 0} \sum_{n=1}^\infty \mathbb{E}\left[ \langle X_T, \xi_\alpha y_n \rangle_H \right] \xi_\alpha y_n \right\Vert_H^2 \right] \\
			& = \mathbb{E}\left[ \left\Vert \frac{1}{k!} \delta^k\left( \sum_{\mathbf{i},\mathbf{j} \in \mathbb{N}^k \atop \exists l: \, i_l > I} \sum_{n=1}^\infty \Big\langle \mathbb{E}\left[ D^k X_T \right] , \bigotimes_{l=1}^k \left( g_{j_l} \widetilde{e}_{i_l} \right) \otimes y_n \Big\rangle_{L^2([0,T]^k;Z_0^{\otimes k} \otimes H)} \bigotimes_{l=1}^k \left( g_{j_l} \widetilde{e}_{i_l} \right) \otimes y_n\right) \right\Vert_H^2 \right] \\
			& \leq \frac{k!}{\left( k! \right)^2} \left\Vert \sum_{\mathbf{i},\mathbf{j} \in \mathbb{N}^k \atop \exists l: \, i_l > I} \sum_{n=1}^\infty \Big\langle \mathbb{E}\left[ D^k X_T \right] , \bigotimes_{l=1}^k \left( g_{j_l} \widetilde{e}_{i_l} \right) \otimes y_n \Big\rangle_{L^2([0,T]^k;Z_0^{\otimes k} \otimes H)} \bigotimes_{l=1}^k \left( g_{j_l} \widetilde{e}_{i_l} \right) \otimes y_n \right\Vert_{L^2([0,T]^k;Z_0^{\otimes k} \otimes H)}^2 \\
			& = \frac{1}{k!} \sum_{\mathbf{i},\mathbf{j} \in \mathbb{N}^k \atop \exists l: \, i_l > I} \sum_{n=1}^\infty \left\vert \Big\langle \mathbb{E}\left[ D^k X_T \right] , \bigotimes_{l=1}^k \left( g_{j_l} \widetilde{e}_{i_l} \right) \otimes y_n \Big\rangle_{L^2([0,T]^k;Z_0^{\otimes k} \otimes H)} \right\vert^2 \\
			& = \frac{1}{k!} \sum_{\mathbf{i},\mathbf{j} \in \mathbb{N}^k \atop \exists l: \, i_l > I} \sum_{n=1}^\infty \left\vert \int_{[0,T]^k} \Big\langle \mathbb{E}\left[ \left( D^k_\mathbf{r} X_T \right)(\widetilde{e}_\mathbf{i}) \right], y_n \prod_{l=1}^k g_{j_l}(r_l) \Big\rangle_H d\mathbf{r} \right\vert^2 \\
			& = \frac{1}{k!} \sum_{\mathbf{i} \in \mathbb{N}^k \atop \exists l: \, i_l > I} \left\Vert \mathbb{E}\left[ \left( D^k X_T \right)(\widetilde{e}_\mathbf{i}) \right] \right\Vert_{L^2([0,T]^k;H)}^2.
		\end{aligned}
	\end{equation}
	Hence, using Proposition~\ref{PropMall}~\ref{PropMall4}, that $\sum_{\mathbf{i} \in \mathbb{N}^k, \, i_l > I} \lambda_\mathbf{i} = \big( \sum_{i=1}^\infty \lambda_i \big)^{k-1} \big( \sum_{i=I+1}^\infty \lambda_i \big) = C_\lambda^{k-1} \sum_{i=I+1}^\infty \lambda_i$, the inequality $\sum_{k=1}^K \frac{c^k}{(k-1)!} \leq c \sum_{k=1}^\infty \frac{c^{k-1}}{(k-1)!} = c e^c$ for any $c \in [0,\infty)$, as well as the constant $C_{12} := \big( C^{(2)}_{F,B,S,T} C_\lambda^{-1} C_S^2 C_{F,B}^2 T e^{2 C_S C_{F,B} T} e^{C_S^2 C_{F,B}^2 T \exp(2 C_S C_{F,B} T)} \big)^{1/2} \geq 0$ (depending only on $C_{F,B} > 0$, $C_\lambda := \sum_{i=1}^\infty \lambda_i < \infty$ (cf.~\eqref{EqDefBMi}), $C_S := \sup_{t \in [0,T]} \Vert S_t \Vert_{L(H;H)} < \infty$, and $T > 0$), we have
	\vspace{-0.05cm}
	\begin{equation}
		\label{EqThmDetARProof3}
		\begin{aligned}
			& \mathbb{E}\left[ \left\Vert \sum_{\alpha \in \mathcal{J}, \, \vert \alpha \vert \leq k} x_\alpha^{(T)} \xi_\alpha - \sum_{\alpha \in \mathcal{J}, \, \vert \alpha \vert \leq k \atop \forall i > I: \, \forall j \in \mathbb{N}: \, \alpha_{i,j} = 0} x_\alpha^{(T)} \xi_\alpha \right\Vert_H^2 \right] = \sum_{k=1}^K \mathbb{E}\left[ \left\Vert \sum_{\alpha \in \mathcal{J}_k \atop \exists i > I: \, \exists j \in \mathbb{N}: \, \alpha_{i,j} > 0} \mathbb{E}[X_T \xi_\alpha] \xi_\alpha \right\Vert_H^2 \right] \\
			& \quad\quad \leq \sum_{k=1}^K \frac{1}{k!} \sum_{\mathbf{i} \in \mathbb{N}^k \atop \exists l: \, i_l > I} \int_{[0,T]^k} \left\Vert \mathbb{E}\left[ \left( D^k_\mathbf{r} X_T \right)(\widetilde{e}_\mathbf{i}) \right] \right\Vert_H^2 d\mathbf{r} \\
			& \quad\quad \leq \sum_{k=1}^K \frac{T^k}{k!} \sum_{\mathbf{i} \in \mathbb{N}^k \atop \exists l: \, i_l > I} \left( C^{(2)}_{F,B,S,T} \left( 2 + \Vert \chi_0 \Vert_H^2 \right) \left( \frac{C_S^2 C_{F,B}^2 T}{C_\lambda} \right)^k e^{2k C_S C_{F,B} T} \lambda_\mathbf{i} \right) \\
			& \quad\quad = C^{(2)}_{F,B,S,T} \left( 2 + \Vert \chi_0 \Vert_H^2 \right) \sum_{k=1}^K \frac{1}{k!} \left( \frac{C_S^2 C_{F,B}^2 T}{C_\lambda} \right)^k e^{2k C_S C_{F,B} T} \sum_{l=1}^k \sum_{\mathbf{i} \in \mathbb{N}^k \atop i_l > I} \lambda_\mathbf{i} \\
			& \quad\quad \leq \frac{C^{(2)}_{F,B,S,T}}{C_\lambda} \left( 2 + \Vert \chi_0 \Vert_H^2 \right) \left( \sum_{k=1}^K \frac{\left( C_S^2 C_{F,B}^2 T e^{2 C_S C_{F,B} T} \right)^k}{(k-1)!} \right) \left( \sum_{i=I+1}^\infty \lambda_i \right) \\
			& \quad\quad \leq C_{12}^2 \left( 2 + \Vert \chi_0 \Vert_H^2 \right) \sum_{i=I+1}^\infty \lambda_i.
		\end{aligned}
	\end{equation}
	This bounds the projection error by using only the basis elements $(\widetilde{e}_i)_{i=1,...,I}$ of $(Z_0,\langle \cdot,\rangle_{Z_0})$.

	For \ref{ThmDetARProof3}, we fix again some $k \in 1,...,K$ and an orthonormal basis $(y_n)_{n \in \mathbb{N}}$ of $(H,\langle \cdot,\cdot \rangle_H)$. Then, by following the arguments of \eqref{EqThmDetARProof2} (including line 1-7 of \eqref{EqPropStroockProof1} and Lemma~\ref{LemmaL2SkorokhodInt}), we obtain that
	\begin{equation*}
		\begin{aligned}
			& \mathbb{E}\left[ \left\Vert \sum_{\alpha \in \mathcal{J}_k \atop \underset{\exists i \in \mathbb{N}: \, \exists j > J: \, \alpha_{i,j} > 0}{\forall i > I: \, \forall j \in \mathbb{N}: \, \alpha_{i,j} = 0}} \mathbb{E}[X_T \xi_\alpha] \xi_\alpha \right\Vert_H^2 \right] = \mathbb{E}\left[ \left\Vert \sum_{\alpha \in \mathcal{J}_k \atop \underset{\exists i \in \mathbb{N}: \, \exists j > J: \, \alpha_{i,j} > 0}{\forall i > I: \, \forall j \in \mathbb{N}: \, \alpha_{i,j} = 0}} \sum_{n=1}^\infty \mathbb{E}[X_T \xi_\alpha] \xi_\alpha y_n \right\Vert_H^2 \right] \\
			& \leq \frac{1}{k!} \left\Vert \sum_{\mathbf{i},\mathbf{j} \in \mathbb{N}^k \atop \underset{\exists l: \, j_l > J}{\forall l: i_l \leq I}} \sum_{n=1}^\infty \Big\langle \mathbb{E}\left[ D^k X_T \right] , \bigotimes_{l=1}^k \left( g_{j_l} \widetilde{e}_{i_l} \right) \otimes y_n \Big\rangle_{L^2([0,T]^k;Z_0^{\otimes k} \otimes H)} \bigotimes_{l=1}^k \left( g_{j_l} \widetilde{e}_{i_l} \right) \otimes y_n \right\Vert_{L^2([0,T]^k;Z_0^{\otimes k} \otimes H)}^2 \\
			& = \frac{1}{k!} \sum_{\mathbf{i},\mathbf{j} \in \mathbb{N}^k \atop \underset{\exists l: \, j_l > J}{\forall l: i_l \leq I}} \sum_{n=1}^\infty \left\vert \Big\langle \mathbb{E}\left[ D^k X_T \right] , \bigotimes_{l=1}^k \left( g_{j_l} \widetilde{e}_{i_l} \right) \otimes y_n \Big\rangle_{L^2([0,T]^k;Z_0^{\otimes k} \otimes H)} \right\vert^2 \\
			& \leq \frac{1}{k!} \sum_{\mathbf{i},\mathbf{j} \in \mathbb{N}^k \atop \exists l: \, j_l > J} \sum_{n=1}^\infty \left\vert \Big\langle \int_{[0,T]^k} \mathbb{E}\left[ \left( D^k_\mathbf{r} X_T \right)(\widetilde{e}_\mathbf{i}) \right] \prod_{l=1}^k g_{j_l}(r_l) d\mathbf{r}, y_n \Big\rangle_H \right\vert^2 \\
			& = \frac{1}{k!} \sum_{\mathbf{i},\mathbf{j} \in \mathbb{N}^k \atop \exists l: \, j_l > J} \left\Vert \int_{[0,T]^k} \mathbb{E}\left[ \left( D^k_\mathbf{r} X_T \right)(\widetilde{e}_\mathbf{i}) \right] \prod_{l=1}^k g_{j_l}(r_l) d\mathbf{r} \right\Vert^2.
		\end{aligned}
	\end{equation*}
	Hence, Proposition~\ref{PropMall}~\ref{PropMall4}, that $\sum_{\mathbf{j} \in \mathbb{N}^k, \, j_l > J} \big( \int_{[0,T]^k} \prod_{l=1}^k \vert g_{j_l}(r_l) \vert d\mathbf{r} \big)^2 = C_g^{k-1} \sum_{j=J+1}^\infty \Vert g_j \Vert_{L^1([0,T])}^2$ by Fubini's theorem (with $C_g := \sum_{j=1}^\infty \Vert g_j \Vert_{L^1([0,T])}^2$), that $\sum_{\mathbf{i} \in \mathbb{N}^k} \lambda_\mathbf{i} = C_\lambda^k$, and the same steps as in \eqref{EqThmDetARProof3} together with the constant $C_{13} := \big( C^{(2)}_{F,B,S,T} C_g^{-1} C_S^2 C_{F,B}^2 C_g e^{2 C_S C_{F,B} T} e^{C_S^2 C_{F,B}^2 C_g \exp(2 C_S C_{F,B} T)} \big)^{1/2} \geq 0$ (depending only on $C_{F,B} > 0$, $C_\lambda := \sum_{i=1}^\infty \lambda_i < \infty$ (cf.~\eqref{EqDefBMi}), $C_g > 0$, $C_S := \sup_{t \in [0,T]} \Vert S_t \Vert_{L(H;H)} < \infty$, and $T > 0$) show that
	\begin{equation}
		\label{EqThmDetARProof4}
		\begin{aligned}
			& \mathbb{E}\left[ \left\Vert \sum_{\alpha \in \mathcal{J}_k \atop \forall i > I: \, \forall j \in \mathbb{N}: \, \alpha_{i,j} = 0} x_\alpha^{(T)} \xi_\alpha - \sum_{\alpha \in \mathcal{J}_{I,J,K}} x_\alpha^{(T)} \xi_\alpha \right\Vert_H^2 \right] = \sum_{k=1}^K \mathbb{E}\left[ \left\Vert \sum_{\alpha \in \mathcal{J}_k \atop \underset{\exists i \in \mathbb{N}: \, \exists j > J: \, \alpha_{i,j} > 0}{\forall i > I: \, \forall j \in \mathbb{N}: \, \alpha_{i,j} = 0}} \mathbb{E}[X_T \xi_\alpha] \xi_\alpha \right\Vert_H^2 \right] \\
			& \leq \sum_{k=1}^K \frac{1}{k!} \sum_{\mathbf{i},\mathbf{j} \in \mathbb{N}^k \atop \exists l: \, j_l > J} \left( \int_{[0,T]^k} \left\Vert \mathbb{E}\left[ \left( D^k_\mathbf{r} X_T \right)(\widetilde{e}_\mathbf{i}) \right] \right\Vert_H \prod_{l=1}^k \vert g_{j_l}(r_l) \vert d\mathbf{r} \right)^2 \\
			& \leq \sum_{k=1}^K \frac{1}{k!} \sum_{\mathbf{i} \in \mathbb{N}^k} \left( C^{(2)}_{F,B,S,T} \left( 2 + \Vert \chi_0 \Vert_H^2 \right) \left( \frac{C_S^2 C_{F,B}^2}{C_\lambda} \right)^k e^{2k C_S C_{F,B} T} \lambda_\mathbf{i} \right) \sum_{l=1}^k \sum_{\mathbf{j} \in \mathbb{N}^k \atop j_l > J} \left( \int_{[0,T]^k} \prod_{l=1}^k \vert g_{j_l}(r_l) \vert d\mathbf{r} \right)^2 \\
			& \leq \frac{C^{(2)}_{F,B,S,T}}{C_g} \left( 2 + \Vert \chi_0 \Vert_H^2 \right) \left( \sum_{k=1}^K \frac{\left( C_S^2 C_{F,B}^2 C_g e^{2 C_S C_{F,B} T} \right)^k}{(k-1)!} \right) \left( \sum_{j=J+1}^\infty \Vert g_j \Vert_{L^1([0,T])}^2 \right) \\
			& \leq C_{13}^2 \left( 2 + \Vert \chi_0 \Vert_H^2 \right) \sum_{j=J+1}^\infty \Vert g_j \Vert_{L^1([0,T])}^2.
		\end{aligned}
	\end{equation}
	This bounds the projection error by using only the basis elements $(g_j)_{j = 1,...,J}$ of $(L^2([0,T]),\langle\cdot,\cdot\rangle_{L^2([0,T])})$.
	
	For \ref{ThmDetARProof4}, we fix some $\alpha \in \mathcal{J}$. Then, by using Assumption~\ref{AssPropag}, i.e.~that $x_\alpha^{(T)} := \mathbb{E}[X_T \xi_\alpha] \in W^{k,2}(U,\mathcal{L}(U),w;\mathbb{R}^d)$ is either constant (and thus can be approximated by a deterministic neural network $\varphi_\alpha^{(T)} \in \mathcal{NN}^\rho_{U,d}$ having $N$ neurons without any error, where we set $c_\alpha := 0$) or that $x_\alpha^{(T)} \in L^1(\mathbb{R}^m,\mathcal{L}(\mathbb{R}^m),du;\mathbb{R}^d)$ has $(\lceil\gamma\rceil+2)$-times differentiable Fourier transform such that the constant $c_\alpha > 0$ defined in \eqref{EqAssPropag} is finite, we can apply the approximation rate for deterministic neural networks in \cite[Theorem~3.6]{neufeld24} together with \cite[Proposition~3.8]{neufeld24} to obtain a constant $C_1 > 0$ (depending only on $\gamma \in [0,\infty)$ and $\psi \in \mathcal{S}_0(\mathbb{R};\mathbb{C})$) and some deterministic neural network $\varphi_\alpha^{(T)} \in \mathcal{NN}^\rho_{U,d}$ with $N$ neurons such that
	\begin{equation*}
		\left\Vert x_\alpha^{(T)} - \varphi_\alpha^{(T)} \right\Vert_{W^{k,2}(U,\mathcal{L}(U),w;\mathbb{R}^d)} \leq C_1 \Vert \rho \Vert_{C^k_{pol,\gamma}(\mathbb{R})} \frac{C^{(\gamma)}_{U,w} m^\frac{k}{2} \pi^\frac{m+1}{4}}{\zeta_1^\frac{m}{2} \left\vert C^{(\psi,\rho)}_m \right\vert \Gamma\left( \frac{m+1}{2} \right)^\frac{1}{2}} \frac{c_\alpha}{\sqrt{N}}.
	\end{equation*}
	Hence, by using that the Wick polynomials $(\xi_\alpha)_{\alpha \in \mathcal{J}}$ are orthonormal in $(L^2(\Omega),\langle\cdot,\cdot\rangle_{L^2(\Omega)})$, we have
	\begin{equation}
		\label{EqThmDetARProof6}
		\begin{aligned}
			& \mathbb{E}\left[ \left\Vert \sum_{\alpha \in \mathcal{J}_{I,J,K}} x_\alpha^{(T)} \xi_\alpha - \sum_{\alpha \in \mathcal{J}_{I,J,K}} \varphi_\alpha^{(T)} \xi_\alpha \right\Vert_H^2 \right]^\frac{1}{2} = \left( \sum_{\alpha \in \mathcal{J}_{I,J,K}} \mathbb{E}\left[ \left\Vert x_\alpha^{(T)} \xi_\alpha - \sum_{\alpha \in \mathcal{J}_{I,J,K}} \varphi_\alpha^{(T)} \xi_\alpha \right\Vert_H^2 \right] \right)^\frac{1}{2} \\
			& \quad\quad = \left( \sum_{\alpha \in \mathcal{J}_{I,J,K}} \left\Vert x_\alpha^{(T)} - \varphi_\alpha^{(T)} \right\Vert_H^2 \mathbb{E}\left[ \xi_\alpha^2 \right] \right)^\frac{1}{2} \\
			& \quad\quad \leq C_2 \Vert \rho \Vert_{C^k_{pol,\gamma}(\mathbb{R})} \frac{C^{(\gamma)}_{U,w} m^\frac{k}{2} \pi^\frac{m+1}{4}}{\zeta_1^\frac{m}{2} \left\vert C^{(\psi,\rho)}_m \right\vert \Gamma\left( \frac{m+1}{2} \right)^\frac{1}{2}} \frac{\left( \sum_{\alpha \in \mathcal{J}_{I,J,K}} c_\alpha^2 \right)^\frac{1}{2}}{\sqrt{N}}.
		\end{aligned}
	\end{equation}
	This bounds the approximation error of $x_\alpha^{(T)} := \mathbb{E}[X_T \xi_\alpha] \in W^{k,2}(U,\mathcal{L}(U),w;\mathbb{R}^d)$ by some deterministic neural network $\varphi_\alpha \in \mathcal{NN}^\rho_{U,d}$ with $N$ neurons.
	
	Finally, by combining \eqref{EqThmDetARProof1}+\eqref{EqThmDetARProof3}+\eqref{EqThmDetARProof4}+\eqref{EqThmDetARProof6} with Minkowski's and by using that the Wick polynomials $(\xi_\alpha)_{\alpha \in \mathcal{J}}$ are orthonormal among different orders (see Lemma~\ref{LemmaWick}) as well as the constants $C_2 := \max(C_{11},C_{12},C_{13}) > 0$ (depending only on $C_{F,B} > 0$, $C_g > 0$, $\sum_{i=1}^\infty \lambda_i < \infty$, $C_S := \sup_{t \in [0,T]} \Vert S_t \Vert_{L(H;H)} < \infty$, and $T > 0$) and $C_1 > 0$ introduced above, it follows that
	\begin{equation}
		\label{EqThmDetARProof7}
		\begin{aligned}
			& \mathbb{E}\left[ \left\Vert X_T - \sum_{\alpha \in \mathcal{J}_{I,J,K}} \varphi_\alpha^{(T)} \xi_\alpha \right\Vert_H^2 \right]^\frac{1}{2} \\
			& \leq \mathbb{E}\left[ \left\Vert X_T - \sum_{\alpha \in \mathcal{J} \atop \vert\alpha\vert \leq K} x_\alpha^{(T)} \xi_\alpha \right\Vert_H^2 \right]^\frac{1}{2} + \sum_{k=1}^K \mathbb{E}\left[ \left\Vert \sum_{\alpha \in \mathcal{J}_k} x_\alpha^{(T)} \xi_\alpha - \sum_{\alpha \in \mathcal{J}_k \atop \forall i > I: \, \forall j \in \mathbb{N}: \, \alpha_{i,j} = 0} x_\alpha^{(T)} \xi_\alpha \right\Vert_H^2 \right]^\frac{1}{2} \\
			& \quad\quad + \mathbb{E}\left[ \left\Vert \sum_{\alpha \in \mathcal{J}_k \atop \forall i > I: \, \forall j \in \mathbb{N}: \, \alpha_{i,j} = 0} x_\alpha^{(T)} \xi_\alpha - \sum_{\alpha \in \mathcal{J}_{I,J,K}} x_\alpha^{(T)} \xi_\alpha \right\Vert_H^2 \right]^\frac{1}{2} + \mathbb{E}\left[ \left\Vert \sum_{\alpha \in \mathcal{J}_{I,J,K}} x_\alpha^{(T)} \xi_\alpha - \sum_{\alpha \in \mathcal{J}_{I,J,K}} \varphi_\alpha^{(T)} \xi_\alpha \right\Vert_H^2 \right]^\frac{1}{2} \\
			& \leq C_2 \left( 2 + \Vert \chi_0 \Vert_H^2 \right)^\frac{1}{2} \left( \left( \sum_{i=I+1}^\infty \lambda_i \right)^\frac{1}{2} + \left( \sum_{j=J+1}^\infty \Vert g_j \Vert_{L^1([0,T])}^2 \right)^\frac{1}{2} + \frac{\left( C_S C_{F,B} \sqrt{T} e^{C_S C_{F,B} T} \right)^{K+1}}{\sqrt{(K+1)!}} \right) \\
			& \quad\quad + C_1 \Vert \rho \Vert_{C^k_{pol,\gamma}(\mathbb{R})} \frac{C^{(\gamma)}_{U,w} m^\frac{k}{2} \pi^\frac{m+1}{4}}{\zeta_1^\frac{m}{2} \left\vert C^{(\psi,\rho)}_m \right\vert \Gamma\left( \frac{m+1}{2} \right)^\frac{1}{2}} \frac{\left( \sum_{\alpha \in \mathcal{J}_{I,J,K}} c_\alpha^2 \right)^\frac{1}{2}}{\sqrt{N}},
		\end{aligned}
	\end{equation}
	which completes the proof.
\end{proof}

\pagebreak

\subsection{Proof of Theorem~\ref{ThmRandAR}}
\label{SecProofRandAR}

\begin{proof}[Proof of Theorem~\ref{ThmRandAR}]
	Let $X: [0,T] \times \Omega \rightarrow W^{k,2}(U,\mathcal{L}(U),w;\mathbb{R}^d)$ be a mild solution of \eqref{EqDefSPDE} with coefficients satisfying Assumption~\ref{AssSPDEAff} and fix some $I,J,K,N \in \mathbb{N}$. Then, by following the proof of Theorem~\ref{ThmDetAR}, the approximation error can be split up into the four parts \ref{ThmDetARProof1}-\ref{ThmDetARProof4}. While \ref{ThmDetARProof1}-\ref{ThmDetARProof3} consist of the same steps as in the proof of Theorem~\ref{ThmDetAR}, part~\ref{ThmDetARProof4} is now the approximation error of $x_\alpha^{(T)} := \mathbb{E}[X_T \xi_\alpha] \in W^{k,2}(U,\mathcal{L}(U),w;\mathbb{R}^d)$ by a random neural network with $N$ neurons, for $\alpha \in \mathcal{J}_{I,J,K}$. 
	
	For \ref{ThmDetARProof4}, we fix some $\alpha \in \mathcal{J}$. Then, by using Assumption~\ref{AssPropag}, i.e.~that the function $x_\alpha^{(T)} := \mathbb{E}[X_T \xi_\alpha] \in W^{k,2}(U,\mathcal{L}(U),w;\mathbb{R}^d)$ is either constant (and thus can be approximated by a random neural network $\Phi_\alpha^{(T)} \in \mathcal{RN}^\rho_{U,d}$ having $N$ neurons without any error, where we set $c_\alpha := 0$) or that $x_\alpha^{(T)} \in L^1(\mathbb{R}^m,\mathcal{L}(\mathbb{R}^m),du;\mathbb{R}^d)$ has $(\lceil\gamma\rceil+2)$-times differentiable Fourier transform such that the constant $c_\alpha > 0$ defined in \eqref{EqAssPropag} is finite, we can apply the approximation rate for random neural networks in \cite[Corollary~4.20]{neufeld23} together with \cite[Proposition~4.22]{neufeld23} to obtain the same constant $C_1 > 0$ as in Theorem~\ref{ThmDetAR} and a random neural network $\Phi_\alpha^{(T)} \in \mathcal{RN}^\rho_{U,d}$ with $N$ neurons such that
	\begin{equation*}
		\mathbb{E}\left[ \left\Vert x_\alpha^{(T)} - \Phi_\alpha^{(T)} \right\Vert_{W^{k,2}(U,\mathcal{L}(U),w;\mathbb{R}^d)}^2 \right]^\frac{1}{2} \leq C_1 \Vert \rho \Vert_{C^k_{pol,\gamma}(\mathbb{R})} \frac{C^{(\gamma)}_{U,w} m^\frac{k}{2} \pi^\frac{m+1}{4}}{\zeta_1^\frac{m}{2} \left\vert C^{(\psi,\rho)}_m \right\vert \Gamma\left( \frac{m+1}{2} \right)^\frac{1}{2}} \frac{c_\alpha}{\sqrt{N}}.
	\end{equation*}
	Since the Wick polynomials $(\xi_\alpha)_{\alpha \in \mathcal{J}}$ are orthonormal in $(L^2(\Omega),\langle\cdot,\cdot\rangle_{L^2(\Omega)})$ and $\Phi_\alpha^{(T)} \in \mathcal{RN}^\rho_{U,d}$ is independent of $(X_t)_{t \in [0,T]}$ (as $(A_{1,n},B_n)_{n \in \mathbb{N}}$ are by Assumption~\ref{AssPDF} independent of $(W_t)_{t \in [0,T]}$), it follows that
	\begin{equation}
		\label{EqThmRandARProof2}
		\begin{aligned}
			& \mathbb{E}\left[ \left\Vert \sum_{\alpha \in \mathcal{J}_{I,J,K}} x_\alpha^{(T)} \xi_\alpha - \sum_{\alpha \in \mathcal{J}_{I,J,K}} \Phi_\alpha^{(T)} \xi_\alpha \right\Vert_H^2 \right]^\frac{1}{2} = \left( \sum_{\alpha \in \mathcal{J}_{I,J,K}} \mathbb{E}\left[ \left\Vert x_\alpha^{(T)} \xi_\alpha - \Phi_\alpha^{(T)} \xi_\alpha \right\Vert_H^2 \right] \right)^\frac{1}{2} \\
			& \quad\quad = \left( \sum_{\alpha \in \mathcal{J}_{I,J,K}} \mathbb{E}\left[ \left\Vert x_\alpha^{(T)} - \Phi_\alpha^{(T)} \right\Vert_H^2 \right] \mathbb{E}\left[ \xi_\alpha^2 \right] \right)^\frac{1}{2} \\
			& \quad\quad \leq C_1 \Vert \rho \Vert_{C^k_{pol,\gamma}(\mathbb{R})} \frac{C^{(\gamma)}_{U,w} m^\frac{k}{2} \pi^\frac{m+1}{4}}{\zeta_1^\frac{m}{2} \left\vert C^{(\psi,\rho)}_m \right\vert \Gamma\left( \frac{m+1}{2} \right)^\frac{1}{2}} \frac{\left( \sum_{\alpha \in \mathcal{J}_{I,J,K}} c_\alpha^2 \right)^\frac{1}{2}}{\sqrt{N}}.
		\end{aligned}
	\end{equation}
	This bounds the approximation error of $x_\alpha^{(T)} := \mathbb{E}[X_T \xi_\alpha] \in W^{k,2}(U,\mathcal{L}(U),w;\mathbb{R}^d)$ by some random neural network $\Phi_\alpha \in \mathcal{RN}^\rho_{U,d}$ with $N$ neurons.
	
	Thus, by combining the four approximation errors in \ref{ThmDetARProof1}-\ref{ThmDetARProof4}, where part \ref{ThmDetARProof1}-\ref{ThmDetARProof3} consist of the same steps as in the proof of Theorem~\ref{ThmDetAR}, we can follow the inequalities in \eqref{EqThmDetARProof7} to obtain the conclusion.
\end{proof}

\subsection{Proof of results in Section~\ref{SecNumerics}}
\label{SecProofsNE}

\begin{proof}[Proof of Lemma~\ref{LemmaStochHeat}]
	We recall from \cite[Exercise~5.9]{hairer09} that the Laplacian $\Delta: W^{2,2}(\mathbb{R}^m,\mathcal{L}(\mathbb{R}^m),w) \rightarrow L^2(\mathbb{R}^m,\mathcal{L}(\mathbb{R}^m),w)$ defined in \eqref{EqDefLapl} generates the $C_0$-semigroup $(S_t)_{t \in [0,T]}$ defined in \eqref{EqDefLaplSemigr}. Moreover, by using the coefficients $[0,T] \times \Omega \times L^2(\mathbb{R}^m,\mathcal{L}(\mathbb{R}^m),w) \ni (t,\omega,x) \mapsto F(t,\omega,x) := 0 \in L^2(\mathbb{R}^m,\mathcal{L}(\mathbb{R}^m),w)$ as well as $[0,T] \times \Omega \times L^2(\mathbb{R}^m,\mathcal{L}(\mathbb{R}^m),w) \ni (t,\omega,x) \mapsto B(t,\omega,x) := b_0 := (z \mapsto z) \in L_2(\mathbb{R};L^2(\mathbb{R}^m,\mathcal{L}(\mathbb{R}^m),w))$, and that $\int_{\mathbb{R}^m} \phi_t(u-v) dv = \int_{\mathbb{R}^m} \phi_t(y) dy = 1$ for any $u \in \mathbb{R}^m$ (by using the substitution $u-v \mapsto y$), we conclude for every $t \in [0,T]$ that
	\begin{equation*}
		\begin{aligned}
			& S_t \chi_0 + \int_0^t S_{t-s} F(s,\cdot,X_s) ds + \int_0^t S_{t-s} B(s,\cdot,X_s) dW_s \\
			& \quad\quad = \left( u \mapsto \int_{\mathbb{R}^m} \phi_{t-s}(u-v) \chi_0(v) dv + \int_0^t \int_{\mathbb{R}^m} \phi_{t-s}(u-v) dv dW_s \right) \\
			& \quad\quad = \left( u \mapsto \int_{\mathbb{R}^m} \phi_{t-s}(u-v) \chi_0(v) dv + W_t \right)	\\
			& \quad\quad = S_t \chi_0 + b_0 W_t.
		\end{aligned}
	\end{equation*}
	This shows that \eqref{EqLemmaStochHeat1} is a mild solution of \eqref{EqDefStochHeat}.
\end{proof}

\vspace{0.3cm}

\subsection*{Acknowledgments:} The first author was partly supported by the Nanyang Assistant Professorship Grant (NAP Grant) \emph{Machine Learning based Algorithms in Finance and Insurance}. The second author was partly supported by the FinsureTech Hub of ETH Zurich.

\vspace{0.3cm}

\bibliographystyle{plain}
\bibliography{mybib}

\end{document}